\documentclass[11pt,twoside]{article}
\usepackage[latin9]{inputenc}
\usepackage{verbatim}
\usepackage{cprotect}
\usepackage{booktabs}
\usepackage{hepnames}
\usepackage{amsmath}
\usepackage{amsthm}
\usepackage{amssymb}
\usepackage{makeidx}
\makeindex
\usepackage{graphicx}
\usepackage{nomencl}
\makenomenclature

\makeatletter

\ProvideTextCommandDefault{\guillemotleft}{%
  {\usefont{U}{lasy}{m}{n}\char'50\kern-.15em\char'50}%
\penalty10000\hskip0pt\relax%
}
\ProvideTextCommandDefault{\guillemotright}{%
  \penalty10000\hskip0pt%
  {\usefont{U}{lasy}{m}{n}\char'51\kern-.15em\char'51}%
}
\DeclareTextSymbolDefault{\textquotedbl}{T1}
\providecommand{\tabularnewline}{\\}

\theoremstyle{plain}
\newtheorem{thm}{\protect\theoremname}
\theoremstyle{definition}
\newtheorem{example}[thm]{\protect\examplename}
\theoremstyle{definition}
\newtheorem{defn}[thm]{\protect\definitionname}
\theoremstyle{plain}
\newtheorem{prop}[thm]{\protect\propositionname}
\newenvironment{lyxcode}
	{\par\begin{list}{}{
		\setlength{\rightmargin}{\leftmargin}
		\setlength{\listparindent}{0pt}
		\raggedright
		\setlength{\itemsep}{0pt}
		\setlength{\parsep}{0pt}
		\normalfont\ttfamily}%
	 \item[]}
	{\end{list}}
\theoremstyle{plain}
\newtheorem{lem}[thm]{\protect\lemmaname}
\theoremstyle{remark}
\newtheorem{rem}[thm]{\protect\remarkname}
\newcommand\thmsname{\protect\theoremname}
\newcommand\nm@thmtype{theorem}
\theoremstyle{plain}

\newenvironment{namedthm}[1][Undefined Theorem Name]{
  \ifx{#1}{Undefined Theorem Name}\renewcommand\nm@thmtype{theorem*}
  \else\renewcommand\thmsname{#1}\renewcommand\nm@thmtype{namedtheorem}
  \fi
  \begin{\nm@thmtype}}
  {\end{\nm@thmtype}}
\theoremstyle{plain}
\newtheorem{cor}[thm]{\protect\corollaryname}

\usepackage{jair}
\usepackage{theapa}
\usepackage{rawfonts}

\usepackage{tikz}
\usetikzlibrary{arrows, automata}

\jairheading{1}{1993}{1-15}{6/91}{9/91}
\ShortHeadings{Combined Qualitative Constraint Networks}
{Cohen-Solal, Bouzid, \& Niveau}
\firstpageno{1}

\usepackage{mathtools} 
\usepackage{amssymb}
\usepackage{stmaryrd}

\makeatletter
\newcommand*{\reflectmathsymbol}[2]{%
  \reflectbox{\rotatebox[origin=c]{180}{$\m@th#1#2$}}%
}
\makeatother

\newcommand*{\mirrorRsh}{%
\mathop{%
  \mathpalette\reflectmathsymbol{\raisebox{-.3ex}{$\Rsh$}}
}%
}

\usepackage{amsbsy}
\usepackage{graphics} 

\makeatother

\providecommand{\corollaryname}{Corollary}
\providecommand{\definitionname}{Definition}
\providecommand{\examplename}{Example}
\providecommand{\lemmaname}{Lemma}
\providecommand{\propositionname}{Proposition}
\providecommand{\remarkname}{Remark}
\providecommand{\theoremname}{Theorem}

\begin{document}
\title{Deciding the Satisfiability of Combined Qualitative Constraint Networks}
\author{\name{Q}uentin Cohen-Solal \email quentin.cohen-solal@dauphine.psl.eu
\\
 \addr LAMSADE, Université Paris-Dauphine, PSL, CNRS, France\\
 \AND \name{M}aroua Bouzid \email maroua.bouzid-mouaddib@unicaen.fr\\
\name{A}lexandre Niveau \email alexandre.niveau@unicaen.fr\\
 \addr Normandie Univ, UNICAEN, ENSICAEN, CNRS, GREYC, 14000 Caen,
France}
\maketitle
\begin{abstract}
Among the various forms of reasoning studied in the context of artificial
intelligence, \emph{qualitative reasoning} makes it possible to infer
new knowledge in the context of imprecise, incomplete information
without numerical values. In this paper, we propose a formal framework
unifying several forms of extensions and combinations of \emph{qualitative
formalisms}, including \emph{multi-scale reasoning}, \emph{temporal
sequences}, and \emph{loose integrations}. This framework makes it
possible to reason in the context of each of these combinations and
extensions, but also to study in a unified way the satisfiability
decision and its complexity. In particular, we establish two complementary
theorems guaranteeing that the satisfiability decision is polynomial,
and we use them to recover the known results of the \emph{size-topology
combination}. We also generalize the main definition of qualitative
formalism to include qualitative formalisms excluded from the definitions
of the literature, important in the context of combinations.
\end{abstract}
\noindent\begin{minipage}[t]{1\columnwidth}%
\global\long\def\Cl#1{\mathfrak{C}\left(#1\right)}%

\global\long\def\id{\mathrm{id}}%

\global\long\def\conv#1#2{\mathop{\Rsh_{{#1}}^{{#2}}}}%

\global\long\def\cconv{\mathop{\Rsh}}%

\global\long\def\convDe#1#2#3{{_{{\scriptscriptstyle #3}}{\Rsh_{#1}^{#2}}}}%

\global\long\def\convinv#1#2{\mathop{{\mirrorRsh}_{{#1}}^{{#2}}}}%

\global\long\def\cconvinv{\mathop{\mirrorRsh}}%

\global\long\def\upgc{\mathop{\uparrow}}%

\global\long\def\downgc{\mathop{\downarrow}}%

\global\long\def\sinv#1{\overline{#1}}%

\global\long\def\scompl#1{#1^{\lhook}}%

\global\long\def\B{\mathcal{B}}%

\global\long\def\A{\mathcal{A}}%

\global\long\def\se{\mathrm{e}}%

\global\long\def\Base{\boldsymbol{\B}}%

\global\long\def\svide{\varnothing}%

\global\long\def\atome{\in}%

\global\long\def\comp{\circ}%

\global\long\def\wcomp{\diamond}%

\global\long\def\pcomp{\ast}%

\global\long\def\acomp{\varocircle}%

\global\long\def\ainter{\wedge}%

\global\long\def\aunion{\vee}%

\global\long\def\inv#1{\overline{#1}}%

\global\long\def\ainv#1{#1{}^{{\scriptscriptstyle \smallsmile}}}%

\global\long\def\winv#1{\widetilde{#1}}%

\global\long\def\acompl#1{\neg#1}%

\global\long\def\compl#1{#1^{\lhook}}%

\global\long\def\size#1{\mathrm{size}(#1)}%

\global\long\def\scomp{\varocircle}%

\global\long\def\sinter{\cap}%

\global\long\def\sunion{\cup}%

\global\long\def\sbigunion{\bigcup}%

\global\long\def\Ptime{\mathsf{P}}%

\global\long\def\PSPACE{\mathsf{PSPACE}}%

\global\long\def\NP{\mathsf{NP}}%

\global\long\def\T{\mathcal{T}}%

\global\long\def\U{\mathcal{U}}%

\global\long\def\I{\varphi}%

\global\long\def\topo{\mathfrak{T}}%

\global\long\def\clotureTopo#1{#1^{-}}%

\global\long\def\et{\,\wedge\,}%

\global\long\def\ou{\,\vee\,}%

\global\long\def\bigand{\qquad\wedge\qquad}%

\global\long\def\liste#1#2{\{#1\mid#2\}}%

\global\long\def\bigliste#1#2{\left\{  #1\mid#2\right\}  }%

\global\long\def\sep{\quad}%

\global\long\def\minisep{\ }%

\global\long\def\aeq{\mathrm{I}}%

\global\long\def\Id{\mathrm{Id}}%

\global\long\def\Ba{\top}%

\global\long\def\Br{\mathtt{B}}%

\global\long\def\Bw{\Re}%

\global\long\def\avide{\bot}%

\global\long\def\vide{\varnothing}%

\global\long\def\Aa{\mathcal{A}}%

\global\long\def\Au{\mathfrak{A}}%

\global\long\def\Ar{\mathtt{A}}%

\global\long\def\S{\mathcal{S}}%

\global\long\def\BaseR{\pmb{\Br}}%

\global\long\def\BaseA{\pmb{\Ba}}%

\global\long\def\BaseU{\mathfrak{B}}%

\global\long\def\PA{\mathrm{PA}}%

\global\long\def\IA{\mathrm{IA}}%

\global\long\def\PAs{\mathcal{S}_{\PA}}%

\global\long\def\PAc{\mathcal{C}_{\PA}}%

\global\long\def\IAs{\mathcal{S}_{\IA}}%

\global\long\def\IAc{\mathcal{C}_{\IA}}%

\global\long\def\CPA{\mathrm{CPA}}%

\global\long\def\ERA{\mathrm{ERA}}%

\global\long\def\INDU{\mathrm{INDU}}%

\global\long\def\QTC{\mathrm{QTC}}%

\global\long\def\MBRC{\mathrm{MBRC}}%

\global\long\def\DLR{\mathrm{DLR}}%

\global\long\def\hornDLR{\mathrm{hornDLR}}%

\global\long\def\RCC{\mathrm{RCC}}%

\global\long\def\RCA{\mathrm{RCA}}%

\global\long\def\RCCH{\RCC_{8}}%

\global\long\def\RCAH{\mathrm{\RCA_{8}}}%

\global\long\def\HH{\mathcal{\hat{H}}_{8}}%

\global\long\def\CH{\mathcal{C}_{8}}%

\global\long\def\QH{\mathcal{Q}_{8}}%

\global\long\def\RCAHs{\mathcal{S}_{\RCAH}}%

\global\long\def\RCAHc{\mathcal{C}_{\RCAH}}%

\global\long\def\RCCF{\RCC_{5}}%

\global\long\def\RCAF{\RCA_{5}}%

\global\long\def\Aseize{\mathrm{A}_{16}}%

\global\long\def\Acinq{\mathrm{A}_{5}}%

\global\long\def\PC{\mathrm{PC}}%

\global\long\def\IC{\mathrm{IC}}%

\global\long\def\QS{\mathrm{QS}}%

\global\long\def\CPC{\mathrm{CPC}}%

\global\long\def\CRC{\mathrm{CRC}}%

\global\long\def\CDC{\mathrm{CDC}}%

\global\long\def\preconv{\mathcal{P}}%

\global\long\def\IAp{\preconv_{\IA}}%

\global\long\def\CPAp{\preconv_{\CPA}}%

\global\long\def\Arp{\preconv_{\Ar}}%

\global\long\def\Arc{\mathcal{C}_{\Ar}}%

\global\long\def\nord{\mathrm{N}}%

\global\long\def\sud{\mathrm{S}}%

\global\long\def\ouest{\mathrm{W}}%

\global\long\def\est{\mathrm{E}}%

\global\long\def\nordouest{\nord\ouest}%

\global\long\def\nordest{\nord\est}%

\global\long\def\sudouest{\sud\ouest}%

\global\long\def\sudest{\sud\est}%

\global\long\def\notni{\reflectbox{\ensuremath{\notin}}}%

\global\long\def\connexion{\mathrm{C}}%

\global\long\def\p{\mathrm{P}}%

\global\long\def\pI{\overline{\mathrm{P}}}%

\global\long\def\o{\mathrm{O}}%

\global\long\def\dc{\mathrm{DC}}%

\global\long\def\ec{\mathrm{EC}}%

\global\long\def\po{\mathrm{PO}}%

\global\long\def\tpp{\mathrm{TPP}}%

\global\long\def\ntpp{\mathrm{NTPP}}%

\global\long\def\tppi{\mathrm{\overline{TPP}}}%

\global\long\def\ntppi{\mathrm{\overline{NTPP}}}%

\global\long\def\eq{\mathrm{EQ}}%

\global\long\def\pp{\mathrm{PP}}%

\global\long\def\ppi{\mathrm{\overline{PP}}}%

\global\long\def\contrainte#1#2#3{#1\mathrel{#2}#3}%

\global\long\def\E{\mathtt{E}}%

\global\long\def\CdeN{\mathtt{C}}%

\global\long\def\miniunion{{\scriptscriptstyle \cup}}%

\global\long\def\lsun#1{\{\mathord{#1}\}}%

\global\long\def\lsdeux#1#2{\{\mathord{#1},\ \mathord{#2}\}}%

\global\long\def\lstrois#1#2#3{\{\mathord{#1},\ \mathord{#2},\ \mathord{#3}\}}%

\global\long\def\lsquatre#1#2#3#4{\{\mathord{#1},\ \mathord{#2},\ \mathord{#3},\ \mathord{#4}\}}%

\global\long\def\lscinq#1#2#3#4#5{\{\mathord{#1},\ \mathord{#2},\ \mathord{#3},\ \mathord{#4},\ \mathord{#5}\}}%

\global\long\def\lssix#1#2#3#4#5#6{\{\mathord{#1},\ \mathord{#2},\ \mathord{#3},\ \mathord{#4},\ \mathord{#5},\ \mathord{#6}\}}%

\global\long\def\lssept#1#2#3#4#5#6#7{\{\mathord{#1},\ \mathord{#2},\ \mathord{#3},\ \mathord{#4},\ \mathord{#5},\ \mathord{#6},\ \mathord{#7}\}}%

\global\long\def\lshuit#1#2#3#4#5#6#7#8{\{\mathord{#1},\ \mathord{#2},\ \mathord{#3},\ \mathord{#4},\ \mathord{#5},\ \mathord{#6},\ \mathord{#7},\ \mathord{#8}\}}%

\global\long\def\lsneuf#1#2#3#4#5#6#7#8#9{\{\mathord{#1},\ \mathord{#2},\ \mathord{#3},\ \mathord{#4},\ \mathord{#5},\ \mathord{#6},\ \mathord{#7},\ \mathord{#8},\ \mathord{#9}\}}%
\global\long\def\un#1{\{\mathord{#1}\}}%

\global\long\def\deux#1#2{\{\mathord{#1},\mathord{#2}\}}%

\global\long\def\trois#1#2#3{\{\mathord{#1},\mathord{#2},\mathord{#3}\}}%

\global\long\def\quatre#1#2#3#4{\{\mathord{#1},\mathord{#2},\mathord{#3},\mathord{#4}\}}%

\global\long\def\cinq#1#2#3#4#5{\{\mathord{#1},\mathord{#2},\mathord{#3},\mathord{#4},\mathord{#5}\}}%

\global\long\def\six#1#2#3#4#5#6{\{\mathord{#1},\mathord{#2},\mathord{#3},\mathord{#4},\mathord{#5},\mathord{#6}\}}%

\global\long\def\sept#1#2#3#4#5#6#7{\{\mathord{#1},\mathord{#2},\mathord{#3},\mathord{#4},\mathord{#5},\mathord{#6},\mathord{#7}\}}%

\global\long\def\huit#1#2#3#4#5#6#7#8{\{\mathord{#1},\mathord{#2},\mathord{#3},\mathord{#4},\mathord{#5},\mathord{#6},\mathord{#7},\mathord{#8}\}}%

\global\long\def\neuf#1#2#3#4#5#6#7#8#9{\{\mathord{#1},\mathord{#2},\mathord{#3},\mathord{#4},\mathord{#5},\mathord{#6},\mathord{#7},\mathord{#8},\mathord{#9}\}}%

\global\long\def\F{\mathcal{F}}%

\global\long\def\STC{\mathrm{STC}}%

\global\long\def\hmax{h_{\mathrm{max}}}%

\global\long\def\hHH{h_{\HH}}%

\global\long\def\hQH{h_{\QH}}%

\global\long\def\hCH{h_{\CH}}%

\global\long\def\correctTable{\raisebox{3.5mm}{}}%

\noindent\begin{minipage}[t]{1\columnwidth}%
\global\long\def\scomp{\varocircle}%

\global\long\def\sinter{\cap}%

\global\long\def\sunion{\cup}%

\global\long\def\sbigunion{\bigcup}%

\global\long\def\sinv#1{\overline{#1}}%

\global\long\def\scompl#1{#1^{\lhook}}%

\global\long\def\B{\mathcal{B}}%

\global\long\def\A{\mathcal{A}}%

\global\long\def\se{\mathrm{e}}%

\global\long\def\Base{\boldsymbol{\B}}%

\global\long\def\svide{\varnothing}%

\global\long\def\atome{\in}%

\end{minipage}

\noindent\begin{minipage}[t]{1\columnwidth}%
\global\long\def\arbo{\mathrm{A}}%

\global\long\def\inArbo{\overset{\arbo}{\rightarrow}}%

\global\long\def\inA{\rightarrow}%

\global\long\def\inAinv{\leftarrow}%
\end{minipage}

\noindent\begin{minipage}[t]{1\columnwidth}%
\global\long\def\RCCS{\mathrm{\RCC_{7}}}%

\global\long\def\DRA{\mathcal{DRA}_{24}}%

\global\long\def\SPA{\mathrm{SPA}}%

\global\long\def\SIA{\mathrm{SIA}}%

\global\long\def\SPIA{\mathrm{SPIA}}%

\global\long\def\SPC{\mathrm{SPC}}%

\global\long\def\SIC{\mathrm{SIC}}%

\global\long\def\SPIC{\mathrm{SPIC}}%

\global\long\def\STA{\mathrm{STA}}%

\global\long\def\PIA{\mathrm{PIA}}%

\global\long\def\PIC{\mathrm{PIC}}%
\end{minipage}

\noindent\begin{minipage}[t]{1\columnwidth}%
\global\long\def\cun#1{(\mathord{#1})}%

\global\long\def\cdeux#1#2{(\mathord{#1},\mathord{#2})}%

\global\long\def\ctrois#1#2#3{(\mathord{#1},\mathord{#2},\mathord{#3})}%

\global\long\def\cquatre#1#2#3#4{(\mathord{#1},\mathord{#2},\mathord{#3},\mathord{#4})}%

\global\long\def\ccinq#1#2#3#4#5{(\mathord{#1},\mathord{#2},\mathord{#3},\mathord{#4},\mathord{#5})}%

\global\long\def\csix#1#2#3#4#5#6{(\mathord{#1},\mathord{#2},\mathord{#3},\mathord{#4},\mathord{#5},\mathord{#6})}%

\global\long\def\csept#1#2#3#4#5#6#7{(\mathord{#1},\mathord{#2},\mathord{#3},\mathord{#4},\mathord{#5},\mathord{#6},\mathord{#7})}%

\global\long\def\chuit#1#2#3#4#5#6#7#8{(\mathord{#1},\mathord{#2},\mathord{#3},\mathord{#4},\mathord{#5},\mathord{#6},\mathord{#7},\mathord{#8})}%

\global\long\def\cneuf#1#2#3#4#5#6#7#8#9{(\mathord{#1},\mathord{#2},\mathord{#3},\mathord{#4},\mathord{#5},\mathord{#6},\mathord{#7},\mathord{#8},\mathord{#9})}%
\end{minipage}

\global\long\def\prodDyna{\otimes}%
\nomenclature[ST Ensemble]{$\prodDyna$}{$\Cvierge \prodDyna \Cvierge'$ est l'ensemble des évolutions de couples d'entités correspondant à l'ensemble des couples d'évolutions $\Cvierge \times \Cvierge'$ de deux dynamiques $\Cvierge$ et $\Cvierge'$ (déf. \vref{def:produit-dynamique}).}

\global\long\def\unpack#1{#1^{\star}}%
\nomenclature[ST Ensemble]{$\unpack{\Cvierge}$}{Ensemble des couples d'évolutions correspondant à l'ensemble des évolutions de couples $\Cvierge$ (déf. \vref{def:unpack}).}

\global\long\def\fcarac{\mathrm{c}}%

\global\long\def\voisinage{\mathrm{p}}%

\global\long\def\dominance{\mathrm{d}}%

\global\long\def\Fcarac#1#2{{#1}^{#2}}%
\nomenclature[ST Formalisme 0]{$\Fcarac{\F}{\fcarac}$}{Formalisme des relations de $\F$ sur la caractéristique $\fcarac$ des entités de $\U$ (déf. \vref{def:changement-semantique}).}

\global\long\def\interpCarac#1#2{#1^{#2}}%
\nomenclature[ST Fonction]{$\interpCarac{\I}{\fcarac}$}{Interprétation selon la caractéristique $\fcarac$ : $\interpCarac{\I}{\fcarac}(r)=\liste{(x,y)\in\U\times\U}{\contrainte{\fcarac(x)}{\I(r)}{\fcarac(y)}}$ (déf. \vref{def:changement-semantique}).}

\global\long\def\carac{\mathscr{C}}%

\global\long\def\mesure{\mathfrak{t}}%

\global\long\def\mesureLebesgue{\mathbf{\mathfrak{\boldsymbol{t}}}}%
\nomenclature[ST Fonction]{$\mesureLebesgue$}{Mesure de Lebesgue (voir la section  \vref{subsec:Definitions-mathematiques-continuite-taille}).}

\global\long\def\IQSF{\I^{\fcarac}}%

\global\long\def\Tv#1{{\mathrm{T}#1}^{\mathrm{n}}}%
\nomenclature[ST Formalisme 1]{$\Tv{\F}^{\Cvierge}$}{Formalisme $\F$ temporalisé sans relation intermédiaire par la dynamique $\Cvierge$ (déf. \vref{def:TFv}).}

\global\long\def\Tvf#1{{\mathrm{T}#1}^{\voisinage'}}%

\global\long\def\Td#1{{\mathrm{T}#1}^{\mathrm{d}}}%
\nomenclature[ST Formalisme 1]{$\Td{\F}^{\Cvierge}$}{Formalisme $\F$ temporalisé sur intervalles  par la dynamique $\Cvierge$ (déf. \vref{def:TFd}).}

\global\long\def\Tt#1{{\mathrm{T}#1}^{\mesure}}%
\nomenclature[ST Formalisme 2]{$\Tt{\RCCH}$}{Calcul des régions connectés et mesurables temporalisé avec relations de taille (il se note également $\Tc{\RCCm}{\PC}{\mesureLebesgue}{}$ ; déf. \vref{def:TMS}).}\nomenclature[ST Formalisme 2]{$\Tc{\F}{\tilde{\F}}{\fcarac}{\Cvierge}$}{Formalisme $\F$ temporalisé par la dynamique $\Cvierge$ avec relations de $\tilde{F}$ sur la caractéristique $\fcarac$ (déf. \vref{def:TMS}).}

\global\long\def\Tdt#1{{\mathrm{T}#1}^{\dominance,\mesure}}%
\nomenclature[ST Formalisme 3]{$\Tdc{\F}{\tilde{\F}}{\fcarac}{\Cvierge}$}{Formalisme $\F$ temporalisé sur intervalles par la dynamique $\Cvierge$ avec relations de $\tilde{F}$ sur la caractéristique $\fcarac$ (déf. \vref{def:Fdc}).}

\global\long\def\Tvt#1{{\mathrm{T}#1}^{\voisinage,\mesure}}%
\nomenclature[ST Formalisme 3]{$\Tvc{\F}{\tilde{\F}}{\fcarac}{\Cvierge}$}{Formalisme $\F$ temporalisé sans relation intermédiaire par la dynamique $\Cvierge$ avec relations de $\tilde{F}$ sur la caractéristique $\fcarac$ (déf. \vref{def:Fvc}).}

\nomenclature[ST Opérateur]{$\conv{\F}{\Fcarac{\tilde{F}}{\fcarac}}$}{Projection de l'algèbre $\A$ vers l'algèbre $\tilde{\A}$  de l'intégration lâche de $\F$ et $\Fcarac{\tilde{\F}}{\fcarac}$ (déf. \vref{def:projection_integration_F_QS}).}

\global\long\def\Tc#1#2#3#4{{\mathrm{T}#1}^{#4}+\Fcarac{#2}{#3}}%

\global\long\def\Tdc#1#2#3#4{{\Td{#1}}^{#4}+\Fcarac{#2}{#3}}%

\global\long\def\Tvc#1#2#3#4{{\Tv{#1}}^{#4}+\Fcarac{#2}{#3}}%

\global\long\def\Te#1{{\mathrm{T}#1}_{\ast}}%

\global\long\def\Tec#1#2#3#4{{\Te{#1}}^{#4}+\Fcarac{#2}{#3}}%

\global\long\def\convFT#1{\conv{#1}{\mesure}}%

\global\long\def\convTF#1{\conv{\mesure}{#1}}%

\global\long\def\SRI#1#2{\mathrm{SRI}_{#1}^{#2}}%
\nomenclature[ST Relation]{$\SRI{[\tau,\tau']}{\F}$}{Dynamique de $\F$ sans relation intermédiaire durant l'intervalle $[\tau,\tau']$ : \\$\bigliste{(x,y)}{\exists t\in[\tau,\tau']\minisep\exists b,b'\in\B\sep(x,y)\in\interpTemporel b{[\tau,t[}{\Cvierge}\cap\interpTemporel{b\sunion b'}t{\Cvierge}\cap\interpTemporel{b'}{]t,\tau']}{\Cvierge}}$ (déf. \vref{def:SRI}).}

\global\long\def\FSRI#1#2{\mathrm{FSRI}_{#1}^{#2}}%

\global\long\def\interpTemporel#1#2#3{\left\langle #1\right\rangle _{#2}^{#3}}%
\nomenclature[ST Relation]{$\interpTemporel rI{\Cvierge}$}{Relation sur $\Cvierge$, interprétation de $r\in\A$ durant l'intervalle $I$ :\\$\interpTemporel rI{\Cvierge}=\liste{(x,y)\in\Cvierge}{\exists b\in r\minisep\forall t\in I\sep\contrainte{x(t)}{\I(b)}{y(t)}}$ (déf. \vref{def:relations_temporalisees}).}\nomenclature[ST Relation]{$\interpTemporel rt{\Cvierge}$}{Relation sur $\Cvierge$, interprétation de $r\in\A$ à l'instant t :\\$\interpTemporel rt{\Cvierge}=\liste{(x,y)\in\Cvierge}{\contrainte{x(t)}{\I(r)}{y(t)}}$ (déf. \vref{def:relations_temporalisees}).}

\global\long\def\interpTemporelTaille#1#2#3{\left\langle #1\right\rangle _{#2}^{#3}}%

\global\long\def\TPA{\Tv{\PA}}%

\global\long\def\TPC{\Tv{\PC}}%

\global\long\def\postdomine#1{\dashv_{#1}}%

\global\long\def\predomine#1{\vdash_{#1}}%

\global\long\def\domine#1{\succ_{#1}}%

\global\long\def\estdomine#1{\prec_{#1}}%

\global\long\def\perturbe#1{\rightsquigarrow_{#1}}%

\global\long\def\voisine#1{\wr_{#1}}%

\nomenclature[ST Opérateur]{$\conv{\mathfrak{r}}{}$ }{Projection du graphe de transition $\mathfrak{r}$, avec $\mathfrak{r}\in\{\wr,\succ,\prec,\rightsquigarrow,\vdash,\dashv\}$ : $\conv{\mathfrak{r}}{}b=\sbigunion\liste{b'\in\Base}{\contrainte b{\mathfrak{r}}{b'}}$ (déf. \vref{def:projections_espace_modes_topo}).}

\global\long\def\cpostdomine#1#2{\dashv_{#1}^{#2}}%
\nomenclature[ST Relation transition 2]{$\cpostdomine{\F}{\Cvierge}$}{$b \cpostdomine{\F}{\Cvierge} b'$ si et seulement s'il existe $(x,y)\in\Cvierge$ et $\tau,\tau'\in I$ tels quel $\tau<\tau'$,  $(x(t),y(t))\in\I(b)$ pour tout $t\in]\tau,\tau'[$ et $(x(\tau'),y(\tau'))\in\I(b')$ (déf. \vref{def:modele_espace_modes_topo} et déf. \vref{def:espace-de-modes_de_FQ}).}

\global\long\def\cpredomine#1#2{\vdash_{#1}^{#2}}%
\nomenclature[ST Relation transition 1]{$\cpredomine{\F}{\Cvierge}$}{$b \cpredomine{\F}{\Cvierge} b'$ si et seulement s'il existe $(x,y)\in\Cvierge$ et $\tau,\tau'\in I$ tels quel $\tau<\tau'$, $(x(\tau),y(\tau))\in\I(b)$ et $(x(t),y(t))\in\I(b')$ pour tout $t\in]\tau,\tau'[$ (déf. \vref{def:modele_espace_modes_topo} et déf. \vref{def:espace-de-modes_de_FQ}).}

\global\long\def\cdomine#1#2{\succ_{#1}^{#2}}%
\nomenclature[ST Relation transition 3]{$\cdomine{\F}{\Cvierge}$}{$b\cdomine{\F}{\Cvierge} b'$ si et seulement si $b\cpredomine{\F}{\Cvierge} b'\ou b'\cpostdomine{\F}{\Cvierge} b$ (déf. \vref{def:graphe_voisinage_formel}, déf. \vref{def:modele_espace_modes_topo} et déf. \vref{def:espace-de-modes_de_FQ}).}

\global\long\def\cvoisine#1#2{\wr_{#1}^{#2}}%
\nomenclature[ST Relation transition 5]{$\cvoisine{\F}{\Cvierge}$}{$ b\cvoisine{\F}{\Cvierge} b'$ si et seulement si $b\cperturbe{\F}{\Cvierge} b'\ou b\cperturbe{\F}{\Cvierge} b'$ (déf. \vref{def:graphe_voisinage_formel}, déf. \vref{def:modele_espace_modes_topo} et déf. \vref{def:espace-de-modes_de_FQ}).}

\global\long\def\cperturbe#1#2{\rightsquigarrow_{#1}^{#2}}%
\nomenclature[ST Relation transition 4]{$\cperturbe{\F}{\Cvierge}$}{$ b\cperturbe{\F}{\Cvierge} b'$ si et seulement si $b\cpostdomine{\F}{\Cvierge} b'\ou b\cpredomine{\F}{\Cvierge} b'$ (déf. \vref{def:graphe_voisinage_formel}, déf. \vref{def:modele_espace_modes_topo} et déf. \vref{def:espace-de-modes_de_FQ}).}

\global\long\def\tpostdomine#1{\cpostdomine{#1}{\mesure}}%

\global\long\def\tpredomine#1{\cpredomine{#1}{\mesure}}%

\global\long\def\tdomine#1{\cdomine{#1}{\mesure}}%

\global\long\def\tvoisine#1{\cvoisine{#1}{\mesure}}%

\global\long\def\tperturbe#1{\cperturbe{#1}{\mesure}}%

\global\long\def\Cvierge{\mathcal{C}}%
\nomenclature[ST Ensemble]{$\Cvierge$}{Une dynamique (déf. \vref{def:Dynamique}).}\nomenclature[ST Ensemble]{$\sqrt{\Cvierge}$}{Ensemble des évolutions d'entités de la dynamique $\Cvierge$ : $\sqrt{\Cvierge}=\liste x{\exists y\minisep (x,y)\in\Cvierge}$ (déf. \vref{def:sqrt-dynamique}).}

\global\long\def\C#1#2{\Cvierge_{#1}^{#2}}%
\nomenclature[ST Ensemble 1]{$\C{\topo}{I}$}{Dynamique générale de $\topo$ sur un intervalle $I$ (déf. \vref{def:dynamique-generale}).}\nomenclature[ST Ensemble 2]{$\C{\F}{I}$}{Dynamique générale de $\F$ sur un intervalle $I$ (déf. \vref{def:dynamique_FQ-dyna_symetrique}).}

\global\long\def\Cc#1#2#3{\Cvierge_{#1,#3}^{#2}}%
\nomenclature[ST Ensemble 3]{$\Cc{\topo}{I}{\fcarac}$}{Dynamique de la préservation de la caractéristique $\fcarac$ des entités de $\topo$ sur un intervalle $I$ :\\$\liste{f\in\C{\topo}{I}}{\forall t,t'\in I\sep\fcarac(f(t))=\fcarac(f(t'))}$ (déf. \vref{def:dynamique_caracteristique}).}\nomenclature[ST Ensemble 4]{$\Cc{\F}{I}{\fcarac}$}{Dynamique de la préservation de la caractéristique $\fcarac$ de $\F$ sur un intervalle $I$ (déf. \vref{def:dynamique-FQ-caracteristique}).}

\global\long\def\Ct#1#2{\Cc{#1}{#2}{\mesure}}%

\global\long\def\reg#1{\mathfrak{R}_{#1}}%

\global\long\def\regc#1{\mathfrak{R}_{#1}^{\mathrm{c}}}%

\global\long\def\regcm#1{\mathfrak{R}_{#1}^{\mathrm{c,m}}}%

\global\long\def\dh#1#2{d_{\mathrm{H}}(#1,#2)}%

\global\long\def\P#1#2{\smash{P_{#1}^{#2}}}%

\global\long\def\pRecb#1#2{P_{#1}(#2)}%

\global\long\def\rank#1{\operatorname{rang}(#1)}%

\global\long\def\nbsub#1{\mathcal{N}_{#1}}%

\global\long\def\Urccmb{{\mathfrak{R}_{\mathrm{b}}^{\mathrm{m}}}}%

\global\long\def\Urccm{{\mathfrak{R}_{\mathrm{m}}}}%
\nomenclature[ST Ensemble]{$\Urccm$}{Ensemble des régions mesurables de $\mathbb{R}^{n}$ (déf. \vref{def:Rm}).}

\global\long\def\RCCmb{\RCCH^{\mathrm{b,m}}}%

\global\long\def\RCCm{{\RCC_{8,\mathrm{m}}}}%
\nomenclature[ST Formalisme 0]{$\RCCm$}{$\RCCH$ restreint aux régions mesurables (déf. \vref{def:RCCHm}).}

\global\long\def\interpRCCmb{\I_{\mathrm{b,m}}}%

\global\long\def\interpRCCm{\I_{\mathrm{m}}}%
\nomenclature[ST Fonction]{$\interpRCCm$}{Interprétation de $\RCCH$ restreinte aux régions mesurables (déf. \vref{def:RCCHm}).}

\global\long\def\c{\mathrm{C}}%
\global\long\def\P{\mathcal{P}}%
\global\long\def\Nrcc{\mathcal{N}}%
\global\long\def\NPrcc{\mathcal{NP}_{8}}%
\global\long\def\Prcc{\mathcal{P}_{8}}%

\global\long\def\icclo{\mathord{\scomp\sinter}\text{-closed}}%
\end{minipage}

\section{Introduction}

Reasoning about time and space is omnipresent in our daily lives.
Computers can achieve them using quantitative approaches; however,
for human-computer interaction, quantitative data is often unavailable
or unnecessary. This is why research has been carried out about \emph{qualitative}
approaches to temporal and spatial reasoning -{}- such as the interval
algebra of Allen \cite{Allen1983} -{}- not only in artificial intelligence
but also in geographical information systems, databases, and multimedia
\cite{chittaro2000temporal,chen2015survey}. 

Qualitative reasoning deals with the analysis of qualitative descriptions
of the world. These descriptions are logical formulas and in general
more particularly \emph{constraint networks,} that is to say systems
of relations of the form $\contrainte x{R_{xy}}y$, $\contrainte y{R_{yz}}z$,
$\contrainte x{R_{xz}}z$, ... with $R_{xy},R_{yz}$, $R_{xz}$, ...
relations belonging to a particular set of relations and $x,y,z$,
... some variables belonging to a particular set of entities (generally
geometric entities or numbers). One of the central problems of reasoning
is to decide if a description is \emph{satisfiable}, i.e. if there
exists a solution which satisfies each constraint of the constraint
network (i.e. which satisfies each relation of the system).
\begin{example}
An example of constraint networks about $3$ set variables $x,y,z$
is: ``$\contrainte x{\subseteq}y$, $\contrainte y{\subseteq}z$,
$\contrainte x{\mathrm{\dc}}z$'' with the relation $\dc=\liste{\left(x,y\right)}{x\cap y=\vide}$.
This description is not satisfiable since there does not exist such
sets $x,y,z$.
\end{example}

Research on qualitative formalisms focuses in particular on 3 axes:
determining algorithms to decide the satisfiability of descriptions
corresponding to a particular set of relations, studying the algorithmic
complexity of satisfiability problems associated with this set of
relations, and determining tractable fragments (that is to say subsets
of relations having a polynomial complexity).

Some recent research has also focused on the combination of qualitative
approaches in order to increase their number of applications. One
of the most popular combinations is \emph{loose integration} \cite{wolfl2009combinations},
which combines different languages about the same entities. Spatio-temporal
formalisms, which combines space and time informations are other kinds
of combinations \cite{ligozat2013qualitative}, allowing for example
the processing of temporal sequences of spatial information \cite{westphal2013transition}.
Also fundamental to human intelligence is \emph{multi-scale reasoning},
i.e., the ability to reason at different levels of detail \cite{hobbs1985granularity},
namely to reason with information of different precisions. Both temporal
and spatial multi-scale qualitative reasoning have been studied \cite{Bittner2002,euzenat2005time,li2007qualitative,cohen2015algebra}.
Multi-scale reasoning is also a form of combination \cite{hobbs1985granularity,li2007qualitative,cohen2015algebra}.
\begin{example}
An example of loose integration about $3$ mesurable sets is: ``$\contrainte x{\subseteq}y$,
$\contrainte y{\subseteq}z$, and $\contrainte x{\subseteq}z$, ;
$\contrainte x>y$, $\contrainte y>z$, and $\contrainte x>z$''
where $\contrainte x>y$ means $\contrainte{\mu\left(x\right)}>{\mu\left(x\right)}$
(the size of $x$ is greater than the size of $y$). Although the
set relations are satisfiable and the size relations are satisfiable
the whole is not satisfiable.

An example of spatio-temporal description about 3 sets at two consecutive
instants $t_{1}$ and $t_{2}$ is: at $t_{1}$ $\contrainte x{\subseteq}y$,
$\contrainte y{\subseteq}z$, and $\contrainte x{\subseteq}z$ ; at
$t_{2}$ $\contrainte x{\supsetneq}y$, $\contrainte y{\supsetneq}z$,
and $\contrainte x{\supsetneq}z$. This description is satisfiable
except if we require that the sets have a constant size.

An example of multi-scale description about 3 instants at different
scales is: at the scale of seconds $\contrainte x<y$, $\contrainte y<z$,
and $\contrainte x<z$ ; at the scale of minutes $\contrainte x=y$
(the instants $x$ and $y$ are indistinguishable to the scale of
minutes), $\contrainte y<z$, and $\contrainte x<z$. This description
is satisfiable.
\end{example}

This paper is an extended version of a conference paper published
in the acts of AAAI 2017 \cite{cohen2017checking}, which introduces
the multi-algebra framework, a formal framework capturing the common
structure of loose integrations, multi-scale representations, and
temporal sequences of spatial information; all of these can indeed
be seen as tuples of \emph{constraint networks} having interdependencies.
With loose integrations, each network is based on a different formalism,
whereas with multi-scale and spatio-temporal representations all networks
are based on the same formalism but hold at different scales and different
time periods, respectively. Moreover, in each case, constraints in
one network of the tuple can entail constraints between the same variables
in the other networks. The entailed constraints correspond to how
the initial constraints are transformed by language (basic relations)
change, scale change, or reference period change, respectively. 

We study in particular the \emph{satisfiability decision} (also called
\emph{consistency checking}) in the context of our framework; we focus
on general results that are common to the three kinds of combination,
using a dissociable, well-known instance of a loose integration as
a running example. Specifically, we identify sufficient conditions
so that the generalized \emph{algebraic closure} can be used to decide
satisfiability of networks over some subclass -{}- which is therefore
\emph{tractable}. To sum up, we propose a framework for representing
knowledge, reasoning, and identifying tractable fragments, in a unified
way, for the three kinds of combination.

This extended version appends a new definition of qualitative formalism,
which includes qualitative formalisms excluded from previous definitions,
important in the context of combinations, shows how this framework
encompasses qualitative multi-scale reasoning, offers stronger conditions
to the tractability theorems to simplify their application, as well
as a technique to circumvent the non-satisfaction of some of their
conditions, and we add many details throughout the document.

We begin, in section \ref{sec:Background-and-Related}, by recalling
concepts related to temporal and spatial formalisms, then we give
some background on combinations of formalisms, notably loose integrations,
spatio-temporal sequences, and multi-scale reasoning. Then, in the
section \ref{sec:Symmetrical-Qualitative-Formalis}, we expose our
definition of qualitative formalism. Section \ref{sec:Framework_Multi-algebras}
presents the multi-algebra framework. Section \ref{sec:Tractability}
establishes our tractability results. Section \ref{sec:Properties-to-Apply}
introduces properties to simplify and expand the application of our
tractability results and then illustrates them on the combination
of size and topology. Finally, in the last section, we discuss about
the limitations of our framework and we conclude.

The original conference paper \cite{cohen2017checking} only contains
the main definitions, a weaker form of the two tractability theorems
and their illustrative application. This extended version thus contains
in addition, in particular, our generalized definition of qualitative
formalism, the generalized two tractability theorems, the properties
to simplify and expand their applications, and the discussion about
the limitations of our framework. 

\section{Background and Related Work\label{sec:Background-and-Related}}

\subsection{Qualitative Temporal and Spatial Formalisms\label{subsec:RW:Qualitative}}

In the context of qualitative temporal and spatial reasoning, we are
particularly interested in deciding  satisfiability of temporal or
spatial \emph{descriptions}, encoded by relations between spatial
or temporal \emph{entities} of a set $\U$ (called \emph{entity domain}
or also \emph{universe}). Each relation is a union of \emph{basic}
relations from a set $\Base$: this represents the uncertainty about
the actual basic relation -- e.g., $x\mathrel{\left(<\cup=\right)}y$
means that either $x<y$ or $x=y$. The set of all relations forms
a \emph{non-associative relation algebra} $\mathcal{A}$~\cite{ligozat2013qualitative}\index{non-associative relation algebra}
(Chapter 11). 
\begin{defn}[\cite{maddux1982some,hirsch2002relation}]
\label{def:algebre-non-associative} A non-associative relation algebra
is a tuple $(\Aa,\aunion,\acompl{},\avide,\Ba,\acomp,\ainv{\cdot},\aeq)$
such that $(\Aa,\aunion,\acompl{},\avide,\Ba)$ is a boolean algebra
and that the following properties are satisfied for all $x,y,z\in\Aa$,
with $x\ainter y=\acompl{(\acompl x\aunion\acompl y)}$ : 
\begin{align*}
\begin{aligned}\ainv{\ainv x} & =x\\
\ainv{(x\aunion y)} & =\ainv x\aunion\ainv y\\
\ainv{(x\acomp y)} & =\ainv y\acomp\ainv x
\end{aligned}
 &  & \begin{gathered}\aeq\acomp x=x\acomp\aeq=x\\
x\acomp(y\aunion z)=x\acomp y\aunion x\acomp z\\
(x\acomp y)\ainter\ainv z=\avide\iff(y\acomp z)\ainter\ainv x=\avide
\end{gathered}
\end{align*}

A non-associative relation algebra is \emph{finite} if $\A$ is finite.
\end{defn}

In each non-associative relation algebra, there is a particular relation,
the \emph{universal relation}\index{universal relation}, denoted
by $\B$, which is the union of all basic relations. Well-known algebras
include the interval algebra of Allen \cite{Allen1983}, but also
the \emph{point algebra} $\PA$ \cite{Vilain89constraintpropagation}\index{point algebra PA@point algebra $\PA$},
whose basic relations are $\Base_{\PA}=\{<,=,>\}$, and the algebra
$\RCCH$ of \emph{topological relations} \cite{randell1992spatial}\index{topological relations, RCCH@topological relations, $\RCCH$},
whose basic relations are described in Fig.~\ref{fig:RCC8_relations_horizontal}
and formalized in Table~\ref{tab:def-relations-rcc8}. There are
several operators over relations in $\mathcal{A}$: \emph{converse}
of a relation $r$, denoted by $\bar{r}$, \emph{intersection} of
$r_{1}$ and $r_{2}$, denoted by $r_{1}\cap r_{2}$, and \emph{(weak)
composition} of $r_{1}$ and $r_{2}$~\cite{renz2005weak}\index{weak composition wcomp@\emph{weak composition $\wcomp$}},
denoted by $r_{1}\wcomp r_{2}$. These operators allow one to reason,
by reducing the uncertainty about basic relations of entities: $x\mathrel{r}y\iff y\mathrel{\bar{r}}x$;
$x\mathrel{r}_{1}y\,\wedge\,{x\mathrel{r}_{2}y}\iff x\mathrel{(r_{1}\cap r_{2})}y$;
and $x\mathrel{r}_{1}y\,\wedge\,y\mathrel{r}_{2}z\implies x\mathrel{(r_{1}\diamond r_{2})}z$.
For example, if we know that $\contrainte x{\ec}y$ and $\contrainte y{\ntpp}z$,
we can deduce that $\contrainte x{\left(\po\cup\tpp\cup\ntpp\right)}z$.
The composition of basic relations of $\RCCH$ are described in Table~\ref{tab:Table-de-composition_rcc8}.

\begin{figure}[t]
\centering{}\includegraphics[width=0.95\columnwidth]{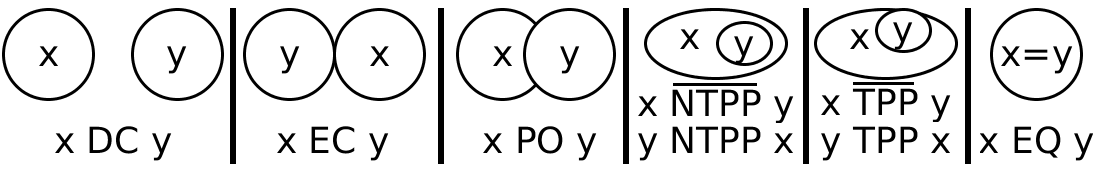}
\caption{The $8$ relations of $\protect\RCCH$ in the plane.}
\label{fig:RCC8_relations_horizontal}
\end{figure}

\begin{table}
\begin{centering}
\begin{tabular}{|c|c|}
\hline 
Relation & Definition\tabularnewline
\hline 
\hline 
$\contrainte x{\dc}y$ & $\lnot\left(\contrainte x{\c}y\right)$\tabularnewline
\hline 
$\contrainte x{\p}y$ & $\forall z\quad\contrainte z{\c}x\implies\contrainte z{\c}y$\tabularnewline
\hline 
$\contrainte x{\pp}y$ & $\contrainte x{\p}y\et\lnot\left(\contrainte y{\p}x\right)$\tabularnewline
\hline 
$\contrainte x{\eq}y$ & $\contrainte x{\p}y\et\contrainte y{\p}x$\tabularnewline
\hline 
$\contrainte x{\o}y$ & $\exists z\quad\contrainte z{\p}x\et\contrainte z{\p}y$\tabularnewline
\hline 
$\contrainte x{\po}y$ & $\contrainte x{\o}y\et\lnot\left(\contrainte x{\p}y\right)\et\lnot\left(\contrainte y{\p}x\right)$\tabularnewline
\hline 
$\contrainte x{\ec}y$ & $\contrainte x{\c}{y\et\lnot\left(\contrainte x{\o}y\right)}$\tabularnewline
\hline 
$\contrainte x{\tpp}y$ & $\contrainte x{\pp}{y\et\left(\exists z\ \contrainte z{\ec}x\et\contrainte z{\ec}y\right)}$\tabularnewline
\hline 
$\contrainte x{\ntpp}y$ & $\contrainte x{\pp}{y\et\lnot\left(\exists z\ \contrainte z{\ec}x\et\contrainte z{\ec}y\right)}$\tabularnewline
\hline 
$\contrainte x{\tppi}y$ & $\contrainte y{\tpp}x$\tabularnewline
\hline 
$\contrainte x{\ntppi}y$ & $\contrainte y{\ntpp}x$\tabularnewline
\hline 
\end{tabular}
\par\end{centering}
\caption{Definitions of RCC relations from the \emph{contact relation} $C(x,y)$
which means that the closed regions $x$ and $y$ intersect ($P$
is the \emph{part} \emph{relation} and $\protect\o$ the \emph{overlap
relation} ; variables $x,y,z$ are closed regions).\label{tab:def-relations-rcc8}}
\end{table}
\begin{figure}
\begin{centering}
\begin{tikzpicture}[scale=1,bend angle=23]  
       
\tikzstyle{every state}=[draw=none]

\node[state] (Bnn) at (1,3) {$\dc$};    
                                     
\node[state] (O1) at (3,3) {$\ec$};     
\node[state] (On) at (5,3) {$\po$};        
                                 
\node[state] (Sn) at (7,4.5) {$\tpp$};                     
\node[state] (fn) at (7,1.5) {$\tppi$};                  
                                       
\node[state] (Dn) at (9,6) {$\ntpp$};                     
\node[state] (en) at (9,3) {$\eq$};                     
\node[state] (dn) at (9,0) {$\ntppi$};

\draw[->,>=latex] (Bnn) -> (O1);                                                           
                                                                                                                         
\draw[->,>=latex] (On) -> (Sn);                                         
\draw[->,>=latex] (On) -> (fn);                 
\draw[->,>=latex] (O1) -> (On);   
                                                                                      
\draw[->,>=latex] (Sn) -> (en);

\draw[->,>=latex] (fn) -> (en);

\draw[->,>=latex] (Sn) -> (Dn);               
                                              
\draw[->,>=latex] (fn) -> (dn);

\end{tikzpicture}
\par\end{centering}
\caption{Partial order of basic relations of $\protect\RCCH$ (induced by the
order of Allen's relations~\cite{ligozat2013qualitative})\label{fig:Ordre-partiel-RCC8}}
\end{figure}
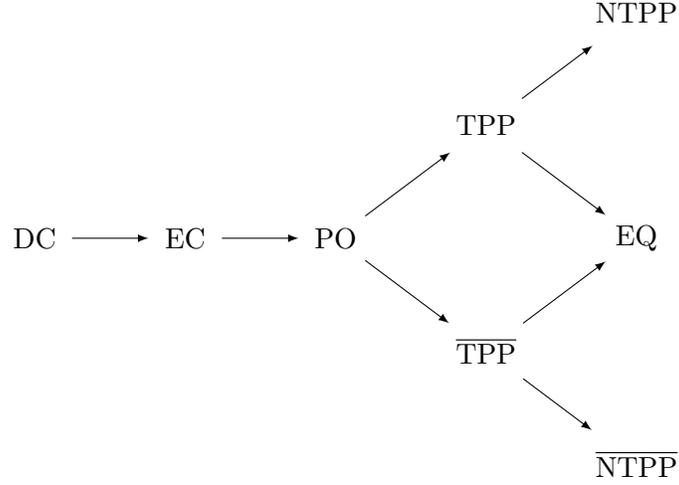
\begin{table}
\begin{centering}
{\scriptsize{}%
\begin{tabular}{|c|c|c|c|c|c|c|c|}
\hline 
{\scriptsize$\wcomp$} & {\scriptsize$\dc$} & {\scriptsize$\ec$} & {\scriptsize$\po$} & {\scriptsize$\tpp$} & {\scriptsize$\ntpp$} & {\scriptsize$\tppi$} & {\scriptsize$\ntppi$}\tabularnewline
\hline 
\hline 
{\scriptsize$\dc$} & {\scriptsize$[\dc,]$} & \multicolumn{4}{c|}{{\scriptsize$[\dc,\ntpp]$}} & \multicolumn{2}{c|}{{\scriptsize$\dc$}}\tabularnewline
\hline 
{\scriptsize$\ec$} & {\scriptsize$[\dc,\ntppi]$} & {\scriptsize$[\dc,\eq]$} & {\scriptsize$[\dc,\ntpp]$} & {\scriptsize$[\ec,\ntpp]$} & {\scriptsize$[\po,\ntpp]$} & {\scriptsize$[\dc,\ec]$} & {\scriptsize$\dc$}\tabularnewline
\hline 
{\scriptsize$\po$} & \multicolumn{2}{c|}{{\scriptsize$[\dc,\ntppi]$}} & {\scriptsize$[\dc,]$} & \multicolumn{2}{c|}{{\scriptsize$[\po,\ntpp]$}} & \multicolumn{2}{c|}{{\scriptsize$[\dc,\ntppi]$}}\tabularnewline
\hline 
{\scriptsize$\tpp$} & {\scriptsize$\dc$} & {\scriptsize$[\dc,\ec]$} & {\scriptsize$[\dc,\ntpp]$} & {\scriptsize$[\tpp,\ntpp]$} & {\scriptsize$\ntpp$} & {\scriptsize$[\dc,\eq]$} & {\scriptsize$[\dc,\ntppi]$}\tabularnewline
\hline 
{\scriptsize$\ntpp$} & \multicolumn{2}{c|}{{\scriptsize$\dc$}} & {\scriptsize$[\dc,\ntpp]$} & \multicolumn{2}{c|}{{\scriptsize$\ntpp$}} & {\scriptsize$[\dc,\ntpp]$} & {\scriptsize$[\dc,]$}\tabularnewline
\hline 
{\scriptsize$\tppi$} & {\scriptsize$[\dc,\ntppi]$} & {\scriptsize$[\ec,\ntppi]$} & {\scriptsize$[\po,\ntppi]$} & {\scriptsize$[\po,\eq]$} & {\scriptsize$[\po,\ntpp]$} & {\scriptsize$[\tppi,\ntppi]$} & {\scriptsize$\ntppi$}\tabularnewline
\hline 
{\scriptsize$\ntppi$} & {\scriptsize$[\dc,\ntppi]$} & \multicolumn{3}{c|}{{\scriptsize$[\po,\ntppi]$}} & {\scriptsize$[\po,]$} & \multicolumn{2}{c|}{{\scriptsize$\ntppi$}}\tabularnewline
\hline 
{\scriptsize$\eq$} & {\scriptsize$\dc$} & {\scriptsize$\ec$} & {\scriptsize$\po$} & {\scriptsize$\tpp$} & {\scriptsize$\ntpp$} & {\scriptsize$\tppi$} & {\scriptsize$\ntppi$}\tabularnewline
\hline 
\end{tabular}}{\scriptsize\par}
\par\end{centering}
\caption{Composition table of $\protect\RCCH$ \cite{ligozat2013qualitative}
(we denote by $[b,b']$ the relation $\bigcup\protect\liste{b''}{b\protect\leq b''\protect\et b''\protect\leq b'}$
and by $[b,]$ the relation $\bigcup\protect\liste{b'}{b\protect\leq b'}$
according to the order $\protect\leq$ of Figure~\ref{fig:Ordre-partiel-RCC8}
; column of $\protect\eq$ is omitted)\label{tab:Table-de-composition_rcc8}}
\end{table}
A temporal or spatial description can be modeled by a \emph{qualitative
constraint network}\index{qualitative constraint network} -- a labeled
graph in which nodes are \emph{entity variables} and edges are labeled
with a binary relation of $\mathcal{A}$. More formally, a qualitative
constraint network $N$ over a set of relations $\mathcal{S}\subseteq\A$
is a pair $N=(\mathtt{E},\mathtt{C})$ where $\mathtt{E}$ is a finite
set of entity variables of $\U$ and $\mathtt{C}$ is a set of \emph{constraints
over $\mathcal{S}$}, i.e., tuples $(x,r,y)$ with $x,y\in\mathtt{E}$
, $x\neq y$ and $r\in\mathcal{S}$. An example of a network over
$\RCCH$ is $N=(\{x,y,z\},\{(x,\ec,y),(y,\ntpp,z)\})$. Like all networks
in this paper, it is normalized, in the sense that for each pair $\{x,y\}$
there is at most one constraint, whose relation is denoted $N^{xy}$
(implicitly, $N^{xy}=\overline{N^{yx}}$). If there is no constraint
between two entities, the constraint is implicit, and the corresponding
relation is the universal relation $\mathcal{B}$. We say that a network
$N$ \emph{refines} another network $N'$ if it holds that $\forall x,y\in\mathtt{E}\colon N^{xy}\subseteq(N')^{xy}$,
which we denote by $N\subseteq N'$.\index{network refinining relation subseteq@\emph{network refinining relation $\subseteq$}}

\subsubsection*{Satisfiability}

\index{solution}The notion of \emph{solution} of a constraint network
depends on a semantics, which is given by an \emph{interpretation
function} $\varphi$ mapping any relation $r$ of the algebra to the
set of all pairs of entities from the domain $\U$ \emph{satisfying}
$r$. When $\varphi$ verifies specific properties, the triple $(\mathcal{A},\U,\varphi)$
constitutes a \emph{qualitative formalism} \cite{ligozat2013qualitative}
(Chapter 11). Thus, a solution of a constraint network $N$ is a set
$\{u_{x}\}_{x\in\mathtt{E}}\subseteq\U$ such that $\forall x,y\in\mathtt{E}\colon(u_{x},u_{y})\in\varphi\left(N^{xy}\right)$.
A fundamental problem is to determine whether a constraint network
has at least one solution, in which case it is said to be \emph{satisfiable}\index{satisfiable}
(or also \emph{consistent}). To each solution of a network $N$ corresponds
a unique \emph{scenario}\index{scenario} of $N$, i.e., a network
$S\subseteq N$ such that $\forall x,y\in\mathtt{E}\colon S^{xy}\in\Base$.
For instance, our example $\RCCH$ network is not a scenario ($N^{xy}$
is not basic). Finding a solution of a network $N$ amounts to finding
a satisfiable scenario $S$ of $N$, since any solution of $S$ is
a solution of $N$. Because the satisfiability decision is $\NP$-complete
for many algebras, some research focuses on particular tractable fragments:
\emph{tractable subclasses}. A \emph{subclass}\index{subclass} is
a set $\mathcal{S}\subseteq\A$ that is closed under intersection,
weak composition, and inversion. A subclass is \emph{tractable}\index{tractable}
if it is polynomial to decide  satisfiability of any network whose
relations are in $\mathcal{S}$ \cite{ligozat2013qualitative}.

A constraint network is \emph{algebraically closed}\index{algebraically closed}
-- a key concept to find satisfiable scenarios in a purely algebraic
way~-- if $N^{xz}\subseteq N^{xy}\wcomp N^{yz}$ for all $x,y,z\in\mathtt{E}$.
We can obtain from any network $N$ an algebraically closed network
having the same solutions by computing its \emph{algebraic closure}\index{algebraic closure}.
It can be done (in polynomial time) by repeatedly replacing each $N^{xz}$
by $(N^{xy}\wcomp N^{yz})\cap N^{xz}$ until a fixed point is reached.
If the resulting network is not \emph{trivially inconsistent}\index{trivially inconsistent}
(i.e., if none of its relations is the empty set), it is said to be
\emph{algebraically consistent} (or also \emph{$\diamond$-consistent}\index{diamond -consistent@\emph{$\diamond$-consistent}}).
In the literature, $\diamond$-consistency is often conflated with
\emph{path-consistency}, because for some formalisms they are equivalent~\cite{renz2005weak}.
An algebraically closed scenario is always algebraically consistent
by definition, but note that it is not necessarily satisfiable for
any formalism~\cite{renz2005weak}. However, when all the algebraically
closed scenarios of a formalism are satisfiable, the satisfiability
of its networks can be decided by searching for an algebraically closed
scenario, using backtracking methods based on algebraic closure~\cite{Ladkin1992}.
For some subclasses, any algebraically consistent network is satisfiable,
so there is no need to backtrack: such subclasses are thus tractable.
Examples of such tractable subclasses are $\HH$, $\QH$, and $\CH$:
the three maximal tractable basic subclasses of $\RCCH$~\cite{renz1999maximal}.
Other examples are the basic suclasses $\RCAHs$ again for $\RCCH$
and $\RCAHs$ for $\PAs$~\cite{long2015distributive}. Definitions
of these subclasses are recalled in Table~\ref{tab:def_classic_subclass}.
Tractability is proven in particular with the help of a particular
function of relations called \emph{refinement}. The refinement of
the point algebra is described in Table~\ref{tab:affinement_PA}.
The refinements of the tractable subclasses of RCC8 are described
in the following definition.

\begin{table}
\begin{centering}
\begin{tabular}{|c|c|}
\hline 
 & Definition\tabularnewline
\hline 
\hline 
$\Nrcc$ & $\liste{r\in\RCAH}{\po\nsubseteq r\et r\cap\left(\tpp\cup\ntpp\right)\neq\vide\et r\cap\left(\tppi\cup\ntppi\right)\neq\vide}$\tabularnewline
\hline 
$\NPrcc$ & $\Nrcc\cup\liste{r_{1}\cup\ec\cup r_{2}\cup\eq}{r_{1}\in\left\{ \vide,\dc\right\} \et r_{2}\in\left\{ \ntpp,\ntppi\right\} }$\tabularnewline
\hline 
$\Prcc$ & $\RCAH\backslash\NPrcc$\tabularnewline
\hline 
$\HH$ & $\Prcc\cap\liste{r\in\RCAH}{\ntpp\cup\eq\subseteq r\implies\tpp\subseteq r\et\ntppi\cup\eq\subseteq r\implies\tppi\subseteq r}$\tabularnewline
\hline 
$\QH$ & $\Prcc\cap\liste{r\in\RCAH}{\left(\eq\subseteq r\et r\cap\left(\tpp\cup\ntpp\cup\tppi\cup\ntppi\right)\neq\vide\right)\implies\po\subseteq r}$\tabularnewline
\hline 
$\CH$ & $\Prcc\cap\liste{r\in\RCAH}{\left(\ec\subseteq r\et r\cap\left(\tpp\cup\ntpp\cup\tppi\cup\ntppi\cup\eq\right)\neq\vide\right)\implies\po\subseteq r}$\tabularnewline
\hline 
$\PAs$ & $\left\{ \vide,<,=,>,<\cup>,\B_{\PA}\right\} $\tabularnewline
\hline 
\end{tabular}
\par\end{centering}
\begin{centering}
\begin{tabular}{|c|c|}
\hline 
\multicolumn{2}{|c|}{$r\in\RCAHs$}\tabularnewline
\hline 
\hline 
$\correctTable$$\ntpp$ $\po$ $\ec$ $\tpp$ & $\tppi$ $\tpp$ $\eq$ $\dc$ $\ec$ $\po$\tabularnewline
\hline 
$\correctTable$ $\tppi$ $\ntppi$ & $\po$ $\ec$\tabularnewline
\hline 
$\correctTable$$\ntpp$ $\tpp$ & $\po$ $\tppi$ $\dc$ $\ntppi$ $\ec$\tabularnewline
\hline 
$\correctTable$ $\po$ $\tpp$ & $\tppi$ $\ntppi$ $\tpp$ $\ntpp$ $\eq$ $\po$ $\ec$\tabularnewline
\hline 
$\correctTable$ $\dc$ & $\tppi$ $\ntppi$ $\tpp$ $\ntpp$ $\eq$ $\po$\tabularnewline
\hline 
$\correctTable$ $\tppi$ $\ntppi$ $\tpp$ $\eq$ $\dc$ $\ec$ $\po$ & $\po$ $\eq$\tabularnewline
\hline 
$\correctTable$$\tppi$ $\dc$ $\ec$ $\po$ & $\po$ $\tpp$ $\eq$\tabularnewline
\hline 
$\correctTable$ $\dc$ $\ec$ & $\po$ $\tppi$ $\eq$\tabularnewline
\hline 
$\correctTable$ $\eq$ & $\po$ $\tpp$ $\tppi$\tabularnewline
\hline 
$\correctTable$ $\po$ $\dc$ $\ec$ $\tpp$ & $\po$ $\tpp$ $\ntpp$ $\eq$\tabularnewline
\hline 
$\correctTable$ $\tppi$ $\po$ $\ntppi$ & $\po$ $\tppi$ $\ntppi$ $\eq$\tabularnewline
\hline 
$\correctTable$ $\tppi$ $\ntppi$ $\tpp$ $\eq$ $\po$ $\ec$ & $\po$ $\tpp$ $\tppi$ $\ntppi$\tabularnewline
\hline 
$\correctTable$ $\tpp$ & $\po$ $\tpp$ $\ntpp$ $\tppi$\tabularnewline
\hline 
$\correctTable$$\tppi$ $\po$ $\ec$ & $\po$ $\tpp$ $\ntpp$ $\tppi$ $\ntppi$\tabularnewline
\hline 
$\correctTable$ $\tppi$ & $\ec$ $\po$ $\eq$\tabularnewline
\hline 
$\correctTable$ $\tppi$ $\tpp$ $\ntpp$ $\eq$ $\dc$ $\ec$ $\po$ & $\ec$ $\po$ $\tpp$ $\eq$\tabularnewline
\hline 
$\correctTable$ $\tppi$ $\eq$ $\po$ $\ec$ $\tpp$ & $\ec$ $\po$ $\tppi$ $\eq$\tabularnewline
\hline 
$\correctTable$ $\po$ $\ntpp$ $\dc$ $\ec$ $\tpp$ & $\ec$ $\po$ $\tppi$ $\ntppi$ $\eq$\tabularnewline
\hline 
$\correctTable$$\tppi$ $\po$ & $\ec$ $\po$ $\tpp$ $\ntpp$ $\eq$\tabularnewline
\hline 
$\correctTable$ $\ntpp$ $\po$ $\tpp$ & $\ec$ $\po$ $\tpp$ $\tppi$\tabularnewline
\hline 
$\correctTable$ $\dc$ $\ec$ $\po$ & $\ec$ $\po$ $\tpp$ $\tppi$ $\ntppi$\tabularnewline
\hline 
$\correctTable$ $\tppi$ $\eq$ $\po$ $\ntppi$ $\tpp$ & $\ec$ $\po$ $\tpp$ $\ntpp$ $\tppi$\tabularnewline
\hline 
$\correctTable$ $\tppi$ $\ntpp$ $\eq$ $\po$ $\tpp$ & $\ec$ $\po$ $\tpp$ $\ntpp$ $\tppi$ $\ntppi$\tabularnewline
\hline 
$\correctTable$$\po$ $\ec$ $\tpp$ & $\dc$ $\ec$ $\po$ $\eq$\tabularnewline
\hline 
$\correctTable$ $\tppi$ $\tpp$ $\ntpp$ $\eq$ $\po$ $\ec$ & $\dc$ $\ec$ $\po$ $\tpp$ $\eq$\tabularnewline
\hline 
$\correctTable$ $\ntpp$ & $\dc$ $\ec$ $\po$ $\tppi$ $\eq$\tabularnewline
\hline 
$\correctTable$ $\ntppi$ & $\dc$ $\ec$ $\po$ $\tpp$ $\tppi$\tabularnewline
\hline 
$\correctTable$ $\tppi$ $\eq$ $\po$ $\tpp$ & $\dc$ $\ec$ $\po$ $\tppi$ $\ntppi$ $\eq$\tabularnewline
\hline 
$\correctTable$ $\ec$ & $\dc$ $\ec$ $\po$ $\tpp$ $\ntpp$ $\eq$\tabularnewline
\hline 
$\correctTable$$\tppi$ $\po$ $\ntppi$ $\ec$ & $\dc$ $\ec$ $\po$ $\tpp$ $\ntpp$ $\tppi$\tabularnewline
\hline 
$\correctTable$ $\ntppi$ $\ntpp$ $\dc$ $\ec$ $\tppi$ $\tpp$
$\eq$ $\po$ & $\dc$ $\ec$ $\po$ $\tpp$ $\tppi$ $\ntppi$\tabularnewline
\hline 
$\correctTable$ $\po$ & $\dc$ $\ec$ $\po$ $\tpp$ $\ntpp$ $\tppi$ $\ntppi$\tabularnewline
\hline 
\end{tabular}
\par\end{centering}
\begin{centering}
\par\end{centering}
\caption{Definitions of some classic subclasses.\label{tab:def_classic_subclass}}
\end{table}

\begin{table}
\begin{centering}
\begin{tabular}{|c|c|c|c|c|c|c|c|c|}
\hline 
$r$ & $\vide$ & $<$ & $=$ & $>$ & $<\cup=$ & $<\cup>$ & $=\cup>$ & $\B$\tabularnewline
\hline 
\hline 
$\hmax\left(r\right)$ & $\vide$ & $<$ & $=$ & $>$ & $<$ & $<\cup>$ & $>$ & $<\cup>$\tabularnewline
\hline 
\end{tabular}
\par\end{centering}
\caption{Definition of the refinement $\protect\hmax$: the refinement by the
basic relations of maximal dimension\label{tab:affinement_PA}}

\end{table}

\begin{defn}[\cite{renz1999maximal}]
\label{def:Affinement-RCC8} Let $\S$ be one of the following subclass
$\HH$, $\QH$, and $\CH$ and let $r\in\S$. 

The refinement of $\S$, denoted by $h_{\S}$, is defined by:
\[
h_{\S}(r)=\begin{cases}
r & \text{if }r\in\Br\\
\{\dc\} & \text{otherwise and if }\dc\in r\\
\{\ec\} & \text{otherwise and if }\ec\in r\text{ and }\S\neq\CH\\
\{\po\} & \text{otherwise and if }\po\in r\\
\{\ntpp\} & \text{otherwise and if }\ntpp\in r\text{ and }\S=\CH\\
\{\ntppi\} & \text{otherwise and if }\ntppi\in r\text{ and }\S=\CH\\
\{\tpp\} & \text{otherwise and if }\tpp\in r\\
\{\tppi\} & \text{otherwise and if }\tppi\in r\\
\vide & \text{otherwise}
\end{cases}
\]
\end{defn}

\subsubsection*{Minimality\label{subsec:Minimalit=0000E9-et-redondance}}

A related problem to the satisfiability problem is the \emph{minimality
problem}, also called minimal labeling problem, deductive closure
problem, and maximum information deduction problem~\cite{beek1990exact,gerevini2011computing,liu2012solving,chandra2005minimality,amaneddine2013minimal,bessiere1996global,gerevini1995computing,amaneddine2012path,long2015distributive}.
Calculate the \emph{minimal network} of a network $N$ consists in
determining the smallest network $M\subseteq N$ having the same solutions
as $N$. The network $M$ contains as much information as can be inferred
from $N$. The number of possible basic relations between each pair
of entities is thus minimal.
\begin{defn}
\label{def:minimalite}A network $N$ is\emph{ minimal\index{minimal network}}
when for all distinct $x,y\in\E$ and for all basic relation $b\in N^{xy}$,
there exists a satisfiable scenario $S\subseteq N$ such that $S^{xy}=b$.
\end{defn}

\begin{defn}
The \emph{minimal network of a network} $N$ is the network $M\subseteq N$
such that $M$ is minimal and that all solution $\{u_{x}\}_{x\in\E}$
of $N$ is solution of $M$.
\end{defn}

\begin{example}
A non-minimal network $N$ and a minimal network $M$ over the point
algebra are illustrated in Figure~\ref{fig:contre_ex_van_beek}.
In fact, $M$ is the minimal network of $N$. 
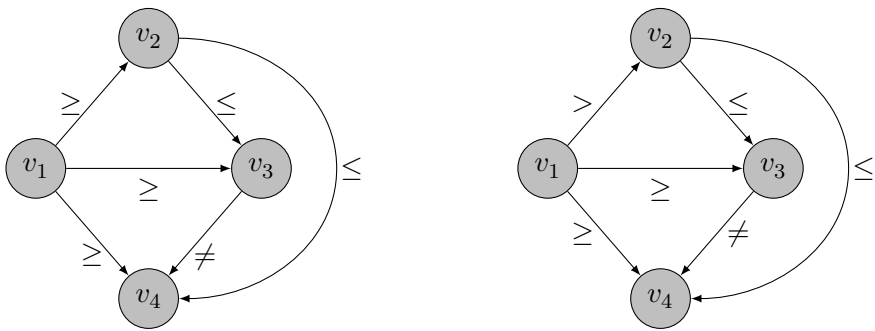
\begin{figure}
\begin{tikzpicture}
\node[draw,circle,fill=gray!50] (v1)at(0,0) {$v_1$};
\node[draw,circle,fill=gray!50] (v2)at(1.5,1.732) {$v_2$};
\node[draw,circle,fill=gray!50] (v3)at(3,0) {$v_3$};
\node[draw,circle,fill=gray!50] (v4)at(1.5,-1.732) {$v_4$};
\draw[->,>=latex] (v1) -- (v2) 
node[left, midway]{$\geq$};

\draw[->,>=latex] (v2) -- (v3) 
node[right, midway]{$\leq$};

\draw[->,>=latex] (v1) -- (v3) 
node[below, midway]{$\geq$};
 
\draw[->,>=latex] (v1) -- (v4) 
node[below, midway]{$\geq$};

\coordinate (n) at (4,0);
\draw[->,>=latex] (v2) to[out=0,in=90] (n) to[out=-90,in=-0] (v4);

\node[shift={(2mm,0)}] (label) at (n) {$\leq$};

\draw[->,>=latex] (v3) -- (v4) 
node[below, midway]{$\neq$};
\end{tikzpicture}~~~~~~~~~~~~~~~\begin{tikzpicture}
\node[draw,circle,fill=gray!50] (v1)at(0,0) {$v_1$};
\node[draw,circle,fill=gray!50] (v2)at(1.5,1.732) {$v_2$};
\node[draw,circle,fill=gray!50] (v3)at(3,0) {$v_3$};
\node[draw,circle,fill=gray!50] (v4)at(1.5,-1.732) {$v_4$};
\draw[->,>=latex] (v1) -- (v2) 
node[left, midway]{$>$};

\draw[->,>=latex] (v2) -- (v3) 
node[right, midway]{$\leq$};

\draw[->,>=latex] (v1) -- (v3) 
node[below, midway]{$\geq$};

\draw[->,>=latex] (v1) -- (v4) 
node[left, midway]{$\geq$};

\coordinate (n) at (4,0);
\draw[->,>=latex] (v2) to[out=0,in=90] (n) to[out=-90,in=-0] (v4);
\node[shift={(2mm,0)}] (label) at (n) {$\leq$};

\draw[->,>=latex] (v3) -- (v4) 
node[right, midway]{$\neq$};
\end{tikzpicture}
\centering{}\caption{A minimal network over $\protect\PA$ to the left and its minimal
network to the right ; the constraint $\protect\contrainte{v_{1}}={v_{2}}$
does not belong to any satisfiable scenario.\label{fig:contre_ex_van_beek}}
\end{figure}
Computing the minimal network consists of removing all the \emph{unrealizable}
basic relations. This operation thus refines all the relations as
much as possible, without changing the set of solutions. The algebraic
closure is thus a candidate of choice to solve this problem. Unfortunately,
in general, the algebraic closure does not compute the minimal network,
but only a network $N'$ such that
\[
M\subseteq N'\subseteq N\text{.}
\]
Note that computing the minimal network makes it possible to decide
satisfiability, since a network is satisfiable if and only if its
minimal network is not trivially inconsistent.
\end{example}

We are now interested in the subclasses whose algebraic closure calculates
the minimal network.
\begin{defn}
\label{def:classe_minimale}A subset $\S$ is \emph{minimal\index{minimal subset}}
when the algebraic closure of any network $N$ over $\S$ is the minimal
network of $N$.
\end{defn}

A minimal subclass is thus algebraically tractable. However, the converse
is not generally true. For example, the point algebra is not minimal.
See the following proposition for examples of minimal subclasses.
\begin{prop}[\cite{long2015distributive}]
\label{prop:sous-classes-classiques-id-scenarisables} The subclasses
$\PAs$ and $\RCAHs$ are minimal.
\end{prop}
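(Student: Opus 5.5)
The plan is to reduce minimality to a \emph{refinement} argument, in the style used for the tractable subclasses $\HH$, $\QH$, $\CH$ with Definition~\ref{def:Affinement-RCC8}. Let $N$ be an algebraically closed, non trivially inconsistent network over $\PAs$ (resp.\ $\RCAHs$). We must show that for every pair $x,y$ and every basic relation $b\in N^{xy}$ there is a satisfiable scenario $S\subseteq N$ with $S^{xy}=b$. The idea is to prove two facts. First, fixing $N^{xy}$ to $b$ and re-closing keeps the network inside the subclass and algebraically consistent. Second, every algebraically consistent network over the subclass has a satisfiable scenario. Minimality then follows by Definition~\ref{def:minimalite}.

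For the first fact, both $\PAs$ and $\RCAHs$ are given as subclasses, hence closed under $\cap$, $\wcomp$ and converse, and all basic relations belong to them, so $N[x,y\mapsto b]$ is still over the subclass. The real content is that its algebraic closure is non-empty. I would argue this through a \emph{Helly-type / distributivity} property, which is the point of the cited work \cite{long2015distributive}. On these subclasses, weak composition distributes over non-empty intersections, and relations pairwise intersect whenever each pair does. Then a standard argument shows that in an algebraically closed network over a distributive subclass, any basic relation can be extended to a path-consistent refinement. Concretely, I would define $N_b$ by $N_b^{xz}=N^{xz}\cap(b\wcomp N^{yz})$ and $N_b^{zy}=N^{zy}\cap (N^{zx}\wcomp b)$, leaving the other edges unchanged. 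I would then check algebraic closure of $N_b$ triangle by triangle, using distributivity and the Helly property to show all these intersections are non-empty.

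For the second fact, I would use a refinement function $h$ that is stable under the subclass: $h_{\max}$ of Table~\ref{tab:affinement_PA} for $\PA$ (on $\PAs$, $<\cup>$ is handled by a topological-sort argument), and an analogous one for $\RCAHs$. Repeatedly refining edges and re-closing, we reach an algebraically closed scenario. For $\PA$ and $\RCCH$, algebraically closed scenarios are known to be satisfiable (\cite{renz1999maximal} for $\RCCH$, and acyclicity of the induced order for $\PA$).

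The main obstacle is the triangle-by-triangle verification for $\RCAHs$, where the subclass is large and given only by a table. I would first try to establish distributivity of $\wcomp$ over $\cap$ on $\RCAHs$ by mechanical verification over the table of Table~\ref{tab:def_classic_subclass} together with the composition table of $\RCCH$, following \cite{long2015distributive}. From that property, the abstract argument above gives minimality uniformly for both subclasses.
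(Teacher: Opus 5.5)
The paper gives no proof of this proposition; it imports it from \cite{long2015distributive}. That argument uses the ingredients you name: distributivity of $\wcomp$ over non-empty intersections on the subclass, and its Helly-type consequence. The correct form of Helly is that finitely many relations of the subclass share a basic relation as soon as every two of them intersect; your sentence stating it is circular. The third ingredient is satisfiability of algebraically closed scenarios of $\PA$ and $\RCCH$.

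The gap is in the one concrete step you propose. Your $N_b$ modifies only the edges incident to $x$ or $y$, and it is in general \emph{not} algebraically closed, already over $\PAs$. Take variables $x,y,z,w$ with $z<x$, $y<w$ and $\B$ on every other edge; this network is algebraically closed. Fix $N^{xy}$ to $<$. Your definition gives $N_b^{zx}=\mathord{<}$ and $N_b^{xw}=\B\cap(<\wcomp<)=\mathord{<}$, while $N_b^{zw}=\B$ is left untouched. But $\B\not\subseteq N_b^{zx}\wcomp N_b^{xw}=\mathord{<}$. Fixing one edge can force the relation between two \emph{other} variables to shrink. So the triangle-by-triangle check cannot succeed, and nothing in your plan controls those edges.

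To repair it, propagate the choice of $b$ to every pair. Set $M^{uv}=N^{uv}\cap(N^{ux}\wcomp b\wcomp N^{yv})\cap(N^{uy}\wcomp\overline{b}\wcomp N^{xv})$, reading $N^{xx}$ and $N^{yy}$ as the identity element. Then $M^{xy}=b$, and $M^{zw}=\mathord{<}$ in the example above.
\begin{itemize}
\item Each of the three relations intersected lies in the subclass.
\item Any two of them meet. Use algebraic closure of $N$ on the triangles $x,u,y$ and $u,v,y$, the cycle law of Definition~\ref{def:algebre-non-associative}, and uniformity.
\item Hence $M^{uv}\neq\vide$ by the Helly property.
\item Expand each $M^{uk}\wcomp M^{kv}$ by distributivity, using associativity of composition in $\PA$ and $\RCCH$. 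Every resulting term contains $N^{uv}$ or one of the two propagated relations, so $M$ is algebraically closed.
\end{itemize}
Alternatively, in the spirit of the cited work, prove weak global consistency by adding variables one at a time to a path-consistent partial scenario. In that setting, updating only the edges at the new variable does suffice, because all other edges are already basic.

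Either way, iterating produces an algebraically closed scenario $S\subseteq N$ with $S^{xy}=b$. It is satisfiable because algebraically closed scenarios of $\PA$ and $\RCCH$ are. This makes your separate ``fact (2)'' superfluous, which is just as well, since as written it does not work: $\hmax$ does not refine to basic relations on $\PAs$ ($\hmax(\B)=\mathord{<}\cup\mathord{>}$), and no refinement is specified for $\RCAHs$.
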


\subsection{Combined Spatial Formalisms and Combined Temporal Formalisms\label{subsec:RW:Combined}}

Some research has recently been focusing on combining qualitative
formalisms. One of these combinations, which our framework encompasses,
is the \emph{loose integration} of two qualitative formalisms and
its \emph{biconstraint networks} (two networks having interdependencies)~\cite{westphal2008bipath}.
The satisfiability decision problem is then to decide whether there
is a solution satisfying both networks.
\begin{example}
\label{ex:STC} The loose integration of qualitative size and topology
of \cite{gerevini2002combining}, which we call $\STC$, describes
the relation between two regions both in terms of topology and in
terms of their relative size; e.g., ``$x$ and $y$ are disjoint
and the size of $x$ is smaller than that of $y$''.

To reason on $\STC$, \cite{gerevini2002combining} generalized the
path-consistency algorithm into the \emph{bipath-consistency} algorithm,
which enforces path-consistency, i.e. applies the algebraic closure,
on both networks while simultaneously propagating their interdependencies,
by using interdependency operators, described in Tables~\ref{tab:interd=0000E9pendances-de_QS_RCC}~and~\ref{tab:interd=0000E9pendances-de_RCC_QS}.
Subclasses for which bipath-consistency decides consistency have been
found for several combinations of formalisms~\cite{gerevini2002combining,li2012reasoning,cohn2014reasoning}.
\end{example}

\begin{lyxcode}
\begin{table}
\begin{centering}
{\footnotesize{}%
\begin{tabular}{|c|c|c|c|}
\hline 
{\footnotesize$\correctTable$$b$} & {\footnotesize$<$} & {\footnotesize$=$} & {\footnotesize$>$}\tabularnewline
\hline 
\hline 
{\footnotesize$\correctTable$$\conv{\RCAH}{\PA}b$} & {\footnotesize$\dc\cup\ec\cup\po\cup\tpp\cup\ntpp$} & {\footnotesize$\dc\cup\ec\cup\po\cup\eq$} & {\footnotesize$\dc\cup\ec\cup\po\cup\tppi\cup\ntppi$}\tabularnewline
\hline 
\end{tabular}}{\footnotesize\par}
\par\end{centering}
\caption{Interdependency operator from $\protect\PA$ to the algebra of $\protect\RCCH$,
$\protect\RCAH$.\label{tab:interd=0000E9pendances-de_QS_RCC}}
\end{table}
\end{lyxcode}
\begin{table}
\begin{centering}
\begin{tabular}{|c|c|c|c|c|c|c|c|c|}
\hline 
$\correctTable$$b$ & $\dc$ & $\ec$ & $\po$ & $\eq$ & $\tpp$ & $\ntpp$ & $\tppi$ & $\ntppi$\tabularnewline
\hline 
\hline 
$\correctTable$$\conv{\RCAH}{\PA}(b)$ & $\B_{\PA}$ & $\B_{\PA}$ & $\B_{\PA}$ & $=$ & $<$ & $<$ & $>$ & $>$\tabularnewline
\hline 
\end{tabular}
\par\end{centering}
\caption{Interdependency operator from the algebra of $\protect\RCCH$, $\protect\RCAH$,
to $\protect\PA$\label{tab:interd=0000E9pendances-de_RCC_QS}}
\end{table}

The framework introduced in this paper encompasses loose integrations
(generalized to $m$ formalisms) as specific combinations, but does
not cover all ways of combining formalisms. \emph{Tight integrations}~\cite{wolfl2009combinations}
are more expressive than loose integrations, at the cost of drastically
increasing the number of relations. Another combination is that of
\cite{Meiri1996}, which deals with \emph{heterogeneous} entities
that are points and intervals. The corresponding relations are relations
between a point and an interval, two intervals, and two points, respectively.
In this combination, whose complexity has been studied in depth in
\cite{jonsson2004complexity}, there is only one relation per pair
of entities; in constrast, loose integrations feature several relations
(from different formalisms) between the same entities, which increases
expressiveness by allowing the use of complementary relations. This
complementarity is the main asset of loose integration (and its major
difficulty). Note that there also exist combinations with non-qualitative
formalisms \cite{Meiri1996,bennett2002multi}.

\subsection{Spatio-Temporal Formalisms\label{subsec:RW:Spatio-Temporal}}

Spatio-temporal formalisms are also combinations, which integrate
space and time information in particular ways.

\cite{westphal2013transition} proposed a method to reason about temporal
sequences of spatial information, which actually share the same structure
as loose integrations and are thus covered by our framework. They
model such sequences as tuples of constraint networks, each corresponding
to a time instant. They introduce two kinds of solutions, depending
on the desired dynamics of entities (moving continuously) over time.
Our framework covers the weaker ``$T_{2}$-solutions'', which guarantee
that between successive instants of the sequence, for each pair of
entities, only the relation of the first instant and then the relation
of the second instant hold: there is no intermediary relations between
the instants of the sequence.

\begin{figure}[t]
\centering{}\includegraphics[width=1\columnwidth]{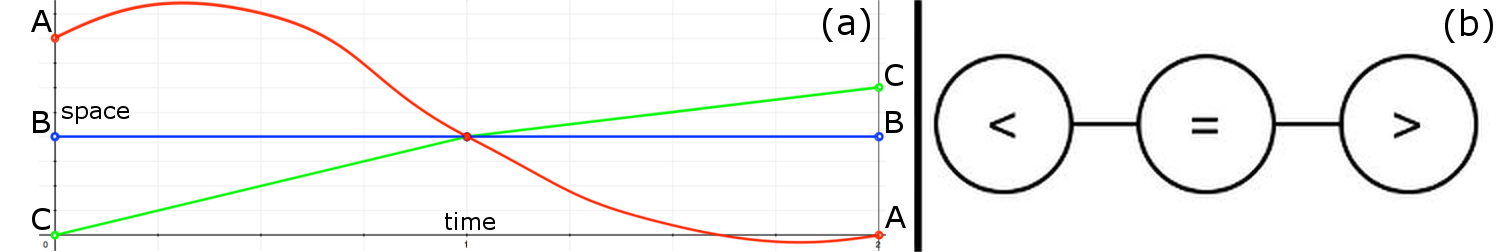} \caption{(a) An evolution of space points. (b) The neighborhood graph of the
point algebra.}
\label{fig:neighborhood}
\end{figure}

\begin{example}
\label{ex:TPC} The following is a temporal sequence of $3$ networks
describing spatial points moving along a line: $x\leq y<z$ at the
first instant, $x=y=z$ at the second, and then $x>y>z$ at the third
instant. This description has temporally continuous solutions without
intermediary relations between the instants, such as that of Figure~\ref{fig:neighborhood}~(a).
\end{example}

In fact, the $T_{2}$ condition forces relations at successive instants
to be ``neighbors'' according to the \emph{neighborhood graph} of
$\PA$~\cite{freksa1991conceptual}, shown in Figure~\ref{fig:neighborhood}~(b).
In this graph, for example, the only neighbor relation of ``$<$''
is ``$=$''. The $T_{2}$ condition thus ensures that, if $x<y$
at one instant, then at any neighbor instant, either $x<y$ or $x=y$.

\cite{gerevini2002qualitative} proposed a similar formalism based
on time intervals but with uncertainty on the scheduling of intervals,
so it is not encompassed by our framework.

\subsection{Multi-Scale Formalisms\label{subsec:RW:Multi-Scale}}

Multi-scale qualitative reasoning makes it possible to reason in the
presence of information having different precision (that is to say,
having different levels of detail). It turns out that we can consider
it as a combination, and that it is covered by our formal framework,
as we will see in the section~\ref{subsec:Semantics-of-Multi-algebra}.
In the context of multi-scale temporal reasoning, we consider several
discretizations of time, of different resolutions, called \emph{granularities}
or \emph{temporal scales}~\cite{euzenat2005time}. These scales are
most often partitions of the real line. They are formalized as preserving
order functions $g$ of $\mathbb{R}$ in $\mathbb{N}$. A scale $g$
associates with a point $x$ its \emph{representative} $g(x)$ at
this scale.
\begin{example}
The Gregorian calendar is an example of a set of scales. At the day
scale, the base unit is the day: the representative of an instant
at this scale is the day to which it belongs. As a result, at this
scale, all events occurring in the same day are considered to take
place at the same time, since they have the same representative.
\end{example}

One scale is said to be \emph{finer than} another if it has a ``better
resolution'': for example, in the Gregorian calendar, the month scale
is finer than that of the years - but that of the weeks is \emph{not}
finer than that of months (and vice versa), because any period of
time expressed in months can not be expressed exactly as a set of
weeks, as some weeks are indeed straddling two months. The set of
scales therefore forms a \emph{partial} order compared to the relation
``to be finer than''.

From this modeling of time, we can represent information defined at
different scales: two entities satisfy a constraint \emph{defined
at a scale} if the representatives of the two entities at this scale
satisfy the corresponding ``classical'' constraint. For example,
having $x<y$ at the minute scale means that the minute of $x$ is
before the minute in which $y$ occurs.

Conversion operators have been introduced to reason on multiple scales
to convert information from one scale to another~\cite{euzenat2005time}.
In the qualitative framework, the \emph{downward conversion} $\downgc$
allows propagating relations to a finer scale and \emph{upward conversion}
$\upgc$ allows propagating relations to a \emph{coarser} scale that
is to say less fine. Table~\ref{tab:conversions_PA} describes the
conversions of the point algebra. For example, since one instant may
be before another at the scale of seconds while taking place during
the same minute, the upward conversion verifies $\{=\}\in\upgc\{<\}$.
More precisely, $\upgc\{<\}=\{<,=\}$. Thus, if we know that $x<y$
at the scale of seconds, we can deduce that $x\leq y$ at the scale
of minutes. In a previous article~\cite{cohen2015algebra,cohen2015algebre},
we have proposed a formalization of qualitative multiscale temporal
reasoning in the context of the decision of satisfiability, formalization
which allows in particular to take into account the fact that an interval
can become a instant at a coarser scale and that an instant can become
an interval at a finer scale.

\begin{table}
\centering{}%
\begin{tabular}{cccc}
\toprule 
$b$ & $<$ & $=$ & $>$\tabularnewline
\midrule 
$\upgc b$ & $\leq$ & $=$ & $\geq$\tabularnewline
$\downgc b$ & $<$ & $\mathcal{B}_{\PA}$ & $>$\tabularnewline
\end{tabular}\caption{Upward and downward conversions of the point algebra\label{tab:conversions_PA}}
\end{table}

Note that there is also work on multiscale reasoning in the context
of spatial reasoning~\cite{euzenat2001granularity,li2007qualitative}.

\section{Symmetric Qualitative Formalisms\label{sec:Symmetrical-Qualitative-Formalis}}

Before introducing the multi-algebra framework, we must specify what
we call \emph{qualitative formalism} in this paper, in order to delimit
the formalisms concerned by the results of this formal framework.
We begin by introducing our own definition, that of \emph{symmetric}
qualitative formalisms, which includes some formalisms that are not
included in previous definitions and that are important in the context
of combinations. 

\subsection{Generalized Definition}

\cite{dylla2013algebraic} have been proposed a new definition of
qualitative formalism to include in particular the qualitative formalisms
with a \emph{weak converse}, i.e. invalidating the \emph{inversion
property} $\I(\sinv r)=\inv{\I(r)}$ of qualitative formalisms of
Ligozat and\emph{ }Renz \cite{ligozat2004qualitative}. However, this
definition does not include all the qualitative formalisms of the
literature, such as some formalisms found in the context of combinations.
As we will see later, the multi-scale formalisms and some spatio-temporal
combinations require a generalization of the definition of qualitative
formalism. However, we can not simply generalize the definition of
Dylla \emph{et al.}, because the framework of the multi-algebras we
propose does not apply, in its current form, to qualitative formalisms
with a weak converse (see section~\ref{subsec:Limitations}). We
then propose a definition of qualitative formalism, called symmetric,
on which we base our framework of multi-algebras. We call them symmetric
since the converse of a basic relation is a basic relation and the
converse of the converse of a relation is the same relation.

Note that the framework of multi-algebras should be generalized without
difficulty to weak converses. However, we do not place ourselves in
this general framework for two reasons. On the one hand, there are
only two formalisms with a weak converse in the literature. The first
\cite{goyal2000cardinal} is a fragment of the rectangle algebra (the
equivalent of Allen's algebra in 2D) and is therefore expressible
in it. The second is the calculation of the cardinal regions $\CRC$
\cite{navarrete2013spatial}. It does not have any identified tractable
subclass. Moreover, its combination with $\RCCH$ does not have any
tractable subclass containing the basic relations~\cite{liu2009combining}.
There is therefore no point for the moment in considering such a generalization.
On the other hand, considering a weak converse would lead to a significantly
more complicated framework.

We thus introduce, in the following definition, what we call ``qualitative
formalism'' in this paper. Informally, a symmetric qualitative formalism
is a formalism generalizing the definition of Ligozat and\emph{ }Renz,
having the same algebraic structure, but having a less restrictive
interpretation.
\begin{defn}
\label{def:Un-formalisme-qualitatif-symetrique}A \emph{symmetric
qualitative formalism} is a triplet $(\Aa,\U,\I)$, where $(\Aa,\sunion,\scompl{\cdot},\svide,\B,\scomp,\sinv{\cdot},\se)$
forms a finite non-associative algebra, $\U$ is a non-empty set,
the entity domain, and $\I$ is a function $\I\colon\Aa\to2^{\U\times\U}$,
the interpretation function, that satisfies, for all $r,r'\in\A$,
with $x\sinter y=\scompl{(\scompl x\sunion\scompl y)}$, the following
properties: \emph{\index{symmetric qualitative formalism}}
\begin{align*}
\begin{gathered}\I(\svide)=\vide\\
\I(\sinv r)=\inv{\I(r)}
\end{gathered}
 &  & \begin{gathered}\I(r\sinter r')=\I(r)\cap\I(r')\\
\I(r\sunion r')=\I(r)\cup\I(r')\\
\I(r\scomp r')\supseteq\bigl(\I(r)\comp\I(r')\bigr)\cap\I(\B)
\end{gathered}
\end{align*}
\end{defn}

The properties to be verified by the interpretation function are
indeed less restrictive than those of the Ligozat and Renz definition.
On the one hand, similar to the definition of \cite{dylla2013algebraic},
we no longer impose that the \emph{equality} relation of the algebra
to be interpreted as equality on $\U$. More formally, we no longer
require the satisfaction of $\I(\se)=\Id=\liste{(x,y)\in\U\times\U}{x=y}$.
The formalism on the qualitative sizes of the regions (example~\ref{ex:STC})
does not have equality on $\U$ and is therefore not a qualitative
formalism in the sense of Ligozat et Renz. As we will see later, qualitative
formalisms defined at a scale (as part of multi-scale reasoning) are
not qualitative formalisms in the sense of Ligozat and Renz, for the
same reason. Note that, contrary to our definition, the definition
of Dylla et al. does not require the presence of an \emph{algebraic
equality} (that is, a neutral element for the composition of $\Aa$).
Note also that all the qualitative formalisms of the literature, to
our knowledge, have an algebraic equality.

On the other hand, and contrary to the definition of Dylla et al.,
we include with this definition the formalisms whose basic relations
are not exhaustive on $\U$, that is to say, whose basic relations
do not form a partition of $\U\times\U$, but only a \emph{sub-partition}.
With such formalisms, certain pairs of entities do not correspond
to any basic relation. Therefore, basic relations do not allow to
represent all possible configurations of entities of $\U$.\index{sub-partition}

A dissociable example is $\RCCS$ \cite{gerevini2002combining}, that
corresponds to $\RCCH$ without the overlap relation. Its basic relations
obviously do not allow to represent all the possible configurations
of disks in the plane. Indeed, two overlapping disks do not even satisfy
the universal relation of $\RCCS$. The composition operator is also
more restrictive: if you have $\contrainte x{\un{\ec}}y$ and $\contrainte y{\un{\ntpp}}z$,
reasoning in $\RCCS$ allows to deduce that $\contrainte x{\deux{\ntpp}{\tpp}}z$,
which is not true for any regions $x,y,z$. However, all this is not
a problem if we know that the regions we consider cannot overlap,
which is the case, most of the time, in the context of applications
in geography (countries do not overlap) or in that of reasoning about
physical objects. The formalism $\DRA$ is another example of a subpartition~\cite{moratz2000qualitative}.
In these cases, algebra makes it possible to represent all the possible
configurations, and the reasoning is correct. The formalisms based
on sub-partitions thus serve to reason when certain basic relations
are never satisfied or, in other words, when the union of the basic
relations considered is not $\U\times\U$. Note that restrict $\U$
does not allow to formulate a formalism based on a sub-partition into
a formalism based on a partition, since the basic relations restrict
the \emph{couples} of considered entities. Thus, $\RCCS$ is actually
on the same universe as $\RCCH$. In addition, subclasses of the algebra
of a subpartition are not all subclasses of the algebra of the corresponding
partition (since their compositions differ). It is thus necessary
to study specifically the complexity of the subclasses of the subpartitions.

To include the formalisms based on subpartitions, we have relaxed
two conditions of the definition of Ligozat and Renz. On the one hand,
we no longer impose that the interpretation of the universal relation
$\B$ is necessarily $\U\times\U$. Note by $\Bw$ its interpretation
($\Bw=\I(\B)$): $\Bw$ is the set of considered couples of entities,
the only couples of entities that are assumed to be possible in the
world that we considered. In other words, in the context of a sub-partition,
we assume that all the couples of entities satisfy the relation $\Bw$.

On the other hand, we relaxed the condition $\I(r)\comp\I(r')\subseteq\I(r\scomp r')$
which guarantees that the abstract composition of the algebra is a
correct approximation of the true composition of relations on $\U$.
Since formalisms based on a sub-partition, such as $\RCCS$, do not
satisfy this condition, we replaced it with $(\I(r)\comp\I(r'))\cap\I(\B)\subseteq\I(r\scomp r')$.
Abstract composition can thus remove potentially valid couples of
entities from $\U\times\U$, as long as it does not remove couples
of entities from $\I(\B)=\Bw$, that is, of the subpartition. Assuming
that the universal relation $\Bw$ is verified by all the couples
of entities considered, which is generally the case when using a formalism
based on a subpartition, the composition is correct. There is no incorrect
inference by composition. Thus, the reasoning is carried out assuming
that the relation $\Bw$ is satisfied.

Note that the definition of weak composition \cite{renz2005weak}
is the same in this context. However, since $\Base$ is no longer
(necessarily) exhaustive, the weak composition is no longer (necessarily)
a superior approximation of the composition on $\U$, although it
is the superior approximation of $r\comp r'\cap\Bw$.

To summarize, the qualitative formalisms we consider in this paper
are those whose basic relations are disjoined two by two, but not
necessarily exhaustive and they have a \emph{strong} converse ($\begin{gathered}\sinv{\sinv r}=r\end{gathered}
$) and an algebraic equality (which is not necessarily interpreted
as equality on $\U$). 

\subsection{Properties of Symmetric Qualitative Formalisms\label{sect:def_formalismes_qualitatifs_sym}}

We are now interested in the properties of symmetric qualitative formalisms,
on which the proofs of the framework of multi-algebras are based.

We begin by defining the basic relations, that is to say the \emph{atoms}
of a non-associative algebra, as well as a certain number of notations.
\begin{defn}[\cite{hirsch2002relation}]
\label{def:atome}Let $\A$ be a non-associative algebra and $b\in\A$.

The relation $b$ is called an \emph{atom} or a \emph{basic relation}
of $\A$ if and only if for all $r\in\A$,\index{basic relation}
$r\sinter b=b\ou r\sinter b=\vide$.%
\end{defn}

\begin{defn}
\label{def:notation-appartenance_etc}Let $\A$ be a non-associative
algebra and $b,r,r'\in\A$. We use the following notations:
\begin{itemize}
\item We denote by $\Base$ the set of atomes of $\A$.\index{Base@$\Base$}
\item We denote $r\subseteq r'$ if and only if $r\cup r'=r'$. \index{relation refinining subseteq@\emph{relation refinining} $\subseteq$}
\item We denote $b\in r$ if and only if $b\subseteq r$ and $b$ is an
atom of $\A$.\index{in@$\in$}
\end{itemize}
\end{defn}

Note that we overloaded the membership notation, which is not ambiguous
in this article. Using these notations, we simply formulate the properties
of finite non-associative algebras and those of symmetric qualitative
formalisms.

To begin, we present the classical properties of finite non-associative
algebras.
\begin{lem}[\cite{hirsch2002relation}]
\label{lem:ppt-algebre-non-asso-finie}Let $\A$ be a finite non-associative
algebra and $r,r',r'',l,l'\in\A$. We have the following properties:
\[
r=\bigcup_{b\in r}b\qquad\qquad\qquad\sinv r=\sbigunion_{b\in r}\sinv b\qquad\qquad\qquad r\subseteq r'\implies\sinv r\subseteq\sinv{r'}
\]
\[
r\scomp r'=\sbigunion_{b\in r,b'\in r'}b\scomp b'\qquad\qquad\qquad(r\sinter r')\sunion r=r
\]
\[
r\subseteq r'\et r'\subseteq r''\implies r\subseteq r''\qquad\qquad\qquad r\subseteq r'\et l\subseteq l'\implies r\scomp l\subseteq r'\scomp l'
\]
\end{lem}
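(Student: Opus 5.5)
The plan is to derive every property from the axioms of Definition~\ref{def:algebre-non-associative}, using only the Boolean algebra structure and the finiteness of $\A$. We first treat the purely Boolean facts, then the converse, and then composition, which relies on additivity and on the facts already established for the converse.

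\textbf{Boolean facts.} In a Boolean algebra, $(r\sinter r')\sunion r=r$ is the absorption law. Transitivity of $\subseteq$ follows from associativity of $\sunion$: if $r\sunion r'=r'$ and $r'\sunion r''=r''$, then $r\sunion r''=r\sunion(r'\sunion r'')=(r\sunion r')\sunion r''=r'\sunion r''=r''$. For $r=\bigcup_{b\in r}b$, we use that a finite Boolean algebra is atomic, so every nonzero element lies above an atom. Let $s=\bigcup_{b\in r}b$, so that $s\subseteq r$. Suppose $t=r\sinter\scompl s\neq\svide$. Then some atom $b\subseteq t$ exists. Hence $b\in r$, so $b\subseteq s$, and therefore $b\subseteq s\sinter\scompl s=\svide$, which is a contradiction. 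So $r=s$. The case $r=\svide$ is the empty union.

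\textbf{Converse.} Monotonicity follows from the axiom $\ainv{(x\aunion y)}=\ainv x\aunion\ainv y$: if $r\sunion r'=r'$, then $\sinv r\sunion\sinv{r'}=\sinv{r'}$. Applying this axiom to the finite decomposition $r=\bigcup_{b\in r}b$ gives $\sinv r=\bigcup_{b\in r}\sinv b$. This does not require $\sinv b$ to be an atom, though that also holds, since the converse is an involutive order isomorphism.

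\textbf{Composition.} Left distributivity $x\scomp(y\sunion z)=x\scomp y\sunion x\scomp z$ is an axiom. Right distributivity is obtained through the converse:
\[
(y\sunion z)\scomp x=\sinv{\sinv x\scomp\sinv{(y\sunion z)}}=\sinv{\sinv x\scomp\sinv y\sunion\sinv x\scomp\sinv z}=y\scomp x\sunion z\scomp x .
\]
This uses $\sinv{\sinv x}=x$, the anti-distribution of the converse over composition, and the additivity of the converse. Decomposing $r$ and $r'$ into their atoms and distributing on both sides then gives $r\scomp r'=\bigcup_{b\in r,b'\in r'}b\scomp b'$. Monotonicity of composition follows from additivity: if $r\subseteq r'$, then $r'\scomp l=(r\sunion r')\scomp l=r\scomp l\sunion r'\scomp l$, so $r\scomp l\subseteq r'\scomp l$. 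Similarly $r'\scomp l\subseteq r'\scomp l'$, and transitivity of $\subseteq$ concludes.

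The only step needing care is the atomic decomposition, which is where finiteness is used. Deriving right distributivity through the converse is the other nontrivial point.
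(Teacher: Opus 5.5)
The paper gives no proof of this lemma (it is quoted from the relation-algebra literature), so your axiomatic derivation stands on its own. It is sound for the Boolean facts, the converse, right distributivity (your computation through the converse is correct), and monotonicity of composition.

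There is one case it does not cover. With your convention that atoms are nonzero, taking $r=\svide$ or $r'=\svide$ makes the right-hand side of $r\scomp r'=\sbigunion_{b\in r,b'\in r'}b\scomp b'$ the empty join. So you need $\svide\scomp r'=\svide=r\scomp\svide$, and ``distributing'' does not give this. Binary distributivity only yields $x\scomp\svide\subseteq x\scomp y$ for every $y$, and the identity can in fact fail in structures satisfying all axioms except the last one.

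It does follow from that last axiom:
\begin{itemize}
\item Applying it twice cyclically gives $(x\scomp y)\sinter\sinv z=\svide\iff(z\scomp x)\sinter\sinv y=\svide$.
\item With $y=\svide$ and $\sinv\svide=\svide$ (a consequence of your order-isomorphism remark), the right side holds.
\item Hence $(x\scomp\svide)\sinter\sinv z=\svide$ for all $z$, and choosing $z=\sinv\B$ gives $x\scomp\svide=\svide$.
\item Then $\svide\scomp x=\sinv{\sinv x\scomp\svide}=\svide$.
\end{itemize}
The analogous empty case of $\sinv r=\sbigunion_{b\in r}\sinv b$ is settled by $\sinv\svide=\svide$.

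Alternatively, note that Definition~\ref{def:atome}, read literally, admits $\svide$ itself as an atom, since $r\sinter\svide=\svide$ for every $r$. Under that reading every index set contains $\svide$, so it is never empty and binary distributivity suffices. Your argument for $r=\sbigunion_{b\in r}b$ still goes through, provided you choose a \emph{nonzero} atom below $t$, which finiteness guarantees.
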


We are now interested in the properties of symmetric qualitative formalisms.
They satisfy most of the fundamental properties of qualitative formalisms
in the sense of the Ligozat and Renz definition, as shown in the following
lemma.
\begin{lem}
\label{lem:ppt-union_FQ-sym}Let $\F=(\A,\U,\I)$ be a symmetric qualitative
formalism and $r,r'\in\A$. We have the following properties:
\begin{center}
\[
\I(r)=\bigcup_{b\in r}\I(b)\qquad\qquad\qquad r\subseteq r'\implies\I(r)\subseteq\I(r')
\]
\par\end{center}

\end{lem}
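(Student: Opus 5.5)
Both properties follow directly from the axioms of Definition~\ref{def:Un-formalisme-qualitatif-symetrique} combined with the algebraic facts of Lemma~\ref{lem:ppt-algebre-non-asso-finie}.

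\textbf{First equality.} We start from the decomposition $r=\bigcup_{b\in r}b$ given by Lemma~\ref{lem:ppt-algebre-non-asso-finie}. Since $\A$ is finite, the set of atoms $b\in r$ is finite. We then argue by induction on the number $k$ of atoms contained in $r$.
\begin{itemize}
\item If $k=0$, then $r=\svide$, and the axiom $\I(\svide)=\vide$ gives $\I(r)=\vide$. This is the empty union, as required.
\item If $k\geq1$, we pick an atom $b_{0}\in r$ and write $r=b_{0}\sunion r_{0}$, where $r_{0}$ is the union of the remaining atoms of $r$.
\item By the axiom $\I(r\sunion r')=\I(r)\cup\I(r')$, we get $\I(r)=\I(b_{0})\cup\I(r_{0})$.
\item The induction hypothesis applied to $r_{0}$ then yields the claim.
\end{itemize}
The one point needing care in this induction is that the atoms of $r_{0}$ are exactly the atoms of $r$ other than $b_{0}$. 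We only need one inclusion: $r_{0}\subseteq r$, so every atom of $r_{0}$ is an atom of $r$. In fact, the induction can be run directly on the finite family $\{b\mid b\in r\}$, using the fact that binary unions iterate to finite unions. Then $r_{0}$ is simply defined as the union of the remaining members of that family, and this subtlety disappears.

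\textbf{Monotonicity.} Assume $r\subseteq r'$, that is, by Definition~\ref{def:notation-appartenance_etc}, $r\sunion r'=r'$. Applying the union axiom gives
\[
\I(r')=\I(r\sunion r')=\I(r)\cup\I(r'),
\]
hence $\I(r)\subseteq\I(r')$. An alternative route is via the first equality: every atom of $r$ is an atom of $r'$ by transitivity of $\subseteq$ (Lemma~\ref{lem:ppt-algebre-non-asso-finie}). The direct argument is shorter, so that is the one I would write.

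No real obstacle is expected. The only step requiring attention is the bookkeeping in the finite-union induction: in particular, handling the empty relation through $\I(\svide)=\vide$, and making sure nothing beyond binary unions and finiteness of $\A$ is used.
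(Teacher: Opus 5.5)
Your proof is correct and follows the paper's argument. For the first equality you decompose $r$ as the union of its atoms and apply the binary union axiom; for monotonicity you use $r\cup r'=r'$ together with that same axiom, exactly as the paper does. The only difference is that you make explicit the induction extending the binary union axiom to finite unions, including the empty case via $\varphi(\varnothing)=\emptyset$, which the paper leaves implicit. Running that induction on the finite family of atoms, as you end up doing, is the right choice, because it avoids having to show $b_0\notin r_0$ to make the atom count decrease.
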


\begin{proof}
We start by proving the first property. We have $\I(r)=\I(\bigcup_{b\in r}b)$,
by the lemme~\ref{lem:ppt-algebre-non-asso-finie}. Since $\F=(\A,\U,\I)$
is a symmetric qualitative formalism, we have $\I(r\sunion r')=\I(r)\cup\I(r')$
for all $r,r'\in\A$. Therefore, $\I(\bigcup_{b\in r}b)=\bigcup_{b\in r}\I(b)$.
Thus, $\I(r)=\bigcup_{b\in r}\I(b)$.

We prove the second property. suppose that $r\subseteq r'$. We have
by definition $r\sunion r'=r'$. Therefore, $\I(r\sunion r')=\I(r')$,
hence $\I(r)\cup\I(r')=\I(r')$. Thus, $\I(r)\subseteq\I(r')$.
\end{proof}

\section{The Framework of Multi-algebras: Representation, Reasoning, and Satisfiability\label{sec:Framework_Multi-algebras}}

This section introduces the core of the formal framework of multi-algebras.
In the next subsection, we define the \emph{multi-algebras}, objects
constituting the underlying structure of loose integrations, spatio-temporal
sequences, and multi-scale descriptions. We then show how to reason
with multi-algebra relations, and we give them formal semantics. Finally,
we define \emph{constraint networks} and \emph{algebraic closure}
in the context of multi-algebras.

\subsection{Projections and Multi-algebras}

We introduce in this section the basic blocks of the formal framework
of multi-algebras, starting with \emph{projection operators}. The
purpose of projection operators is to represent the interdependencies
between relations coming from different formalisms. A projection operator
simply associates with each relation of a formalism the relation that
corresponds to it in another formalism. Therefore, a projection operator
must distribute on the union and on the inversion. As a result, a
projection needs to be defined only on a subset of $\A$: the set
$\Base$ of basic relations.
\begin{defn}
\label{def:projection} Let $\A$ and $\A'$ be two non-associative
algebras. A \emph{projection operator}\index{projection operator}
is a function $\cconv\colon\A\to\A'$ which satisfies:
\begin{itemize}
\item $\forall r,r'\in\A\sep\cconv(r\sunion r')=\cconv r\sunion\cconv r'$.%
\item $\forall r\in\A\sep\cconv\sinv r=\sinv{\cconv r}$ .
\end{itemize}
\end{defn}

\begin{example}
Interdependency operators of loose integrations and the conversion
operators of multi-scale formalisms are examples of projections.
\end{example}

Note that a projection operator is an increasing function and that
it is calculated from the projections of the basic relations.
\begin{lem}
\label{lem:croissance_proj}Let $\cconv$ be a projection operator
from $\A$ to $\A'$ and let $r,r'\in\A$. We have the following properties:
\begin{itemize}
\item If $r\subseteq r'$ then $\cconv r\subseteq\cconv r'$,
\item $\cconv r=\sbigunion_{b\atome r}\cconv b$.
\end{itemize}
\end{lem}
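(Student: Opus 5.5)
We prove the second item first and then derive the first from it, so that both rest only on the additivity of $\cconv$ (Definition~\ref{def:projection}) and on Lemma~\ref{lem:ppt-algebre-non-asso-finie}.

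For the second property, Lemma~\ref{lem:ppt-algebre-non-asso-finie} gives the decomposition $r=\sbigunion_{b\atome r}b$. Since $\A$ is finite, this union is a finite union of atoms. By a straightforward induction on the number of atoms, the binary rule $\cconv(r\sunion r')=\cconv r\sunion\cconv r'$ extends to finite unions. Applying it yields $\cconv r=\cconv\bigl(\sbigunion_{b\atome r}b\bigr)=\sbigunion_{b\atome r}\cconv b$.

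One degenerate case needs attention: $r=\svide$, where the index set is empty and the empty union is $\svide$ in $\A'$. Here we must show $\cconv\svide=\svide$. Additivity gives $\cconv\svide=\cconv(\svide\sunion\svide)=\cconv\svide\sunion\cconv\svide$, which is only idempotence and does not force $\cconv\svide=\svide$. This is the one step where the definition might seem insufficient. A cleaner route is to use $r\sunion\svide=r$, which gives $\cconv r=\cconv r\sunion\cconv\svide$, i.e. $\cconv\svide\subseteq\cconv r$ for every $r$. We will therefore either read the empty-union convention as giving $\cconv\svide=\svide$ (the natural convention for projections built from basic relations, as the paper says ``a projection needs to be defined only on $\Base$''), or else note that for $r=\svide$ the statement is understood with that convention. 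All other cases go through without it.

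For the first property, assume $r\subseteq r'$. By Definition~\ref{def:notation-appartenance_etc} this means $r\sunion r'=r'$. Applying $\cconv$ and using additivity gives $\cconv r\sunion\cconv r'=\cconv(r\sunion r')=\cconv r'$, which is exactly $\cconv r\subseteq\cconv r'$ in $\A'$. This direct argument needs no decomposition and no empty-set issue, mirroring the proof of the monotonicity part of Lemma~\ref{lem:ppt-union_FQ-sym}.
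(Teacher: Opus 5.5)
Your proof follows the paper's. Item 1 comes from applying $\cconv$ to $r\sunion r'=r'$, and item 2 from combining $r=\sbigunion_{b\atome r}b$ with additivity; you only make explicit the induction from binary to finite unions.

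Your hesitation about $r=\svide$ is justified, and you can settle it instead of hedging. Take the constant map $r\mapsto\B'$, where $\B'$ is the universal relation of $\A'$ and is self-converse. It satisfies both axioms of Definition~\ref{def:projection}, yet $\cconv\svide=\B'\neq\svide$. So item 2 at $r=\svide$ genuinely requires the extra hypothesis $\cconv\svide=\svide$. The paper's proof uses this hypothesis silently, and later cites this very lemma to conclude $\conv ij\svide=\svide$.
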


\begin{proof}
We prove the first property. We suppose $r\sunion r'=r'$ and we show
$\cconv r\sunion\cconv r'=\cconv r'$. By the definition of a projection
operator, we have $\cconv r\sunion\cconv r'=\cconv(r\sunion r')$.
By hypothesis, we have thus $\cconv(r\sunion r')=\cconv r'$. Therefore,
$\cconv r\sunion\cconv r'=\cconv r'$.

We prove the second property. By Lemma~\ref{lem:ppt-algebre-non-asso-finie},
$\cconv r=\cconv\sbigunion_{b\atome r}b$. By definition of a projection
operator, we deduce that $\cconv\sbigunion_{b\atome r}b=\sbigunion_{b\atome r}\cconv b$.
Thus, $\cconv r=\sbigunion_{b\atome r}\cconv b$.
\end{proof}
We can now define multi-algebras, which are Cartesian products of
algebras -- each of them corresponding to one of the combined formalisms,
to one of the instants of the temporal sequence or to one of the scales
-- equipped with projection operators representing the interdependencies
between their relations.
\begin{defn}
\label{def:multi-algebre} A \emph{multi-algebra}\index{multi-algebra}
$\A$ is a Cartesian product of $m$ non-associative algebras $\{\A_{i}\}_{i=1}^{m}$
(with $m\in\mathbb{N}^{\ast}$), equipped with $m(m-1)$ projection
operators $\conv ij\colon\A_{i}\to\A_{j}$ (for all distinct $i,j\in\{1,\ldots,m\}$). 

A multi-algebra $\A$ is qualified as \emph{finite}, if each non-associative
algebra $\A_{i}$ is finite. 
\end{defn}

We identify $\A$ and $\A_{1}\times\cdots\times\A_{m}$. In particular,
$R\in\A$ means $R\in\A_{1}\times\cdots\times\A_{m}$. 

We call \emph{relations} the elements $R$ of $\A$, although they
are in fact $m$-tuples of relations ; $R_{i}$ denotes the (classical)
relation associated with $\A_{i}$ in $R$. 

Note that a multi-algebra where $m=1$ (a \guillemotleft{} mono-algebra
\guillemotright ) is exactly a non-associative algebra since it has
no projection operator. In the following, we denote by $\conv{\A_{i}}{\A_{j}}$
the projection operation $\conv ij$ when $\A_{i}$ and $\A_{j}$
are specified, for ease of reading.
\begin{defn}
A relation $R$ is called \emph{basic}\index{basic relation} when
all the relations $R_{i}$ are basic, i.e. $R\in\Base_{1}\times\dots\times\Base_{m}$. 

We denote by $\Base$ the set of basic relations of $\A$ : $\Base=\Base_{1}\times\dots\times\Base_{m}$,
\index{Base@$\Base$}

We denote by $\B$ the \emph{universal relation} of $\A$ : $\B=\ctrois{\B_{1}}{\ldots}{\B_{m}}$,
\index{B@$\B$}

We denote by $R\subseteq R'$ when $R_{i}\subseteq R'_{i}$ for each
$i\in\left\{ 1,\ldots,m\right\} $, \index{relation refinining subseteq@\emph{relation refinining} $\subseteq$}

Finally, we denote by $B\atome R$ when $B\subseteq R$ and $B\in\Base$.\index{atome@$\atome$}
\end{defn}

\begin{example}
\label{ex:QST_multialg} The multi-algebra corresponding to $\STC$
(see Example~\ref{ex:STC}) is the Cartesian product $\RCAH\times\PA$
of the algebra of $\RCCH$, denoted $\RCAH$, and of the point algebra
$\PA$ (here describing the relative sizes of the regions), equipped
with the interdependence operators of $\STC$ as projections (see
Tables~\ref{tab:interd=0000E9pendances-de_QS_RCC} et~\ref{tab:interd=0000E9pendances-de_RCC_QS}).
We call this multi-algebra $\STA$ (\emph{size and topology (multi)-algebra}).
One of its relation is $\cdeux{\tpp}{<\cup=}$. The projection of
$\tpp$ to $\PA$ is $<$ (what is denoted by $\conv{\RCAH}{\PA}\left(\tpp\right)=\mathord{<}$)
since $\tpp$ is the relation\emph{ tangential proper part} and that
a region strictly included in another is necessarily smaller. The
basic relations $B\in\cdeux{\tpp}{<\cup=}$ are $\cdeux{\tpp}<$ and
$\cdeux{\tpp}=$. Only $\cdeux{\tpp}<$ corresponds to a non-empty
set of pairs of regions.
\end{example}

\begin{example}
\label{ex:TPC_multialg} The Cartesian product $\PA^{m}$ can be used
to represent temporal sequences of relations between points on a line,
as in the example~\ref{ex:TPC}. Indeed, one can use the $i$-th
$\PA$ of the Cartesian product for the $i$-th instant ; $R_{i}$
is thus the relation at the instant $i$. We then equip $\PA^{m}$
with the projections imposing the respect of the neighborhood graph
$V$ of the figure~\ref{fig:neighborhood}. We thus have $\conv ij\left(<\right)=\left(\mathord{\mathord{<}\cup\mathord{=}}\right)$,
$\conv ij\left(\mathord{>}\right)=\left(\mathord{\mathord{>}\cup\mathord{=}}\right)$
and $\conv ij\left(\mathord{=}\right)=\B$ if the instants $i$ and
$j$ are neighbors (i.e. if $\left|i-j\right|=1$), and $\forall b\in\Base,\ \conv ijb=\B$
(i.e. no constraint) if the instants $i$ and $j$ are not neighbors.
We denote by $\TPA$ this multi-algebra (\emph{temporalized point
(multi-)algebra with neighborhood}). The Cartesian product of $\TPA$
is $\PA\times\PA\times\PA$ in the context of three instants, i.e.
when $m=3$. The relation $\ctrois{\mathord{<}\cup\mathord{=}}=>$
of this multi-algebra represents a possible sequence of relations
for the three instants.
\end{example}

\begin{example}
\label{ex:GPC_multialg} $\PA^{m}$ can also be used to represent
multi-scale temporal descriptions. Consider scales totally ordered
by the finesse relation $\preceq$. In this case, the $i$-th $\PA$
corresponds to the $i$-th scale, thus $R_{i}$ is the relation at
the scale $g_{i}$. For example, the relation $\cquatre <{\mathord{\mathord{<}\cup\mathord{=}}}{\mathord{\mathord{<}\cup\mathord{=}}}=$
of $\PA^{4}$ describes a possible temporal relation between two instants
at four different scales (seconds, minutes, hours, and days, for example).
This relation means that the first instant is before the second at
the scale of seconds, is not after this one at the scale of minutes
and at the scale of hours, and that they take place during the same
day. The projections of the associated multi-algebra, denoted $\SPA^{m}$
(\emph{scaled point (multi-)algebra}), correspond to the upward conversions
$\upgc$ and the downward conversion $\downgc$ introduced in the
section~\ref{subsec:RW:Multi-Scale}. More precisely, for all $b\in\Base_{\PA}$,
we have $\conv ijb=\upgc b$ and $\conv jib=\downgc b$ if the scale
$i$ is finer than the scale $j$ (i.e. if $g_{i}\preceq g_{j})$.
The basic relations $B\atome\cquatre <{\mathord{\mathord{<}\cup\mathord{=}}}{\mathord{\mathord{<}\cup\mathord{=}}}=$
are $\cquatre <<<=$, $\cquatre <=<=$, $\cquatre <<==$ and $\cquatre <===$.
\end{example}

\subsection{Reasoning about Multi-algebra Relations}

We can reason about the relations of a multi-algebra by applying the
classical rules component by component. For example, in $\STC$ (see
Example~\ref{ex:QST_multialg}), if $\contrainte x{\cdeux{\tpp}{\leq}}y$
and $\contrainte y{\cdeux{\dc}=}z$, then $\contrainte x{\cdeux{\dc}{\leq}}z$
(since $\tpp\wcomp\dc$ is $\dc$ and $\leq\wcomp=$ is $\leq$).
It is thus natural to introduce operators of composition $\scomp$,
intersection $\sinter$, converse $\sinv{\cdot}$, and union $\sunion$
on the relations of multi-algebras, which are based on the corresponding
operators of combined algebras and work simply component by component.
\begin{defn}
\label{def:multi-operateur}Let $\A$ be a multi-algebra. We call
respectively \emph{composition}\index{composition}, \emph{converse}\index{converse@\emph{converse}},
\emph{intersection\index{intersection}}, and \emph{union}\index{union},
the operators $\scomp$, $\sinv{\cdot}$, $\sinter$, and $\sunion$
on $\A$ defined, for all $R,R'\in\A$, by :
\begin{itemize}
\item $\forall i\sep(R\scomp R')_{i}=R_{i}\scomp R_{i}'$ ;
\item $\forall i\sep(\sinv R)_{i}=\sinv{(R_{i})}$ ;
\item $\forall i\sep(R\sinter R')_{i}=R_{i}\sinter R_{i}'$ ; 
\item $\forall i\sep(R\sunion R')_{i}=R_{i}\sunion R_{i}'$.
\end{itemize}
\end{defn}

Combination operators are useful for applying classical concepts to
our generalized framework; however, they are not sufficient for reasoning,
which also requires propagating interdependencies within each relation
through the use of projections. The following definition introduces
the property of being closed under projection and the closure by projection
of a multi-algebra relation.
\begin{defn}
\label{def:cloture-par-proj}Let $\A$ be a finite multi-algebra and
$R\in\A$.

The relation $R$ is \emph{closed under projection\index{closed under projection}}
if for all distinct $i,j\in\left\{ 1,\ldots,m\right\} $, the property
$R_{j}\subseteq\conv ijR_{i}$ is satisfied.

The \emph{\index{projection closure@\emph{projection closure}} projection
closure} of $R\in\A$, denoted by $\cconv R$, is obtained from $R$
by iteratively replacing each $R_{j}$ by $R_{j}\sinter(\conv ijR_{i})$
for all distinct $i,j\in\left\{ 1,\ldots,m\right\} $ until a fixed
point is reached.
\end{defn}

\begin{rem}
The result of a projection closure does not depend on the order of
the substitutions ``$R_{j}\leftarrow R_{j}\sinter(\conv ijR_{i})$''.
The operator is thus well defined.
\end{rem}

\begin{example}
\label{ex:proj closure} Let us return to the multi-algebra of $\STC$
introduced in the example~\ref{ex:QST_multialg}. Since we have $\conv{\RCAH}{\PA}\tpp=\mathord{<}$
and $\conv{\PA}{\RCAH}\mathord{<}=\tpp$, the projection closure of
$\cdeux{\tpp}{\leq}$, $\cconv\cdeux{\tpp}{\leq}$, is $\cdeux{\tpp}<$.
For the same reason, that of $\cdeux{\tpp}>$ is $\cconv\cdeux{\tpp}>=(\vide,\vide)$,
which proves that this relation is not feasible. Indeed, a region
cannot be inside another when it has a larger size.

For the multi-algebra $\TPA$ (example~\ref{ex:TPC_multialg}) with
$m=3$, the projection closure of $\ctrois <{\neq}>$ is $\ctrois{\vide}{\vide}{\vide}$.
This proves that this temporal sequence of relations between two points
at $3$ successive instants is not feasible. Indeed, two points on
a straight line cannot invert their relative position continuously
without there being a time when they are equal. Note, however, that
the relations $\cdeux <{\neq}$ and $\cdeux{\neq}>$ ($\TPA$ with
$m=2$) are each independently feasible. Thus, the \emph{local consistency}
of a relation does not ensure its \emph{global consistency}.
\end{example}

\subsection{Semantics of Multi-algebra Relations\label{subsec:Semantics-of-Multi-algebra}}

Thanks to these operators, we can now give to the relations of multi-algebras
their own semantics, adopting an approach similar to the classical
framework. We qualify the multi-algebras endowed with this semantics
as \textquotedblleft \emph{sequential}\textquotedblright{} formalisms
because their relations are sequences of classical relations.
\begin{defn}
\label{defn:combined formalisms} A \emph{(qualitative) sequential
formalism }\footnote{We used the term \textquotedbl loosely combined formalism\textquotedbl{}
when we introduced this definition in the literature~\cite{cohen2017checking,cohen2017decision}.}\emph{\index{sequential formalism}} is a triple $(\A,\U,\I)$, where
$\A$ is a finite multi-algebra, $\U$ is a non-empty set, the \emph{universe},
and $\I$ is a function $\I\colon\A\to2^{\U\times\U}$, the \emph{interpretation
function}, satisfying, for all $R,R'\in\A$, the following properties:
\begin{align*}
\begin{gathered}\begin{aligned}\I(\cconv R) & =\I(R)\\
\I(\sinv R) & =\overline{\I(R)}
\end{aligned}
\\
\I\bigl(\ctrois{\svide}{\ldots}{\svide}\bigr)=\vide
\end{gathered}
 &  & \begin{aligned}\I(R\scomp R') & \supseteq\bigl(\I(R)\circ\I(R')\bigr)\cap\I\left(\B\right)\\
\I(R\sinter R') & =\I(R)\cap\I(R')\\
\I(R) & =\bigcup_{B\atome R}\I(B)\text{.}
\end{aligned}
\end{align*}
\end{defn}

This definition is a natural generalization of the notion of symmetric
qualitative formalism that we proposed in Section~\ref{sec:Symmetrical-Qualitative-Formalis},
with one exception. It is adapted for $m$-tuples of relations, and
the condition on $\I(\cconv R)$ was added to ensure that the projection
is correct (i.e. it does not remove valid entities pairs). The tricky
point is that the property $\I(R\sunion R')=\I(R)\cup\I(R')$ has
been replaced by $\I(R)=\bigcup_{B\atome R}\I(B)$, property which
is equivalent in the context of finite non-associative algebras,
but which is not equivalent in the framework of multi-algebras. In
fact, the property $\I(R\sunion R')=\I(R)\cup\I(R')$ is usually invalidated
(we will see it in Example~\ref{ex:QST_formalisme}). 

In the sequel, as in the classical setting, we denote by $\contrainte xRy$
when $(x,y)\in\I(R)$.
\begin{example}
\label{ex:TPC_formalisme} We call \emph{temporalized point calculus
by the neighborhood $\TPC$\index{temporalized point calculus TPC@temporalized point calculus $\TPC$}}
the sequential formalism of Example~\ref{ex:TPC}, allowing us to
represent temporal sequences over $\PA$ in the context of continuity
without intermediary relation. The multi-algebra of $\TPC$ is the
one described in Example~\ref{ex:TPC_multialg}, i.e. $\TPA$ with
$\PA^{m}$ as Cartesian product for sequences of length $m$. Its
semantics require choosing the value $\tau_{i}\in\mathbb{R}$ of each
instant $i$ of the sequence. The universe $\U$ is then the set of
continuous evolutions of points on the real line during the sequence,
i.e. the set $\mathcal{C}^{0}([\tau_{0},\tau_{m}])$ of continuous
functions from $[\tau_{0},\tau_{m}]$ to $\mathbb{R}$. The interpretation
of a relation of $\TPC$ is the set of pairs of continuously evolving
points on $\mathbb{R}$ over the time of the sequence, satisfying
at each key instant (each $\tau_{i}$) the corresponding relation
and satisfying no other relation between these instants. More formally,
noting $\I_{\PA}$ the classical interpretation function of the point
algebra on the real line, and $\mathtt{B}(p,p')$ the basic relation
of $\PA$ verified by two points $p$ and $p'$ of $\mathbb{R}$,
the interpretation of a relation $R$ by $\I$ is the set of pairs
$(x,y)$ of functions from $\mathcal{C}^{0}([\tau_{0},\tau_{m}])$
which satisfies the following conditions. 
\begin{itemize}
\item The relations at key instants are satisfied:
\[
\forall i\in\{1,\ldots,m\}\sep\bigl(x(\tau_{i}),y(\tau_{i})\bigr)\in\I_{\PA}(R_{i})\text{.}
\]
\item Between two successive key instants $\tau_{i}$ and $\tau_{i+1}$,
there is an instant of transition $\tau$ such that before $\tau$
the satisfied relation is that of the previous key instant, and after
$\tau$ the satisfied relation is that of the next key instant:
\[
\forall i\in\{1,\ldots,m-1\}\minisep\exists\tau\in[\tau_{i},\tau_{i+1}]\bigwedge\left\{ \begin{aligned} & \forall t\in\left[\tau_{i},\tau\right[\sep\mathtt{B}(x(t),y(t))=\mathtt{B}(x(\tau_{i}),y(\tau_{i}))\\
 & \forall t\in\left]\tau,\tau_{i+1}\right]\sep\mathtt{B}(x(t),y(t))=\mathtt{B}(x(\tau_{i+1}),y(\tau_{i+1}))\\
 & \mathtt{B}(x(\tau),y(\tau))\in\{\mathtt{B}(x(\tau_{i}),y(\tau_{i})),\mathtt{B}(x(\tau_{i+1}),y(\tau_{i+1}))\}\text{.}
\end{aligned}
\right.
\]
\end{itemize}
Note that $\TPC$ is based on a sub-partition (see Section~\ref{sec:Symmetrical-Qualitative-Formalis}).
Indeed, a pair of any continuous functions does not always satisfy
a basic relation of $\TPC$ for a fixed choice of the $\tau_{i}$.
But, for any pair of continuous functions, there exists a value for
$m$ and a set of $\tau_{i}$ such that the pair satisfy a basic relation
of $\TPC$.
\end{example}

Loose integration (generalized to $m$ formalisms) is a particular
case of sequential formalism, built from existing formalisms, in which
the projections and the interpretation function have a certain form.
\begin{defn}
\label{def:int=0000E9gration-l=0000E2che}\emph{The loose integration\index{loose integration@\emph{loose integration}}}
of $m$ symmetric qualitative formalisms on the same universe $\U$,
$(\mathcal{\A}_{1},\U,\I_{1}),\dots,(\mathcal{\A}_{m},\U,\I_{m})$,
is the triple $(\A,\U,\I)$, where $\A$ is the multi-algebra whose
Cartesian product is $\A_{1}\times\dots\times\A_{m}$ verifying the
following property, for all distinct $i,j\in\{1,\ldots,m\}$ and $b\in\Base{}_{i}$
: 
\[
\conv ijb=\sbigunion\liste{b'\in\Base_{j}}{\I_{i}(b)\cap\I_{j}(b')\neq\vide}
\]
and where $\I$ is defined by $\forall R\in\A\sep\I(R)=\bigcap_{i}\I_{i}(R_{i})$.
\end{defn}

We are going to show that a loose integration is indeed a sequential
formalism. We introduce for that the following lemma which allows
us to prove that certain formalisms of different natures are also
sequential formalisms.
\begin{namedthm}[Combination Lemma]
\label{lem:formalisme-decompos=0000E9}Let $\{(\A_{i},\U,\I_{i})\}_{i=1}^{m}$
be a set of symmetric qualitative formalisms over the same universe
and let $\mathfrak{R}'$ be a symmetric relation ($\mathfrak{R}'=\inv{\mathfrak{R}'})$.
Let, for each distinct $i,j\in\{1,\ldots,m\}$, $\conv ij$ be a projection
from $\A_{i}$ to $\A_{j}$ satisfying $\I_{i}(b)\cap\I_{j}(\B_{j})\cap\mathfrak{R}'\subseteq\I_{j}(\conv ijb)$
for all $b\in\B_{i}$.

The triplet $(\A,\U,\I)$ where 
\begin{itemize}
\item $\A$ is the following multi-algebra: $\A_{1}\times\cdots\times\A_{m}$
equipped with the projections $\conv ij$ with distinct $i,j\in\{1,\ldots,m\}$,
\item $\I$ is the function from $\A$ to $2^{\U\times\U}$ defined by $\I(R)=\bigcap_{i=1}^{m}\I_{i}(R_{i})\cap\mathfrak{R}'$,
\end{itemize}
is a sequential formalism.
\end{namedthm}
\begin{proof}
Before proving that the triple $(\A,\U,\I)$ is a sequential formalism,
we will show that the projections of $\A$ satisfy $\I_{i}(r)\cap\I_{j}(\B_{j})\cap\mathfrak{R}'\subseteq\I_{j}(\conv ijr)$
for all $r\in\A_{i}$ and for all distinct $i,j\in\{1,\ldots,m\}$.
Let distinct $i,j\in\{1,\ldots,m\}$ and let $r\in\A_{i}$. Given
that $(\A_{i},\U,\I_{i})$ is a symmetric qualitative formalism (Definition~\ref{def:Un-formalisme-qualitatif-symetrique})
and that $\conv ij$ is a projection (Definition~\ref{def:projection}),
we have the following inequality:

\[
\begin{array}{ccccc}
\I_{i}(r)\cap\I_{j}(\B_{j})\cap\mathfrak{R}' & = & \I_{i}(\bigcup_{b\in r}b)\cap\I_{j}(\B_{j})\cap\mathfrak{R}' & \text{since} & r=\sbigunion_{b\in r}b\text{ (Lem.\,\ref{lem:ppt-algebre-non-asso-finie})}\\
 & = & (\bigcup_{b\in r}\I_{i}(b))\cap\I_{j}(\B_{j})\cap\mathfrak{R}' & \text{since} & \I_{i}(r\sunion r')=\I_{i}(r)\cup\I_{i}(r')\\
 & = & \bigcup_{b\in r}(\I_{i}(b)\cap\I_{j}(\B_{j})\cap\mathfrak{R}') & \text{by} & \text{distributivity}\\
 & \subseteq & \bigcup_{b\in r}\I_{j}(\conv ijb) & \text{since} & \I_{i}(b)\cap\I_{j}(\B_{j})\cap\mathfrak{R}'\subseteq\I_{j}(\conv ijb)\\
 & = & \I_{j}(\bigcup_{b\in r}\conv ijb) & \text{since} & \I_{i}(r\sunion r')=\I_{i}(r)\cup\I_{i}(r')\\
 & = & \I_{j}(\conv ijr) & \text{since} & \conv ij(r\cup r')=\conv ijr\cup\conv ijr'\text{.}
\end{array}
\]

We now show that the triple $(\A,\U,\I)$ constitutes a sequential
formalism. $\A$ is indeed a multi-algebra, $\U$ is a non-empty set,
and $\I$ is a function from $\A$ to $2^{\U\times\U}$. We now check
the properties of $\I$. 
\begin{itemize}
\item $\I(\cconv R)=\I(R)$ :
\end{itemize}
Let distinct $i,j\in\{1,\ldots,m\}$. We show that closing under the
projections $\conv ij$ does not change the interpretation:

\[
\begin{array}{ccccc}
\I(R) & = & \bigcap_{k=1}^{m}\I_{k}(R_{k})\cap\mathfrak{R}'\\
 & = & \bigcap_{k=1}^{m}\I_{k}(R_{k})\cap\I_{j}(\conv ijR_{i})\cap\mathfrak{R}' & \text{since} & \I_{i}(R_{i})\cap\I_{j}(R_{j})\cap\mathfrak{R}'\\
 &  &  &  & \subseteq\I_{i}(R_{i})\cap\I_{j}(\B_{j})\cap\mathfrak{R}'\\
 &  &  &  & \subseteq\I_{j}(\conv ijR_{i})\\
 & = & \bigcap_{k\in\{1,\ldots,m\}\backslash\{j\}}\I_{k}(R_{k})\cap\I_{j}(R_{j})\cap\I_{j}(\conv ijR_{i})\cap\mathfrak{R}'\\
 & = & \bigcap_{k\in\{1,\ldots,m\}\backslash\{j\}}\I_{k}(R_{k})\cap\I_{j}(R_{j}\cap\conv ijR_{i})\cap\mathfrak{R}' & \text{since} & \I_{j}(r)\cap\I_{j}(r')=\I_{j}(r\cap r')\\
 & = & \I((R_{1},\ldots,R_{j-1},R_{j}\cap\conv ijR_{i},R_{j+1},\ldots,R_{m})) & \text{since} & \I(R)=\bigcap_{i}\I_{i}(R_{i})\cap\mathfrak{R}'\text{.}
\end{array}
\]
Thus, repeating this operation until a fixed point is reached, i.e.
closing under projection (Definition~\ref{ex:proj closure}) does
not change the interpretation. Therefore, $\I(R)=\I(\cconv R)$.
\begin{itemize}
\item $\I(\sinv R)=\overline{\I(R)}$ :
\end{itemize}
\[
\begin{array}{ccccc}
\I(\sinv R) & = & \bigcap_{i}\I_{i}((\sinv R)_{i})\cap\mathfrak{R}' & \text{since} & \I(R)=\bigcap_{i}\I_{i}(R_{i})\cap\mathfrak{R}'\\
 & = & \bigcap_{i}\I_{i}(\sinv{(R_{i})})\cap\mathfrak{R}' & \text{by} & \text{the definition\,\ref{def:multi-operateur}}\\
 & = & \bigcap_{i}\inv{\I_{i}(R_{i})}\cap\mathfrak{R}' & \text{since} & \I_{i}(\sinv r)=\inv{\I_{i}(r)}\\
 & = & \bigcap_{i}\inv{\I_{i}(R_{i})}\cap\inv{\mathfrak{R}'} & \text{by} & \text{property of }\mathfrak{R}'\\
 & = & \inv{\bigcap_{i}\I_{i}(R_{i})\cap\mathfrak{R}'} & \text{since} & \text{converse distributes on intersection}\\
 & = & \inv{\I(R)} & \text{\text{since}} & \I(R)=\bigcap_{i}\I_{i}(R_{i})\cap\mathfrak{R}'\text{.}
\end{array}
\]
\begin{itemize}
\item $\I\bigl(\ctrois{\svide}{\ldots}{\svide}\bigr)=\vide$ :
\end{itemize}
$\I\bigl(\ctrois{\svide}{\ldots}{\svide}\bigr)=\bigcap_{i}\I_{i}(\svide)\cap\mathfrak{R}'=\bigcap_{i}\vide\cap\mathfrak{R}'=\vide$
since each $(\A_{i},\U,\I_{i})$ is a qualitative formalism (Definition~\ref{def:Un-formalisme-qualitatif-symetrique}).
\begin{itemize}
\item $\I(R\scomp R')\supseteq\bigl(\I(R)\circ\I(R')\bigr)\cap\I(\ctrois{\B_{1}}{\ldots}{\B_{m}})$
:
\end{itemize}
We prove this property from the properties of symmetric qualitative
formalisms (Definition~\ref{def:Un-formalisme-qualitatif-symetrique})
:

\[
\begin{array}{cccc}
 & \left(\I(R)\circ\I(R')\right)\cap\I\left(\ctrois{\B_{1}}{\ldots}{\B_{m}}\right)\\
\subseteq & \left(\left(\bigcap_{i}\I_{i}(R_{i})\cap\mathfrak{R}'\right)\circ\left(\bigcap_{j}\I_{j}(R'_{j})\cap\mathfrak{R}'\right)\right)\cap\bigcap_{i}\I_{i}(\B_{i})\cap\mathfrak{R}' & \text{since} & \I(R)=\bigcap_{i}\I_{i}(R_{i})\cap\mathfrak{R}'\\
\subseteq & \left(\left(\bigcap_{i}\I_{i}(R_{i})\right)\circ\left(\bigcap_{j}\I_{j}(R'_{j})\right)\right)\cap\bigcap_{i}\I_{i}(\B_{i})\cap\mathfrak{R}'\\
\subseteq & \bigcap_{i}\left(\I_{i}(R_{i})\circ\bigcap_{j}\I_{j}(R'_{j})\right)\cap\bigcap_{i}\I_{i}(\B_{i})\cap\mathfrak{R}' & \text{since} & (r\cap s)\comp t\subseteq(r\comp t)\cap(s\comp t)\\
\subseteq & \bigcap_{i}\bigcap_{j}\left(\I_{i}(R_{i})\circ\I_{j}(R'_{j})\right)\cap\bigcap_{i}\I_{i}(\B_{i})\cap\mathfrak{R}' & \text{since} & t\comp(r\cap s)\subseteq(t\comp r)\cap(t\comp s)\\
\subseteq & \bigcap_{i}\left(\I_{i}(R_{i})\circ\I_{i}(R'_{i})\right)\cap\bigcap_{i}\I_{i}(\B_{i})\cap\mathfrak{R}'\\
\subseteq & \bigcap_{i}\left(\left(\I_{i}(R_{i})\circ\I_{i}(R'_{i})\right)\cap\I_{i}(\B_{i})\right)\cap\mathfrak{R}'\\
\subseteq & \bigcap_{i}\I_{i}(R_{i}\scomp R_{i}')\cap\mathfrak{R}' & \text{since} & \left(\I_{i}(r)\circ\I_{i}(r')\right)\cap\I_{i}(\B_{i})\subseteq\I_{i}(r\scomp r')\\
\subseteq & \bigcap_{i}\I_{i}\left((R\scomp R')_{i}\right)\cap\mathfrak{R}' & \text{by} & \text{the definition of }\scomp\text{ (déf.\,\ref{def:multi-operateur})}\\
\subseteq & \I(R\scomp R') & \text{since} & \I(R)=\bigcap_{i}\I_{i}(R_{i})\cap\mathfrak{R}'\text{.}
\end{array}
\]

\begin{itemize}
\item $\I(R\sinter R')=\I(R)\cap\I(R')$ :
\end{itemize}
\[
\begin{array}{ccccc}
\I(R\sinter R') & = & \bigcap_{i}\I_{i}((R\sinter R')_{i})\cap\mathfrak{R}' & \text{since} & \I(R)=\bigcap_{i}\I_{i}(R_{i})\cap\mathfrak{R}'\\
 & = & \bigcap_{i}\I_{i}(R_{i}\sinter R'_{i})\cap\mathfrak{R}' & \text{by} & \text{the definition of }\sinter\text{ (Def.\,\ref{def:multi-operateur})}\\
 & = & \bigcap_{i}\I_{i}(R_{i})\cap\I_{i}(R'_{i})\cap\mathfrak{R}' & \text{since} & \I_{i}(r\sinter r')=\I_{i}(r)\cap\I_{i}(r')\\
 & = & \bigcap_{i}\I_{i}(R_{i})\cap\mathfrak{R}'\cap\bigcap_{i}\I_{i}(R'_{i})\cap\mathfrak{R}'\\
 & = & \I(R)\cap\I(R') & \text{since} & \I(R)=\bigcap_{i}\I_{i}(R_{i})\cap\mathfrak{R}'\text{.}
\end{array}
\]
\begin{itemize}
\item $\I(R)=\bigcup_{B\atome R}\I(B)$ :
\end{itemize}
\[
\begin{array}{ccccc}
\I(R) & = & \bigcap_{i}\I(R_{i})\cap\mathfrak{R}' & \text{by} & \text{definition}\\
 & = & \bigcap_{i}\bigcup_{b_{i}\atome R_{i}}\I(b_{i})\cap\mathfrak{R}' & \text{since} & \I_{i}(r\sunion r')=\I_{i}(r)\cup\I_{i}(r')\\
 & = & \bigcup\liste{\I(b_{1})\cap\dots\cap\I(b_{m})}{b_{1}\in R_{1},\ldots,b_{m}\in R_{m}}\cap\mathfrak{R}' & \text{by} & \text{distributivity}\\
 & = & \bigcup\liste{\I(B_{1})\cap\dots\cap\I(B_{m})}{B\in R}\cap\mathfrak{R}' & \text{by} & \text{the definition of \textquotedblleft\ }B\in R\text{ \textquotedblright}\\
 & = & \bigcup_{B\atome R}\bigcap_{i}\I(B_{i})\cap\mathfrak{R}'\\
 & = & \bigcup_{B\atome R}(\bigcap_{i}\I(B_{i})\cap\mathfrak{R}')\\
 & = & \bigcup_{B\atome R}\I(B) & \text{since} & \I(R)=\bigcap_{i}\I(R_{i})\cap\mathfrak{R}'\text{.}
\end{array}
\]
The triplet $(\A,\U,\I)$ is thus a sequential formalism.
\end{proof}
We now show that loose integrations are sequential formalisms, based
on the previous lemma, proving that their projections are correct.
\begin{lem}
\label{lem:projections_int=0000E9gration_saines}Let $(\A,\U,\I)$
be the loose integration of $m$ symmetric qualitative formalisms
on the same universe $\U$, $(\mathcal{\A}_{1},\U,\I_{1}),\dots,(\mathcal{\A}_{m},\U,\I_{m})$,
and let distinct $i,j\in\{1,\ldots,m\}$ and $r\in\A_{i}$.

The property $\I_{i}(r)\cap\I_{j}(\B_{j})\subseteq\I_{j}(\conv ijr)$
is satisfied.
\end{lem}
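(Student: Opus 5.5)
The plan is to reduce to basic relations and then argue pointwise. By Lemma~\ref{lem:ppt-union_FQ-sym}, $\I_{i}(r)=\bigcup_{b\in r}\I_{i}(b)$. By Lemma~\ref{lem:croissance_proj}, $\conv ijr=\sbigunion_{b\in r}\conv ijb$, so $\I_{j}(\conv ijr)=\bigcup_{b\in r}\I_{j}(\conv ijb)$, again by Lemma~\ref{lem:ppt-union_FQ-sym}. Distributing the intersection over the union, it therefore suffices to show, for every basic relation $b\in\Base_{i}$, that
\[
\I_{i}(b)\cap\I_{j}(\B_{j})\subseteq\I_{j}(\conv ijb).
\]

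To prove this inclusion, take $(x,y)\in\I_{i}(b)\cap\I_{j}(\B_{j})$. Since $\B_{j}=\sbigunion_{b'\in\Base_{j}}b'$ by Lemma~\ref{lem:ppt-algebre-non-asso-finie}, Lemma~\ref{lem:ppt-union_FQ-sym} gives $\I_{j}(\B_{j})=\bigcup_{b'\in\Base_{j}}\I_{j}(b')$. So there is some $b'\in\Base_{j}$ with $(x,y)\in\I_{j}(b')$.

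This pair then witnesses $\I_{i}(b)\cap\I_{j}(b')\neq\vide$. By the definition of the projections of a loose integration (Definition~\ref{def:int=0000E9gration-l=0000E2che}), it follows that $b'\subseteq\conv ijb$, i.e. $b'\in\conv ijb$. Monotonicity of $\I_{j}$ (Lemma~\ref{lem:ppt-union_FQ-sym}) then yields $(x,y)\in\I_{j}(b')\subseteq\I_{j}(\conv ijb)$, which is what we need.

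The only delicate point, and the reason the statement includes the factor $\I_{j}(\B_{j})$, is that the formalism $(\A_{j},\U,\I_{j})$ may be based on a sub-partition. In that case $\I_{j}(\B_{j})$ need not be $\U\times\U$, and a pair in $\I_{i}(b)$ alone might satisfy no basic relation of $\A_{j}$. Intersecting with $\I_{j}(\B_{j})$ is exactly what guarantees the existence of the witness $b'$. Everything else is routine distributivity.
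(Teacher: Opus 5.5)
Your proof is correct and follows essentially the paper's argument: take a witness $b'\in\Base_{j}$ for the pair from $\I_{j}(\B_{j})$, use the definition of the loose-integration projection to get $b'\in\conv ij b$, and conclude by monotonicity of $\I_{j}$. The only cosmetic difference is how you pass from $r$ to a basic $b\in r$. You distribute $\I_{j}$ over $\conv ij r=\bigcup_{b\in r}\conv ij b$, a step that strictly uses the union axiom $\I_{j}(r\cup r')=\I_{j}(r)\cup\I_{j}(r')$ rather than Lemma~\ref{lem:ppt-union_FQ-sym}. The paper instead picks $b\in r$ containing the pair and invokes monotonicity of the projection (Lemma~\ref{lem:croissance_proj}).
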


\begin{proof}
Let distinct $i,j\in\{1,\ldots,m\}$ and $r\in\A_{i}$. Let $(x,y)\in\I_{i}(r)\cap\I_{j}(\B_{j})$.
There exists $b\in r$ and $b'\in\Base_{j}$ such that $(x,y)\in\I_{i}(b)\cap\I_{j}(b')$,
since $\I_{i}(r)=\bigcup_{b\in r}\I_{i}(b)$ and $\I_{j}(\B_{j})=\bigcup_{b'\in\Base_{j}}\I_{j}(b')$
(Lemma~\ref{lem:ppt-union_FQ-sym}). Given that $\conv ijb=\sbigunion\liste{b'\in\Base_{j}}{\I_{i}(b)\cap\I_{j}(b')\neq\vide}$
(Definition~\ref{def:int=0000E9gration-l=0000E2che}), we have $b'\in\conv ijb$.
Thus $\I_{j}(b')\subseteq\I_{j}(\conv ijb)$ (Lemma~\ref{lem:ppt-union_FQ-sym}).
But $(x,y)\in\I_{i}(b)\cap\I_{j}(b')$, hence $(x,y)\in\I_{j}(\conv ijb)$.
Since $b\in r$, we have $\conv ijb\subseteq\conv ijr$ (Lemma~\ref{lem:croissance_proj}).
Therefore, $\I_{j}(\conv ijb)\subseteq\I_{j}(\conv ijr)$ (Lemma~\ref{lem:ppt-union_FQ-sym}).
Thus $(x,y)\in\I_{j}(\conv ijr)$, d'où $\I_{i}(r)\cap\I_{j}(\B_{j})\subseteq\I_{j}(\conv ijr)$.
\end{proof}
\begin{prop}
\label{prop:L'int=0000E9gration-l=0000E2che_l=0000E2chement-combin=0000E9}The
loose Integration $(\A,\U,\I)$ of $m$ symmetrical qualitative formalisms
on the same universe $\U$, $(\mathcal{\A}_{1},\U,\I_{1}),\dots,(\mathcal{\A}_{m},\U,\I_{m})$,
is a sequential formalism.
\end{prop}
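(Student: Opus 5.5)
The plan is to obtain the proposition as a direct instance of the Combination Lemma, using Lemma~\ref{lem:projections_int=0000E9gration_saines} to discharge its hypothesis. The only real choice is the auxiliary symmetric relation $\mathfrak{R}'$. I take $\mathfrak{R}'=\U\times\U$.

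First, I check that $\mathfrak{R}'=\U\times\U$ is symmetric, which is immediate since $\inv{\U\times\U}=\U\times\U$. With this choice, intersecting with $\mathfrak{R}'$ does nothing on subsets of $\U\times\U$. The interpretation required by the Combination Lemma, $\I(R)=\bigcap_{i=1}^{m}\I_{i}(R_{i})\cap\mathfrak{R}'$, therefore coincides with $\bigcap_{i}\I_{i}(R_{i})$, which is exactly the interpretation of the loose integration in Definition~\ref{def:int=0000E9gration-l=0000E2che}.

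The multi-algebras also coincide. Both are $\A_{1}\times\dots\times\A_{m}$ equipped with the operators $\conv ij$ of Definition~\ref{def:int=0000E9gration-l=0000E2che}, and these must be projection operators in the sense of Definition~\ref{def:projection}. Since they are specified on basic relations, I extend them to all relations by unions, as Lemma~\ref{lem:croissance_proj} indicates. Distributivity over union then holds by construction.

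Compatibility with converse takes a short verification. For $b\in\Base_{i}$ and $b'\in\Base_{j}$, we have $\I_{i}(\sinv b)\cap\I_{j}(\sinv{b'})=\inv{\I_{i}(b)\cap\I_{j}(b')}$, using the inversion property of symmetric qualitative formalisms. So $b'\in\conv ijb$ if and only if $\sinv{b'}\in\conv ij\sinv b$. Combining this with $\sinv r=\sbigunion_{b\in r}\sinv b$ from Lemma~\ref{lem:ppt-algebre-non-asso-finie}, and the fact that converse maps atoms to atoms, gives $\conv ij\sinv r=\sinv{\conv ijr}$.

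Next comes the key hypothesis of the Combination Lemma: $\I_{i}(b)\cap\I_{j}(\B_{j})\cap\mathfrak{R}'\subseteq\I_{j}(\conv ijb)$. With $\mathfrak{R}'=\U\times\U$ this reduces to $\I_{i}(b)\cap\I_{j}(\B_{j})\subseteq\I_{j}(\conv ijb)$, which is precisely Lemma~\ref{lem:projections_int=0000E9gration_saines} applied with $r=b$.

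All hypotheses are then met, so the Combination Lemma gives that $(\A,\U,\I)$ is a sequential formalism. The only mildly delicate step is checking that the operators $\conv ij$ commute with converse. Everything else is a matter of matching definitions.
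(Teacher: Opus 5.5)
Your proposal is correct and is the paper's proof: you instantiate the Combination Lemma with $\mathfrak{R}'=\U\times\U$ and discharge its soundness hypothesis using Lemma~\ref{lem:projections_int=0000E9gration_saines}. Your extra check that the operators $\conv ij$ distribute over union and commute with converse, so that they are genuine projection operators, is a sound detail that the paper leaves implicit in its definition of loose integration.
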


\begin{proof}
We apply the combination lemma~\ref{lem:formalisme-decompos=0000E9}
with $\mathfrak{R}'=\U\times\U$, which trivially verifies $\mathfrak{R}'=\inv{\mathfrak{R}'}$.
We actually have $\{(\A_{i},\U,\I_{i})\}_{i=1}^{m}$ a set of symmetrical
qualitative formalisms on the same universe. The function $\I$ is
indeed a function from $\A$ to $2^{\U\times\U}$ satisfying $\I(R)=\bigcap_{i=1}^{m}\I_{i}(R_{i})=\bigcap_{i=1}^{m}\I_{i}(R_{i})\cap\U\times\U=\bigcap_{i=1}^{m}\I_{i}(R_{i})\cap\mathfrak{R}'$.
To conclude, we have
\[
\I_{i}(r)\cap\I_{j}(\B_{j})\cap\mathfrak{R}'=\I_{i}(r)\cap\I_{j}(\B_{j})\subseteq\I_{j}(\conv ijr)
\]
 for all $r\in\A_{i}$, for all distinct $i,j\in\{1,\ldots,m\}$ (Lemma~\ref{lem:projections_int=0000E9gration_saines}). 
\end{proof}
\begin{example}
\label{ex:QST_formalisme} The size-topology combination\index{size-topology combination STC@size-topology combination $\STC$}
$\STC$ (example~\ref{ex:STC}) is the loose integration of the formalism
$\RCCH$ and the formalism $\PA$ interpreted over the region sizes
(i.e., $\U$ is the set of measurable regions of $\mathbb{R}^{n}$
for a given measure $\mu$, $\varphi\left(<\right)=\liste{\left(r,r'\right)\in\U^{2}}{\mu\left(r\right)<\mu\left(r'\right)}$,
and $\varphi\left(=\right)=\liste{\left(r,r'\right)\in\U^{2}}{\mu\left(r\right)=\mu\left(r'\right)}$
). Its multi-algebra is that of Example~\ref{ex:QST_multialg}. Take
relations $\cdeux{\tpp}{\vide}$ and $\cdeux{\vide}<$ of $\STC$,
each of them has a trivially empty interpretation. However, the interpretation
of the relation $\cdeux{\tpp}{\vide}\cup\cdeux{\vide}<=\cdeux{\tpp}<$
is not the empty set. The classic property $\I(R\sunion R')=\I(R)\cup\I(R')$
is invalidated. 
\end{example}

\begin{example}
\label{ex:GPC_formalisme} We call \emph{multi-scale point calculus\index{multi-scale point calculus SPC@\emph{multi-scale point calculus $\SPC$}},}
denoted by \emph{$\SPC$}, the sequential formalism allowing us to
represent multi-scale temporal descriptions based on the point algebra
(see Section~\ref{subsec:RW:Multi-Scale}). We place ourselves within
the framework of a calendar where the scales are totally ordered by
the fineness relation $\preceq$. For $m$ scales, the multi-algebra
of $\SPC$ is simply $\SPA$ (see Example~\ref{ex:GPC_multialg})
with $\PA^{m}$ as Cartesian product. $\SPC$ is in fact the loose
integration of $m$ formalisms based on the point algebra, each interpreting
relations at different time scales. To fully define $\SPC$, we only
need in addition to define the interpretation at each scale. The interpretation
function $\I_{k}$ of the scale $g_{k}$ associates with each relation
$r$ the pairs of points whose representatives at the scale $g_{k}$
satisfy $r$, i.e. $\I_{k}(r)=\liste{(x,y)\in\mathbb{R}^{2}}{(g_{k}\left(x\right),g_{k}\left(y\right))\in\I_{\PA}(r)}$
with $g_{k}\left(x\right)$ the representative of the instant $x$
at the scale $g_{k}$ and $\I_{\PA}$ the classic interpretation of
$\PA$. 
\end{example}

We recover some properties of the classical framework.
\begin{lem}
\label{lem:ppt_relations}Let $(\A,U,\I)$ be a sequential formalism
and $R,R'\in\A$.

If $R\subseteq R'$ then $\I(R)\subseteq\I(R')$.
\end{lem}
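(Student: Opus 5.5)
The plan is to use the axiom $\I(R)=\bigcup_{B\atome R}\I(B)$ of sequential formalisms (Definition~\ref{defn:combined formalisms}) rather than any union property. In multi-algebras $\I(R\sunion R')=\I(R)\cup\I(R')$ may fail (Example~\ref{ex:QST_formalisme}), so we cannot argue as in Lemma~\ref{lem:ppt-union_FQ-sym} from $R\sunion R'=R'$. Instead we compare the sets of basic relations below $R$ and below $R'$.

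First, we show that $R\subseteq R'$ implies $\{B\in\Base \mid B\atome R\}\subseteq\{B\in\Base\mid B\atome R'\}$. Let $B\atome R$, i.e.\ $B\in\Base$ and $B\subseteq R$, which means $B_{i}\subseteq R_{i}$ for each $i$. Since $R_{i}\subseteq R'_{i}$ for each $i$, transitivity of $\subseteq$ in each finite non-associative algebra $\A_{i}$ (Lemma~\ref{lem:ppt-algebre-non-asso-finie}) gives $B_{i}\subseteq R'_{i}$ for all $i$. Hence $B\subseteq R'$, and as $B$ is basic, $B\atome R'$.

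Second, we conclude:
\[
\I(R)=\bigcup_{B\atome R}\I(B)\subseteq\bigcup_{B\atome R'}\I(B)=\I(R').
\]
The inclusion holds because the first union ranges over a subset of the index set of the second. Both equalities are the last axiom of Definition~\ref{defn:combined formalisms}.

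No step is a real obstacle. The one point needing care is to avoid invoking additivity of $\I$, which does not hold for sequential formalisms, and to rely only on the decomposition over basic relations together with componentwise transitivity of $\subseteq$.
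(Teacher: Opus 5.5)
Your proof is correct and follows the same route as the paper's: both rest on the axiom $\I(R)=\bigcup_{B\atome R}\I(B)$ and on the fact that every basic relation below $R$ is also below $R'$. You additionally justify that inclusion componentwise via transitivity, which the paper leaves implicit.
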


\begin{proof}
We have $\I(R)=\bigcup_{B\in R}\I(B)$ and $\I(R')=\bigcup_{B\in R'}\I(B)$.
Let $B\in R$. We have $B\in R'$, since $R\subseteq R'$. Thus, $\I(B)\subseteq\I(R')$.
Therefore, $\I(R)\subseteq\I(R')$.
\end{proof}
To conclude this section, note that the reasoning introduced in the
previous section is justified by the semantics of sequential formalisms.
We have the following properties:
\begin{prop}
\label{prop:implication-MA}Let $(\A,U,\I)$ be a sequential formalism,
$R,R'\in\A$ and $x,y,z\in\U$. We have the following equivalences:
\begin{align*}
\contrainte xRy & \iff\contrainte y{\sinv R}x\\
\contrainte xRy\et\contrainte x{R'}y & \iff\contrainte x{\left(R\sinter R'\right)}y\\
\contrainte xRy & \iff\contrainte x{\left(\cconv R\right)}y\text{.}
\end{align*}
In addition, by assuming $\contrainte x{(\B_{1},\ldots,\B_{m})}z$,
we have the following implication:
\[
\contrainte xRy\et\contrainte y{R'}z\implies\contrainte x{\left(R\scomp R'\right)}z
\]
\end{prop}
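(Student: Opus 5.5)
Each of the four items follows directly from one axiom of Definition~\ref{defn:combined formalisms}, once we recall that $\contrainte xRy$ abbreviates $(x,y)\in\I(R)$. No induction or case analysis on the components $R_i$ is needed; we work at the level of the interpretation function $\I$ and its axioms.

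\textbf{Converse.} We use $\I(\sinv R)=\overline{\I(R)}$. By definition of the converse of a set of pairs, $(x,y)\in\I(R)$ iff $(y,x)\in\overline{\I(R)}$. Combined with the axiom, this gives $\contrainte xRy\iff\contrainte y{\sinv R}x$.

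\textbf{Intersection.} The axiom $\I(R\sinter R')=\I(R)\cap\I(R')$ gives the equivalence at once: $(x,y)$ lies in both interpretations iff it lies in their intersection.

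\textbf{Projection closure.} The axiom $\I(\cconv R)=\I(R)$ gives this equivalence immediately. All three equivalences are thus one-line unfoldings.

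\textbf{Composition.} Assume $\contrainte xRy$, $\contrainte y{R'}z$ and $\contrainte x{(\B_1,\ldots,\B_m)}z$. From the first two hypotheses, the witness $y$ shows that $(x,z)\in\I(R)\circ\I(R')$ by definition of relational composition. The third hypothesis says exactly that $(x,z)\in\I(\B)$ with $\B=(\B_1,\ldots,\B_m)$. Hence $(x,z)\in(\I(R)\circ\I(R'))\cap\I(\B)$. The composition axiom $\I(R\scomp R')\supseteq(\I(R)\circ\I(R'))\cap\I(\B)$ then yields $\contrainte x{(R\scomp R')}z$.

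The only point requiring care is this last one, and it is why the statement carries the extra hypothesis on $(x,z)$. Because formalisms may be based on sub-partitions, the composition axiom is only guaranteed inside $\I(\B)$. Without the assumption $\contrainte x{(\B_1,\ldots,\B_m)}z$ the implication can fail, as the $\RCCS$ discussion of Section~\ref{sec:Symmetrical-Qualitative-Formalis} illustrates. Lemma~\ref{lem:ppt_relations} is not needed for any of the four items.
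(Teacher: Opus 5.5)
Your proposal is correct and follows the paper's proof: each item is a direct unfolding of the corresponding axiom of Definition~\ref{defn:combined formalisms}. For composition, both you and the paper use $y$ as the witness for $(x,z)\in\I(R)\circ\I(R')$, intersect with $\I\bigl((\B_1,\ldots,\B_m)\bigr)$ using the extra hypothesis, and then apply the composition axiom.
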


\begin{proof}
To prove this proposition, we use the fact that $(\A,U,\I)$ is a
sequential formalism (Definition~\ref{defn:combined formalisms}). 

We prove the first equivalence:
\[
\begin{array}{ccc}
\contrainte xRy & \iff & (x,y)\in\I(R)\\
 & \iff & (y,x)\in\inv{\I(R)}\\
 & \iff & (y,x)\in\I(\sinv R)\\
 & \iff & \contrainte y{\sinv R}x\text{.}
\end{array}
\]

We prove the second equivalence:

\[
\begin{array}{ccc}
\contrainte xRy\et\contrainte x{R'}y & \iff & (x,y)\in\I(R)\et(x,y)\in\I(R')\\
 & \iff & (x,y)\in\I(R)\cap\I(R')\\
 & \iff & (x,y)\in\I(R\sinter R')\\
 & \iff & \contrainte x{\left(R\sinter R'\right)}y\text{.}
\end{array}
\]

We prove the last equivalence:

\[
\begin{array}{ccc}
\contrainte xRy & \iff & (x,y)\in\I(R)\\
 & \iff & (x,y)\in\I(\cconv R)\\
 & \iff & \contrainte x{\left(\cconv R\right)}y\text{.}
\end{array}
\]

We prove the implication (assuming $\contrainte x{(\B_{1},\ldots,\B_{m})}z$):
\[
\begin{array}{ccc}
\contrainte xRy\et\contrainte y{R'}z & \implies & (x,y)\in\I(R)\et(y,z)\in\I(R')\\
 & \implies & (x,z)\in\I(R)\circ\I(R')\\
 & \implies & (x,z)\in\left(\I(R)\circ\I(R')\right)\cap\I\left((\B_{1},\ldots,\B_{m})\right)\\
 & \implies & (x,z)\in\I(R\scomp R')\\
 & \implies & \contrainte x{\left(R\scomp R'\right)}z\text{.}
\end{array}
\]
\end{proof}

\subsection{Satisfiability of Multi-algebra Relations}

The semantics of relations induces a notion of satisfiability, which
we formulate in the following definition.
\begin{defn}
\label{defn:coherence_relations} Let $\F=(\A,\U,\I)$ be a sequential
formalism.

A relation $R$ of a multi-algebra $\A$ is said \emph{satisfiable\index{satisfiable@\emph{satisfiable}}
(for $\F$)} if $\I(R)\neq\vide$.
\end{defn}

Contrary to classical formalisms where any non-empty relation is satisfiable,
the relations -- even basic -- of multi-algebras can be unsatisfiable,
because of interdependencies. The definition of a sequential formalism
(definition~\ref{defn:combined formalisms}) allows us nevertheless
to keep the following elementary property:
\begin{prop}
\label{prop:equivalence_coherence_relation} Let $(\A,\U,\I)$ a sequential
formalism.

A relation R of a multi-algebra $\A$ is satisfiable if and only if
it contains a basic satisfiable relation, i.e.
\[
\I(R)\neq\vide\iff\exists B\in R\sep\I(B)\neq\vide
\]
\end{prop}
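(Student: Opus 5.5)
The plan is to read both directions off the last axiom of Definition~\ref{defn:combined formalisms}, namely $\I(R)=\bigcup_{B\atome R}\I(B)$. This decomposition plays the role that $\I(r)=\bigcup_{b\in r}\I(b)$ (Lemma~\ref{lem:ppt-union_FQ-sym}) plays for a single formalism. The point to keep in mind is that $\I(R\sunion R')=\I(R)\cup\I(R')$ may fail here (Example~\ref{ex:QST_formalisme}), so I will rely only on the basic-relation decomposition axiom.

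For the direction ($\Rightarrow$), assume $\I(R)\neq\vide$ and pick $(x,y)\in\I(R)$. Since $\I(R)=\bigcup_{B\atome R}\I(B)$, the pair $(x,y)$ lies in $\I(B)$ for some $B\atome R$. Hence $\I(B)\neq\vide$, so $B$ is a satisfiable basic relation with $B\in R$. This also shows the union is over a non-empty index set, since an empty union would be empty.

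For the direction ($\Leftarrow$), assume some $B\atome R$ has $\I(B)\neq\vide$. By definition, $B\atome R$ means $B\subseteq R$. Lemma~\ref{lem:ppt_relations} then gives $\I(B)\subseteq\I(R)$, so $\I(R)\neq\vide$. Equivalently, $\I(B)$ is one of the sets in the union $\bigcup_{B'\atome R}\I(B')=\I(R)$.

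There is no real obstacle. The one point to check is that the notation ``$B\in R$'' in the statement means $B\atome R$ in the multi-algebra sense: $B\in\Base_{1}\times\dots\times\Base_{m}$ with $B\subseteq R$ componentwise. This is exactly the index set of the union in the axiom, so both directions go through directly.
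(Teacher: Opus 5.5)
Your proposal is correct and follows essentially the same route as the paper: both directions are read off the axiom $\I(R)=\bigcup_{B\atome R}\I(B)$ of Definition~\ref{defn:combined formalisms}. Your detour through Lemma~\ref{lem:ppt_relations} for ($\Leftarrow$) is harmless, since that lemma is itself derived from the same axiom, and the paper simply uses the union directly.
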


\begin{proof}
Let $R\in\A$. Suppose that $\I(R)\neq\vide$. Given that $\I(R)=\bigcup_{B\atome R}\I(B)$
(Definition of a sequential formalism ; Def.~\ref{defn:combined formalisms}),
$\bigcup_{B\atome R}\I(B)\neq\vide$ and thus $\exists B\in R\sep\I(B)\neq\vide$.

Conversely, suppose that $\exists B\in R\sep\I(B)\neq\vide$. Since,
$\I(R)=\bigcup_{B\atome R}\I(B)$, thus we have $\I(R)\neq\vide$.
\end{proof}
We have illustrated in Example~\ref{ex:proj closure} the fact that
some relations were unsatisfiable because of interdependencies, exhibiting
basic and non-basic relations whose closure under projection is empty.
Closing a relation under projection can thus help to detect its inconsistency.
If $\cconv R$ is empty, we can conclude that $R$ is unsatisfiable.
Otherwise, $\cconv R$ is said \emph{$\cconv$}-consistent.
\begin{defn}
\label{defn:arc-consistency} A relation $R$ of a multi-algebra is
\emph{$\cconv$-consistent}\index{cconv -consistent@$\cconv$-consistent}
if :
\begin{itemize}
\item $R$ is \index{closed under projection}closed under projection: $R=\cconv R$
and 
\item $R$ is not \emph{\index{trivially inconsistent}trivially inconsistent}:
$\forall i\sep R_{i}\neq\svide$.
\end{itemize}
\end{defn}

\begin{prop}
\label{prop:basique-coherente_conv-coherent} Let $(\A,\U,\I)$ be
a sequential formalism and $R\in\A$. We have the following three
properties:
\begin{itemize}
\item If $\cconv R$ is trivially inconsistent, then $R$ is unsatisfiable.
\item If $R$ is satisfiable, then $\cconv R$ is $\cconv$-consistent.
\item All satisfiable basic relation are $\cconv$-consistent.
\end{itemize}
\end{prop}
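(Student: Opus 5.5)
The three items will be handled in order, each resting on the axioms of a sequential formalism (Definition~\ref{defn:combined formalisms}), chiefly $\I(\cconv R)=\I(R)$ and $\I(R)=\bigcup_{B\atome R}\I(B)$.

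For the first item, suppose $\cconv R$ is trivially inconsistent, so some component $(\cconv R)_{i}=\svide$. Every basic relation $B\atome\cconv R$ must have $B_{i}\in(\cconv R)_{i}=\svide$, and there is no such atom, so the set of $B\atome\cconv R$ is empty. Hence $\I(\cconv R)=\bigcup_{B\atome\cconv R}\I(B)=\vide$. Since $\I(R)=\I(\cconv R)$, we get $\I(R)=\vide$, i.e. $R$ is unsatisfiable. This matters because $\cconv R$ may be empty in one component without all components being $\svide$, so the tuple $(\svide,\ldots,\svide)$ axiom alone is not enough. The atom decomposition is the argument that handles this.

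For the second item, assume $R$ is satisfiable. Then $\cconv R$ is not trivially inconsistent, by the contrapositive of the first item. We still need $\cconv R$ to be closed under projection, i.e. $\cconv(\cconv R)=\cconv R$. This holds because $\cconv R$ is by definition a fixed point of all substitutions $R_{j}\leftarrow R_{j}\sinter\conv ijR_{i}$. At such a fixed point, $R_{j}=R_{j}\sinter\conv ijR_{i}$, which means $R_{j}\subseteq\conv ijR_{i}$, and this is exactly closure under projection (Definition~\ref{def:cloture-par-proj}). Both conditions of Definition~\ref{defn:arc-consistency} therefore hold.

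For the third item, let $B$ be a satisfiable basic relation. The second item gives that $\cconv B$ is $\cconv$-consistent, so it remains to show $\cconv B=B$. Each substitution only shrinks components, so $(\cconv B)_{i}\subseteq B_{i}$. Each $B_{i}$ is an atom, so $(\cconv B)_{i}$ is either $B_{i}$ or $\svide$, by Definition~\ref{def:atome} applied to $(\cconv B)_{i}=B_{i}\sinter(\cdots)$. It cannot be $\svide$, since $\cconv B$ is not trivially inconsistent, so $\cconv B=B$ and $B$ is $\cconv$-consistent.

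The main obstacle is the first item, specifically justifying that a single empty component yields an empty interpretation. This requires the $\bigcup_{B\atome R}$ axiom together with the observation that no basic tuple lies below a tuple with an $\svide$ component. A secondary point to check is that the intersection with an atom is either the atom itself or $\svide$, which comes directly from the definition of atom.
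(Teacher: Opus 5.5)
Your proof is correct. Items 2 and 3 match the paper in substance. The paper derives item 2 in one line from item 1, leaving implicit the fixed-point argument you spell out. For item 3 it uses your observation that $\cconv B\subseteq B$ with $B$ basic forces $\cconv B=B$ once no component is $\svide$.

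Item 1 is where you take a genuinely different route. The paper uses that $\cconv R$ is closed under projection. If $(\cconv R)_i=\svide$, then $(\cconv R)_j\subseteq\conv{i}{j}\svide=\svide$ for every $j\neq i$, so $\cconv R=(\svide,\ldots,\svide)$. It then applies the dedicated axiom $\I((\svide,\ldots,\svide))=\vide$ together with $\I(\cconv R)=\I(R)$. You instead use the decomposition axiom $\I(R)=\bigcup_{B\atome R}\I(B)$ and note that no basic tuple lies below a tuple with an empty component.

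Your argument is shorter and proves more. Every trivially inconsistent relation is unsatisfiable, whether or not it is closed under projection. So the axiom $\I((\svide,\ldots,\svide))=\vide$ is redundant given the decomposition axiom. It also gives the last item of Proposition~\ref{prop:trivialement-incoherent_incoherent} directly, without passing through the algebraic closure. The paper's route, in exchange, yields a structural fact: a relation closed under projection is either entirely empty or has no empty component. The paper reuses this in item 3 to pass from ``some component is nonempty'' to ``all are''.

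Your side remark is wrong as stated. $\cconv R$ cannot have one empty component without all components being $\svide$, precisely because it is closed under projection and $\conv{i}{j}\svide=\svide$. That is exactly the paper's argument, so the all-empty axiom suffices once closure is used.

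Your step ``there is no such atom'' relies on atoms being nonzero. So does the paper's step $\conv{i}{j}\svide=\svide$, obtained via $\conv{i}{j}r=\sbigunion_{b\atome r}\conv{i}{j}b$. Definition~\ref{def:atome} does not literally state this convention. It is the standard convention and is shared by both proofs, so it is not a gap in yours.
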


\begin{proof}
We first prove the first assertion. Let $R\in\A$ such that $\cconv R$
is trivially inconsistent. There exists $i\in\left\{ 1,\ldots,m\right\} $
such that $(\cconv R)_{i}=\svide$. But $\cconv R$ is closed under
projection, by definition. Thus, for all $j\in\left\{ 1,\ldots,m\right\} $
such that $i\neq j$, we have $(\cconv R)_{j}\subseteq\conv ij(\cconv R)_{i}=\conv ij\svide$.
Since $\conv ijr=\sbigunion_{b\atome r}\conv ijb$ (Lemma~\ref{lem:croissance_proj}),
$\conv ij\svide=\svide$. Thus, $(\cconv R)_{j}=\svide$ for all $j\in\left\{ 1,\ldots,m\right\} $.
Therefore, $\I(\cconv R)=\I\left(\left(\vide,\ldots,\vide\right)\right)=\vide$
(since $(\A,\U,\I)$ is a sequential formalism ; Definition~\ref{defn:combined formalisms}).
Since $\I(\cconv R)=\I(R)$ (Definition~\ref{defn:combined formalisms}),
we have $\I(R)=\vide$. The relation $R$ is thus unsatisfiable.

The second assertion follows directly from the contraposition of the
first assertion.

We now show the third assertion. Let $B\in\Base$ such that $B$ is
satisfiable. We have $\I(B)\neq\vide$. Thus $\I(\cconv B)=\I(B)\neq\vide$
(Definition~\ref{defn:combined formalisms}). Therefore, there exists
$i\in\{1,\ldots,m\}$ such that $(\cconv B)_{i}\neq\svide$ (Definition~\ref{defn:combined formalisms}).
Since $\cconv B$ is closed under projection, we have for all $i\in\{1,\ldots,m\}$
the property $(\cconv B)_{i}\neq\svide$. Given that $\cconv B\subseteq B$
(Definition~\ref{def:cloture-par-proj}), that $B$ is basic and
that $\cconv B$ is not trivially inconsistent, we have $B=\cconv B$.
Thus, $B$ is \emph{$\cconv$}-consistent.
\end{proof}
However, although projection operators can remove inconsistencies
with respect to pairs of components, satisfiability ultimately depends
on the interpretation function. For example, in a loose integration,
one can have $\I_{i}(R_{i})\cap\I_{j}(R_{j})\neq\vide$ for each pair
$\{i,j\}$, while having $\bigcap_{i}\I_{i}(R_{i})=\vide$ (since
in general $A\cap B\neq\vide\et B\cap C\neq\vide\et A\cap C\neq\vide\not\implies A\cap B\cap C\neq\vide$).
The projections being correct, the relation $\cconv(R_{1},\dots,R_{n})$
will not necessarily be trivially inconsistent, and yet $\I\left(\left(R_{1},\dots,R_{n}\right)\right)=\vide$.
Therefore, in general, $\cconv$-consistency does not imply satisfiability,
even for basic relations. Nevertheless, for some formalisms, all the
$\cconv$-consistent basic relations are satisfiable. With these formalisms,
reasoning can be done in a purely algebraic way. In Section~\ref{sub:arconsistency_consistency},
we identify sufficient conditions for projections to decide the satisfiability
of relations. 

\subsection{Multi-Algebra Networks and Algebraic Closure\label{subsec:multi-reseau_cloture}}

In the context of sequential formalisms, descriptions are \emph{qualitative
constraint networks over multi-algebras}, i.e. constraint networks
whose relations belong to a multi-algebra and are thus $m$-tuples
of classic relations. For example, the network $N$ on the left of
the figure~\ref{fig:RdC_TPA} corresponds to the temporal sequence
of 3 networks over the point algebra described in Example~\ref{ex:TPC}
; the relation $N^{xy}$ thus corresponds to the sequence of relations
between $x$ and $y$. 

Constraint networks over multi-algebras are an alternative but equivalent
representation to the biconstraint networks of the framework of loose
integrations and to temporal sequences of constraint networks. In
fact, a network $N$ over a multi-algebra $\A$ can be seen equivalently
as a $m$-tuple of classical networks $N_{i}$ over the algebra $\A_{i}$,
as shown to the right in Figure~\ref{fig:RdC_TPA}, in which $N_{i}$
corresponds to the state of the network at the instant $i$. This
is formalized by the following definition.

\begin{figure}
\begin{centering}
\includegraphics[width=0.9\columnwidth]{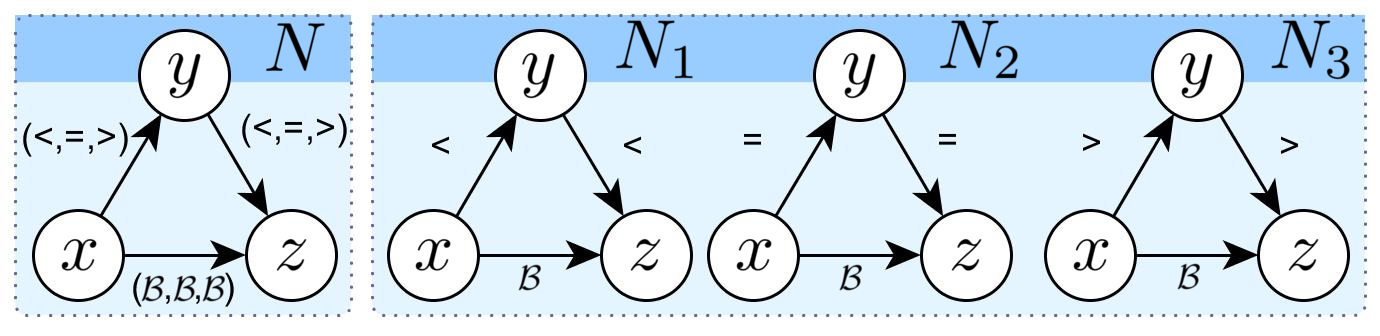}
\par\end{centering}
\caption{The network $N$ over $\protect\PA^{3}$ (left) and its three slices
(right)\label{fig:RdC_TPA}}
\end{figure}

\begin{defn}
\label{def:network slice} Let $N=(\E,\CdeN)$ be a network over a
multi-algebra $\A$. The \emph{$i$-th slice\index{slice} of $N$},
denoted by $N_{i}$ (or $N_{\A_{i}}$), is the network $(\E,\CdeN_{i})$
over $\A_{i}$, where $\CdeN_{i}=\liste{\contrainte x{R_{i}}y}{\contrainte xRy\in\CdeN}$.
\end{defn}

We directly adapt to networks over multi-algebras certain classical
notions, namely that of \emph{scenario} as well as those of \emph{solution}
and \emph{satisfiability}.
\begin{defn}
A \emph{scenario\index{scenario}} over a multi-algebra $\A$ is a
network over $\A$ satisfying $N^{xy}\in\Base$ for all distinct $x,y\in\E$.
\end{defn}

\begin{defn}
\label{def:coherence_multi-reseau}Let $\F=(\A,\U,\I)$ be a sequential
formalism and $N$ be a network over $\A$.
\begin{itemize}
\item A family $\{u_{x}\}_{x\in\mathtt{E}}$ is called \emph{solution\index{solution}}
of $N$ (for $\F$) if it satisfies $(u_{x},u_{y})\in\I(N^{xy})$,
for all distinct $x,y\in\E$. 
\item The network $N$ is said \emph{satisfiable\index{satisfiable}} (for
$\F$) if there exists a solution of $N$ (for $\F$) .
\item The network $N$ is said unsatisfiable\emph{\index{unsatisfiable}}
(for $\F$) if it is not satisfiable (for $\F$). 
\end{itemize}
\end{defn}

For example, a solution of the network of Figure~\ref{fig:RdC_TPA}
(for the formalism $\TPC$) is the evolution represented in Figure~\ref{fig:neighborhood}a. 

Note that Definition~\ref{defn:combined formalisms} of sequential
formalism induces the following elementary properties:
\begin{prop}
\label{prop:equivalence_coherence_reseau} Let $\F=(\A,\U,\I)$ be
a sequential formalism. Let $N$ and $\Cl{N'}$ be two networks over
$\A$. We have the following properties:
\begin{itemize}
\item The network $N$ is satisfiable if and only if there exists a satisfiable
scenario $S$ such that $S\subseteq N$.
\item If $N\subseteq N'$, then all solution of $N$ is a solution of $N'$.
\item If $N\subseteq N'$, then the satisfiability of $N$ implies the satisfiability
of $N'$.
\end{itemize}
\end{prop}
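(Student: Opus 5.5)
The three items follow directly from the semantics in Definition~\ref{defn:combined formalisms}. We use only two of its properties: $\I(R)=\bigcup_{B\atome R}\I(B)$, and its consequence Lemma~\ref{lem:ppt_relations}, monotonicity of $\I$ with respect to $\subseteq$. The intended order is: prove the second item (monotonicity of solutions), derive the third from it, and then prove the first item using the decomposition of each $\I(N^{xy})$ into basic relations.

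For the second item, let $N\subseteq N'$ and let $\{u_{x}\}_{x\in\E}$ be a solution of $N$. For distinct $x,y\in\E$ we have $N^{xy}\subseteq(N')^{xy}$. Lemma~\ref{lem:ppt_relations} then gives $(u_{x},u_{y})\in\I(N^{xy})\subseteq\I((N')^{xy})$. Hence $\{u_{x}\}$ is a solution of $N'$. The third item is immediate: a solution of $N$ is a solution of $N'$, so $N'$ is satisfiable.

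For the first item, the direction ``$\Leftarrow$'' is the third item applied to $S\subseteq N$. For ``$\Rightarrow$'', take a solution $\{u_{x}\}$ of $N$. For each pair of distinct $x,y$, we have $(u_{x},u_{y})\in\I(N^{xy})=\bigcup_{B\atome N^{xy}}\I(B)$, so we may choose a basic relation $B^{xy}\atome N^{xy}$ with $(u_{x},u_{y})\in\I(B^{xy})$. We define $S$ by $S^{xy}=B^{xy}$.

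The only delicate point is consistency with the normalization convention $S^{yx}=\sinv{S^{xy}}$. To handle it, we fix an arbitrary orientation of each pair, choose $B^{xy}$ for that orientation only, and set $S^{yx}=\sinv{B^{xy}}$. Three facts make this choice valid:
\begin{itemize}
\item By Definition~\ref{def:multi-operateur}, $\sinv{B^{xy}}$ is basic, since the converse of an atom in each component algebra is an atom.
\item $\sinv{B^{xy}}\subseteq\sinv{N^{xy}}=N^{yx}$, by Lemma~\ref{lem:ppt-algebre-non-asso-finie} applied componentwise.
\item $(u_{y},u_{x})\in\I(\sinv{B^{xy}})$, because $\I(\sinv R)=\overline{\I(R)}$.
\end{itemize}
Thus $S$ is a scenario with $S\subseteq N$, and $\{u_{x}\}$ is a solution of $S$, so $S$ is satisfiable. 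This orientation bookkeeping is the only step needing care; everything else is direct unfolding of the definitions.
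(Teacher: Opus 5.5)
Your proof is correct and follows the paper's argument. Monotonicity of $\I$ (Lemma~\ref{lem:ppt_relations}) gives the second and third items and the easy direction of the first, and the decomposition $\I(N^{xy})=\bigcup_{B\atome N^{xy}}\I(B)$ yields a satisfiable scenario refining $N$. The only difference is that you explicitly choose one basic relation per unordered pair to guarantee $S^{yx}=\sinv{S^{xy}}$, a normalization detail the paper's proof leaves implicit.
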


\begin{proof}
Let $S$ be a scenario and $N$ be a network over the same multi-algebra,
with $S\subseteq N$. Suppose that $S$ is satisfiable. Thus, for
all $z\in\E$, there exists $u_{z}\in\U$ such that for all distinct
$x,y\in\E$, $(u_{x},u_{y})\in\I(S^{xy})$. But, $\I(S^{xy})\subseteq\I(N^{xy})$,
since $S^{xy}\subseteq N^{xy}$ (Lemma~\ref{lem:ppt_relations}).
Therefore, $N$ is satisfiable. Conversely, suppose $N$ is satisfiable.
Therefore, there exists $\{u_{x}\}_{x\in\E}\subseteq\U$ such that
for all disctinct $x,y\in\E$ $(u_{x},u_{y})\in\I(N^{xy})$. But $\I(N^{xy})=\bigcup_{B\atome N^{xy}}\I(B)$,
thus there exists $B(x,y)\atome N^{xy}$ such that $(u_{x},u_{y})\in\I(B(x,y))$.
Therefore, the scenario $S$ defined by $S^{xy}=B(x,y)$ for all distinct
$x,y\in\E$, which is a scenario of $N$ ($S\subseteq N$), is indeed
satisfiable.

We now prove the second assertion of the proposition. We suppose that
$N\subseteq N'$. Let $\{u_{x}\}_{x\in\E}\subseteq\U$ be a solution
of $N$. We have, for all disctinct $x,y\in\E$, $(u_{x},u_{y})\in\I(N^{xy})$.
Since $N\subseteq N'$, for all disctinct $x,y\in\E$, $N^{xy}\subseteq(N')^{xy}$,
and thus $\I(N^{xy})\subseteq\I(N')^{xy}$ (Lemma~\ref{lem:ppt_relations}).
Therefore, for all distinct $x,y\in\E$, $(u_{x},u_{y})\in\I(N')^{xy}$.
Thus, $\{u_{x}\}_{x\in\E}\subseteq\U$ is a solution of $N'$. 

We prove the third assertion. We suppose that $N\subseteq N'$ and
that $N$ is satisfiable. Thus the network $N$ has a solution $\{u_{x}\}_{x\in\E}\subseteq\U$.
By the second assertion, this solution is a solution of $N'$. Therefore,
$N'$ is satisfiable.
\end{proof}
We now generalize the other notions of the classical framework of
qualitative formalisms.
\begin{defn}
\label{def:prop-networks} Let $N=(\E,\CdeN)$ be a network over a
multi-algebra. 
\begin{itemize}
\item $N$ is said \emph{trivially inconsistent\index{trivially inconsistent}}
if $\exists i\in\{1,\ldots,m\}\minisep\exists x,y\in\E\sep N_{i}^{xy}=\svide$.
\item $N$ is said\emph{ \index{algebraically closed}algebraically closed
if} :
\begin{itemize}
\item $N$ is \index{closed under composition}\emph{closed under composition},
which means $N^{xz}\subseteq N^{xy}\scomp N^{yz}$ for all distinct
$x,y,z\in\E$ ;
\item $N$ is \emph{closed under projection}\index{closed under projection},
which means $N^{xy}=\cconv N^{xy}$ for all distinct $x,y\in\E$.
\end{itemize}
\item $N$ is said\emph{ \index{algebraically consistent}}algebraically
consistent if it is algebraically closed and is not trivially inconsistent.
\item $N$ is said\emph{ $\scomp$-consistent}, if each $N_{i}$ is algebraically
consistent (closed under composition and not trivially inconsistent).
\end{itemize}
\end{defn}

\begin{defn}
\label{def:La-cl=0000F4ture-alg=0000E9brique}Let $N$ be a network
over a finite multi-algebra. 

The \emph{algebraic closure\index{algebraic closure} of $N$} is
the (algebraically closed) network obtained by applying the following
operations:
\[
\begin{array}{c}
N^{xz}\leftarrow\cconv N^{xz}\\
N^{xz}\leftarrow N^{xz}\sinter(N^{xy}\scomp N^{yz})
\end{array}
\]
for all distinct $x,y,z\in\E$, until a fixed point is reached.
\end{defn}

For a network over a multi-algebra, being \textquotedblleft algebraically
closed\textquotedblright{} consists in being closed both under composition
and under projection. It can be imposed by closing alternately each
relation $N^{xy}$ under projection and each slice $N_{i}$ under
composition until a fixed point is reached. This properly generalizes
the classical framework, since any relation of a mono-algebra is closed
under projection (by vacuity: there is no projection). Moreover, the
notion of algebraic closure for a multi-algebra generalizes to $m$
formalisms the bipath-consistency algorithm of loose integrations
of two formalisms~\cite{gerevini2002combining}. Since the operators
are correct (the reasoning is correct), the algebraic closure of a
network must be algebraically consistent for this network to be satisfiable. 
\begin{prop}
\label{prop:trivialement-incoherent_incoherent}Let $\F=(\A,\U,\I)$
be a sequential formalism, $N$ be a network over $\A$, and $\Cl N$
be the algebraic closure of $N$. We have the following properties:
\begin{itemize}
\item $\Cl N\subseteq N$.
\item All solution of $N$ is a solution of its algebraic closure $\Cl N$.
\item All solution of $\Cl N$ is a solution of $N$.
\item $N$ is satisfiable if and only if its algebraic closure $\Cl N$
is satisfiable.
\item If $N$ is trivially inconsistent, then $N$ is unsatisfiable.
\end{itemize}
\end{prop}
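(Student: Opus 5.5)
The plan is to argue by induction along the run of the algebraic closure algorithm of Definition~\ref{def:La-cl=0000F4ture-alg=0000E9brique}. We view it as a finite sequence of networks $N=N^{(0)},N^{(1)},\ldots,N^{(k)}=\Cl N$. Each $N^{(t+1)}$ differs from $N^{(t)}$ in a single relation $N^{xz}$, together with its converse $N^{zx}$ by the normalization convention. That relation is replaced either by $\cconv N^{xz}$ or by $N^{xz}\sinter(N^{xy}\scomp N^{yz})$.

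\textbf{Step 1: $N^{(t+1)}\subseteq N^{(t)}$.} For the intersection step this holds because $R\sinter R'\subseteq R$ componentwise. For the projection step, $\cconv R\subseteq R$ holds because each elementary substitution $R_j\leftarrow R_j\sinter\conv ijR_i$ only shrinks components. Transitivity of $\subseteq$ (Lemma~\ref{lem:ppt-algebre-non-asso-finie}) then gives $\Cl N\subseteq N$. The fixed point is reached in finitely many steps because the multi-algebra is finite and relations only decrease.

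\textbf{Step 2: every solution $\{u_x\}$ of $N^{(t)}$ is a solution of $N^{(t+1)}$.} Only the pair $(x,z)$ needs checking; its converse follows from $\I(\sinv R)=\inv{\I(R)}$. For a projection step we use $\I(\cconv R)=\I(R)$, as in Proposition~\ref{prop:implication-MA}.

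For a composition step we know $(u_x,u_y)\in\I(N^{xy})$, $(u_y,u_z)\in\I(N^{yz})$ and $(u_x,u_z)\in\I(N^{xz})$. The composition implication of Proposition~\ref{prop:implication-MA} needs the side condition $u_x\mathrel{\B}u_z$. This is the one delicate point, because of sub-partitions. It holds here since $N^{xz}\subseteq\B$, so Lemma~\ref{lem:ppt_relations} gives $(u_x,u_z)\in\I(N^{xz})\subseteq\I(\B)$. Hence $(u_x,u_z)\in\I(N^{xy}\scomp N^{yz})$. Combined with $(u_x,u_z)\in\I(N^{xz})$, the intersection property yields $(u_x,u_z)\in\I(N^{xz}\sinter(N^{xy}\scomp N^{yz}))$. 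Induction then shows that every solution of $N$ is a solution of $\Cl N$.

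\textbf{Step 3: the remaining items.} The converse inclusion of solutions follows from $\Cl N\subseteq N$ together with Proposition~\ref{prop:equivalence_coherence_reseau} (second item). The equivalence of satisfiability follows from the two solution inclusions.

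For the last item, suppose $N^{xy}_i=\svide$. Then the same argument as in Proposition~\ref{prop:basique-coherente_conv-coherent} applies: $\cconv N^{xy}$ is trivially inconsistent, so $\I(N^{xy})=\I(\cconv N^{xy})=\vide$. More directly, $\I(N^{xy})=\bigcup_{B\atome N^{xy}}\I(B)$ is an empty union, since no basic $B$ can lie below a tuple with an empty component. Therefore no pair $(u_x,u_y)$ exists and $N$ is unsatisfiable.

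The main obstacle is the composition step in Step 2, specifically ensuring the hypothesis $u_x\mathrel{\B}u_z$. It is resolved by noting that a solution already satisfies $N^{xz}$, which refines $\B$.
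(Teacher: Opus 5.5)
Your proof is correct. For the first four items it follows the paper's argument: relations only shrink, each closure step is sound by Proposition~\ref{prop:implication-MA}, and then $\Cl N\subseteq N$ together with Proposition~\ref{prop:equivalence_coherence_reseau} finishes. You are more careful at one point. The paper applies the composition rule without discharging its hypothesis $u_x\mathrel{\B}u_z$, which matters for sub-partition formalisms where $\I(\B)\neq\U\times\U$. Your observation that a solution already satisfies $N^{xz}\subseteq\B$, and hence lies in $\I(\B)$ by Lemma~\ref{lem:ppt_relations}, is exactly what closes this.

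For the last item your route differs. The paper computes the closure, shows the offending pair becomes $(\svide,\ldots,\svide)$, and concludes via the fourth item. You instead argue locally that $\I(N^{xy})=\I(\cconv N^{xy})=\vide$ by Proposition~\ref{prop:basique-coherente_conv-coherent}. This needs neither the closure nor the second to fourth items, so it is shorter and self-contained.

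Your alternative ``empty union'' phrasing tacitly uses that atoms are non-zero. Definition~\ref{def:atome} as literally written does not require this: $\svide$ satisfies its condition. The $\cconv$-based version is therefore the more robust of your two arguments.
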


\begin{proof}
We prove the five points of the proposition.
\begin{enumerate}
\item We have $\Cl N\subseteq N$, since $\cconv R\subseteq R$ and $R\sinter(R'\scomp R'')\subseteq R$
for all $R,R',R''\in\A$ (Definitions~\ref{def:La-cl=0000F4ture-alg=0000E9brique}~and~\ref{def:cloture-par-proj}).
\item Let $\{u_{x}\}_{x\in\E}\subseteq\U$ be a solution of $N$. We have
$(u_{x},u_{y})\in\I(N^{xy})$ for all distinct $x,y\in\E$. But $(u_{x},u_{y})\in\I(N^{xy})$
for all distinct $x,y\in\E$ implies that $(u_{x},u_{y})\in\I\left(\Cl N\right)^{xy}$
for all distinct $x,y\in\E$, since $\contrainte xR{y\et\contrainte y{R'}z\et}\contrainte x{R''}z\implies\contrainte x{(R\scomp R')\sinter R''}z$
and $\contrainte xRy\implies\contrainte x{\cconv R}y$, for all $x,y,z\in\U$
and all $R,R',R''\in\A$ (Proposition~\ref{prop:implication-MA}).
Thus, $\{u_{x}\}_{x\in\E}$ is a solution of $\Cl N$. 
\item Let $\{u_{x}\}_{x\in\E}\subseteq\U$ be a solution of $\Cl N$. Since
$\Cl N\subseteq N$, $\{u_{x}\}_{x\in\E}$ is a solution of $N$ (Proposition~\ref{prop:equivalence_coherence_reseau}).
\item The fourth assertion follows directly from the two previous ones.
\item Let $N$ be a trivially inconsistent network. There therefore exists
$i\in\{1,\ldots,m\}$ and $x,y\in\E$ such that $N_{i}^{xy}=\svide$.
Consider its algebraic closure $\Cl N$. This verifies the property
$\Cl N^{xy}=\ctrois{\svide}{\ldots}{\svide}$, since 
\[
\Cl N_{j}^{xy}=N{}_{j}^{xy}\cap\conv ijN{}_{i}^{xy}=N{}_{j}^{xy}\cap\conv ij\vide=N{}_{j}^{xy}\cap\vide=\vide
\]
for all $j\in\{1,\ldots,m\}$ distinct from $i$ (Definitions~\ref{def:cloture-par-proj}~and~\ref{def:projection}).
Thus, given that $\F$ is a sequential formalism (Definition~\ref{defn:combined formalisms}),
we have $\I(N^{xy})=\vide$ for all distinct $x,y\in\E$. $\Cl N$
is thus unsatisfiable (Definition~\ref{def:coherence_multi-reseau}).
By the previous assertion of this proposition, $N$ is therefore unsatisfiable.
\end{enumerate}
\end{proof}
We can therefore use algebraic closure to filter out inconsistent
networks, in addition to inferring new knowledge. If to algebraically
close a network $N$ gives a trivially inconsistent network, $N$
is necessarily unsatisfiable.
\begin{example}
\label{ex:a-consistency} The network $N$ over $\STC$ of Figure~\ref{fig:RdC_IL}a
is algebraically consistent. However, if we remove the relation \guillemotleft{}
$=$ \guillemotright{} from $N_{\PA}^{xy}$, the network is no longer
algebraically closed because it is no longer closed under projection,
since $\{\ntppi,\eq\}\nsubseteq\conv{\PA}{\RCAH}\{>\}$ (Table~\ref{tab:interd=0000E9pendances-de_QS_RCC}).
Furthermore, even if $N$ is algebraically consistent and although
$N_{\PA}$ and $N_{\RCAH}$ are satisfiable, $N$ is actually unsatisfiable.
Indeed, on the one hand \guillemotleft{} $=$ \guillemotright de $N_{\PA}^{xy}$
does not belong to any satisfiable scenario of $N_{\PA}$ (see Example~\ref{fig:contre_ex_van_beek}
or \cite{van1989approximation}). On the other hand, $\ntppi$ of
$N^{xy}$ is also not feasible, because it does not belong to any
satisfiable scenario of $N_{\RCAH}$. The only remaining basic relation,
$\contrainte x{\cdeux{\eq}>}y$, is unsatisfiable, since its closure
under projection is the empty relation $(\vide,\vide)$.

However, by adding $\dc$ in $N_{\RCAH}^{yz}$, the network remains
algebraically consistent, but becomes satisfiable. Figure~\ref{fig:RdC_IL}b
shows one of its satisfiable scenarios.
\end{example}

\begin{figure}
\begin{centering}
\includegraphics[width=1\columnwidth]{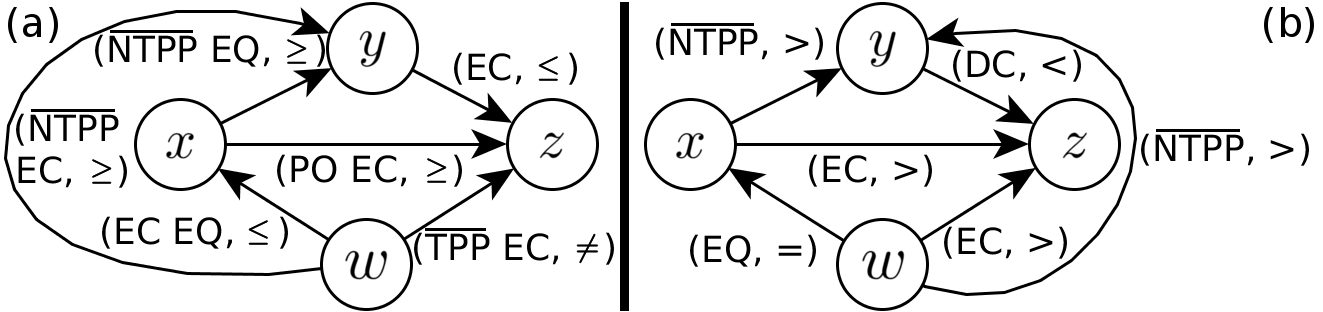}
\par\end{centering}
\caption{(a) a nework over $\protect\STC$ which is algebraically consistent,
but nonetheless unsatisfiable ; (b) a satisfiable scenario over $\protect\STC$\label{fig:RdC_IL}}
\end{figure}

Note that since the reasoning is correct, any satisfiable scenario
is algebraically closed:
\begin{prop}
\label{prop:sc=0000E9nario-coh=0000E9rent_clos}Let $\F=(\A,\U,\I)$
be a sequential formalism and $S$ be a scenario over $\A$. 

If $S$ is satisfiable then $S$ is algebraically closed.
\end{prop}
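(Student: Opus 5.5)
We have to prove both halves of algebraic closure for $S$: closure under projection and closure under composition. Throughout, fix a solution $\{u_{x}\}_{x\in\E}$ of $S$, so that $(u_{x},u_{y})\in\I(S^{xy})$ for all distinct $x,y\in\E$. Each $S^{xy}$ is basic, and each $\I(S^{xy})$ is non-empty. Hence every $S^{xy}$ is a satisfiable basic relation.

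For closure under projection, invoke the third assertion of Proposition~\ref{prop:basique-coherente_conv-coherent}: every satisfiable basic relation is $\cconv$-consistent. This gives $S^{xy}=\cconv S^{xy}$ for all distinct $x,y\in\E$.

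For closure under composition, fix distinct $x,y,z\in\E$. We have $u_{x}\mathrel{S^{xy}}u_{y}$ and $u_{y}\mathrel{S^{yz}}u_{z}$. To apply the composition implication of Proposition~\ref{prop:implication-MA}, we also need $u_{x}\mathrel{(\B_{1},\ldots,\B_{m})}u_{z}$. This holds because $S^{xz}\subseteq\B$, so Lemma~\ref{lem:ppt_relations} gives $\I(S^{xz})\subseteq\I(\B)$. The proposition then yields $(u_{x},u_{z})\in\I(S^{xy}\scomp S^{yz})$. We also have $(u_{x},u_{z})\in\I(S^{xz})$, so $\I\bigl(S^{xz}\sinter(S^{xy}\scomp S^{yz})\bigr)\neq\vide$ by the intersection property of sequential formalisms.

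The remaining step, which I expect to be the main subtlety, is to turn this non-emptiness into the inclusion $S^{xz}\subseteq S^{xy}\scomp S^{yz}$. Write $T=S^{xz}\sinter(S^{xy}\scomp S^{yz})$. Suppose some component $T_{i}$ were $\svide$. Then, exactly as in the proof of the first assertion of Proposition~\ref{prop:basique-coherente_conv-coherent}, $\cconv T$ is trivially inconsistent. Indeed, $\conv ij\svide=\svide$ by Lemma~\ref{lem:croissance_proj}, so every component of $\cconv T$ becomes empty. That proof then gives $\I(T)=\I(\cconv T)=\vide$, which is a contradiction. Hence every $T_{i}\neq\svide$.

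Since $S_{i}^{xz}$ is an atom of $\A_{i}$, Definition~\ref{def:atome} says that $S_{i}^{xz}\sinter(S_{i}^{xy}\scomp S_{i}^{yz})$ is either $S_{i}^{xz}$ or $\svide$. It is non-empty, so it equals $S_{i}^{xz}$, which means $S_{i}^{xz}\subseteq S_{i}^{xy}\scomp S_{i}^{yz}$ for every $i$. Componentwise, this is $S^{xz}\subseteq S^{xy}\scomp S^{yz}$, so $S$ is closed under composition. Together with closure under projection, $S$ is algebraically closed.
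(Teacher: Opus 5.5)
Your proof is correct, but it takes a different route from the paper's. The paper argues through the closure operator. Since $S$ is satisfiable, its algebraic closure $\Cl{S}$ is satisfiable by Proposition~\ref{prop:trivialement-incoherent_incoherent}, so it is not trivially inconsistent. Because $\Cl{S}\subseteq S$ and every $S^{xy}$ is basic, atomicity forces $\Cl{S}=S$, and $S$ inherits algebraic closedness from $\Cl{S}$. You instead check the two defining conditions directly and locally. For projection you use the third assertion of Proposition~\ref{prop:basique-coherente_conv-coherent} on each $S^{xy}$. For composition you push the solution pair $(u_{x},u_{z})$ into $\I\bigl(S^{xz}\sinter(S^{xy}\scomp S^{yz})\bigr)$, deduce that no component of this intersection is empty, and conclude by atomicity of each $S^{xz}_{i}$. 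The paper's version is shorter: both conditions come for free because $\Cl{S}$ is a fixed point, and non-emptiness of components is delegated to the network-level fact that trivially inconsistent networks are unsatisfiable. Yours never invokes the closure procedure, its fixed point, or solution preservation along the iteration. It also makes explicit which semantic axioms are used, in particular the side condition $u_{x}\mathrel{(\B_{1},\ldots,\B_{m})}u_{z}$ required for soundness of composition, which the paper uses only implicitly inside Proposition~\ref{prop:trivialement-incoherent_incoherent}. Your non-emptiness argument for $T$ is the relation-level analogue of the paper's step for $\Cl{S}$.
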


\begin{proof}
Let $S$ be a satisfiable scenario. Its algebraic closure $\Cl S$
is thus satisfiable (Proposition~\ref{prop:trivialement-incoherent_incoherent}).
Therefore, $\forall i\forall x,y\sep\Cl S_{i}\neq\svide$ (Proposition~\ref{prop:trivialement-incoherent_incoherent}).
Given that $\Cl S\subseteq S$ (Definition~\ref{def:prop-networks}),
that $S$ is basic, and that $\Cl S$ is not trivially inconsistent,
we have $S=\Cl S$. Thus, $S$ is algebraically closed.
\end{proof}
If we have the converse of this proposition, that is, if any algebraically
closed scenario is satisfiable, we can algebraically decide the satisfiability
of a network. It suffices for this to look for an algebraically closed
scenario among its scenarios (Proposition~\ref{prop:equivalence_coherence_reseau}).
Any sequential formalism satisfying this property is said to be\emph{
complete}\index{complete}. Under this assumption, we obtain the following
complexity result:
\begin{prop}
\label{prop:si_clos_coherent_NP}Let $\F=(\A,\U,\I)$ be a sequential
formalism whose algebraically closed scenarios are consistent.

The satisfiability decision of networks over $\A$ (for $\F$) is
in $\NP$.
\end{prop}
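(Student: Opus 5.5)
The plan is the standard guess-and-check argument, with the certificate being a scenario of the input network. Let $N=(\E,\CdeN)$ be a network over the finite multi-algebra $\A$. By Proposition~\ref{prop:equivalence_coherence_reseau}, $N$ is satisfiable if and only if there is a satisfiable scenario $S\subseteq N$. By hypothesis, a scenario is satisfiable if and only if it is algebraically closed. The converse direction is Proposition~\ref{prop:sc=0000E9nario-coh=0000E9rent_clos}; note also that a satisfiable scenario is not trivially inconsistent, since its entries are basic and hence nonempty. Hence
\[
N \text{ satisfiable} \iff \exists S\subseteq N \text{ scenario, algebraically closed}.
\]
The certificate is such an $S$, given by choosing, for each pair $\{x,y\}$ of distinct variables, a basic relation $S^{xy}\in\Base$ with $S^{xy}\atome N^{xy}$ and setting $S^{yx}=\sinv{S^{xy}}$.

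The size of the certificate is $O(|\E|^{2})$ entries. Each entry is an $m$-tuple of atoms from fixed finite algebras, and $\A$ is fixed (not part of the input), so each entry has constant size. The certificate is therefore polynomial in the size of $N$.

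The verifier runs in polynomial time and performs three checks:
\begin{enumerate}
\item It checks $S\subseteq N$ componentwise, which takes $O(|\E|^{2})$ table lookups.
\item It checks closure under composition, $S^{xz}\subseteq S^{xy}\scomp S^{yz}$, for all triples of distinct variables. Composition is computed componentwise (Definition~\ref{def:multi-operateur}) via the finite composition tables of the $\A_i$, so this takes $O(|\E|^{3})$ constant-time operations.
\item It checks closure under projection, $S^{xy}_j\subseteq\conv ij S^{xy}_i$, for all pairs of variables and all $i\neq j$. This takes $O(|\E|^{2}m^{2})$ operations, using the finite projection tables.
\end{enumerate}
Checking this inclusion for all $i\neq j$ is equivalent to $S^{xy}=\cconv S^{xy}$. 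Indeed, if every inclusion already holds, the iterative replacement in Definition~\ref{def:cloture-par-proj} changes nothing; conversely, at the fixed point every inclusion holds.

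The only subtle point is where the complexity is measured. The argument relies on $\A$ being fixed and finite (Definition~\ref{defn:combined formalisms} requires a finite multi-algebra), so that all operations on relations are constant time. Everything else follows directly from the equivalence above. I expect no real obstacle, only the bookkeeping of the equivalence between algebraic closedness and the local checks.
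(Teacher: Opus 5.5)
Your proof is correct and follows the paper's own argument. You combine the hypothesis with Propositions~\ref{prop:equivalence_coherence_reseau} and~\ref{prop:sc=0000E9nario-coh=0000E9rent_clos} to show that $N$ is satisfiable iff it is refined by an algebraically closed scenario, and you use that scenario as a polynomial-size certificate checkable in polynomial time; beyond the paper, you spell out the verifier (the $O(|\E|^{3})$ composition check, and the equivalence between $S^{xy}=\cconv S^{xy}$ and the local inclusions $S^{xy}_j\subseteq\conv ij S^{xy}_i$), which the paper leaves implicit.
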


\begin{proof}
By Propositions~\ref{prop:sc=0000E9nario-coh=0000E9rent_clos}~and~\ref{prop:equivalence_coherence_reseau},
a network $N$ is satisfiable if and only if it is refined by an algebraically
closed scenario $S$. This scenario constitutes a \emph{certificate}
of the satisfiability of this network. Indeed, on the one hand, checking
that a scenario $S$ of the multi-algebra is a scenario of $N$ (i.e.
$S\subseteq N$) and that it is algebraically closed can be done in
polynomial time. On the other hand, any scenario of the multi-algebra
is of polynomial size with respect to the network $N$. The decision
of satisfiability is therefore in $\NP$.
\end{proof}
To search for an algebraically closed scenario within a network, in
order to decide if it is satisfiable, branch and bound techniques
can be used. For example, we can adapt the techniques of the classic
framework~\cite{Ladkin1992,nebel1997solving} 

\section{Tractability of the Satisfiability Decision in the Multi-algebras
Context\label{sec:Tractability}}

In this section, we study the \emph{tractability} of the satisfiability
decision problem of networks over multi-algebras. Since the problem
is $\NP$-complete for many formalisms, we proceed as in the classical
framework: we focus on subsets of multi-algebras, and in particular
on \emph{subclasses}. We first study satisfiability of\emph{ $\cconv$-consistent}
relations, which is essential for the algebraic closure to decide
satisfiability of networks. We then present two theorems providing
conditions on a subclass that are sufficient to guarantee satisfiability
of its algebraically consistent networks. Subclasses verifying these
conditions are, in particular, tractable. More precisely, we identify,
on the one hand, conditions for a combination of tractable classical
subclasses to be tractable. On the other hand, we identify conditions
for inheriting the tractability from a smaller combined subclass,
through an adaptation of the classical technique of reduction by refinement.
These two results are complementary and make it possible to prove
the tractability of large subclasses.

\subsection{Algebraically Tractable Subclasses\label{partial_projection}}

We start by introducing types of subsets of multi-algebras, adapted
from the classical case, namely \emph{subclasses} and in particular
\emph{basic subclasses}. We also define two other concepts of particular
subsets: \emph{$\icclo$ subsets} which generalize subclasses when
they do not contain the universal relation $\B$ and the abstract
equality $\mathrm{e}$ and the \emph{atomizable subsets} whose satisfiable
relations are refinable by a satisfiable basic relation from that
subset, which generalize basic subsets.
\begin{defn}
\label{def:subclass} A \emph{subclass\index{subclass}} of a multi-algebra
$\A$ is a set of relations $\S\subseteq\A$ which is closed under
composition, intersection, and inversion.

A subset $\S$ of a multi-algebra $\A$ is a basic subset\emph{\index{basic subset}}
if $\S$ contains all the basic relations (that is $\Base\subseteq\S$).

A subset $\S$of a multi-algebra $\A$ is $\icclo$ if for all $R,R',R''\in\S$,
we have $\left(R\scomp R'\right)\cap R''\in\S$.

A subset $\S$of a multi-algebra $\A$ is \emph{atomizable} if for
all satisfiable $R\in\S$, there exists a satisfiable basic relation
$B\in\S$ such that $B\subseteq R$.
\end{defn}

For example, $\HH\times\PA$ -- with $\HH$ one of the maximal tractable
basic subclass of $\RCCH$~\cite{renz1999maximal} -- is a basic
subclass of the multi-algebra $\STA$ of $\STC$ (whose Cartesian
product is $\RCAH\times\PA$). Let us recall that basic subclasses
are particularly interesting subclasses since all the scenarios over
their multi-algebras are scenarios over these subclasses.

The following notion of \emph{slice} of a subset of a multi-algebra
is in a way an inverse operator of the Cartesian product. It will
allow us to inherit certain tractability results from the classical
framework to that of multi-algebras.
\begin{defn}
\label{def:subclass slice} The \emph{$i$-th slice\index{slice}}
of a subset $\S$ of a multi-algebra $\A$, denoted by $\S_{i}$ (or
$\S_{\A_{i}}$), is the subset of $\A_{i}$ defined by $\S_{i}=\liste{R_{i}}{R\in\S}$.
\end{defn}

Note that $\S$ is always a subset of $\S_{1}\times\dots\times\S_{m}$,
the Cartesian product of its slices.

Recall also that, as in the classical framework, we can use algebraic
closure to detect inconsistent networks. It provides a satisfiability
decision procedure that is polynomial and correct (see Section~\ref{subsec:multi-reseau_cloture}),
but incomplete. Our objective in this section is to identify subclasses
for which the procedure is complete.
\begin{defn}
\label{def:algetract} A subset $\S$ of a multi-algebra of a sequential
formalism is said to be \emph{algebraically tractable\index{algebraically tractable}}
when, for any network $N$ over $\S$, $N$ is satisfiable if and
only if the algebraic closure of $N$ is not trivially inconsistent.
\end{defn}

For a subset to be algebraically tractable, a fundamental necessary
condition is that its algebraically closed scenarios are satisfiable;
it is in a way the basic case, which depends on the interpretation
function of the sequential formalism.

One might think that if all algebraically consistent networks over
a subclass $\S$ are satisfiable, then $\S$ is algebraically tractable.
This property is yet not sufficient because, contrary to the classical
framework, the algebraic closure of a network $N$ over a subclass
$\S$ is not necessarily over $\S$. Thus, the property does not imply
anything on the satisfiability of $N$. This property is however sufficient
in the cases where the projection closure of any relation of $\S$
remains in $\S$. More formally:
\begin{defn}
\label{def:sous-classe_clos-projection}A subset $\S$ of a finite
multi-algebra is said \emph{$\cconv$-closed} if $\forall R\in\S\sep\cconv R\in\S$.
\index{cconv -closed@$\cconv$-closed}
\end{defn}

\begin{prop}
\label{prop:alge_tractable} Let $\F=(\A,\U,\I)$ be a sequential
formalism.

A $\cconv$-closed $\icclo$ subset of $\A$ over which algebraically
consistent networks are satisfiable is algebraically tractable.
\end{prop}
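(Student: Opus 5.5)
The plan is to show that, for any network $N$ over $\S$, the algebraic closure $\Cl N$ is again a network over $\S$. Once this is established, the statement follows almost directly from Proposition~\ref{prop:trivialement-incoherent_incoherent}.

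Fix a network $N$ over a $\cconv$-closed $\icclo$ subset $\S$. I will prove, by induction on the number of elementary operations performed by the algebraic closure (Definition~\ref{def:La-cl=0000F4ture-alg=0000E9brique}), that every intermediate network has all its relations in $\S$. The base case holds by hypothesis.

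There are two kinds of steps. A projection step $N^{xz}\leftarrow\cconv N^{xz}$ stays in $\S$ because $\S$ is $\cconv$-closed. A composition step $N^{xz}\leftarrow N^{xz}\sinter(N^{xy}\scomp N^{yz})$ stays in $\S$ by the $\icclo$ property applied to $R=N^{xy}$, $R'=N^{yz}$ and $R''=N^{xz}$.

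One subtlety needs care. The network also implicitly carries the converse constraints $N^{zx}=\sinv{N^{xz}}$, and some pairs may have no explicit constraint, which means they carry the universal relation $\B$. An $\icclo$ subset is not assumed to be closed under converse, nor to contain $\B$. I expect this bookkeeping to be the main obstacle. I would handle it by reading ``network over $\S$'' as the paper does: the relations actually labelling the pairs of the normalized network lie in $\S$, and updates are applied to those labels. If a converse is needed, I would note that the update of $N^{zx}$ is the converse of the symmetric update of $N^{xz}$, so only labels in $\S$ are ever produced. Since the closure terminates at a fixed point, $\Cl N$ is a network over $\S$.

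With this in hand, I prove the equivalence. Suppose $N$ is satisfiable. Then by Proposition~\ref{prop:trivialement-incoherent_incoherent}, $\Cl N$ is satisfiable, hence not trivially inconsistent (the last item of that proposition, by contraposition). Conversely, suppose $\Cl N$ is not trivially inconsistent. It is algebraically closed by construction, so it is algebraically consistent. It is also a network over $\S$, so by hypothesis it is satisfiable, and Proposition~\ref{prop:trivialement-incoherent_incoherent} then gives that $N$ is satisfiable. Hence $\S$ is algebraically tractable in the sense of Definition~\ref{def:algetract}.
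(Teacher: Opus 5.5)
Your argument is the paper's own: it too notes that $\Cl N$ remains a network over $\S$ because $\S$ is $\cconv$-closed and $\icclo$. It then splits on whether $\Cl N$ is trivially inconsistent and concludes with Proposition~\ref{prop:trivialement-incoherent_incoherent}.

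The paper never mentions the converse/implicit-$\B$ bookkeeping you raise. Your symmetric-update remark settles it if ``over $\S$'' requires every $N^{xy}$ to lie in $\S$ (both orientations, implicit labels included), because the converse of $N^{xz}\sinter(N^{xy}\scomp N^{yz})$ is $N^{zx}\sinter(N^{zy}\scomp N^{yx})$. Under the weaker reading you actually state (only stored labels in $\S$), it does not cover a triple whose two stored constraints at $y$ are both oriented towards $y$ (or both away from it), nor unconstrained pairs.
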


\begin{proof}
Let $\S$ be a $\cconv$-closed $\icclo$ subset over which algebraically
consistent networks are satisfiable. Let $N$ be a network over $\S$
and $\Cl N$ be the algebraic closure of $N$. $\Cl N$ is over $\S$
since $\S$ is a $\cconv$-closed $\icclo$ subset. We have two possibilities:
\begin{itemize}
\item Either $\Cl N$ is trivially inconsistent. In this case, $\Cl N$
is unsatisfiable (Proposition~\ref{prop:trivialement-incoherent_incoherent}).
Therefore, $N$ is unsatisfiable (Proposition~\ref{prop:trivialement-incoherent_incoherent}). 
\item or $\Cl N$ is not trivially inconsistent. $\Cl N$ is therefore algebraically
consistent and thus satisfiable. Therefore, $N$ is satisfiable (Proposition~\ref{prop:trivialement-incoherent_incoherent}). 
\end{itemize}
Thus, for all $N$ over $\S$, $N$ is satisfiable if and only if
the algebraic closure of $N$ is not trivially inconsistent. Therefore,
$\S$ is algebraically tractable.
\end{proof}
The following lemma gives examples of $\cconv$-closed subclasses
based on classic subclasses (see Table~\emph{\ref{tab:def_classic_subclass}}).
\begin{lem}
\label{lem:STC-ss-classe-conv-close}The following subclasses of $\STC$
: 
\begin{itemize}
\item $\RCAHs\times\PAs$,
\item $\HH\times\PA$,
\item $\QH\times\PA$, and
\item $\CH\times\PA$
\end{itemize}
are $\cconv$-closed.

In particular, let $\S\times\S'$ be one of these subclasses, we have
the following properties:
\end{lem}

\begin{itemize}
\item for all $r\in\S$, $\conv{\RCAH}{\PA}r\in\S'$ ;
\item for all $r'\in\S'$, $\conv{\PA}{\RCAH}r'\in\S$.
\end{itemize}
\begin{proof}
We will show that these subclasses of the form $\text{\ensuremath{\S\times\S}'}$
are $\cconv$-closed, by showing the following two properties:$\forall r\in\S\sep\conv{\RCAH}{\PA}r\in\S'$
and $\forall r'\in\S'\sep\conv{\PA}{\RCAH}r'\in\S$. Indeed, by these
properties, to close a relation $R\in\S\times\S'$ under projection
gives a relation $(r,r')$ such that $r\in\S$ and $r'\in\S'$ and
thus $\cconv R=(r,r')\in\S\times\S'$. Therefore, $\S\times\S'$ is
indeed $\cconv$-closed.

We start by verifying that for all $r\in\PA$, we have $\conv{\PA}{\RCAH}r\in\RCAHs$
($\RCAHs$ is described in Table~\ref{tab:def_classic_subclass}):
\begin{itemize}
\item $\cconv\left(<\right)=\dc\cup\ec\cup\po\cup\tpp\cup\ntpp\in\RCAHs$
;
\item $\cconv\left(=\right)=\dc\cup\ec\cup\po\cup\eq\in\RCAHs$ ;
\item $\cconv\left(<\cup=\right)=\dc\cup\ec\cup\po\cup\tpp\cup\ntpp\cup\eq\in\RCAHs$
;
\item $\cconv\left(<\cup>\right)=\dc\cup\ec\cup\po\cup\tpp\cup\ntpp\cup\tppi\cup\ntppi\in\RCAHs$
;
\item $\cconv\B_{\PA}=\B_{\RCAH}\in\RCAHs$ ;
\item the subclasses are closed under inversion, by the definition of projections
($\cconv\sinv b=\sinv{\cconv b}$ ), we have the same result for the
converse relations.
\end{itemize}
Thus, we have the satisfaction of $\conv{\PA}{\RCAH}r\in\RCAHs$ for
all $r\in\PA$, and therefore in particular for all $r\in\PAs$. 

To show that for all $r\in\RCAHs$ we have $\conv{\RCAH}{\PA}r\in\PAs$,
it suffices to show that for any relation $r\in\RCAHs$, we have $\conv{\RCAH}{\PA}r\notin\deux{<\cup=}{=\cup>}$.
There are only three relations of $\RCCH$ satisfying $\conv{\RCAH}{\PA}r=\left(<\cup=\right)$.
Those are $\tpp\cup\eq$, $\ntpp\cup\eq$, and $\tpp\cup\ntpp\cup\eq$
(since $\mathord{>}\in\conv{\RCAH}{\PA}r$ if and only if $r\cap\{\dc,\ec,\po,\tppi,\ntppi\}\neq\vide$)
; and they are not in $\RCAHs$. By symmetry, there are only three
relations of $\RCCH$ satisfying $\conv{\RCAH}{\PA}r=\left(=\cup>\right)$,
namely $\tppi\cup\eq$, $\ntppi\cup\eq$, and $\tppi\cup\ntppi\cup\eq$
; and they are not in $\RCAHs$. 

Thus, for all $r\in\PAs$, we have $\conv{\PA}{\RCAH}r\in\RCAHs$
and for all $r\in\RCAHs$, we have $\conv{\RCAH}{\PA}r\in\PAs$. Therefore
$\RCAHs\times\PAs$ is $\cconv$-closed.

We are now interested in the other subclasses:
\begin{itemize}
\item On the one hand, from Table~\ref{tab:def_classic_subclass}, for
all $r\in\PA$, we have $\conv{\PA}{\RCAH}r\in\HH$, $\conv{\PA}{\RCAH}r\in\QH$,
and $\conv{\PA}{\RCAH}r\in\CH$ (in particular since for all $r\in\PA$,
we have $\po\in\conv{\PA}{\RCAH}r$, $\ntpp\in\conv{\PA}{\RCAH}r\implies\tpp\in\conv{\PA}{\RCAH}r$
, and $\ntppi\in\conv{\PA}{\RCAH}r\implies\tppi\in\conv{\PA}{\RCAH}r$).
\item On the other hand, for all $r\in\RCA$, and therefore, in particular,
for all $r$ in $\HH$, $\QH$, and $\CH$, we have $\conv{\RCAH}{\PA}r\in\PA$.
\end{itemize}
Thus, the subclasses $\HH\times\PA$, $\QH\times\PA$ et $\CH\times\PA$
are $\cconv$-closed.
\end{proof}
The question that now arises is the following: given a $\cconv$-closed
subclass, under what conditions is it guaranteed that the algebraically
consistent networks of this subclass are satisfiable? A first, obvious
condition is the satisfiability of $\cconv$-consistent relations.

\subsection{Satisfiability of $\protect\cconv$-Consistent Relations\label{sub:arconsistency_consistency}}

We are therefore interested, in this section, in conditions which
ensure that the $\cconv$-consistent relations of a multi-algebra
are consistent. For that, it is necessary that the basic relations
closed under projection are satisfiable. However, this condition is
not sufficient. Intuitively, what can compromise satisfiability is
the fact that there are too many interdependencies between the formalisms
of the combination. Conversely, if the interdependencies are weak
enough, satisfiability can be guaranteed. For example, the interdependencies
of $\TPC$ are weak, in the sense that they impose restrictions only
between neighboring instants -- the past does not directly constrain
the distant future. To formally define this idea, we start by introducing
the inverse operator of a projection, which describes the interdependencies
of this projection in the opposite direction.
\begin{defn}
\label{def:projection-reciproque} Let $\conv ij$ be a projection
operator from a finite non-associative algebra $\A_{i}$ to another
finite non-associative algebra $\A_{j}$. 

The inverse projection operator\index{inverse projection operator}
$\convinv ij$ of the operator $\conv ij$ is the projection operator
from $\A_{j}$ to $\A_{i}$ defined by 
\[
\forall b\in\Base_{i}\minisep\forall b'\in\Base_{j}\sep b\subseteq\convinv ijb'\iff b'\subseteq\conv ijb\text{.}
\]
For example, for $\SPC^{m}$ (Example~\ref{ex:GPC_multialg}), the
inverse projection of $\conv ij$ is simply the projection operator
$\conv ji$. Note, however, that the property $\conv ijb=\convinv jib$
is not satisfied by the projections of all the multi-algebras (we
will see in Section~\ref{sub:pblm_cloture} the interest of such
multi-algebras).
\end{defn}

We are now going to be able to formalize the fact that interdependencies
are not \textquotedblleft too strong\textquotedblright{} within a
multi-algebra, by imposing that they have a tree structure.
\begin{defn}
\label{def:anti-arborescence}An\emph{ anti-tree structure\index{anti-tree structure}}
$\arbo$ on a set $V$ is an oriented graph $(V,E)$ such that there
exists a particular node $r\in V$, called \emph{root}\index{root},
satisfying that for any node $v\in V$, distinct from $r$, there
is only one path from $v$ to $r$, i.e. there is a unique integer
$K\geq2$ and a unique family $\{v_{k}\}_{k=1}^{K}\subseteq V$ such
that $v_{1}=v$, $v_{K}=r$ and that for all $k\in\{1,\ldots,K-1\}$,
$(v_{k},v_{k+1})\in E$. 

We denote by $v\inArbo v'$ when there is an edge from the node $v$
to the node $v'$ in $\mathtt{A}$, i.e. when $(v,v')\in E$. If there
is no ambiguity, we denote it more simply $v\inA v'$.
\end{defn}

Thus, the idea of the following definition is that by removing redundant
interdependencies, one obtains a tree structure, which implies that
the interdependencies are local and do not conflict. All of the interdependencies
are then expressed by the projections of the tree structure.
\begin{defn}
\label{defn:arbo} Let $\A$ be a finite multi-algebra whose Cartesian
product is $\A_{1},\ldots,\A_{m}$.

A \emph{plenary\index{plenary}} anti-tree structure of $\A$ is an
anti-tree structure $\arbo$ on $\{1,\dots,m\}$ such that for all
distinct $i,j\in\{1,\ldots,m\}$, with $i=k_{0}\inA\dots\inA k_{s}\inAinv\dots\inAinv k_{s+t+1}=j$
the shortest oriented chain between $i$ and $j$ in $\mathtt{A}$,
the following property holds:
\[
\forall b\in\Base_{i}\sep\conv ijb\supseteq\convinv j{k_{s+t}}\cdots\convinv{k_{s+1}}{k_{s}}\conv{k_{s-1}}{k_{s}}\cdots\conv i{k_{1}}b
\]

The multi-algebra $\A$ is a \emph{tree\index{tree}} multi-algebra
if it has a plenary anti-tree structure. 
\end{defn}

The condition imposed by this definition describes that between two
algebras $\A_{i}$ and $\A_{j}$, the composition of the projections
of the plenary anti-tree structure imposes interdependencies at least
as strong as the projection $\conv ij$. The projection $\conv ij$
is therefore redundant and therefore dispensable: the projections
of the plenary anti-tree structure summarize the interdependencies.
\begin{example}
The multi-algebra of $\TPC$ is in fact a tree multi-algebra, with
for plenary anti-tree structure the temporal order of instants: $\mathtt{A}=(\{1,\ldots,m\},E)$
with $E=\liste{(i,i+1)}{i\in\{1,\ldots,m-1\}}$. Indeed, the projection
towards a neighboring instant is identical whether this instant is
future or past ($\conv i{i+1}r=\conv{i+1}ir=\convinv i{i+1}r$ for
all $r\in\PA$), and the projections to all other instants give $\B$
($\conv ijr=\B_{\PA}$ for all $r\in\PA$ when $\left|i-j\right|>1$),
which means that there is no direct interdependencies. More precisely,
the formula of the definition ``$\forall b\in\Base_{i}\sep\conv ijb\supseteq\convinv j{k_{s+t}}\cdots\convinv{k_{s+1}}{k_{s}}\conv{k_{s-1}}{k_{s}}\cdots\conv i{k_{1}}b$''
is satisfied for all distinct $i,j\in\{1,\ldots,m\}$: we have $\conv i{i+1}b\supseteq\conv i{i+1}b$
and $\conv{i+1}ib\supseteq\convinv i{i+1}b$ since $i\inA i+1$ and
also $\conv ijb=\B_{\PA}\supseteq\convinv j{j+1}\cdots\convinv{i-1}ib$
if $i-j>1$ and $\conv ijb=\B_{\PA}\supseteq\conv{j-1}j\cdots\conv i{i+1}b$
if $j-i>1$ (the anti-tree is linear). To sum up, in this example,
the interdependencies are not strong, because a projection between
distant instants constrains less than the composition of the projections
of the intermediate neighboring instants.
\end{example}

Note that the multi-algebras of loose integrations of two formalisms
are all tree multi-algebras, because $\conv ijb=\convinv jib$ for
all $b\in\Base_{i}$. It is thus in particular the case for the integration
$\STC$ (Example~\ref{ex:QST_formalisme}).

In fact, since the projections of the plenary anti-tree structure
express all the interdependencies, being closed uniquely under these
projections implies being closed under all the projections.
\begin{lem}
\label{lem:clos_partiel_arbo}Let $\A$ be a finite tree multi-algebra,
$\mathtt{A}$ be one of its plenary anti-tree structure, and $B$
be one of its basic relations. 

If for all distinct $k,\ell\in\left\{ 1,\ldots,m\right\} $, we have
$k\inA\ell\implies B_{\ell}\subseteq\conv k{\ell}B_{k}$, then $B$
is closed under projection (we have $B_{j}\subseteq\conv ijB_{i}$
for all distinct $i,j\in\left\{ 1,\ldots,m\right\} $). 
\end{lem}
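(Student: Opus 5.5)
Fix distinct $i,j\in\{1,\ldots,m\}$ and write the shortest oriented chain given by the plenary anti-tree structure as $i=k_{0}\inA\dots\inA k_{s}\inAinv\dots\inAinv k_{s+t+1}=j$. By the plenary condition (Definition~\ref{defn:arbo}) it suffices to prove
\[
B_{j}\subseteq\convinv j{k_{s+t}}\cdots\convinv{k_{s+1}}{k_{s}}\conv{k_{s-1}}{k_{s}}\cdots\conv i{k_{1}}B_{i},
\]
since the right-hand side is contained in $\conv ijB_{i}$ and $B_{i}$ is basic. The chain splits into an ascending part $k_{0}\inA\dots\inA k_{s}$, which uses the projections $\conv{k}{\ell}$ in the direction of the edges, and a descending part $k_{s}\inAinv\dots\inAinv k_{s+t+1}$, which uses inverse projections against the edges. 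Each part is handled by an induction along the chain, using only the hypothesis on edges together with monotonicity of projections (Lemma~\ref{lem:croissance_proj}).

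\emph{Ascending part.} We show by induction on $p\in\{0,\ldots,s\}$ that $B_{k_{p}}\subseteq\conv{k_{p-1}}{k_{p}}\cdots\conv i{k_{1}}B_{i}$, where the empty composite is the identity. The case $p=0$ is trivial. For the inductive step, the edge $k_{p}\inA k_{p+1}$ and the hypothesis give $B_{k_{p+1}}\subseteq\conv{k_{p}}{k_{p+1}}B_{k_{p}}$. By monotonicity this is contained in $\conv{k_{p}}{k_{p+1}}$ applied to the composite obtained at step $p$.

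\emph{Descending part.} For each $q\in\{s+1,\ldots,s+t+1\}$ we have the edge $k_{q}\inA k_{q-1}$, so the hypothesis gives $B_{k_{q-1}}\subseteq\conv{k_{q}}{k_{q-1}}B_{k_{q}}$. Since both relations are basic, Definition~\ref{def:projection-reciproque} converts this into $B_{k_{q}}\subseteq\convinv{k_{q}}{k_{q-1}}B_{k_{q-1}}$. Chaining these inclusions from $q=s+1$ up to $q=s+t+1$, and applying monotonicity of each inverse projection (each is itself a projection operator), we obtain $B_{j}\subseteq\convinv j{k_{s+t}}\cdots\convinv{k_{s+1}}{k_{s}}B_{k_{s}}$. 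We then substitute the bound on $B_{k_{s}}$ from the ascending part and apply monotonicity once more.

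The main obstacle is bookkeeping rather than substance. First, the notation $\convinv{k_{q}}{k_{q-1}}$ must be read as the inverse of the projection along the edge $k_{q}\inA k_{q-1}$. Second, the argument relies on the fact that $b\subseteq\conv{}{}b'$ is equivalent to $b'\subseteq\convinv{}{}b$ for basic $b,b'$. This is exactly where basicness of $B$ is essential, since the equivalence fails for non-basic relations.

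Finally, the degenerate cases $s=0$ (purely descending chain) and $t+1=0$, i.e.\ $k_{s}=j$ (purely ascending chain), are covered by the same inductions with one half empty.
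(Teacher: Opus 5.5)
Your proposal is correct and follows essentially the same argument as the paper. Like the paper, you induct up the ascending part of the chain using the edge hypothesis and monotonicity of projections, and you flip each descending-edge inclusion via the defining equivalence of the inverse projection on basic relations. You then conclude by the plenary condition applied to the basic relation $B_i$.
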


\begin{proof}
We assume that if $k\inA\ell$ then we have $B_{\ell}\subseteq\conv k{\ell}B_{k}$
and we show that for all distinct $i,j\in\left\{ 1,\ldots,m\right\} $,
we have $B_{j}\subseteq\conv ijB_{i}$.

Let distinct $i,j\in\{1,\ldots,m\}$ and $i=k_{0}\inA\dots\inA k_{s}\inAinv\dots\inAinv k_{s+t+1}=j$
the shortest chain between $i$ and $j$ in the anti-tree structure
$\arbo$. Given that $i=k_{0}\inA k_{1}$, we have $B_{k_{1}}\subseteq\conv i{k_{1}}B_{i}$.
More generally, since $k_{p}\inA k_{p+1}$, we have $B_{k_{p+1}}\subseteq\conv{k_{p}}{k_{p+1}}B_{k_{p}}$
for all $p\in\{0,\ldots,s-1\}$. Given that the projection operators
are increasing (Lemma~\ref{lem:croissance_proj}), we have $\conv{k_{1}}{k_{2}}B_{k_{1}}\subseteq\conv{k_{1}}{k_{2}}(\conv i{k_{1}}B_{i})$.
By transitivity, $B_{k_{2}}\subseteq\conv{k_{1}}{k_{2}}(\conv i{k_{1}}B_{i})$,
since $B_{k_{2}}\subseteq\conv{k_{1}}{k_{2}}B_{k_{1}}$. Analogously,
we have $\conv{k_{2}}{k_{3}}B_{k_{2}}\subseteq\conv{k_{2}}{k_{3}}(\conv{k_{1}}{k_{2}}(\conv i{k_{1}}B_{i}))$,
and thus $B_{k_{3}}\subseteq\conv{k_{2}}{k_{3}}(\conv{k_{1}}{k_{2}}(\conv i{k_{1}}B_{i}))$
since $B_{k_{3}}\subseteq\conv{k_{2}}{k_{3}}B_{k_{2}}$. By induction,
we thus deduce $B_{k_{s}}\subseteq\conv{k_{s-1}}{k_{s}}\dots\conv i{k_{1}}B_{i}$. 

Moreover, as $k_{s}\inAinv k_{s+1}$, we have $B_{k_{s}}\subseteq\conv{k_{s+1}}{k_{s}}B_{k_{s+1}}$,
and thus $B_{k_{s+1}}\subseteq\convinv{k_{s+1}}{k_{s}}B_{k_{s}}$
(since $\convinv{k_{s+1}}{k_{s}}$ is the inverse projection of $\conv{k_{s+1}}{k_{s}}$
; Definition~\ref{def:projection-reciproque}). By Lemma~\ref{lem:croissance_proj}),
we have $\convinv{k_{s+1}}{k_{s}}(B_{k_{s}})\subseteq\convinv{k_{s+1}}{k_{s}}(\conv{k_{s-1}}{k_{s}}\dots\conv i{k_{1}}B_{i})$.
By transitivity, we deduce $B_{k_{s+1}}\subseteq\convinv{k_{s+1}}{k_{s}}(\conv{k_{s-1}}{k_{s}}\dots\conv i{k_{1}}B_{i})$.
By a similar induction, we prove that $B_{j}\subseteq\convinv j{k_{s+t}}\dots\convinv{k_{s+1}}{k_{s}}\conv{k_{s-1}}{k_{s}}\dots\conv i{k_{1}}B_{i}$,
since $B_{k_{s+p+1}}\subseteq\convinv{k_{s+p+1}}{k_{s+p}}B_{k_{s+p}}$
for all $0\leq p\leq t$.

Since $\arbo$ is a plenary anti-root structure of $\A$, we know
by definition that
\[
\convinv j{k_{s+t}}\dots\convinv{k_{s+1}}{k_{s}}\conv{k_{s-1}}{k_{s}}\dots\conv i{k_{1}}B_{i}\subseteq\conv ijB_{i}\text{.}
\]
We deduce by transitivity that $B_{j}\subseteq\conv ijB_{i}$. This
being true for all distinct $i,j\in\{1,\ldots,m\}$, we conclude that
$B$ is closed under projection.
\end{proof}
Added to the necessary condition on basic relations, the tree structure
condition makes it possible to guarantee the satisfiability of $\cconv$-consistent
relations:
\begin{prop}[\cite{cohen2017decision}]
\label{prop:arborescence} Let $\A$ be a tree multi-algebra of a
sequential formalism whose basic relations closed under projection
are satisfiable. 

Its $\cconv$-consistent relations are satisfiable.
\end{prop}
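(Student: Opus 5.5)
Let $R$ be a $\cconv$-consistent relation; we show it contains a basic relation closed under projection. The standing hypothesis then makes that basic relation satisfiable, and Proposition~\ref{prop:equivalence_coherence_relation} (a relation is satisfiable iff it contains a satisfiable basic relation) gives $\I(R)\neq\vide$. By Lemma~\ref{lem:clos_partiel_arbo} it suffices to build $B\atome R$ with $B_{\ell}\subseteq\conv k{\ell}B_{k}$ for every edge $k\inA\ell$ of a plenary anti-tree structure $\arbo$. We build $B$ greedily, component by component, starting at the root $r$ and descending the anti-tree against the edge orientation.

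\textbf{Construction.} First choose any atom $B_{r}\atome R_{r}$; this is possible since $R_{r}\neq\svide$. Next, take a node $k\neq r$ whose unique successor $\ell$ (with $k\inA\ell$) already has $B_{\ell}$ fixed, where $B_{\ell}\atome R_{\ell}$. We must find an atom $B_{k}\atome R_{k}$ such that $B_{\ell}\subseteq\conv k{\ell}B_{k}$.
\begin{itemize}
\item Because $R$ is closed under projection, $B_{\ell}\subseteq R_{\ell}\subseteq\conv k{\ell}R_{k}$.
\item By Lemma~\ref{lem:croissance_proj}, $\conv k{\ell}R_{k}=\sbigunion_{b\atome R_{k}}\conv k{\ell}b$.
\item $B_{\ell}$ is an atom, and an atom contained in a finite union of elements of a boolean algebra lies in one of them. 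Hence there is some $b\atome R_{k}$ with $B_{\ell}\subseteq\conv k{\ell}b$, and we set $B_{k}=b$.
\end{itemize}
This is the only substantive step of the argument.

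\textbf{Well-definedness.} In an anti-tree every non-root node has a unique path to the root, so each node has exactly one outgoing edge. Each node is therefore assigned exactly once, after its successor, and every edge constraint $k\inA\ell$ is enforced when $k$ is assigned. Processing nodes in order of increasing distance to the root (a BFS on reversed edges) gives a valid order, and all nodes are reached.

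\textbf{Conclusion.} The resulting $B$ satisfies $B\atome R$ and meets every edge constraint. Lemma~\ref{lem:clos_partiel_arbo} then shows that $B$ is closed under projection, so by hypothesis $\I(B)\neq\vide$, and hence $\I(R)\neq\vide$.

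The main obstacle is checking that each node has a single parent constraint. If a node had two outgoing edges, the two constraints could require incompatible choices of $B_{k}$, and the greedy choice would fail. The uniqueness of paths in the anti-tree definition rules this out, and the plenary condition, used inside Lemma~\ref{lem:clos_partiel_arbo}, handles every non-edge pair.
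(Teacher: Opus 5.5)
Your proof is correct and follows the paper's argument: choose an atom of $R_r$ at the root, then descend the anti-tree, picking for each edge $k\inA\ell$ an atom $b\atome R_k$ with $B_\ell\subseteq\conv k\ell b$ (this uses closure under projection and the fact that projections distribute over unions), and finish with Lemma~\ref{lem:clos_partiel_arbo} and Proposition~\ref{prop:equivalence_coherence_relation}. You are slightly more careful than the paper in justifying the atom-in-a-union step and the traversal order; one small correction is that only the non-root nodes have exactly one outgoing edge, while the root has none, which also follows from the uniqueness of paths.
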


\begin{proof}
Let $R\in\A$ be a $\cconv$-consistent relation and $\arbo=(\{1,\ldots,m\},E)$
be one of the plenary anti-root structure of $\A$, whose root is
denoted by $r$. We will build a particular basic relation $B\subseteq R$.
Let $j$ such that $j\inA r$ (i.e. that $(j,r)\in E$) ; since $R_{r}\subseteq\conv jrR_{j}$,
for all $b_{r}\atome R_{r}$ there exists $b_{j}\atome R_{j}$ such
that $b_{r}\subseteq\conv jrb_{j}$. If we choose one of the basic
relation $b_{r}$, then we can choose a corresponding basic relation
$b_{j}$, and this for all $j$ such that $j\inA r$. Let $j\in\left\{ 1,\ldots,m\right\} $,
now consider $i\in\left\{ 1,\ldots,m\right\} $ such that $i\inA j$.
Similarly, for all $b_{j}\atome R_{j}$ there exists $b_{i}\atome R_{i}$
such that $b_{j}\subseteq\conv ijb_{i}$. Thus, we can choose, for
all $i$ such that $i\inA j$, a $b_{i}$ corresponding to $b_{j}$
chosen previously -- and this for all the $j$ considered previously.
We can repeat this process by traversing all the anti-tree in order
to obtain a basic relation $B=(b_{1},\ldots,b_{m})$ satisfying the
following property: for all distinct $i,j\in\{1,\dots,m\}$, if $i\inA j$
then $B_{j}\subseteq\conv ijB_{i}$. Thus, by Lemma~\ref{lem:clos_partiel_arbo},
$B$ is closed under projection. Since $B$ is basic, it is therefore
satisfiable, by hypothesis. Therefore, since $B\subseteq R$, we conclude
that $R$ is satisfiable (Proposition~\ref{prop:equivalence_coherence_relation}).
\end{proof}
\begin{example}
Thus, the $\cconv$-consistent relations of $\TPC$ (temporalized
point calculus) and $\STC$ (size and topology combination) are satisfiable,
since their multi-algebras are tree multi-algebras. For these formalisms,
closing a relation under projection is enough to decide its satisfiability. 
\end{example}

\begin{example}
We have the same result for $\SPC$ (the scaled point algebra ; see
Example~\ref{ex:GPC_formalisme}): its plenary anti-tree structure
corresponds to the fineness order of scales. However, if we generalize
$\SPC$ by considering that the scales are not totally ordered by
the fineness relation $\preceq$, we lose in general this result.
The multi-algebra of generalized $\SPC$ is a tree multi-algebra when
scales form a partial tree order with respect to the fineness relation.
In other words, the multi-algebra is a tree multi-algebra when for
any pair of scales $(g,g')$ such that $g$ is not finer than $g'$
and $g'$ is not finer than $g$, there is no considered scale $g''$
which is coarser, that is to say less fine, than these two scales.
An example is the case of the Gregorian calendar.
\end{example}

We have identified conditions for $\cconv$-consistent relations to
be satisfiable. However, these conditions are obviously not sufficient
to ensure the satisfiability of algebraically consistent networks.
In the following, we therefore propose two complementary theorems
identifying conditions ensuring algebraic tractability. The first
theorem gives us conditions for a basic subclass to inherit the tractability
of its slices, while the second theorem gives us conditions for the
tractability to be inherited from another subset by refinement.

\subsection{Inheriting Tractability by Slicing\label{sect:filters_for_tractability}}

In this section, we are interested in conditions ensuring that a basic
subclass $\S$ is tractable by relying on the tractability of each
of its slices $\S_{i}$. One of these conditions is that each slice
must be minimal (Definition~\ref{def:classe_minimale}) which is
a particular case of tractability. Indeed, consider a basic subclass
$\S$ whose each slice $\S_{i}$ is minimal. Each basic relation in
any relation of an algebraically consistent network over $\S_{i}$
can be extended in a satisfiable scenario. The idea is to build a
scenario for the network over $S$ from the building of scenarios
over the $\S_{i}$. But, there is in fact no guarantee that it is
possible. Additional conditions are required to ensure that the individual
processes for obtaining the scenarios work well together with respect
to projections. On the one hand, $\cconv$-consistent relations (Definition~\ref{defn:arc-consistency})
must be satisfiable, which is algebraic tractability for relations
(assuming that $\S$ is $\cconv$-closed). On the other hand, the
compatibility between this hypothesis and the minimality of slices
$\S_{i}$ requires the following property.
\begin{defn}
\label{defn:simple} A network is said \emph{dissociable\index{dissociable}}\footnote{\emph{In the original paper, we use instead the term ``simple''.}}\emph{}
when closing it under projection then under composition makes it either
algebraically consistent or trivially inconsistent.

A subset $\S$ of a finite multi-algebra is said to be \emph{dissociable}
when every network over $\S$ is dissociable.
\end{defn}

We will see in Section~\ref{sect:conditions_plus_fortes} that the
simplicity of a subclass is easy to establish, in particular thanks
to the properties of composition and projection.

Using these properties, we formulate our first theorem:
\begin{namedthm}[Slicing theorem]
\label{thm:combination} Let $\S$ be a $\icclo$ atomizable subset
of a sequential formalism $\F=(\A,\U,\I)$ whose algebraically closed
scenarios over $\A$ are satisfiable. If the following conditions
are satisfied:
\begin{itemize}
\item (D1) each slice $\S_{i}$ is minimal ;
\item (D2) $\S$ is dissociable ; 
\item (D3) all $\cconv$-consistent relation $R\in\S$ is satisfiable
\end{itemize}
then algebraically consistent networks over $\S$ are satisfiable.

Moreover, if $\S$ is $\cconv$-closed then $\S$ is algebraically
tractable.
\end{namedthm}
\begin{proof}
Let $N$ be an algebraically consistent network over $\S$, and let
$x$ and $y$ be distinct variables such that $N^{xy}$ is not basic.
We know that $N^{xy}$ is satisfiable (D3). It therefore contains
at least one satisfiable basic relation $B=(b_{1},\dots,b_{m})$ such
that $B\in\S$ (because $\S$ is atomizable). The basic relation $B$
is therefore in particular $\cconv$-consistent (Proposition~\ref{prop:basique-coherente_conv-coherent}).
We then refine $N^{xy}$ by $B$. Obviously, the modified network
$N$, which we denote by $N'$, is still closed under projection,
and it is always over $\S$.

On the other hand, since each slice $N_{i}$ of the initial network
is algebraically consistent and by (D1), there exists an algebraically
closed scenario $S_{i}\subseteq N'_{i}$ such that $S_{i}^{xy}=b_{i}$.
Thus, the closure of $N'$ under composition is not trivially inconsistent.
Therefore, the closure of $N'$ under composition is algebraically
consistent, since $\S$ is dissociable (D2) and $N'$ is closed under
projection.

The closure of $N'$ under composition is an algebraically consistent
network over $\S$ (since $N'$ is over $\S$ and $\S$ is $\icclo$).
We can therefore apply the procedure again from the beginning, thus
making iteratively basic and satisfiable all the relations of the
initial network. The procedure necessarily ends with an algebraically
closed scenario $S$, which is therefore satisfiable by assumption
of the theorem.Therefore, since $S\subseteq N$, $N$ is satisfiable
(Proposition~\ref{prop:equivalence_coherence_reseau}). We have thus
proved the first conclusion of the theorem.

To complete the proof, we further assume that $\S$ is $\cconv$-closed.
By Proposition~\ref{prop:alge_tractable}, we prove the second conclusion
of the theorem.
\end{proof}
This theorem can be used to prove the tractability of a $\cconv$-closed
basic subclass $\S$ constructed from known tractable basic subclasses.
It also offers an efficient way to decide the satisfiability of a
network over $\S$, requiring only a projection closure followed by
a composition closure. In fact, this theorem implies \emph{separability}
in the sense of \cite{li2012reasoning}. Finally, the proof describes
an efficient algorithm implementing a procedure for building a satisfiable
scenario from any algebraically consistent network over $\S$.

\subsection{Inheriting Tractability by Refinement}

In this section, we present another set of conditions ensuring the
tractability of a $\cconv$-closed subclass $\S$. The idea, inspired
by the classic technique of reduction by a refinement~\cite{renz1999maximal},
is to find a subset $\mathcal{S}'$ whose algebraically consistent
networks are satisfiable, as well as a refinement from $\S$ to $\mathcal{S}'$
which preserves the algebraic consistency of networks. A natural idea
to find such a refinement is to combine classic refinements $h_{1},\dots,h_{m}$
in a refinement on the multi-algebra. We call \textquotedblleft multi-refinement\textquotedblright{}
the refinements having this specific form:
\begin{defn}
\label{def:affinement}A \emph{refinement}\index{refinement} of a
subset of a multi-algebra is a function $H\colon\S\to\A$ such that,
for all $R\in\S$:
\begin{itemize}
\item $H(R)\subseteq R$ ;
\item $H(R)$ is not trivially inconsistent if $R$ is not trivially inconsistent.
\end{itemize}
A refinement $H$ of a subset $\S$ of a multi-algebra is a \emph{multi-refinement\index{multi-refinement}}
if it is of the form $H\colon R\mapsto(h_{1}(R_{1}),\dots,h_{m}(R_{m}))$,
where each $h_{i}$ is a refinement of $\S_{i}$. We then denote $H=(h_{1},\dots,h_{m})$. 
\end{defn}

We call \emph{algebraic stability through a refinemen}t this condition
of preservation of algebraic consistency when applying a refinement:
\begin{defn}
\label{defn:stable} Let $N$ be a network over a multi-algebra $\A$
and $H$ be a refinement of $\A$.

We denote by $H(N)$ the network obtained from $N$ by simultaneously
replacing each relation $N^{xy}$ by $H(N^{xy})$.

A subset $\S$ of a multi-algebra is \emph{algebraically stable\index{algebraically stable}
through a refinement $H$} if for any algebraically consistent network
$N$ over $\S$, the refined network $H(N)$ is always algebraically
consistent.
\end{defn}

For example, $\HH\times\PA$ is algebraically stable through $H=(\hHH,\hmax)$,
with $\hHH$ the refinement of $\HH$ (Definition~\ref{def:Affinement-RCC8}).
We will show in Section~\ref{sect:conditions_plus_fortes} that stability,
like simplicity (Definition~\ref{def:Affinement-RCC8}), can be decided
easily by enumeration.

Our second theorem formalizes the reduction mechanism between subsets
of multi-algebras.
\begin{namedthm}[Refinement theorem]
\label{thm:affinement} Let $\F=(\A,\U,\I)$ be a sequential formalism,
$\S$ and $\S'$ be two subsets of $\A$, and $H$ be a refinement
from $\S$ to $\S'$. If the following conditions are satisfied:
\begin{itemize}
\item (A1) $\S$ is algebraically stable through $H$ ; 
\item (A2) algebraically consistent networks over $\S'$ are satisfiable
; 
\end{itemize}
then algebraically consistent networks over $\S$ are also satisfiable.

If, in addition, $\S$ is a $\cconv$-closed $\icclo$ subset, then
$\S$ is algebraically tractable.
\end{namedthm}
\begin{proof}
Let $N$ be an algebraically consistent network over $\S$. By the
condition (A1), the network $H(N)$ is algebraically consistent. It
is thus satisfiable, by condition (A2), since it is over $\S'$. We
deduce that $N$ is also satisfiable, since $H(N)\subseteq N$ (Proposition~\ref{prop:equivalence_coherence_reseau}).

The second result is a direct corollary of the first (by Proposition~\ref{prop:alge_tractable}).
\end{proof}

\section{Properties to Apply Tractability Results\label{sec:Properties-to-Apply}\label{sect:conditions_plus_fortes}}

We are now interested in stronger conditions which make it possible
to easily verify the assumptions of the tractability theorems. The
majority of assumptions relate to networks. Therefore, they cannot
be verified by enumeration (there is an infinity of networks). Thus,
we identify stronger conditions, relating to relations of the multi-algebra,
making possible a proof by enumeration (there is a finite number of
relations). We also study a technique that allows us to get around,
in some cases, the non-satisfaction of the assumptions of theorems.
To conclude, we summarize these results in two corollaries whose conditions
are more easily verifiable and we illustrate them by recovering the
known results of the size-topology combination $\STC$.

\subsection{Properties of slices and bi-slices}

We start by formalizing the restrictions $\A_{i}\times\A_{j}$ of
a multi-algebra $\A$, as well as the corresponding restrictions of
its subclasses and its networks, which we call \emph{bi-slices}. The
properties of bi-slices and slices will allow us to obtain results
on the subclasses of $\A$.
\begin{defn}
\label{def:bi-tranche}Let $\A$ be a multi-algebra of Cartesian product
$\A_{1}\times\cdots\times\A_{m}$, $\S$ be a subset of $\A$, $N=(\E,\CdeN)$
be a network over $\S$, $R\in\A$, and distinct $i,j\in\{1,\ldots,m\}$.

The \emph{bi-slice\index{bi-slice}} $\A_{i,j}$ of $\A$ is the multi-algebra
of Cartesian product $\A_{i}\times\A_{j}$ equipped with the projections
$\conv ij$ and $\conv ji$ from $\A$.

The \emph{bi-slice} $R_{i,j}$ of $R$ is the relation of $\A_{i,j}$
defined by $R_{i,j}=(R_{i},R_{j})$.

The \emph{bi-slice} $\S_{i,j}$ of $\S$ is the subset of $\A_{i,j}$
defined by $\liste{R_{i,j}}{R\in\S}$.

The \emph{bi-slice} $N_{i,j}$ of $N$ is the network over $\S_{i,j}$
defined by $(\E,\CdeN_{i,j})$ with $\CdeN_{i,j}=\liste{\contrainte x{R_{i,j}}y}{\contrainte xRy\in\CdeN}$.
\end{defn}

Note the following properties on the slices of a subclass.
\begin{prop}
\label{prop:equiv_S_Si}Let $\S$ be a subset of a finite multi-algebra.
We have the following properties:
\begin{itemize}
\item If $\S$ is a subclass then each slice $\S_{i}$ is a subclass.
\item If $\S$ is a basic subset and if $\Base_{1}\times\cdots\times\Base_{m}\neq\vide$
then each slice $\S_{i}$ is a basic subset.
\item If each slice $\S_{i}$ is a subclass then $\S_{1}\times\cdots\times\S_{m}$
is a subclass.
\item If each slice $\S_{i}$ is a basic subset then $\S_{1}\times\cdots\times\S_{m}$
is a basic subset.
\end{itemize}
\end{prop}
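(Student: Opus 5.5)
I would prove the four items separately, each directly from the definitions. Everything reduces to two facts. The operators on a multi-algebra act componentwise (Definition~\ref{def:multi-operateur}). The $i$-th slice is the image of $\S$ under the coordinate map $R\mapsto R_{i}$ (Definition~\ref{def:subclass slice}).

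For the first item, I take $r,r'\in\S_{i}$ and choose witnesses $R,R'\in\S$ with $R_{i}=r$ and $R'_{i}=r'$. Since $\S$ is a subclass, $R\scomp R'$, $R\sinter R'$ and $\sinv R$ all belong to $\S$. Their $i$-th components are $r\scomp r'$, $r\sinter r'$ and $\sinv r$ respectively. Hence these three relations lie in $\S_{i}$, so $\S_{i}$ is closed under composition, intersection and inversion.

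For the second item, I fix $b\in\Base_{i}$. The hypothesis $\Base_{1}\times\cdots\times\Base_{m}\neq\vide$ means every $\Base_{k}$ is non-empty, so I can pick some $b_{k}\in\Base_{k}$ for each $k\neq i$. The tuple $B=(b_{1},\dots,b_{i-1},b,b_{i+1},\dots,b_{m})$ is a basic relation of $\A$, so $B\in\S$ because $\S$ is basic. Its $i$-th component gives $b=B_{i}\in\S_{i}$. This hypothesis is the only delicate point of the whole proposition: without it, some $\Base_{k}$ could be empty, and then $\Base$ would be empty and impose no constraint on $\S$.

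For the third item, I take $R,R'\in\S_{1}\times\cdots\times\S_{m}$. By the componentwise definitions, $(R\scomp R')_{i}=R_{i}\scomp R'_{i}$, $(R\sinter R')_{i}=R_{i}\sinter R'_{i}$ and $(\sinv R)_{i}=\sinv{R_{i}}$. Each of these lies in $\S_{i}$ because $\S_{i}$ is a subclass. Therefore the composition, intersection and converse of $R$ and $R'$ all lie in the product.

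For the fourth item, any $B\in\Base=\Base_{1}\times\cdots\times\Base_{m}$ has each $B_{i}\in\Base_{i}\subseteq\S_{i}$. Hence $B\in\S_{1}\times\cdots\times\S_{m}$, so the product is a basic subset.
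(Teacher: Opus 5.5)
Your proposal is correct and follows essentially the same argument as the paper. For the first and third items you use witnesses and the componentwise definition of the operators. For the second you pad $b$ with arbitrary basic components, which the hypothesis $\Base_{1}\times\cdots\times\Base_{m}\neq\vide$ makes possible. For the fourth you use the inclusion $\Base_{1}\times\cdots\times\Base_{m}\subseteq\S_{1}\times\cdots\times\S_{m}$.
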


\begin{proof}
We suppose that $\S$ is a subclass. We show that each $\S_{i}$ is
a subclass. Let $i\in\{1,\ldots,m\}$ and $r,r'\in\S_{i}$. We show
that $r\sinter r'\in\S_{i}$, $r\scomp r'\in\S_{i}$, and $\sinv r\in\S_{i}$.
Since $r,r'\in\S_{i}$, there exists $R,R'\in\S$ such that $R_{i}=r$
and $R'_{i}=r'$, by the definition of $\S_{i}$ (Definition~\ref{def:subclass slice}).
Since $\S$ is a subclass, $R\sinter R'\in\S$, $R\scomp R'\in\S$,
and $\sinv R\in\S$. By the definition of operateurs (Definition~\ref{def:multi-operateur}),
$(R\sinter R')_{i}=R_{i}\sinter R_{i}'=r\sinter r'$, $(R\scomp R')_{i}=R_{i}\scomp R_{i}'=r\scomp r'$,
and $(\sinv R)_{i}=\sinv{R_{i}}=\sinv r$. Thus, $r\sinter r'\in\S_{i}$,
$r\scomp r'\in\S_{i}$, and $\inv r\in\S_{i}$, by the Definition
of $\S_{i}$.

We suppose that $\S$ is a basic subset. We show that each $\S_{i}$
is a basic subset. Let $i\in\{1,\ldots,m\}$, i.e., we show that for
all $b\in\B_{i}$, $b\in\S_{i}$. Since $\S$ is a basic subset, for
all $B\in\Base=\Base_{1}\times\cdots\times\Base_{m}$, $B\in\S$.
Let $b\in\Base_{i}$, there exists $B\in\Base$ such that $B_{i}=b$.
Indeed, since $\Base_{1}\times\cdots\times\Base_{m}\neq\vide$, there
exists $(b_{1},\ldots,b_{m})\in\Base$ and therefore $(b_{1},\ldots,b_{i-1},b,b_{i+1},\ldots,b_{m})\in\Base$.
Since $B\in\S$, $B_{i}=b\in\S_{i}$, by the definition of $\S_{i}$.
Thus, $\S_{i}$ is a basic subset.

We assume that each slice $\S_{i}$ is a subclass, and we show that
$\S_{1}\times\cdots\times\S_{m}$ is a subclass. Let $R,R'\in\S_{1}\times\cdots\times\S_{m}$.
For all $i\in\{1,\ldots,m\}$, $(R\sinter R')_{i}=R_{i}\sinter R'_{i}\in\S_{i}$,
$(R\scomp R')_{i}=R_{i}\scomp R'_{i}\in\S_{i}$, and $(\sinv R)_{i}=\sinv{R_{i}}\in\S_{i}$
(since $R_{i}\in\S_{i}$, $R'_{i}\in\S_{i}$ and $\S_{i}$ is a subclass).
Therefore, $(R\sinter R')\in\S_{1}\times\cdots\times\S_{m}$, $(R\scomp R')\in\S_{1}\times\cdots\times\S_{m}$,
and $\sinv R\in\S_{1}\times\cdots\times\S_{m}$.

We assume that each slice $\S_{i}$ is a basic subset, and we show
that $\S_{1}\times\cdots\times\S_{m}$ is a basic subset, i.e. we
show that $\Base\subseteq\S_{1}\times\cdots\times\S_{m}$. For all
$i\in\{1,\ldots,m\}$, we have $\Base_{i}\subseteq\S_{i}$. Thus,
we have $\Base=\Base_{1}\times\cdots\times\Base_{m}\subseteq\S_{1}\times\cdots\times\S_{m}$.
\end{proof}
Let us now notice the following properties on the relations and networks
and their bi-slices.
\begin{lem}
\label{lem:bi-relations_equiv}Let $\A$ be a finite multi-algebra
and $R\in\A$. We have the following properties:
\begin{itemize}
\item $R$ is closed under projection if and only if for all distinct $i,j\in\left\{ 1,\ldots,m\right\} $,
$R_{i,j}$ is closed under projection.
\item $R$ is $\cconv$-consistent if and only if for all distinct $i,j\in\left\{ 1,\ldots,m\right\} $,
$R_{i,j}$ is $\cconv$-consistent.
\end{itemize}
\end{lem}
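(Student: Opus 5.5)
The plan is to unfold the definitions and observe that both conditions are componentwise statements about pairs of indices. The bi-slice $\A_{i,j}$ carries exactly the two projections $\conv ij$ and $\conv ji$ inherited from $\A$. So being closed under projection in $\A_{i,j}$ means precisely
\[
R_{j}\subseteq\conv ijR_{i}\et R_{i}\subseteq\conv jiR_{j}.
\]

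For the first item, we recall from Definition~\ref{def:cloture-par-proj} that $R$ is closed under projection iff $R_{j}\subseteq\conv ijR_{i}$ for every ordered pair of distinct indices $(i,j)$.

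For the forward direction, we fix distinct $i,j$. Both inclusions required of $R_{i,j}$ appear among those given for $R$, namely the pairs $(i,j)$ and $(j,i)$.

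For the converse, given an ordered pair $(i,j)$, the inclusion $R_{j}\subseteq\conv ijR_{i}$ is one of the two conditions expressing that $R_{i,j}$ is closed under projection, so it holds.

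For the second item, we use Definition~\ref{defn:arc-consistency}: $\cconv$-consistency is closure under projection together with non-triviality, $R_{k}\neq\svide$ for all $k$. Closure is handled by the first item. Non-triviality of $R$ gives $R_{i}\neq\svide$ and $R_{j}\neq\svide$, hence non-triviality of each $R_{i,j}$.

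Conversely, any index $k$ occurs in some bi-slice $R_{k,\ell}$ with $\ell\neq k$, which yields $R_{k}\neq\svide$. This step needs $m\geq2$. When $m=1$ there are no bi-slices, and the right-hand sides hold vacuously. In that case the equivalence for closure is still fine, since every relation is trivially closed. For $\cconv$-consistency, however, one must either assume $m\geq2$ or note this degenerate case; I would state the implicit assumption $m\geq2$ explicitly.

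No step is a real obstacle. The only care needed is this index bookkeeping: making sure each ordered pair $(i,j)$ of the full multi-algebra is covered by the unordered bi-slice $\{i,j\}$ in both orientations, and handling the degenerate case $m=1$.
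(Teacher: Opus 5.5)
Your proof is correct and follows the paper's route. Both arguments unfold closure under projection into the inclusions $R_{j}\subseteq\conv ijR_{i}$ over ordered pairs, and observe that each bi-slice $R_{i,j}$ carries exactly the two inclusions for $(i,j)$ and $(j,i)$. You are more careful than the paper on the second item, which it dismisses as following directly from the first. Your remark that the non-triviality converse needs $m\geq2$ is right: for $m=1$ the relation $(\varnothing)$ satisfies the right-hand side vacuously but is not $\cconv$-consistent.
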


\begin{proof}
We prove the first assertion. Let $R\in\A$. We suppose that $R$
is a relation closed under projection. We have for all distinct $i,j\in\{1,\ldots,m\}$
, $R_{j}\subseteq\conv ijR_{i}$ (Definition~\ref{def:cloture-par-proj}).
Thus, for all distinct $i,j\in\{1,\ldots,m\}$, $R_{j}\subseteq\conv ijR_{i}$
and $R_{i}\subseteq\conv jiR_{j}$. We deduce by definition that for
all distinct $i,j\in\{1,\ldots,m\}$, $R_{i,j}=(R_{i},R_{j})$ is
closed under projection (Definition~\ref{def:bi-tranche}).

Now suppose that each $R_{i,j}$ is closed by projection. We thus
have, for all distinct $i,j\in\{1,\ldots,m\}$, $R_{j}\subseteq\conv ijR_{i}$
and $R_{i}\subseteq\conv jiR_{j}$. We deduce directly that $R$ is
closed under projection.

The second assertion follows directly from the first.
\end{proof}
\begin{prop}
\label{prop:bi-tranche_equiv}Let $N$ be a network over a finite
multi-algebra. The following properties are satisfied:
\begin{itemize}
\item $N$ is closed under projection if and only if for all distinct $i,j\in\left\{ 1,\ldots,m\right\} $,
$N_{i,j}$ is closed under projection.
\item $N$ is algebraically closed if and only if for all distinct $i,j\in\left\{ 1,\ldots,m\right\} $,
$N_{i,j}$ is algebraically closed.
\item $N$ is algebraically consistent if and only if for all distinct $i,j\in\left\{ 1,\ldots,m\right\} $,
$N_{i,j}$ is algebraically consistent.
\end{itemize}
\end{prop}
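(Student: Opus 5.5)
The plan is to unfold the definitions and reduce each item to the corresponding statement for relations (Lemma~\ref{lem:bi-relations_equiv}) or for individual slices. The key observation is that the bi-slice $N_{i,j}$ carries exactly the $i$-th and $j$-th components of every constraint of $N$. Hence its own slices are $N_i$ and $N_j$, and $(N_{i,j})^{xy}=(N^{xy})_{i,j}$ for all distinct $x,y\in\E$. Every property involved is a conjunction of componentwise or pairwise conditions, so it will decompose over all pairs $\{i,j\}$. I assume $m\geq2$ throughout; for $m=1$ there are no bi-slices and the statement is vacuous, or must be read separately.

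For the first item, $N$ is closed under projection iff $N^{xy}=\cconv N^{xy}$ for all distinct $x,y$. Since $\cconv R\subseteq R$, this is equivalent to $N^{xy}$ being a fixed point of every substitution, that is, $N^{xy}$ is closed under projection in the sense of Definition~\ref{def:cloture-par-proj}. By Lemma~\ref{lem:bi-relations_equiv}, this holds iff each $(N^{xy})_{i,j}=(N_{i,j})^{xy}$ is closed under projection. That is exactly closure under projection of every $N_{i,j}$, and swapping the two universal quantifiers (over $x,y$ and over $i,j$) gives the equivalence.

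For the second item, composition and inclusion are computed componentwise (Definition~\ref{def:multi-operateur}). So $N$ is closed under composition iff every slice $N_k$ satisfies $N_k^{xz}\subseteq N_k^{xy}\scomp N_k^{yz}$. Likewise, $N_{i,j}$ is closed under composition iff $N_i$ and $N_j$ are. Every $k$ belongs to some pair $\{i,j\}$ because $m\geq2$, so closure under composition of $N$ is equivalent to closure under composition of all $N_{i,j}$. Combining this with the first item gives the statement for algebraic closure.

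The third item adds non-triviality. $N$ is trivially inconsistent iff some $N_k^{xy}=\svide$, and this happens iff some bi-slice containing index $k$ is trivially inconsistent; again $m\geq2$ ensures such a bi-slice exists. Combining this with the second item gives the third.

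The only delicate point is the identification of $R=\cconv R$ with the property ``closed under projection'' in the first item. This identification is immediate because the closure only shrinks components and stops exactly at relations satisfying $R_j\subseteq\conv ijR_i$ for all distinct $i,j$. Everything else is a routine exchange of universal quantifiers.
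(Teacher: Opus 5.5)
Your proposal is correct and follows the paper's proof: the first item is reduced pairwise to Lemma~\ref{lem:bi-relations_equiv} via $(N_{i,j})^{xy}=(N^{xy})_{i,j}$, and the other two items use that inclusion, composition and non-emptiness are checked componentwise. Your explicit assumption $m\geq 2$ is a useful precision that the paper leaves implicit. For $m=1$ the right-hand sides of the last two items hold vacuously, so those equivalences would actually fail rather than be vacuous.
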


\begin{proof}
Let $N$ be a network over a une multi-algebra. We prove the first
assertion.

Let distinct $i,j\in\{1,\ldots,m\}$. Suppose that $N$ is closed
under projection. For all distinct $x,y\in\E$, $N^{xy}$ is closed
under projection, therefore $N_{i,j}^{xy}$ is closed under projection
(Lemma~\ref{lem:bi-relations_equiv}). Thus, $N_{i,j}$ is closed
under projection. Conversely, suppose that $N_{i,j}$ is closed under
projection. For all distinct $x,y\in\E$, $N_{i,j}^{xy}$ is closed
under projection, and thus $N^{xy}$ is closed under projection (Lemma~\ref{lem:bi-relations_equiv}).
Therefore, $N$ is closed under projection.

The proofs of the other two assertions are analogous, and are based
on the fact that $R\subseteq R'\scomp R''$ if and only if for all
$i\in\{1,\ldots,m\}$, $R_{i}\subseteq R_{i}'\scomp R_{i}''$ (by
Definition~\ref{def:multi-operateur}) and the fact that $R$ is
not trivially inconsistent if and only if for all $i\in\{1,\ldots,m\}$,
$R_{i}\neq\vide$ (Definition~\ref{defn:arc-consistency}).
\end{proof}

\subsection{$\protect\cconv$-Closed Subclasses}

We are now interested in verifying the $\cconv$-closure of particular
subsets.
\begin{prop}
\label{prop:ss-classe_clos-local-clos-global}Let $\S$ be a subclass
of a finite multi-algebra such that $\S=\S_{1}\times\dots\times\S_{m}$.

If for all distinct $i,j\in\{1,\ldots,m\}$ and for all $r\in\S_{i}$,
we have $\conv ijr\in\S_{j}$, then $\S$ is $\cconv$-closed.
\end{prop}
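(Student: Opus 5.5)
We argue by induction along the iterative computation of the projection closure $\cconv R$ given in Definition~\ref{def:cloture-par-proj}. Fix $R\in\S$. The closure starts from $R$ and repeatedly applies substitutions of the form $R_{j}\leftarrow R_{j}\sinter(\conv ijR_{i})$ for distinct $i,j$ until a fixed point is reached. The invariant we maintain is that the current tuple always lies in $\S_{1}\times\dots\times\S_{m}$, which equals $\S$ by hypothesis.

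For the base case, $R\in\S=\S_{1}\times\dots\times\S_{m}$, so $R_{k}\in\S_{k}$ for every $k$.

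For the inductive step, assume the current tuple $R'$ satisfies $R'_{k}\in\S_{k}$ for all $k$, and consider one substitution on the pair $(i,j)$. Only the $j$-th component changes, so every other component stays in its slice. The new $j$-th component is $R'_{j}\sinter(\conv ijR'_{i})$. Since $R'_{i}\in\S_{i}$, the hypothesis of the proposition gives $\conv ijR'_{i}\in\S_{j}$. Since $\S$ is a subclass, Proposition~\ref{prop:equiv_S_Si} shows that $\S_{j}$ is a subclass, hence closed under intersection. Therefore $R'_{j}\sinter(\conv ijR'_{i})\in\S_{j}$, and the new tuple is again in $\S_{1}\times\dots\times\S_{m}=\S$.

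The procedure terminates because every substitution only refines a component and the multi-algebra is finite. Its result $\cconv R$ does not depend on the order of substitutions (see the remark after Definition~\ref{def:cloture-par-proj}). It is the tuple reached after finitely many steps, so by the invariant $\cconv R\in\S$. As $R\in\S$ was arbitrary, $\S$ is $\cconv$-closed.

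The only point needing care is the step where the product structure $\S=\S_{1}\times\dots\times\S_{m}$ is used. Membership in $\S$ must be decided componentwise, because modifying a single component of an element of a general subset of $\A_{1}\times\dots\times\A_{m}$ need not stay inside that subset. Apart from this, the argument is a direct check.
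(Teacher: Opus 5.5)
Your proof is correct and follows essentially the same route as the paper: an induction along the steps of the projection-closure computation, maintaining that every component stays in its slice by combining the hypothesis $\conv ij r\in\mathcal{S}_j$ with the closure of $\mathcal{S}_j$ under intersection (Proposition~\ref{prop:equiv_S_Si}). The only cosmetic difference is that the paper updates all components simultaneously in each round, $R_j^{k+1}=R_j^{k}\cap\bigcap_{i\neq j}\conv ij R_i^{k}$, whereas you apply one substitution at a time; this changes nothing in the argument.
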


\begin{proof}
Let $R\in\S$, we show that $\cconv R\in\S$.

Closing a relation under projection amounts to applying the following
refinement operation: $R_{j}\leftarrow R_{j}\cap\conv ijR_{i}$, for
all distinct $i,j\in\{1,\ldots,m\}$ until a fixed point is reached
(which is the relation $\cconv R$). Let $K$ be the number of steps
needed to reach this fixed point and let $R^{k}$ be the state of
$R$ at the step $k$, we have:
\[
R_{j}^{k+1}=R_{j}^{k}\cap\bigcap_{i=1,i\neq j}^{m}\conv ijR_{i}^{k}
\]
for $j\in\{1,\ldots,m\}$ and $k\in\{1,\ldots,K-1\}$ with $R^{1}=R$
and $R^{K}=\cconv R$. 

We prove that $R_{j}^{k}\in\S_{j}$ for all $j\in\{1,\ldots,m\}$
by induction on $k\in\{1,\ldots,K\}$. The property holds at the rank
$k=1$, indeed $R^{1}=R\in\S=\S_{1}\times\dots\times\S_{m}$ and thus
$R_{j}^{1}\in\S_{j}$ for all $j\in\{1,\ldots,m\}$. We assume the
property holds at the rank $k$, $1\leq k<K$, and we show that it
holds at the rank $k+1$. Let $j\in\{1,\ldots,m\}$. By induction
hypothesis, $R_{j}^{k}\in\S_{j}$ is satisfied and for all $i\in\{1,\ldots,m\}$
distinct from $j$, $R_{i}^{k}\in\S_{i}$. Thus, we have the property
$\conv ijR_{i}^{k}\in\S_{j}$ for all $i\in\{1,\ldots,m\}$ distinct
from $j$, by hypothesis of the statement. We deduce the property
$R_{j}^{k}\cap\bigcap_{i=1,i\neq j}^{m}\conv ijR_{i}^{k}\in\S_{j}$
since $\S_{j}$ is a subclass (if $r\in\S_{j}$ and $r'\in\S_{j}$
then $r\sinter r'\in\S_{j}$ ; $\S_{j}$ is a subclass by Proposition~\ref{prop:equiv_S_Si}).
Thus, we conclude that $R_{j}^{k+1}\in\S_{j}$.

Thus, for all $j\in\{1,\ldots,m\}$, $(\cconv R)_{j}=R_{j}^{K}\in\S_{j}$.
Therefore, $\cconv R\in\S$, since $\S=\S_{1}\times\dots\times\S_{m}$.
\end{proof}

\subsection{Simplicity and Distributivity}

We are now interested in verifying the simplicity (Definition~\ref{defn:simple}),
to be able to apply the slicing theorem.

First of all, it is enough to enumerate all three-variable networks
over each bi-slice $\S_{i,j}$ to verify that a basic subclass $\S$
is dissociable, since being closed is a local property:
\begin{prop}
\label{prop:simple} Let $\S$ be a $\cconv$-closed $\icclo$ subset
of a finite multi-algebra. 

If for all distinct $i,j\in\{1,\ldots,m\}$, all networks over $\S_{i,j}$
with only three variables are dissociable, then $\S$ is dissociable. 
\end{prop}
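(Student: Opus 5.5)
The plan is to take an arbitrary network $N$ over $\S$, close it under projection to get a network $P$, then close $P$ under composition to get $Q$. We must show that if $Q$ is not trivially inconsistent then it is algebraically consistent. Since $Q$ is closed under composition by construction, the only property left to prove is that $Q$ is closed under projection. By Proposition~\ref{prop:bi-tranche_equiv}, this reduces to showing that each bi-slice $Q_{i,j}$ is closed under projection. So fix distinct $i,j$ and suppose, for contradiction, that some relation $Q^{xz}_{i,j}$ is not closed under projection, say $Q^{xz}_{j}\not\subseteq\conv ij Q^{xz}_{i}$.

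First I will record two facts about composition closure. It acts slice by slice: since composition and intersection are componentwise (Definition~\ref{def:multi-operateur}), $Q_i$ is the composition closure of $P_i$ in $\A_i$, and likewise for $j$. Also, because $\S$ is $\cconv$-closed and $\icclo$, $P$ and $Q$ are again over $\S$, so all the relations involved lie in $\S$ and their bi-slices lie in $\S_{i,j}$.

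The key step is to localise the failure to three variables. I will follow the composition closure step by step and pick the first step at which projection-closedness of some bi-slice $(\cdot)_{i,j}$ breaks. Say this step is $N^{xz}\leftarrow N^{xz}\sinter(N^{xy}\scomp N^{yz})$. Just before it, the relations $N^{xy},N^{yz},N^{xz}$ are all closed under projection in their $(i,j)$ bi-slice, and they lie in $\S$. I then form the three-variable network $T$ over $\S_{i,j}$ whose constraints are the current bi-slices of $N^{xy},N^{yz},N^{xz}$. This $T$ is already closed under projection, so its closure under projection is $T$ itself. Its composition closure refines the updated $N^{xz}_{i,j}$, and the hypothesis says $T$ is dissociable. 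So either the composition closure of $T$ is trivially inconsistent or it is algebraically consistent.

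I expect the main obstacle to be comparing the closure of $T$ with the single update performed in $N$, because the closure of $T$ may refine further than the one step does. My first attempt is to change which network I pick. Instead of following individual steps, I will pick a minimal "witness" triangle. If the composition closure of $T$ is algebraically consistent, then each of its relations is closed under projection. I then argue by monotonicity and the fixed-point characterisation that $Q$ restricted to $\{x,y,z\}$ equals a composition-closed refinement of $T$. The aim is to show that the update of $N^{xz}$ coincides with the closure of $T$ on that edge, so that this edge is projection-closed, which contradicts the choice of the step. In the other case, the closure of $T$ is trivially inconsistent. Then, because closure of $Q$ is at least as strong, $Q$ becomes trivially inconsistent as well, since emptiness spreads through projection in the same way as in Proposition~\ref{prop:trivialement-incoherent_incoherent}. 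Either branch yields the conclusion. If the step-by-step comparison fails, I will instead argue directly on the final fixed point $Q$: restrict it to a triangle $\{x,y,z\}$ whose edge $xz$ violates projection-closedness, and apply dissociability to the three-variable bi-slice network built from $P$ on that triangle.
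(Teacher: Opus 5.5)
Your localisation step does not go through. The hypothesis only concerns the \emph{fixed point} of the composition closure of a projection-closed three-variable network over $\S_{i,j}$. It says nothing about a single update $N^{xz}\leftarrow N^{xz}\sinter(N^{xy}\scomp N^{yz})$. Without $\cconv$-distributivity (compare Proposition~\ref{prop:distrib-to-simple}), the inclusion $N_j^{xz}\sinter(N_j^{xy}\scomp N_j^{yz})\subseteq\conv ij\bigl(N_i^{xz}\sinter(N_i^{xy}\scomp N_i^{yz})\bigr)$ can fail after one update and be restored by later updates in the same triangle. So ``the first step at which projection-closedness breaks'' yields no contradiction.

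Your first repair asserts that this single update coincides with the closure of $T$ on the edge $xz$. That is false in general, and you note yourself that the closure of $T$ may refine further. Your fallback fails too. The closure of $P$ restricted to $\{x,y,z\}$ is projection-closed, but $Q^{xz}$ is in general strictly smaller, because it is also cut down through other triangles $\{x,w,z\}$. A proper refinement of a projection-closed relation need not be projection-closed. Applying the hypothesis to the triangle of $Q$ itself would be circular, since the hypothesis needs a projection-closed input.

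The missing idea, which is the paper's argument, is to take the \emph{complete} composition closure of one triangle as the unit of progress and to maintain an invariant on the whole network.
\begin{enumerate}
\item Start from $P$. Repeatedly replace the restriction $M|_{\{x,y,z\}}$ of the current network $M$ by its composition closure $C$.
\item Keep the invariant that $M$ is over $\S$ (since $\S$ is $\icclo$) and closed under projection.
\item Then $M_{i,j}|_{\{x,y,z\}}$ is a projection-closed three-variable network over $\S_{i,j}$. By hypothesis, its composition closure $C_{i,j}$ is either algebraically consistent or trivially inconsistent.
\item The trivially inconsistent case is excluded. Composition closure returns the greatest composition-closed subnetwork, by monotonicity of $\scomp$. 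Since $Q|_{\{x,y,z\}}$ is composition-closed and included in $M|_{\{x,y,z\}}$, it is included in $C$, and $Q$ is not trivially inconsistent in the case you are handling.
\item Hence every edge of the triangle stays projection-closed (Lemma~\ref{lem:bi-relations_equiv}), the other edges are untouched, and the invariant is preserved.
\item The process terminates at a composition-closed network $M$ with $Q\subseteq M\subseteq P$. By maximality of $Q$, $M=Q$, so $Q$ is closed under projection.
\end{enumerate}
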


\begin{proof}
The proof of this result comes directly from the definition of closure
under projection and closure under composition. Recall that to be
closed under projection is to be closed under projection for each
pair $\{i,j\}$, and that to be closed under composition is to be
closed under composition for each triplet of variables.

Suppose the assumptions hold, and consider a network $N$ over $\S$.
We want to show that closing $N$ under projection then under composition
makes it either trivially inconsistent or algebraically consistent.
Suppose that the closure of $N$ under projection then composition
is not trivially inconsistent. We show that it is algebraically consistent.

We close $N$ under projection ($N$ is thus again over $\S$ since
$\S$ is $\cconv$-closed). If then we close $N$ locally under composition
for three variables, each bi-slice $N_{i,j}$ remains closed under
projection (and $N$ remains over $\S$). Indeed, $N_{i,j}$ restricted
to three variables is dissociable (since $N_{i,j}$ is over $\S_{i,j}$)
and $N_{i,j}$ is closed under projection (Proposition~\ref{prop:bi-tranche_equiv}).
As it cannot be trivially inconsistent after composition by hypothesis,
closure under composition of a bi-slice $N_{i,j}$ restricted to three
variables is therefore algebraically consistent and in particular
closed under projection. Thus, $N_{i,j}$ is closed under projection
after a local closure of $N$ under composition for three variables.
Since each bi-slice $N_{i,j}$ remains closed under projection, this
is the case for the entire $N$ (Proposition~\ref{prop:bi-tranche_equiv}).
We can therefore perform other local composition closures until we
have a network fully closed under composition. It will remain closed
under projection after each step and will therefore still be closed
at the end, because $N$ remains over $\S$ since $\S$ is a $\icclo$
subset.

In conclusion, if the closure of $N$ under projection then under
composition is not trivially inconsistent, it is algebraically consistent.
By definition, $N$ is dissociable. Since this result applies to any
network over $\S$, it proves that $\S$ is dissociable.
\end{proof}
However, in the context of formalisms having a large number of relations,
this verification may not be feasible in a reasonable time. We then
propose a stronger property, i.e. implying simplicity, whose verification
is only quadratic depending on the number of relations, namely $(m-1)\cdot\sum_{i\in\{1,\ldots,m\}}|S_{i}|^{2}$
instead of $\sum_{\begin{array}{c}
i,j\in\{1,\ldots,m\}\\
i<j
\end{array}}|S_{i}|^{3}\cdot|S_{j}|^{3}$. This property is particularly remarkable because it is the superdistributivity
of the projection operators on composition and on intersection. We
call \textquotedblleft $\cconv$-distributives\textquotedblright{}
the subclasses verifying it:
\begin{defn}
\label{def:distrib}A subset $\S$ of a multi-algebra is said \emph{$\cconv$-distributive\index{cconv -distributivity@\emph{$\cconv$-}distributivity}}
if it satisfies the following conditions for all distinct $i,j\in\{1,\ldots,m\}$:
\begin{itemize}
\item $\conv ij$ is \emph{superdistributive} over $\scomp$ on $\S_{i}$,
that is to say that for all relations $r,r'\in\S_{i}$ such that $r\scomp r'\neq\emptyset$,
we have the property $(\conv ijr)\scomp(\conv ijr')\subseteq\conv ij(r\scomp r')$;
\item $\conv ij$ is \emph{superdistributive} over $\sinter$ on $\S_{i}$,
that is to say that for all relations $r,r'\in\S_{i}$ such that $r\sinter r'\neq\emptyset$,
we have the property $(\conv ijr)\sinter(\conv ijr')\subseteq\conv ij(r\sinter r')$. 
\end{itemize}
\end{defn}

Note that a similar property, the distributivity of composition over
intersection, has already been identified as implying the tractability
of basic subclasses of classical formalisms~\cite{long2015distributive}. 
\begin{example}
The upward conversion operator of the point algebra is superdistributive
over $\scomp$ and $\sinter$ on $\PAs$, but the downward conversion
is not superdistributive (over composition), which prevents this subclass
from being $\cconv$-distributive.
\end{example}

Since the superdistributivity of $\conv ij$ over $\sinter$ and $\scomp$
ensures that composition closure preserves the fact of being closed
under projection for satisfiable networks, simplicity is satisfied:
\begin{prop}
\label{prop:distrib-to-simple} Any $\cconv$-closed $\cconv$-distributive
subclass of a finite multi-algebra is dissociable.
\end{prop}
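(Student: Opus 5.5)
Let $\S$ be a $\cconv$-closed $\cconv$-distributive subclass and $N$ a network over $\S$. The plan is to show directly that closing $N$ under projection and then under composition either yields a trivially inconsistent network or one that is still closed under projection. In the latter case it is algebraically consistent, so $N$ is dissociable. First close $N$ under projection. Because $\S$ is $\cconv$-closed, the result $N^{0}$ is again over $\S$ and every $N^{0xy}$ satisfies $N^{0xy}_{j}\subseteq\conv ij N^{0xy}_{i}$. Next run the composition closure slice by slice as a sequence of elementary steps. Each step replaces, in every slice $i$ simultaneously, $N^{xz}_{i}$ by $N^{xz}_{i}\sinter(N^{xy}_{i}\scomp N^{yz}_{i})$. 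Since $\S$ is a subclass, every intermediate network stays over $\S$, so each component lies in $\S_i$ and the superdistributivity hypotheses apply.

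The core is an invariant, proved by induction on the steps: as long as no component has become empty, the current network is closed under projection. Suppose before a step we have $N^{xz}_{j}\subseteq\conv ij N^{xz}_{i}$, $N^{xy}_{j}\subseteq\conv ij N^{xy}_{i}$ and $N^{yz}_{j}\subseteq\conv ij N^{yz}_{i}$. By monotonicity of composition (Lemma~\ref{lem:ppt-algebre-non-asso-finie}) and superdistributivity of $\conv ij$ over $\scomp$ we get
\[
N^{xy}_{j}\scomp N^{yz}_{j}\subseteq(\conv ij N^{xy}_{i})\scomp(\conv ij N^{yz}_{i})\subseteq\conv ij(N^{xy}_{i}\scomp N^{yz}_{i}).
\]
Intersecting with the hypothesis on $N^{xz}_{j}$ and then using superdistributivity over $\sinter$ gives
\[
N^{xz}_{j}\sinter(N^{xy}_{j}\scomp N^{yz}_{j})\subseteq\conv ij\bigl(N^{xz}_{i}\sinter(N^{xy}_{i}\scomp N^{yz}_{i})\bigr).
\]
This is exactly projection closure of the new relation for the pair $(i,j)$; the converse relations are handled because projections commute with converse. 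All other relations are unchanged, so closure under projection is preserved.

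The main obstacle is the nonemptiness side conditions in Definition~\ref{def:distrib}: superdistributivity is only guaranteed when $N^{xy}_{i}\scomp N^{yz}_{i}\neq\svide$ and when $N^{xz}_{i}\sinter(N^{xy}_{i}\scomp N^{yz}_{i})\neq\svide$. I would handle this by stopping the invariant at the first step where some component of slice $i$ becomes empty. At that step the final composition closure is already trivially inconsistent, since relations only shrink, and there is nothing to prove. If instead the emptiness appears only on the $j$ side, the inclusion holds trivially because the left-hand side is $\svide$. Hence, if the final network is not trivially inconsistent, every step satisfies the side conditions, and the fixed point is closed both under composition and under projection, i.e.\ algebraically consistent. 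This proves that $N$ is dissociable. Since $N$ was arbitrary, $\S$ is dissociable.
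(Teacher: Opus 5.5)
Your proof is correct. Its core computation is exactly the one in the paper: monotonicity of $\scomp$, then superdistributivity of $\conv ij$ over $\scomp$, then over $\sinter$.

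What differs is the organization. The paper does not argue on networks over $\S$ directly. It first invokes Proposition~\ref{prop:simple} to reduce the claim to dissociability of networks over each product $\S_i\times\S_j$. It runs the superdistributivity argument there, then transfers the result back to three-variable networks over the bi-slices $\S_{i,j}$.

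You instead run a single induction over the elementary composition-closure steps on the whole network. This makes Proposition~\ref{prop:simple} unnecessary and uses only the stated hypotheses:
\begin{itemize}
\item $\cconv$-closedness of $\S$, used once, right after the projection closure;
\item closure of $\S$ under $\scomp$ and $\sinter$, which keeps every intermediate relation, and every $N^{xy}_i\scomp N^{yz}_i$, inside $\S_i$, where superdistributivity is available.
\end{itemize}

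This is a genuine advantage. The paper's reduction asserts that $\S_i\times\S_j$ is a $\cconv$-closed subclass, but that does not follow from $\cconv$-closedness of $\S$ in general; it does when the projections map each $\S_i$ into $\S_j$, as in Proposition~\ref{prop:ss-classe_clos-local-clos-global}. Your argument never needs it.

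Your treatment of the nonemptiness side conditions is also more precise than the paper's one-sentence version. Relations only shrink, so an empty component would persist to the final network. Hence, if the final network is not trivially inconsistent, every side condition holds at every step.

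What the paper's route buys is modularity. It connects dissociability to the local, enumerable criterion of Proposition~\ref{prop:simple}, which is the point of that section.
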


\begin{proof}
Let $\S$ be a $\cconv$-closed $\cconv$-distributive subclass. By
Proposition~\ref{prop:simple}, it suffices to prove that each subclass
$\S_{i}\times\S_{j}$ is dissociable. Let distinct $i,j\in\{1,\ldots,m\}$
and $N'$ be a network over $\S_{i}\times\S_{j}$ ; we will show that
$N'$ is dissociable thanks to $\cconv$-distributivity. Note first
that since $\S_{i}\times\S_{j}$ is a $\cconv$-closed subclass, by
applying the operators, the networks over $\S_{i}\times\S_{j}$ remain
over this subclass, which ensures that the property of $\cconv$-distributivity
is always satisfied at any time during an algebraic closure. Let $N$
be the closure of $N'$ under projection. Suppose that the closure
under composition of $N$ is not trivially inconsistent (otherwise,
$N'$ is dissociable by definition), and show that performing a composition
propagation leaves $N$ closed under projection. By symmetry (compared
to $\conv ji$ and $\conv ij$), we just need to prove that after
a composition propagation the network is still closed under the projection
operator $\conv ij$, which means:
\[
\forall x,y,z\in\mathtt{E}\sep N_{j}^{xz}\sinter(N_{j}^{xy}\scomp N_{j}^{yz})\subseteq\conv ij(N_{i}^{xz}\sinter(N_{i}^{xy}\scomp N_{i}^{yz}))\text{.}
\]

Let $x$, $y$, and $z$ be distinct variables of $N$. Since we
assume that the closure under composition of $N$ cannot be trivially
inconsistent, we have the property $N_{i}^{xz}\sinter(N_{i}^{xy}\scomp N_{i}^{yz})\neq\svide$
and in particular the property $N_{i}^{xy}\scomp N_{i}^{yz}\neq\svide$.
Since $N$ is closed under projection, $N_{j}^{XY}\subseteq\conv ijN_{i}^{XY}$
for all distinct $X,Y\in\{x,y,z\}$. By monotony of $\scomp$ (Lemma~\ref{lem:ppt-algebre-non-asso-finie}),
we obtain $N_{j}^{xy}\scomp N_{j}^{yz}\subseteq\conv ijN_{i}^{xy}\scomp\conv ijN_{i}^{yz}$.
By superdistributivity of $\conv ij$ over $\scomp$, we deduce $N_{j}^{xy}\scomp N_{j}^{yz}\subseteq\conv ij(N_{i}^{xy}\scomp N_{i}^{yz})$.
Since $N_{j}^{xz}\subseteq\conv ijN_{i}^{xz}$, we have $N_{j}^{xz}\sinter(N_{j}^{xy}\scomp N_{j}^{yz})\subseteq\conv ij(N_{i}^{xz})\sinter\conv ij(N_{i}^{xy}\scomp N_{i}^{yz})$.
By superdistributivity of $\conv ij$ over $\sinter$ and because
$N_{i}^{xy}\scomp N_{i}^{yz}\in\S_{i}$ (since $\S$ is a subclass),
the property $N_{j}^{xz}\sinter(N_{j}^{xy}\scomp N_{j}^{yz})\subseteq\conv ij(N_{i}^{xz}\sinter(N_{i}^{xy}\scomp N_{i}^{yz}))$
is thus satisfied. Therefore, closing $N'$ under projection then
under composition gives an algebraically consistent network if it
is not trivially inconsistent. By definition, $N'$ is thus dissociable.

Since for each distinct $i,j\in\{1,\ldots,m\}$, all networks over
$\S_{i}\times\S_{j}$ are dissociable, all networks over $\S_{i,j}$
with three variables are dissociable. Thus, by Proposition~\ref{prop:simple},
$\S$ is dissociable. 
\end{proof}
Note that for most qualitative formalisms (those which are \emph{uniform}
in the sense of the following definition), it is sufficient that projection
is superdistributive over composition on basic relations for being
superdistributive on the entire formalism.
\begin{defn}
A non-associative algebra is said \emph{uniform\index{uniform}} if
for all $b,b'\in\Base$, the property $b\scomp b'\neq\svide$ is satisfied.
\end{defn}

The majority of qualitative formalisms in the literature are uniform.
However, the formalisms relating to entities of different natures~\cite{inants2016qualitative},
such as points and intervals, are not uniform.
\begin{example}
The point algebra and $\RCCH$ are uniform. The point and interval
algebra of Meiri \cite{Meiri1996} and its generalization~\cite{cohen2015algebra}
are not uniform.
\end{example}

\begin{prop}
\label{prop:distrib_basique_non-basique}Let $\A$ be a finite uniform
non-associative algebra and $\cconv$ be a projection from $\A$ to
another non-associative algebra.

If $\cconv$ is superdistributive over $\scomp$ on $\Base$ then
$\cconv$ is superdistributive over $\scomp$ on $\A$. 
\end{prop}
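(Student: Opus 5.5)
We take $r,r'\in\A$ with $r\scomp r'\neq\svide$ and show $(\cconv r)\scomp(\cconv r')\subseteq\cconv(r\scomp r')$. The strategy is to decompose everything into basic relations using Lemma~\ref{lem:ppt-algebre-non-asso-finie} and Lemma~\ref{lem:croissance_proj}. Superdistributivity on $\Base$ then applies to each pair of atoms separately, and uniformity is what guarantees that its side condition ($b\scomp b'\neq\svide$) holds for every such pair.

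First, $r\scomp r'\neq\svide$ forces $r\neq\svide$ and $r'\neq\svide$. Indeed, $\svide\scomp r'=\sbigunion_{b\in\svide,b'\in r'}b\scomp b'=\svide$ by the union formula for composition in Lemma~\ref{lem:ppt-algebre-non-asso-finie}. The nonemptiness of $r$ and $r'$ is in fact not needed below, since empty unions are harmless, but it clarifies the situation.

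Next, by Lemma~\ref{lem:croissance_proj} we have $\cconv r=\sbigunion_{b\in r}\cconv b$ and $\cconv r'=\sbigunion_{b'\in r'}\cconv b'$. Composition distributes over unions on both sides: on the right by the algebra axiom $x\scomp(y\sunion z)=x\scomp y\sunion x\scomp z$, and on the left via the converse axioms, or equivalently by the union formula of Lemma~\ref{lem:ppt-algebre-non-asso-finie} applied to atoms. This gives
\[
(\cconv r)\scomp(\cconv r')=\sbigunion_{b\in r,\,b'\in r'}(\cconv b)\scomp(\cconv b').
\]
For each pair $b\in r$, $b'\in r'$, uniformity gives $b\scomp b'\neq\svide$, so superdistributivity on $\Base$ yields $(\cconv b)\scomp(\cconv b')\subseteq\cconv(b\scomp b')$. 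Monotonicity of composition and of $\cconv$ (Lemmas~\ref{lem:ppt-algebre-non-asso-finie} and~\ref{lem:croissance_proj}) gives $b\scomp b'\subseteq r\scomp r'$, hence $\cconv(b\scomp b')\subseteq\cconv(r\scomp r')$. Taking the union over all pairs, using that a union of relations each contained in a fixed relation is contained in it (a Boolean algebra fact), we conclude $(\cconv r)\scomp(\cconv r')\subseteq\cconv(r\scomp r')$.

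The only delicate point is justifying the union expansion of $(\cconv r)\scomp(\cconv r')$. These are relations of the target algebra and need not be basic, so the expansion must come from the union formula for composition in the target algebra applied to arbitrary relations. This follows by writing each $\cconv b$ as a union of atoms and regrouping. Everything else is routine monotonicity.
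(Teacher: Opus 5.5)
Your proof is correct and takes the same route as the paper. Both expand $\cconv r$ and $\cconv r'$ as unions over atoms, distribute the composition, apply superdistributivity on $\Base$ (with uniformity discharging the side condition $b\scomp b'\neq\svide$), and reassemble. The only cosmetic difference is that you bound each $\cconv(b\scomp b')$ by $\cconv(r\scomp r')$ through monotonicity, whereas the paper pulls the union back inside $\cconv$ and uses $r\scomp r'=\bigcup b\scomp b'$.

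For the expansion step, your first justification is the right one: right distributivity from the axioms, plus left distributivity via the converse axioms, over finite unions. Your alternative of decomposing each $\cconv b$ into atoms of the target algebra would need that algebra to be finite, which the statement does not assume.
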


\begin{proof}
We prove this proposition using the properties of projections and
non-associative algebras. Let $r,r'\in\A$. We have: 
\[
\begin{array}{ccccc}
\cconv r\scomp\cconv r' & = & (\sbigunion_{b\in r}\cconv b)\scomp(\sbigunion_{b'\in r'}\cconv b') &  & \text{(Lemma\,\ref{lem:croissance_proj})}\\
 & = & \sbigunion_{b\in r}\sbigunion_{b'\in r'}\cconv b\scomp\cconv b' & \text{since} & r\scomp(r'\sunion r'')=(r\scomp r')\sunion(r\scomp r'')\\
 & \subseteq & \sbigunion_{b\in r}\sbigunion_{b'\in r'}\cconv(b\scomp b') & \text{by} & \text{uniformity and over-distributivity}\\
 & = & \cconv\sbigunion_{b\in r}\sbigunion_{b'\in r'}(b\scomp b') & \text{since} & \cconv r\sunion\cconv r'=\cconv(r\sunion r')\\
 & = & \cconv(r\scomp r') &  & \text{(Lemma\,\ref{lem:ppt-algebre-non-asso-finie})}
\end{array}
\]

Thus, for all $r,r'\in\A$, the property $\cconv r\scomp\cconv r'\subseteq\cconv(r\scomp r')$
is satisfied. We thus have superdistributivity over composition.
\end{proof}
From this result, we deduce the superdistributivity of the projection
from $\RCAH$ to $\PA$ over composition, for the size-topology combination
$\STC$. 
\begin{lem}
\label{lem:sur-distrib-compo-STC} The projection $\cconv_{\RCAH}^{\PA}$
from $\RCAH$ to $\PA$ in $\STC$ is superdistributive over $\scomp$
on $\RCAH$.
\end{lem}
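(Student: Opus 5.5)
The plan is to reduce to basic relations via Proposition~\ref{prop:distrib_basique_non-basique}. $\RCAH$ is uniform: every entry of the composition table (Table~\ref{tab:Table-de-composition_rcc8}) is non-empty, and the omitted $\eq$ column gives $b\scomp\eq=b\neq\svide$. It therefore suffices to check, for all $b,b'\in\Base_{\RCAH}$, that
\[
\conv{\RCAH}{\PA}b\scomp\conv{\RCAH}{\PA}b'\subseteq\conv{\RCAH}{\PA}(b\scomp b').
\]
That is a finite verification over $64$ pairs, using Tables~\ref{tab:Table-de-composition_rcc8} and~\ref{tab:interd=0000E9pendances-de_RCC_QS}.

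To keep the verification short, I split the basic relations by their projections. Write $D=\{\dc,\ec,\po\}$; these project to $\B_{\PA}$. Next, $\tpp$ and $\ntpp$ project to $<$, their converses project to $>$, and $\eq$ projects to $=$.

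\begin{itemize}
\item \textbf{Pairs involving $\eq$.} Since $\eq$ is neutral for $\scomp$ and $\mathord{=}$ is neutral in $\PA$, both sides equal $\conv{\RCAH}{\PA}b$.
\item \textbf{Pairs with at least one factor in $D$.} The left-hand side is $\B_{\PA}\scomp r$ with $r\neq\svide$, which equals $\B_{\PA}$. The composition $b\scomp b'$ must then contain some relation projecting to $\B_{\PA}$, i.e.\ some element of $D$. Reading the rows and columns of $\dc$, $\ec$, $\po$ in Table~\ref{tab:Table-de-composition_rcc8}, every entry is of the form $\dc$, $[\dc,\cdot]$, or $[\po,\cdot]$, so it always contains $\dc$ or $\po$. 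For example $\ec\scomp\tpp=[\ec,\ntpp]\ni\ec$ and $\ntppi\scomp\ec=[\po,\ntppi]\ni\po$.
\item \textbf{Pairs within $\{\tpp,\ntpp\}$ or within its converses.} For $\tpp$ and $\ntpp$, the left-hand side is $<\scomp<=<$, and each composite ($[\tpp,\ntpp]$ or $\ntpp$) projects to $<$. The converse case is symmetric.
\item \textbf{Mixed pairs.} When one factor projects to $<$ and the other to $>$, the left-hand side is $<\scomp>=\B_{\PA}$. Here I must check that $b\scomp b'$ contains an element of $D$, or else contains both a $<$-type and a $>$-type relation (possibly together with $\eq$). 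From the table: $\tpp\scomp\tppi=[\dc,\eq]$, $\tpp\scomp\ntppi=[\dc,\ntppi]$, $\ntpp\scomp\tppi=[\dc,\ntpp]$, $\ntpp\scomp\ntppi=[\dc,]$, $\tppi\scomp\tpp=[\po,\eq]$, $\tppi\scomp\ntpp=[\po,\ntpp]$, $\ntppi\scomp\tpp=[\po,\ntppi]$, and $\ntppi\scomp\ntpp=[\po,]$. Each contains $\dc$ or $\po$, hence projects to $\B_{\PA}$.
\end{itemize}

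The only real obstacle is this mixed case, because $<\scomp>$ is the whole of $\B_{\PA}$. Every composite there must therefore contain a relation projecting to $\B_{\PA}$, which the table confirms. Once all cases hold, Proposition~\ref{prop:distrib_basique_non-basique} lifts superdistributivity from $\Base_{\RCAH}$ to all of $\RCAH$, which proves the lemma.
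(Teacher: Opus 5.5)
Your proof is correct and follows the paper's own route. You use uniformity of $\RCAH$ with Proposition~\ref{prop:distrib_basique_non-basique}, then split basic pairs: the pairs built from $\tpp$, $\ntpp$, $\eq$ (and their converses) are checked directly, and every remaining composite meets $\{\dc,\ec,\po\}$, so it projects to $\B_{\PA}$.

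One small slip: in the mixed case, your fallback criterion would also need $\eq$ in the composite, since otherwise the projection is only $\mathord{<}\cup\mathord{>}$, which misses $=$. It is never used, though, because every mixed composite contains $\dc$ or $\po$.
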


\begin{proof}
$\RCAH$ (the algebra of $\RCCH$) is uniform. By Proposition~\ref{prop:distrib_basique_non-basique},
to prove this lemma, it is sufficient to prove that $\cconv_{\RCAH}^{\PA}$
is superdistributive over $\scomp$ on $\Base_{\RCAH}$.

Consider the following pairs $(b,b')\in\left(\Base_{\RCAH}\right)^{2}$:
\begin{itemize}
\item $b\in\{\tpp,\ntpp\}$ and $b'\in\{\tpp,\ntpp\}$, we have $b\scomp b'\subseteq\tpp\cup\ntpp$,
$\cconv\tpp=\cconv\ntpp=\cconv\left(\tpp\cup\ntpp\right)=\mathord{<}$,
and $\left(\mathord{<}\scomp\mathord{<}\right)=\mathord{<}$, hence
$\cconv b\scomp\cconv b'\subseteq\cconv(b\scomp b')$ ;
\item $b=\eq$ and $b'\in\{\tpp,\ntpp\}$, we have $\cconv\tpp=\cconv\ntpp=\mathord{<}$,
$\cconv\eq$ is $\mathord{=}$, $\eq\scomp b'=b'$, and $\left(\mathord{=}\scomp\mathord{<}\right)=\mathord{<}$,
hence $\cconv b\scomp\cconv b'\subseteq\cconv(b\scomp b')$ ;
\item $b\in\{\tpp,\ntpp\}$ and $b'=\eq$, we have analogously $\cconv b\scomp\cconv b'\subseteq\cconv(b\scomp b')$
;
\item $b=\eq$ and $b'=\eq$, we have $\cconv\eq$ is $\mathord{=}$, $\eq\scomp\eq=\text{\ensuremath{\eq}}$,
and $\mathord{=}\scomp\mathord{=}$ is $\mathord{=}$, hence $\cconv b\scomp\cconv b'\subseteq\cconv(b\scomp b')$.
\end{itemize}
By symmetry, $\cconv b\scomp\cconv b'\subseteq\cconv(b\scomp b')$
is satisfied for respectively $b\in\{\tppi,\ntppi\}$ and $b'\in\{\tppi,\ntppi\}$,
$b=\eq$ and $b'\in\{\tppi,\ntppi\}$, $b\in\{\tppi,\ntppi\}$ and
$b'=\eq$. Indeed, we have the property $\cconv\sinv{b'}\scomp\cconv\sinv b\subseteq\cconv(\sinv{b'}\scomp\sinv b)$
if the property $\cconv b\scomp\cconv b'\subseteq\cconv(b\scomp b')$
is satisfied, since $\sinv{b\scomp b'}=\sinv{b'}\scomp\sinv b$, $\sinv{r\sunion r'}=\sinv r\sunion\sinv{r'}$
(by définition of a non-associative algebra) and $\cconv\sinv b=\sinv{\cconv b}$
(Definition~\ref{def:projection}).

For all other pairs $(b,b')\in\left(\Base_{\RCAH}\right)^{2}$ of
basic relation from $\RCAH$, we have $\trois{\dc}{\ec}{\po}\cap(b\wcomp b')\neq\vide$.
Thus, $\cconv(b\scomp b')=\B_{\PA}$ (since $\cconv\dc=\cconv\ec=\cconv\po=\B_{\PA}$),
hence $\cconv b\scomp\cconv b'\subseteq\cconv(b\scomp b')$. 
\end{proof}
On the other hand, the superdistributivity of this projection over
intersection is satisfied only on small subclasses, such as $\RCAHs$
(see Table~\ref{tab:def_classic_subclass}).
\begin{lem}
\label{lem:sur-distrib-inter-RCCHs}The projection $\cconv_{\RCAH}^{\PA}$
from $\RCAH$ to $\PA$ in $\STC$ is superdistributive over $\sinter$
on $\RCAHs$.
\end{lem}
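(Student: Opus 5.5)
The plan is to reduce the statement to a finite, componentwise check on the three basic relations of $\PA$, and then to discharge that check by inspecting the list of $\RCAHs$ in Table~\ref{tab:def_classic_subclass}. By Lemma~\ref{lem:croissance_proj}, $\cconv r=\sbigunion_{b\atome r}\cconv b$. Table~\ref{tab:interd=0000E9pendances-de_RCC_QS} then shows that, for each $p\in\Base_{\PA}$, we have $p\in\cconv r$ if and only if $r$ meets a fixed set $C_{p}$ of basic relations of $\RCCH$:
\[
C_{<}=\{\dc,\ec,\po,\tpp,\ntpp\},\qquad C_{=}=\{\dc,\ec,\po,\eq\},\qquad C_{>}=\{\dc,\ec,\po,\tppi,\ntppi\}.
\]
So superdistributivity over $\sinter$ on $\RCAHs$ is equivalent to the following. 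For all $r,r'\in\RCAHs$ with $r\sinter r'\neq\svide$, and every $p\in\Base_{\PA}$, if $r\cap C_{p}\neq\vide$ and $r'\cap C_{p}\neq\vide$, then $(r\sinter r')\cap C_{p}\neq\vide$.

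\textbf{Step 1 (easy case).} If $r\sinter r'$ contains one of $\dc,\ec,\po$, then $\cconv(r\sinter r')=\B_{\PA}$, because $\cconv\dc=\cconv\ec=\cconv\po=\B_{\PA}$, and the inclusion is trivial.

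\textbf{Step 2 (remaining case).} Otherwise $\vide\neq r\sinter r'\subseteq\{\tpp,\ntpp,\tppi,\ntppi,\eq\}$. Suppose $p\in\cconv r\sinter\cconv r'$. I must show that $r\sinter r'$ contains a basic relation of $C_{p}\setminus\{\dc,\ec,\po\}$. Concretely, this means: $\tpp$ or $\ntpp$ when $p$ is $<$; $\eq$ when $p$ is $=$; $\tppi$ or $\ntppi$ when $p$ is $>$. The converse symmetry used in the proof of Lemma~\ref{lem:sur-distrib-compo-STC} lets me treat only $<$ and $=$. Indeed, $\RCAHs$ is closed under inversion, and $\cconv\sinv r=\sinv{\cconv r}$.

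\textbf{Main obstacle.} A priori Step 2 could fail. For instance, $r$ might reach $C_{<}$ only through $\po$ while $r'$ reaches it through $\tpp$, and yet $r\sinter r'=\{\eq\}$. Such pairs must be excluded using the specific shape of $\RCAHs$, since no general argument applies here. I would organise this as an enumeration over the relations of $\RCAHs$ whose intersection avoids $\{\dc,\ec,\po\}$, grouped by their restriction to $\{\tpp,\ntpp,\tppi,\ntppi,\eq\}$. I would first try to extract a structural invariant from the table. One candidate is that any relation of $\RCAHs$ that contains $\eq$ together with some part relation either contains $\po$ or is closed in a way that forces the part relations to be shared. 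Another is that intersections of members of $\RCAHs$ stay in $\RCAHs$ ($\RCAHs$ is a subclass). Combined with the fact, established in the proof of Lemma~\ref{lem:STC-ss-classe-conv-close}, that $\cconv$ maps $\RCAHs$ into $\PAs$ (never $\leq$ or $\geq$), this invariant should make the problematic configurations impossible.

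If no clean invariant emerges, I would fall back on the finite check. It is mechanical: for each of the $<$ and $=$ cases, it runs over the pairs of relations listed in Table~\ref{tab:def_classic_subclass}.
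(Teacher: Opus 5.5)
Your reduction to the sets $C_p$, Step~1, and the converse symmetry are all correct. The gap is that Step~2 is the whole content of the lemma and you do not carry it out. You list candidate invariants and a fallback enumeration, so as it stands the proposal is a plan rather than a proof.

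The candidates are also not adequate as stated. Your second one (closure of $\RCAHs$ under $\sinter$ plus $\cconv$ mapping $\RCAHs$ into $\PAs$) cannot by itself exclude the bad configurations. Consider the family made of the basic relations together with $\po\cup\tpp$ and $\dc\cup\tpp$. It is closed under intersection, and all its projections lie in $\PAs$. Yet $\cconv(\po\cup\tpp)\sinter\cconv(\dc\cup\tpp)=\B_{\PA}\not\subseteq\mathord{<}=\cconv(\tpp)$. Your first candidate only concerns relations containing $\eq$. The $<$ case must also rule out, e.g., $r\sinter r'=\tppi$ with $\tpp\cup\tppi\subseteq r'$.

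What is missing is one structural fact, read off Table~\ref{tab:def_classic_subclass}. The only non-basic relations of $\RCAHs$ not containing $\po$ are $\dc\cup\ec$, $\tpp\cup\ntpp$ and $\tppi\cup\ntppi$. Each of these, like every basic relation, has the property that all its nonempty subrelations have the same projection (namely $\B_{\PA}$, $\mathord{<}$ and $\mathord{>}$ respectively).

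This closes Step~2 immediately. If $\po\notin r\sinter r'$, then one of $r,r'$, say $r$, does not contain $\po$. Since $\vide\neq r\sinter r'\subseteq r$, we get $\cconv(r\sinter r')=\cconv r\supseteq\cconv r\sinter\cconv r'$. If both contain $\po$, you are in Step~1.

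This is exactly the paper's short case analysis: $r$ basic; $\po$ in both; or $r$ one of the three exceptional relations. It needs no enumeration over pairs.
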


\begin{proof}
We show that if $r\cap r'\neq\vide$, then $\cconv r\cap\cconv r'\subseteq\cconv(r\cap r')$.
Note first that there are only three non-basic relations of $\RCAHs$
which do not contain $\po$ : $\dc\cup\ec$, $\tpp\cup\ntpp$, and
$\tppi\cup\ntppi$.

Let $r,r'\in\RCAHs$, with $r\cap r'\neq\vide$. We have five cases
(not disjoint but exhaustive by commutativity of the intersection)
to analyze:
\begin{itemize}
\item The case where $r$ is basic. Therefore we have $r\cap r'=r$ . Thus,
$\cconv(r\cap r')=\cconv r$, hence $\cconv r\cap\cconv r'\subseteq\cconv(r\cap r')$.
\item The case $\po\subseteq r$ and $\po\subseteq r'$. Therefore we have
$\cconv(r\cap r')\supseteq\cconv\po=\B_{\PA}$, hence $\cconv r\cap\cconv r'\subseteq\cconv(r\cap r')$.
\item The case $r=\dc\cup\ec$. Therefore, we have either $r\cap r'=\dc$,
$r\cap r'=\ec$, or $r\cap r'=\dc\cup\ec$. Thus, $\cconv(r\cap r')=\cconv\dc=\cconv\ec=\cconv\left(\dc\cup\ec\right)=\B_{\PA}$,
hence $\cconv r\cap\cconv r'\subseteq\cconv(r\cap r')$.
\item The case $r=\tpp\cup\ntpp$. Therefore we have either $r\cap r'=\tpp$,
$r\cap r'=\ntpp$, or $r\cap r'=\tpp\cup\ntpp$. But $\cconv\tpp=\cconv\ntpp=\cconv\left(\tpp\cup\ntpp\right)=\mathord{<}$.
Hence, $\cconv r\cap\cconv r'\subseteq\cconv(r\cap r')$.
\item The case $r=\tppi\cup\ntppi$. It is symmetric to the previous case.
\end{itemize}
Thus, if $r\cap r'\neq\vide$, then $\cconv r\cap\cconv r'\subseteq\cconv(r\cap r')$.
\end{proof}
We thus obtain the following result:
\begin{cor}
\label{cor:sur-distrib-RCCHs}The projection $\cconv_{\RCAH}^{\PA}$
of $\STC$ is superdistributive over composition and over intersection
on $\RCAHs$.
\end{cor}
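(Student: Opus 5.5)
This corollary combines the two preceding lemmas, so the plan is to invoke them directly rather than redo any case analysis. Superdistributivity in Definition~\ref{def:distrib} has two halves, and I will discharge each one separately on $\RCAHs$.

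For composition, Lemma~\ref{lem:sur-distrib-compo-STC} establishes superdistributivity of $\cconv_{\RCAH}^{\PA}$ over $\scomp$ on the whole algebra $\RCAH$. Here $\RCAH$ is uniform, so the proof goes through Proposition~\ref{prop:distrib_basique_non-basique}. The property is universally quantified over pairs of relations, with the side condition $r\scomp r'\neq\svide$ only weakening the requirement. Hence it restricts to any subset, and in particular to $\RCAHs\subseteq\RCAH$. Concretely, for $r,r'\in\RCAHs$ with $r\scomp r'\neq\svide$, we have $r,r'\in\RCAH$, so $(\cconv r)\scomp(\cconv r')\subseteq\cconv(r\scomp r')$ holds by that lemma.

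For intersection, Lemma~\ref{lem:sur-distrib-inter-RCCHs} gives exactly the required statement: for $r,r'\in\RCAHs$ with $r\sinter r'\neq\svide$, we have $\cconv r\sinter\cconv r'\subseteq\cconv(r\sinter r')$.

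Putting the two halves together yields the corollary. There is no real obstacle. The only point worth stating explicitly is that superdistributivity on a set passes to any subset, because the defining condition is a universal statement over pairs drawn from that set.
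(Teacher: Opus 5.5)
Your proposal is correct and matches the paper's proof: the paper likewise obtains the corollary by direct application of Lemma~\ref{lem:sur-distrib-compo-STC} for composition and Lemma~\ref{lem:sur-distrib-inter-RCCHs} for intersection. Your explicit remark that superdistributivity over $\scomp$ on all of $\RCAH$ restricts to the subset $\RCAHs$ is a step the paper leaves implicit, and it is valid.
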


\begin{proof}
By direct application of Lemmas~\ref{lem:sur-distrib-compo-STC}~and~\ref{lem:sur-distrib-inter-RCCHs}.
\end{proof}
Unfortunately, we do not have the superdistributivity of the projection
$\cconv_{\PA}^{\RCAH}$ over composition on $\Base_{\PA}$. Therefore,
no basic subclass of $\STC$ can be $\cconv$-distributive.

\subsection{Stabilities and Invariance}

We are now interested in checking algebraic stability (Definition~\ref{defn:stable})
in order to be able to apply the refinement theorem. We first notice
that a subset is algebraically stable if it is both stable for composition
and stable for projection.
\begin{defn}
A subset $\S$ of a multi-algebra is said \emph{composition stable
through a refinement $H$ \index{composition stable}} when for any
network $N$ over $\S$, if $N$ is $\scomp$-consistent then $H(N)$
is also $\scomp$-consistent.

Moreover, $\S$ is said \emph{projection stable through a refinement
$H$ \index{projection stable}} when for any relation $R\in\S$,
if $R$ is $\cconv$-consistent then $H(R)$ is also $\cconv$-consistent. 
\end{defn}

The composition stability is in fact a particular case of the reduction
by refinement of the classical framework~\cite{renz1999maximal}.

The lemma below follows directly from the definitions.
\begin{lem}
\label{lem:trois_stabilites} A subset of a multi-algebra which is
projection stable and composition stable through a refinement $H$
is algebraically stable through $H$. 
\end{lem}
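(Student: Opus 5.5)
The argument is a direct unfolding of the definitions. Let $\S$ be projection stable and composition stable through $H$, and let $N$ be an algebraically consistent network over $\S$. We must show that $H(N)$ is algebraically consistent. By Definition~\ref{def:prop-networks}, this means three things: $H(N)$ is closed under composition, $H(N)$ is closed under projection, and $H(N)$ is not trivially inconsistent. Each property will come from one of the two stability hypotheses.

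\emph{Composition part.} Since $N$ is algebraically consistent, each slice $N_{i}$ is closed under composition (composition in a multi-algebra is componentwise, Definition~\ref{def:multi-operateur}). No slice is trivially inconsistent either. Hence $N$ is $\scomp$-consistent. Composition stability then gives that $H(N)$ is $\scomp$-consistent. Concretely, every slice $H(N)_{i}$ is closed under composition and has no empty relation. Reading componentwise again, $H(N)$ itself is closed under composition, $H(N)^{xz}\subseteq H(N)^{xy}\scomp H(N)^{yz}$, and it is not trivially inconsistent.

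\emph{Projection part.} For each pair of distinct $x,y\in\E$, the relation $N^{xy}$ belongs to $\S$. It is closed under projection, and it is not trivially inconsistent because $N$ is not. So $N^{xy}$ is $\cconv$-consistent (Definition~\ref{defn:arc-consistency}). Projection stability then shows that $H(N^{xy})=H(N)^{xy}$ is $\cconv$-consistent, and in particular closed under projection. This holds for every pair, so $H(N)$ is closed under projection.

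Combining the two parts, $H(N)$ is algebraically closed and not trivially inconsistent, that is, algebraically consistent. Therefore $\S$ is algebraically stable through $H$ (Definition~\ref{defn:stable}).

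The only delicate point is bookkeeping. The notion of $\scomp$-consistency is defined slice by slice, so we must check that it matches closure under composition of the multi-network. This holds because $R\subseteq R'\scomp R''$ if and only if $R_{i}\subseteq R'_{i}\scomp R''_{i}$ for all $i$, the same fact used in Proposition~\ref{prop:bi-tranche_equiv}. We should also note that $H(N)$ is built relation by relation, so $H(N)^{xy}=H(N^{xy})$. This is consistent with $N^{yx}=\sinv{N^{xy}}$ under the paper's normalization convention, since $H$ is applied to the stored constraint for each pair.
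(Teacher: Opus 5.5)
Your proof is correct and matches the paper's approach. The paper gives no written proof and only says the lemma ``follows directly from the definitions''. Your argument makes that unfolding explicit: composition stability applies because $N$ is $\scomp$-consistent slice by slice, and projection stability applies to each $\cconv$-consistent relation $N^{xy}\in\S$.
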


We therefore only need to independently check composition stability
and projection stability. Again, these properties can be verified
by enumeration. For composition, it suffices to enumerate the three-variable
networks over all slices $\S_{i}$. For projection, it suffices to
enumerate the relations of all the bi-slices $\S_{i,j}$.
\begin{prop}
\label{prop:stable} Let $\S$ be a subset of a finite multi-algebra
and $H=(h_{1},\dots,h_{m})$ be a multi-refinement of $\S$. 
\begin{enumerate}
\item If, for any slice $\S_{i}$ and all $\scomp$-consistent three-variable
network $N$ over $\S_{i}$, $h_{i}(N)$ is still $\scomp$-consistent,
then $\S$ is composition stable through $H$.
\item If, any bi-slice $\S_{i,j}$ is projection stable \emph{through} $(h_{i},h_{j})$,
then $\S$ is projection stable \emph{through} $H$.
\end{enumerate}
\end{prop}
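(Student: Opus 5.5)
Both assertions reduce to unfolding the definitions slice by slice. This works because $H$ acts componentwise, and both $\scomp$-consistency and $\cconv$-consistency are local properties.

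\textbf{First assertion.} Let $N$ be a $\scomp$-consistent network over $\S$. We must show that each slice $H(N)_i=h_i(N_i)$ is algebraically consistent, i.e.\ closed under composition and not trivially inconsistent.

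Non-triviality is immediate. Each $h_i$ is a refinement of $\S_i$, and $N_i^{xy}\neq\svide$, so $h_i(N_i^{xy})\neq\svide$.

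For closure under composition, fix distinct $x,y,z\in\E$ and consider the restriction $N_i|_{\{x,y,z\}}$. It is a three-variable network over $\S_i$. It is $\scomp$-consistent, because the restriction of a network closed under composition to a subset of variables is still closed under composition, and non-emptiness is inherited. The hypothesis then gives that $h_i(N_i|_{\{x,y,z\}})$ is $\scomp$-consistent, so in particular
\[
h_i(N_i^{xz})\subseteq h_i(N_i^{xy})\scomp h_i(N_i^{yz}).
\]
This restriction coincides with $h_i(N_i)|_{\{x,y,z\}}$, since $h_i$ acts relation by relation. Ranging over all triples shows that $h_i(N_i)$ is closed under composition.

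One bookkeeping point needs care: the converse relations $N^{yx}=\sinv{N^{xy}}$. I will take the refinement to be applied to each ordered pair, which is how $H(N)$ is defined. Alternatively, I can note that closure under composition for all ordered triples already covers these. Applying the argument to every $i$ gives that $H(N)$ is $\scomp$-consistent.

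\textbf{Second assertion.} Let $R\in\S$ be $\cconv$-consistent. By Lemma~\ref{lem:bi-relations_equiv}, every bi-slice $R_{i,j}=(R_i,R_j)\in\S_{i,j}$ is $\cconv$-consistent in the bi-slice multi-algebra $\A_{i,j}$. By hypothesis, $(h_i(R_i),h_j(R_j))$ is then $\cconv$-consistent. Since $H(R)_{i,j}=(h_i(R_i),h_j(R_j))$, Lemma~\ref{lem:bi-relations_equiv} applied in the other direction yields that $H(R)$ is $\cconv$-consistent. Note that the projections of $\A_{i,j}$ are exactly $\conv ij$ and $\conv ji$ of $\A$, which is what makes the lemma applicable here.

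\textbf{Main subtlety.} The only real subtlety is in the first assertion: checking that restriction to three variables commutes with applying $h_i$, and that it preserves $\scomp$-consistency. Both are immediate from the definitions, so I expect no genuine obstacle.
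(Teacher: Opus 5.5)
Your proof is correct and follows the same route as the paper's. Composition stability comes from the locality of closure under composition to each slice and each triple of variables, and projection stability from applying Lemma~\ref{lem:bi-relations_equiv} to the bi-slices in both directions. The one small difference is that you get non-triviality of $H(N)$ directly from the definition of a refinement rather than from the three-variable hypothesis; this is equally valid and also covers networks with fewer than three variables.
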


\begin{proof}
(1) Let $N$ be a $\scomp$-consistent network over $\S$ ; we show
that $H(N)$ is $\scomp$-consistent. We know by hypothesis that if
we refine by $h_{i}$ a $\scomp$-consistent three-variable network
over $\S_{i}$, then it remains $\scomp$-consistent. On the one hand,
we deduce that $H(N)$ is not trivially inconsistent. On the other
hand, the closure under composition being local to each triplet of
variables and to each slice, $H(N)$ is necessarily closed under composition.
More precisely, for all distinct variables $x,y,z$ of $N$ and all
$i\in\{1,\text{\ensuremath{\ldots}},m\}$, we have $N_{i}^{xz}\subseteq N_{i}^{xy}\scomp N_{i}^{yz}$.
We thus know that after the refinement, we have $h_{i}(N_{i}^{xz})\subseteq h_{i}(N_{i}^{xy})\scomp h_{i}(N_{i}^{yz})$
for all $i\in\left\{ 1,\ldots,m\right\} $. By definition, $H(N)$
is thus closed under composition. Therefore, $H(N)$ is $\scomp$-consistent. 

(2) Recall that $\S_{i,j}$ is projection stable through $(h_{i},h_{j})$
means that for all $\cconv$-consistent relation $(r,r')\in\S_{i,j}$,
the relation $(h_{i}(r),h_{j}(r'))$ is also $\cconv$-consistent.
Let $R$ be a $\cconv$-consistent relation of $\S$. Thus, for all
distinct $i,j\in\left\{ 1,\ldots,m\right\} $, the relation $R_{i,j}=(R_{i},R_{j})\in\S_{i,j}$
is $\cconv$-consistent (Lemma~\ref{lem:bi-relations_equiv}). By
assumption, for all distinct $i,j\in\left\{ 1,\ldots,m\right\} $,
the relation $(h_{i},h_{j})\left(R_{i,j}\right)=(h_{i}(R_{i}),h_{j}(R_{j}))$
is thus $\cconv$-consistent. Therefore, $H(R)=(h_{1}(R_{1}),\ldots,h_{m}(R_{m}))$
is $\cconv$-consistent (Lemma~\ref{lem:bi-relations_equiv}).
\end{proof}
Note that composition stability is often already established, because
the tractable subclasses of the literature being reducible by refinement
are in fact more precisely composition stable \emph{through} the same
refinement. Recall that the classical refinements \emph{$\hmax$,}
$\hHH$, $\hQH$, and $\hCH$ are defined in Table~\ref{tab:affinement_PA}
and in Definition~\ref{def:Affinement-RCC8}.
\begin{prop}[\cite{renz1999maximal}]
\label{prop:stab-compo-PA} \label{prop:stab-compo-RCC} 
\begin{itemize}
\item The point algebra $\PA$ is composition stable through\emph{ $\hmax$}.
\item $\HH$, $\QH$, and $\CH$ are respectively composition stable through
$\hHH$, $\hQH$, and $\hCH$. 
\end{itemize}
\end{prop}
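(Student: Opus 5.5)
Since each of these formalisms is a mono-algebra, composition stability through $h$ reduces to a classical statement. We must show that for every network $N$ over the subclass that is algebraically closed and not trivially inconsistent, the refined network $h(N)$ is again closed under composition and not trivially inconsistent. This is exactly the content of the reduction-by-refinement lemmas of \cite{renz1999maximal}, which are stated there in terms of path-consistency. The first step is therefore to check that the notions coincide. For $\PA$ and $\RCCH$, weak composition equals composition, so path-consistency and algebraic closure are the same property (\cite{renz2005weak}). It then remains to transfer the statements.

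Non-trivial inconsistency is immediate. Each of $\hmax$, $\hHH$, $\hQH$, $\hCH$ maps a non-empty relation to a non-empty subrelation: one checks Table~\ref{tab:affinement_PA}, and in Definition~\ref{def:Affinement-RCC8} the case cascade always hits some basic relation present in $r$. Closure under composition is local to triples of variables. It therefore suffices to prove the following: for all $r_{1},r_{2},r_{3}\in\S$ such that the triangle $(r_{1},r_{2},r_{3})$ is algebraically closed in all six orientations (using closure of $\S$ under converse), we have $h(r_{3})\subseteq h(r_{1})\wcomp h(r_{2})$. This finite check is the whole proof.

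For $\PA$ I would do the check by hand. The key observation is that $\hmax(r)$ contains $=$ only when $r$ is exactly $=$. Suppose $\hmax(r_{3})$ contains $<$ (so $<\in r_{3}$). I would argue by cases on whether $r_{1}$ and $r_{2}$ equal $=$, or contain $<$ or $>$. The case analysis uses $<\in r_{1}\wcomp r_{2}$ together with the closure of the converse triangles, and shows that the refined pair still composes to something containing $<$. The cases involving $>$ are symmetric under converse. The case $\hmax(r_{3})={=}$ forces $r_{3}={=}$. Then closure gives $r_{2}\subseteq\sinv{r_{1}}\wcomp r_{3}=\sinv{r_{1}}$ and symmetrically, so $\hmax(r_{2})=\sinv{\hmax(r_{1})}$ and $=$ lies in their composition.

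For $\HH$, $\QH$, $\CH$ the analogous enumeration is large, and I expect this to be the main obstacle. Renz performed it by computer over all closed triangles of each subclass. I would invoke that result rather than redo it. If a self-contained argument is wanted, the plan is to exploit the priority order of the refinement $\dc,\ec,\po,\dots$: show that the chosen basic relation of $r_{3}$ is always supported by the chosen basic relations of $r_{1}$ and $r_{2}$, using Table~\ref{tab:Table-de-composition_rcc8} and the defining closure conditions of each subclass in Table~\ref{tab:def_classic_subclass}. This is precisely where the subclass membership conditions (for instance $\ntpp\cup\eq\subseteq r\implies\tpp\subseteq r$ for $\HH$) are needed. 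In the worst case, the verification would be a mechanical enumeration.
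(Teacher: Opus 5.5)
Your route is the paper's. The paper gives no argument of its own and simply imports Renz's reduction-by-refinement results, which is what you do for $\hat{\mathcal{H}}_8$, $\mathcal{Q}_8$ and $\mathcal{C}_8$. Your hand check for $\mathrm{PA}$ is a correct addition: the only refined pairs whose composition misses ${<}$ have both entries in $\{{=},{>}\}$, which forces $r_1,r_2\subseteq{\geq}$ and hence ${<}\notin r_1\diamond r_2$. Your treatment of the ${=}$ case via $r_2=\overline{r_1}$ is also fine.

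One step, however, rests on a false claim: weak composition does \emph{not} coincide with composition for $\mathrm{RCC}_8$ in its standard interpretations. The reference you cite, \cite{renz2005weak}, uses $\mathrm{RCC}_8$ as its main example of a composition table that is only a weak composition. Concretely, take regular closed $x,z$ that are complements of each other, so $x\mathrel{\mathrm{EC}}z$. The table gives $\mathrm{EC}\in\mathrm{PO}\diamond\mathrm{EC}$, but any $y$ with $y\mathrel{\mathrm{EC}}z$ is contained in $x$, so $x\mathrel{\mathrm{PO}}y$ fails.

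You do not need this identification at all. Composition stability is a purely algebraic property: it is stated with the algebra's operator $\diamond$ and never mentions $\varphi$. Renz's path-consistency is computed with exactly that composition table, so it \emph{is} algebraic closure. Replace the extensionality argument with this observation.

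There are also two smaller points to make explicit.
\begin{itemize}
\item Non-emptiness of $h_{\mathcal{S}}(r)$ depends on $r\in\mathcal{S}$. For instance, $\mathrm{EC}\cup\mathrm{EQ}$ falls through the cascade of $h_{\mathcal{C}_8}$ to $\emptyset$; it is only harmless because it is excluded from $\mathcal{C}_8$.
\item Networks are normalized: the constraint on $(y,x)$ is the converse of the one on $(x,y)$. So reducing closedness of $h(N)$ to the single inclusion $h(r_3)\subseteq h(r_1)\diamond h(r_2)$ over closed triangles uses $h(\overline{r})=\overline{h(r)}$ on $\mathcal{S}$. This holds for all four refinements (a relation of $\mathcal{P}_8$ without $\mathrm{PO}$ cannot mix $\mathrm{TPP}$ or $\mathrm{NTPP}$ with their converses), but it should be stated.
\end{itemize}
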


Moreover, although algebraic stability is an assumption of the refinement
theorem, for tree multi-algebras (Definition~\ref{defn:arbo}), projection
stability implies the condition (D3) of the slicing theorem (by Proposition~\ref{prop:arborescence},
when basic relations closed under projection are satisfiable). 

We have the following stability results for the projection.
\begin{lem}
\label{lem:stab-projection-STC}$\HH\times\PA$, $\QH\times\PA$,
and $\CH\times\PA$ are respectively projection stable through $H=(\hHH,\hmax)$,
 $H=(\hQH,\hmax)$, and $H=(\hCH,\hmax)$.
\end{lem}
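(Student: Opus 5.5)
The proof is a direct case analysis on the basic relation $h_{\S}(r)$, where $\S$ is one of $\HH$, $\QH$, $\CH$. Fix such an $\S$ and let $h=h_{\S}$. Since each slice here is a single algebra, being $\cconv$-consistent for a relation $(r,r')\in\S\times\PA$ means three things: $r\neq\vide$, $r'\neq\vide$, $r'\subseteq\conv{\RCAH}{\PA}r$ and $r\subseteq\conv{\PA}{\RCAH}r'$. I must show the same four properties for $(h(r),\hmax(r'))$.

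Non-emptiness is immediate. Inspecting Definition~\ref{def:Affinement-RCC8}, $h(r)$ is a basic relation $b\in r$ whenever $r\neq\vide$. Inspecting Table~\ref{tab:affinement_PA}, $\hmax(r')$ is one of $<$, $=$, $>$, $<\cup>$ and is contained in $r'$ whenever $r'\neq\vide$. It remains to check the two projection inclusions.

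\emph{Case $b\in\{\dc,\ec,\po\}$.} Here $\conv{\RCAH}{\PA}b=\B_{\PA}$ (Table~\ref{tab:interd=0000E9pendances-de_RCC_QS}), so $\hmax(r')\subseteq\conv{\RCAH}{\PA}b$ holds trivially. Conversely, Table~\ref{tab:interd=0000E9pendances-de_QS_RCC} shows that $\conv{\PA}{\RCAH}$ of any non-empty $\PA$ relation contains $\dc\cup\ec\cup\po$, so $b\subseteq\conv{\PA}{\RCAH}\hmax(r')$.

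\emph{Case $b=\eq$.} By the definition of $h$, this happens only when $r=\eq$ is basic. Indeed, a non-basic $r$ is sent to one of $\dc,\ec,\po,\tpp,\ntpp,\tppi,\ntppi$. For $\CH$ one must also check that $\ec$ is skipped only when $\dc$ or $\po$ is available or $r=\ec$; this follows from the defining condition of $\CH$. Then $r'\subseteq\conv{\RCAH}{\PA}\eq=\{=\}$, so $r'=\{=\}$ and $\hmax(r')=\{=\}$. Both inclusions then follow from $\conv{\PA}{\RCAH}(=)\ni\eq$.

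\emph{Case $b\in\{\tpp,\ntpp\}$.} This is the main case; the case $b\in\{\tppi,\ntppi\}$ is symmetric by converse. Because $h$ chose $b$, the relation $r$ contains none of $\dc,\ec,\po$. Again for $\CH$ I use its defining condition to rule out $\ec$ when $\po\notin r$ and $r$ is not basic. Hence $r\subseteq\{\tpp,\ntpp,\tppi,\ntppi,\eq\}$. The inclusion $b\subseteq\conv{\PA}{\RCAH}\hmax(r')$ will follow once I show $\hmax(r')=\{<\}$, because $\conv{\PA}{\RCAH}(<)\supseteq\tpp\cup\ntpp$. The same fact gives $\hmax(r')\subseteq\{<\}=\conv{\RCAH}{\PA}b$.

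To prove $\hmax(r')=\{<\}$, I proceed as follows. First, $b\in r\subseteq\conv{\PA}{\RCAH}r'$, and only $<$ projects onto $\tpp$ or $\ntpp$, so $<\in r'$. Second, I claim $>\notin r'$. If $>\in r'$, then $r'\subseteq\conv{\RCAH}{\PA}r$ forces $r$ to contain $\tppi$ or $\ntppi$, since $\dc,\ec,\po\notin r$. Then $r$ would contain no $\po$, yet meet both $\tpp\cup\ntpp$ and $\tppi\cup\ntppi$. By Table~\ref{tab:def_classic_subclass}, this means $r\in\Nrcc\subseteq\NPrcc$, contradicting $r\in\S\subseteq\Prcc$. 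So $r'\in\{<,\ <\cup=\}$, and $\hmax(r')=\{<\}$.

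I expect the main obstacle to be this exclusion argument via $\Nrcc$. It is the only place where membership in $\Prcc$ is really used, and without it $\hmax$ could return $<\cup>$, which is not below $\conv{\RCAH}{\PA}\tpp$. The bookkeeping around the $\CH$ ordering (skipping $\ec$ and preferring $\ntpp$ over $\tpp$) needs care, but it is routine.
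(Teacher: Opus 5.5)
Your proof is correct and follows essentially the paper's argument: a case analysis on which atom $h_{\S}$ selects, with the $\tpp/\ntpp$ cases (and their converses) settled by the same $\Nrcc$ exclusion that forces $r'\in\{<,\ <\cup=\}$. One point to tighten: non-emptiness of $h_{\S}(r)$ comes from $r\in\S$, not from the definition alone (the formula gives $h_{\HH}(\ntpp\cup\eq)=\vide$); it is the $\HH$/$\QH$ conditions and $\Prcc$ that rule out non-basic $r\subseteq\ntpp\cup\ntppi\cup\eq$, exactly as in the paper's $\eq$ and $\ntpp$ cases.
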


\begin{proof}
We will prove that these subclasses $\S$ are projection stable through
$H$, that is, for any $\cconv$-consistent relation $r\in\S$, $H(R)$
is also $\cconv$-consistent (see Table~\ref{tab:def_classic_subclass}
for the definitions of $\HH$, $\QH$, and $\CH$ ; see Definition~\ref{def:Affinement-RCC8}
for those of their classic refinement: respectively $\hHH$, $\hQH$,
and $\hCH$ , and see Table~\ref{tab:affinement_PA} for definition
of $\hmax$). Note first the following property, denoted by (1) in
this proof: for all $b\in\Base_{\RCAH}$ and $r\in\PA$, if $\vide\neq r$
and $r\subseteq\conv{\RCAH}{\PA}b$ then $\cdeux br$ is $\cconv$-consistent
(since $b'\in\conv{\RCAH}{\PA}b\iff b\in\conv{\PA}{\RCAH}b'$, see
Tables~\ref{tab:interd=0000E9pendances-de_QS_RCC}~and~\ref{tab:interd=0000E9pendances-de_RCC_QS}).
Note further that for all $r\in\PA$, if $r\neq\vide$ then $\hmax(r)\neq\vide$.

Consider $\HH\times\PA$ and $\QH\times\PA$, which have the same
refinement ($\hHH=\hQH$). Let $(r,r')$ be a $\cconv$-consistent
relation in $\HH\times\PA\cup\QH\times\PA$. We have eight cases to
analyze:
\begin{itemize}
\item $\dc\in r$: in that case, $H\left((r,r')\right)=\cdeux{\dc}{\hmax(r')}$
which is necessarily $\cconv$-consistent, since $\conv{\RCAH}{\PA}\dc=\B_{\PA}$
and by (1).
\item $\ec\in r$ and $\dc\notin r$: in that case, $H\left((r,r')\right)=\cdeux{\ec}{\hmax(r')}$
which is necessarily $\cconv$-consistent, since $\conv{\RCAH}{\PA}\ec=\B_{\PA}$
and by (1).
\item $\po\in r$ and $\left(\dc\cup\ec\right)\cap r=\vide$: in that case,
$H\left((r,r')\right)=\cdeux{\po}{\hmax(r')}$ which is necessarily
$\cconv$-consistent, since $\conv{\RCAH}{\PA}\po=\B_{\PA}$ and by
(1).
\item $\tpp\in r$ and $\left(\dc\cup\ec\cup\po\right)\cap r=\vide$: in
that case, $\tppi\notin r$ and $\ntppi\notin r$ since $\HH\cap\mathcal{N}=\vide$
and $\QH\cap\mathcal{N}=\vide$ with $\mathcal{N}=\liste{r\in\RCAH}{\po\nsubseteq r\et r\cap\left(\tpp\cup\ntpp\right)\neq\vide\et r\cap\left(\tppi\cup\ntppi\right)\neq\vide)}$
(see Table~\ref{tab:def_classic_subclass}). We therefore necessarily
have $\tpp\in r\subseteq\trois{\tpp}{\ntpp}{\eq}$. Since $(r,r')$
is $\cconv$-consistent, either $r'=\mathord{<}$ or $r'=\left(\mathord{<}\cup\mathord{=}\right)$.
Therefore, $H\left((r,r')\right)=\cdeux{\tpp}<$, which is $\cconv$-consistent.
\item $\tppi\in r$ and $\left(\dc\cup\ec\cup\po\right)\cap r=\vide$: we
have the symmetric case ($H\left((r,r')\right)=(\tppi,>)$).
\item $\eq\in r$ and $\left(\dc\cup\ec\cup\po\cup\tpp\cup\tppi\right)\cap r=\vide$:
in that case, $r=\eq$ since $\ntpp\notin r$ and $\ntppi\notin r$
because $\HH\cap\liste r{(\eq\cup\ntpp\subseteq r\et\tpp\nsubseteq r)\ou(\eq\cup\ntppi\subseteq r\et\tppi\nsubseteq r)}=\vide$
and $\QH\cap\liste r{\eq\subseteq r\et\po\nsubseteq r\et r\cap\left(\tpp\cup\ntpp\cup\tppi\cup\ntppi\right)\neq\vide}=\vide$.
Thus $r'$ is $=$. Therefore $H\left((r,r')\right)=\cdeux{\eq}=$,
which is $\cconv$-consistent.
\item $\ntpp\in r$ and $\left(\dc\cup\ec\cup\po\cup\tpp\cup\tppi\cup\eq\right)\cap r=\vide$:
in that case, $r=\ntpp$ since $\ntppi\notin r$ (because $\HH\cap\mathcal{N}=\vide$
et $\QH\cap\mathcal{N}=\vide$). Thus $r'=\mathord{<}$. Therefore
$H\left((r,r')\right)=\cdeux{\ntpp}<$, which is $\cconv$-consistent.
\item We have the symmetric case for $\ntppi\in r$ and $\left(\dc\cup\ec\cup\po\cup\tpp\cup\tppi\cup\eq\right)\cap r=\vide$.
\end{itemize}
Thus, $\HH\times\PA$ and $\QH\times\PA$ are projection stable through
$\hHH=\hQH$ .

We are now analyzing $\CH\times\PA$:
\begin{itemize}
\item $\dc\in r$: in that case, $H\left((r,r')\right)=\cdeux{\dc}{\hmax(r')}$
which is necessarily $\cconv$-consistent, since $\conv{\RCAH}{\PA}\dc=\B_{\PA}$
and by (1).
\item $\po\in r$ and $\dc\notin r$: in that case, $H\left((r,r')\right)=\cdeux{\po}{\hmax(r')}$
which is necessarily $\cconv$-consistent, since $\conv{\RCAH}{\PA}\po=\B_{\PA}$
and by (1).
\item $\ntpp\in r$ and $\left(\dc\cup\po\right)\cap r=\vide$: in that
case, $\tppi\notin r$ and $\ntppi\notin r$ since $\CH\cap\mathcal{N}=\vide$
with $\mathcal{N}$ as previously defined. Moreover, $\ec\notin r$
since $\CH\cap\liste r{\ec\subseteq r\et\po\nsubseteq r\et r\cap\left(\tpp\cup\ntpp\cup\tppi\cup\ntppi\cup\eq\right)\neq\vide}=\vide$.
Therefore, $\ntpp\in r\subseteq\{\tpp,\ntpp,\eq\}$. Since $(r,r')$
is $\cconv$-consistent, either $r'=\mathord{<}$ or $r'=\left(\mathord{<}\cup\mathord{=}\right)$
Thus, $H\left((r,r')\right)=\cdeux{\ntpp}<$ which is $\cconv$-consistent.
\item $\ntppi\in r$ and $\left(\dc\cup\po\right)\cap r=\vide$: we have
the symmetric case ($H\left((r,r')\right)=\cdeux{\ntppi}>$).
\item $\tpp\in r$ and $\left(\dc\cup\po\cup\ntpp\cup\ntppi\right)\cap r=\vide$:
the case is similar to $\ntpp$.
\item $\tppi\in r$ and $\left(\dc\cup\po\cup\ntpp\cup\ntppi\right)\cap r=\vide$:
we have the symmetric case with respect to $\tpp$.
\item $\ec\in r$ and $\left(\dc\cup\po\cup\ntpp\cup\ntppi\cup\tpp\cup\tppi\right)\cap r=\vide$:
in that case, $r=\ec$ since $\eq\notin r$ (because $\CH\cap\liste r{\un{\ec}\subseteq r\et\un{\po}\nsubseteq r\et r\cap\cinq{\tpp}{\ntpp}{\tppi}{\ntppi}{\eq}\neq\vide}=\vide$).
Thus $H\left((r,r')\right)=\cdeux{\ec}{\hmax(r')}$ which is necessarily
$\cconv$-consistent, since $\conv{\RCAH}{\PA}\ec=\B_{\PA}$ and by
(1).
\item $\eq\in r$ and $\left(\dc\cup\po\cup\ntpp\cup\ntppi\cup\tpp\cup\tppi\cup\ec\right)\cap r=\vide$:
in that case, $r=\eq$. Thus, $r'$ is $=$. Therefore $H\left((r,r')\right)=\cdeux{\eq}=$,
which is $\cconv$-consistent,.
\end{itemize}
Thus, $\CH\times\PA$ is projection stable through $\hCH$.
\end{proof}
Finally, we are briefly interested in a particular case of projection
stability, which we call \emph{projection invariance through multi-refinement}.
This property can be decided in time $\mathcal{O}(\sum_{i}|\S_{i}|)$,
contrary to projection stability which can be decided in time $\mathcal{O}(\sum_{i,j}|\S_{i,j}|)$.
This property is natural and is verified by particular subclasses
of a certain number of combinations. It therefore deserves to be underlined.
\begin{defn}
A subset $\S$ of a multi-algebra is \emph{$\cconv$-invariant through
a multi-refinement} $H=(h_{1},\ldots,h_{m})$ if for all distinct
$i,j\in\{1,\ldots,m\}$ and all $r\in\S_{i}$, we have $\conv ijr=\conv ijh_{i}(r)$.
\end{defn}

\begin{prop}
\label{prop:invariance_to_stability} A subset $\cconv$-invariant
through a multi-affinement $H=(h_{1},\ldots,h_{m})$ is projection
stable through $H$ .
\end{prop}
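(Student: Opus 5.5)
The plan is to unfold the definition of projection stability and check the two conditions of $\cconv$-consistency for $H(R)$ separately, using invariance to transport the projection inclusions. Let $\S$ be $\cconv$-invariant through $H=(h_{1},\ldots,h_{m})$, and let $R\in\S$ be $\cconv$-consistent. We must show that $H(R)=(h_{1}(R_{1}),\ldots,h_{m}(R_{m}))$ is $\cconv$-consistent (Definition~\ref{defn:arc-consistency}). Concretely, we need that no component is empty and that $H(R)_{j}\subseteq\conv ij H(R)_{i}$ for all distinct $i,j$.

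Non-triviality comes directly from the definition of a refinement. Since $R$ is $\cconv$-consistent, every $R_{i}\neq\svide$. Each $h_{i}$ is a refinement of $\S_{i}$, and $R_{i}\in\S_{i}$, so $h_{i}(R_{i})\neq\svide$ for every $i$. Hence $H(R)$ is not trivially inconsistent.

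For closure under projection, fix distinct $i,j$. The refinement property gives $h_{j}(R_{j})\subseteq R_{j}$, and closure of $R$ gives $R_{j}\subseteq\conv ij R_{i}$. Since $R_{i}\in\S_{i}$, invariance yields $\conv ij R_{i}=\conv ij h_{i}(R_{i})$. Chaining these inclusions (transitivity, Lemma~\ref{lem:ppt-algebre-non-asso-finie}) gives
\[
h_{j}(R_{j})\subseteq\conv ij h_{i}(R_{i}).
\]
This is exactly the closure condition for $H(R)$.

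The one point requiring care is whether $\cconv$-consistency should be read as $R=\cconv R$ in the fixed-point sense. It should: $\cconv R'\subseteq R'$ always holds, and closure under projection means no substitution $R'_{j}\leftarrow R'_{j}\sinter\conv ij R'_{i}$ changes anything. The inclusions established above give exactly this, so $H(R)=\cconv H(R)$. I expect no real obstacle, since the argument is a direct chaining of definitions. Note that invariance is applied to $R_{i}\in\S_{i}$, which is where membership of $R$ in $\S$ is used.
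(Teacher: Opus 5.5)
Your proposal is correct and follows the paper's proof: both chain $h_{j}(R_{j})\subseteq R_{j}\subseteq\conv ij R_{i}=\conv ij h_{i}(R_{i})$ via $\cconv$-invariance. You also explicitly check that $H(R)$ is not trivially inconsistent, using the definition of a refinement, a step the paper leaves implicit.
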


\begin{proof}
Let $R$ be a $\cconv$-consistent relation : for all distinct $i,j\in\left\{ 1,\ldots,m\right\} $,
we have $R_{j}\subseteq\conv ijR_{i}$. On the one hand, $R_{j}\supseteq h_{j}(R_{j})=(H(R))_{j}$,
and on the other hand, by $\cconv$-invariance, $\conv ijR_{i}=\conv ijh_{i}(R_{i})=\conv ij(H(R))_{i}$.
Therefore, $(H(R))_{j}\subseteq\conv ij(H(R))_{i}$ for all distinct
$i,j\in\left\{ 1,\ldots,m\right\} $. Therefore, $H(R)$ is $\cconv$-consistent.
\end{proof}
For example, one can easily verify by enumeration that for $\STC$,
the basic subclass $\QH\times\PA$ is $\cconv$-invariante through
$H=(h_{\QH},\id)$, with $h_{\QH}$ the classical refinement of $\QH$
and $\id$ the \emph{identity} function. 

\subsection{Projection Closure Problem and Weakening of Formalisms\label{sub:pblm_cloture}}

We have presented a number of conditions ensuring tractability in
the previous sections. However, what can we do when one of these conditions
is invalidated? What to do in particular when considering subclasses
which are not $\cconv$-closed and whose closure under projection
is not a tractable subclass? Does this imply that the subclass in
question is intractable, as in the classical framework for closure
under composition, intersection, and inversion~\cite{ligozat2013qualitative}
? This is actually not always the case. In this section, we will study
a counterexample. What to do also when the subclass that we consider
is neither dissociable nor stable? We answer these questions in this
section by proposing a technique that allows us to circumvent, in
some cases, the fact that some of the assumptions of the tractability
theorems are not satisfied. In other words, this technique allows
us to increase the scope of tractability theorems.

To solve the fact that a subclass is not $\cconv$-closed, the idea
we propose consists in weakening the projections of the multi-algebra,
i.e. making them less restrictive, in order to the affected subclass
becomes $\cconv$-closed. Weakening the projections can also make
it possible to obtain other properties such as $\cconv$-distributivity
and therefore simplicity, or even projection stability. A weakened
projection is still correct. In other words, it will not remove valid
entities pairs. We will see that the tractability results obtained
for the weakened projections also apply to the initial multi-algebra.
However, excessive weakening of projections can cause algebraic closure
to lose the ability to decide satisfiability, even for scenarios.
We must therefore find a balance in the weakening of projections,
which perhaps does not exist. To distinguish the projections of a
multi-algebra $\A$ and those of its weakening, denoted $\A'$, we
use the notation \textquotedblleft $\convDe ij{\A}$\textquotedblright{}
(resp. \textquotedblleft $\convDe ij{\A'}$ \textquotedblright ) to
indicate that it is the projection of $\A$ (resp. $\A'$).
\begin{defn}
\label{defn:affaiblissement} A multi-algebra $\A'$ is called \emph{weakening\index{weakening}}
of a multi-algebra $\A$ when $\A'$ has the same Cartesian product
as $\A$ and when for all distinct $i,j\in\{1,\ldots,m\}$, for all
$b\in\Base_{l}$:
\[
\convDe ij{\A}b\subseteq\convDe ij{\A'}b\text{.}
\]

Let $(\A,\U,\I)$ be a sequential formalism and $\A'$ be a weakening
of $\A$. We say that the formalism $(\A',\U,\I)$ is a \emph{weakening}
of the formalism $(\A,\U,\I)$.
\end{defn}

Note that a weakening of a sequential formalism is indeed a sequential
formalism according to Definition~\ref{defn:combined formalisms},
since the weakened projections are correct. More precisely, we have
the following lemma and the following proposition.
\begin{lem}
\label{lem:proj-affaiblie_approxi-sup}Let $\A$ be a finite multi-algebra,
$\A'$ be a weakening of $\A$ and $R\in\A'$. We have the following
property:
\[
\convDe{}{}{\A}R\subseteq\convDe{}{}{\A'}R\subseteq R
\]
\end{lem}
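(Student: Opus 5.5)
We have to show two inclusions: $\convDe{}{}{\A}R\subseteq\convDe{}{}{\A'}R$ and $\convDe{}{}{\A'}R\subseteq R$. The second one is immediate from Definition~\ref{def:cloture-par-proj}. The projection closure only ever replaces a component $R_{j}$ by $R_{j}\sinter(\conv ijR_{i})$, which is included in $R_{j}$. So every iterate is included in the previous one, and by transitivity (Lemma~\ref{lem:ppt-algebre-non-asso-finie}) the fixed point $\convDe{}{}{\A'}R$ is included in $R$.

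For the first inclusion, I would first lift the weakening hypothesis from basic relations to arbitrary ones. By Lemma~\ref{lem:croissance_proj}, $\convDe ij{\A}r=\sbigunion_{b\atome r}\convDe ij{\A}b$. Definition~\ref{defn:affaiblissement} gives $\convDe ij{\A}b\subseteq\convDe ij{\A'}b$ for each basic $b$. Taking unions, $\convDe ij{\A}r\subseteq\convDe ij{\A'}r$ for every $r\in\A_{i}$.

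Next I would compare the two closure processes using the parallel iteration from the proof of Proposition~\ref{prop:ss-classe_clos-local-clos-global}. Set $R^{1}=R$ and
\[
R_{j}^{k+1}=R_{j}^{k}\sinter\bigcap_{i\neq j}\conv ijR_{i}^{k}.
\]
Let $P^{k}$ and $Q^{k}$ denote the iterates computed with the projections of $\A$ and of $\A'$ respectively, both starting from $R$. I would prove $P^{k}\subseteq Q^{k}$ componentwise by induction on $k$. The base case $P^{1}=Q^{1}=R$ is trivial. For the inductive step, assume $P_{i}^{k}\subseteq Q_{i}^{k}$ for all $i$. Monotonicity of projections (Lemma~\ref{lem:croissance_proj}) and the lifted weakening inequality give
\[
\convDe ij{\A}P_{i}^{k}\subseteq\convDe ij{\A}Q_{i}^{k}\subseteq\convDe ij{\A'}Q_{i}^{k}.
\]
Intersection is monotone in the boolean algebra, so $P_{j}^{k+1}\subseteq Q_{j}^{k+1}$. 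Both sequences are decreasing in a finite algebra, so they stabilise at the respective fixed points, and we obtain $\convDe{}{}{\A}R\subseteq\convDe{}{}{\A'}R$.

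The main obstacle is justifying that the parallel iteration computes the same fixed point as the sequential substitutions of Definition~\ref{def:cloture-par-proj}. The Remark following that definition states that the result does not depend on the order of substitutions, and I would invoke it. If a more self-contained argument is wanted, an alternative is the characterisation of $\cconv R$ as the greatest relation $S\subseteq R$ that is closed under projection; its existence follows from monotonicity. With that characterisation, note that $\convDe{}{}{\A}R\subseteq R$ is closed under the $\A$-projections, hence also under the weaker $\A'$-projections, because $S_{j}\subseteq\convDe ij{\A}S_{i}\subseteq\convDe ij{\A'}S_{i}$. It is therefore contained in the greatest $\A'$-closed subrelation of $R$, which is $\convDe{}{}{\A'}R$. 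I would present this maximality argument as the cleaner route, and keep the induction as the fallback.
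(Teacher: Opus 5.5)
Your proof is correct. The paper gives no proof of this lemma at all. It only says the weakened projections are ``less restrictive'', and then uses the inclusion in Propositions~\ref{prop:affaiblissement_FLC} and~\ref{prop:reduc_affaiblissement}. So you are filling in what the paper leaves implicit rather than taking a different route.

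Both of your versions work:
\begin{itemize}
\item The inductive comparison of the two parallel iterations uses the same iteration the paper itself uses in the proof of Proposition~\ref{prop:ss-classe_clos-local-clos-global}. Like the paper, it leans on the Remark that the order of substitutions does not matter.
\item The maximality route is the better one. It is self-contained, and it actually proves that Remark.
\end{itemize}

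To make the maximality route complete, write out the two lines identifying the fixed point of Definition~\ref{def:cloture-par-proj} with the greatest projection-closed subrelation of $R$:
\begin{itemize}
\item A fixed point $T$ satisfies $T_j=T_j\cap\conv ij T_i$ for all distinct $i,j$, so it is closed under projection.
\item Any closed $S\subseteq R$ stays below every intermediate state $T$ of the substitution process. Indeed $S_j\subseteq T_j$ and $S_j\subseteq\conv ij S_i\subseteq\conv ij T_i$ by Lemma~\ref{lem:croissance_proj}, so $S_j$ remains below the updated component.
\end{itemize}
Termination in the finite algebra then gives existence of the greatest element directly, so you do not need a separate monotonicity (Knaster--Tarski) argument.
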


\begin{prop}
\label{prop:affaiblissement_FLC}Let $(\A,\U,\I)$ be a sequential
formalism, $\A'$ be a weakening of $\A$, and $R\in\A'$. We have
the following property:
\[
\I(\convDe{}{}{\A'}R)=\I(R)
\]

Moreover, the weakening $(\A',\U,\I)$ is a sequential formalism.
\end{prop}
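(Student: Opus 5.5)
We prove the two claims in turn. The equality $\I(\convDe{}{}{\A'}R)=\I(R)$ is established first, and the remaining axioms of Definition~\ref{defn:combined formalisms} for $(\A',\U,\I)$ are then checked. Every axiom except the one involving projection closure only mentions the componentwise operators $\scomp$, $\sinter$, $\sinv{\cdot}$, the universal relation and basic relations. Since $\A'$ has the same Cartesian product as $\A$, these operators coincide in $\A$ and $\A'$, so those axioms transfer verbatim from the fact that $(\A,\U,\I)$ is a sequential formalism.

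For the equality, I will use the chain of Lemma~\ref{lem:proj-affaiblie_approxi-sup}: $\convDe{}{}{\A}R\subseteq\convDe{}{}{\A'}R\subseteq R$. By Lemma~\ref{lem:ppt_relations}, which only uses the union-of-basics axiom and hence holds here, monotonicity of $\I$ gives
\[
\I(\convDe{}{}{\A}R)\subseteq\I(\convDe{}{}{\A'}R)\subseteq\I(R).
\]
The $\A$-axiom $\I(\convDe{}{}{\A}R)=\I(R)$ then makes the outer terms equal, so the middle term equals $\I(R)$. This is the whole proof of the first claim.

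The only real content is Lemma~\ref{lem:proj-affaiblie_approxi-sup}, which the paper states without proof just before. If I must justify it, I would argue as follows.

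\emph{Second inclusion.} The inclusion $\convDe{}{}{\A'}R\subseteq R$ is immediate, since each closure step intersects a component.

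\emph{First inclusion.} For $\convDe{}{}{\A}R\subseteq\convDe{}{}{\A'}R$, first extend the weakening hypothesis from basic relations to all relations, $\convDe ij{\A}r\subseteq\convDe ij{\A'}r$, using Lemma~\ref{lem:croissance_proj} ($\cconv r=\sbigunion_{b\in r}\cconv b$). Then run both closures from $R$ as iterations $R^{k+1}_j=R^k_j\sinter\bigcap_{i\neq j}\conv ij R^k_i$, as in the proof of Proposition~\ref{prop:ss-classe_clos-local-clos-global}. By induction on $k$, the $\A$-iterate is contained in the $\A'$-iterate. The induction step uses monotonicity of projections (Lemma~\ref{lem:croissance_proj}) together with the pointwise comparison of the two projections. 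Passing to the fixed points gives the inclusion. Here I rely on the remark that the closure does not depend on the order of substitutions.

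I expect this induction to be the main obstacle, though it is routine. The parallel-update presentation avoids any issue with the order of substitutions.
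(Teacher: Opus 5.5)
Your proof is correct and follows the paper's argument exactly: the sandwich $\convDe{}{}{\A}R\subseteq\convDe{}{}{\A'}R\subseteq R$ from Lemma~\ref{lem:proj-affaiblie_approxi-sup}, monotonicity of $\I$, and the $\A$-axiom $\I(\convDe{}{}{\A}R)=\I(R)$, with the other axioms carrying over because $\A$ and $\A'$ differ only in their projections. Your sketch of Lemma~\ref{lem:proj-affaiblie_approxi-sup}, which the paper leaves unproved, is also sound: both closures yield the greatest projection-closed relation below $R$, so the parallel-update induction is legitimate, and the paper itself uses the same presentation for Proposition~\ref{prop:ss-classe_clos-local-clos-global}.
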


\begin{proof}
Let $R\in\A'$. By Lemma~\ref{lem:proj-affaiblie_approxi-sup}, $\convDe{}{}{\A}R\subseteq\convDe{}{}{\A'}R\subseteq R$.
Given that $\I$ is monotone (Lemma~\ref{lem:ppt_relations}), $\I(\convDe{}{}{\A}R)\subseteq\I(\convDe{}{}{\A'}R)\subseteq\I(R)$.
Since $(\A,\U,\I)$ is a sequential formalism, we have $\I(\convDe{}{}{\A}R)=\I(R)$.
Therefore, $\I(\convDe{}{}{\A'}R)=\I(R)$.

The triplet $(\A',\U,\I)$ is a sequential formalism since $(\A,\U,\I)$
is a sequential formalism and $\I(\convDe{}{}{\A'}R)=\I(R)$ ($\A'$
and $\A$ have the same operators except the projection ; Definition~\ref{defn:combined formalisms}).
\end{proof}
We saw in the previous section that some projections are not superdistributive
over composition or intersection, which prevents some subclasses from
being $\cconv$-distributive. Let us consider the weakening satisfying
that if the projection $\conv ij$ is not superdistributive over composition
or over intersection, then this projection is weakened in the following
way: $\conv ijb=\B$ for all $b\in\Base$ (there is no direct interdependency
from $i$ to $j$). The corresponding multi-algebra is trivially a
weakening and the subclass is indeed $\cconv$-distributive for this
weakened multi-algebra (see Definition~\ref{def:distrib}). Of course,
such an operation does not always preserve the fundamental property
of the satisfiability of algebraically closed scenarios.
\begin{example}
\label{exa:GIA_pblm_cloture}We place ourselves in the framework of
temporal multi-scale reasoning with the Allen interval algebra~\cite{Allen1983}
equipped with its conversion operators, defined by Euzenat~\cite{euzenat2001granularity}
in the context of scales totally ordered by the fineness relation
$\preceq$. As we saw for the point algebra in Example\ref{ex:GPC_multialg},
to be closed under projection is to be closed under the upward conversion
operator and under the downward conversion operator. Consider the
subclass of \emph{preconvex relations} of the interval algebra~\cite{nebel1995reasoning}.
This subclass is tractable in the classical framework, but is it also
in the multi-scale framework? This question is problematic since the
subclass of preconvex relations is not $\cconv$-closed. Since this
subclass is maximal for tractability in the classical framework, its
$\cconv$-closure is necessarily intractable. However, this subclass
is closed with respect to the upward conversion (so it is not closed
under the downward conversion). This property is important because
it is actually possible to omit the downward conversion. Specifically,
consider the weakened multi-algebra where the projections satisfying
$\conv jib=\downgc b$ (i.e. when the scale $g_{i}$ is finer than
the scale $g_{j}$) are replaced in order to satisfy $\conv jib=\ B$.
With this weakened multi-algebra, we can apply the tractability theorems
and thus prove the tractability of preconvex relations. The preconvex
subclass is indeed $\cconv$-closed for this weakening, since its
closure under projection now only applies the upward conversion.
\end{example}

To distinguish the different algebraic closures, we will say that
a network is algebraically closed for a sequential formalism and that
a subclass is algebraically tractable for a sequential formalism.

The following proposition justifies the interest of weakenings: they
make it possible to prove that a subclass is tractable for the initial
formalism when, for example, this one is not $\cconv$-closed.
\begin{prop}
\label{prop:reduc_affaiblissement} Let $\mathcal{F}'=(\A',\U,\I)$
be a weakening of a sequential formalism $\mathcal{F}=(\A,\U,\I)$
and $\S$ be a subset of $\A$. If $\S$ is algebraically tractable
for $\mathcal{F}'$ then $\S$ is algebraically tractable for $\mathcal{F}$.
\end{prop}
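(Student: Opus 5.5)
Fix a network $N$ over $\S$; we must show that $N$ is satisfiable if and only if its algebraic closure for $\F$ is not trivially inconsistent. Let $\Cl{N}$ denote the algebraic closure for $\F$ and $\Cl{N}'$ the one for $\F'$. The plan is to sandwich $\Cl{N}$ between $\Cl{N}'$ and a satisfiable network, using only soundness of the closure and monotonicity.

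For the easy direction, suppose $N$ is satisfiable. Then $\Cl{N}$ is satisfiable by Proposition~\ref{prop:trivialement-incoherent_incoherent}, so it is not trivially inconsistent (fifth item of the same proposition). This only uses that $\F$ is a sequential formalism.

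For the other direction, suppose $\Cl{N}$ is not trivially inconsistent. The key step is to show $\Cl{N}\subseteq\Cl{N}'$. I will prove that every network reached during the closure process for $\F'$ contains $\Cl{N}$, by induction on the sequence of operations. Initially $\Cl{N}\subseteq N$ holds. A composition step $N^{xz}\leftarrow N^{xz}\sinter(N^{xy}\scomp N^{yz})$ preserves the inclusion by monotonicity of $\scomp$ and $\sinter$ (Lemma~\ref{lem:ppt-algebre-non-asso-finie}) together with closedness of $\Cl{N}$ under composition. For a projection step $R_j\leftarrow R_j\sinter\convDe ij{\A'}R_i$, use that $\Cl{N}^{xy}$ is closed under the projections of $\A$. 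Writing $C=\Cl{N}^{xy}$ and assuming $C\subseteq R$, we get $C_j\subseteq\convDe ij{\A}C_i\subseteq\convDe ij{\A'}C_i\subseteq\convDe ij{\A'}R_i$. The middle inclusion follows from the weakening definition extended to all relations via Lemma~\ref{lem:croissance_proj}, and the last from monotonicity of projections. Hence the inclusion persists to the fixed point, giving $\Cl{N}\subseteq\Cl{N}'$. Consequently $\Cl{N}'$ is not trivially inconsistent, since each of its components contains the nonempty corresponding component of $\Cl{N}$.

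Since $\S$ is algebraically tractable for $\F'$, $N$ is satisfiable for $\F'$. Satisfiability depends only on $\U$ and $\I$, which are shared by $\F$ and $\F'$ (Proposition~\ref{prop:affaiblissement_FLC} confirms $\F'$ is a sequential formalism with the same $\I$). So $N$ is satisfiable for $\F$, which concludes.

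The main obstacle is the inclusion $\Cl{N}\subseteq\Cl{N}'$. The fixed point is defined by an iterative procedure, so the argument has to be an invariant along the $\F'$-closure run rather than a direct comparison. It also has to use that the $\A$-projection closure of each relation of $\Cl{N}$ is below the $\A'$-projection, a fact stated only on basic relations and extended through distributivity over union.
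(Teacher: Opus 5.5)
Your proof is correct and follows the same route as the paper. You show that the algebraic closure of $N$ for $\mathcal{F}$ refines its closure for $\mathcal{F}'$, deduce that the latter is not trivially inconsistent, and conclude by tractability for $\mathcal{F}'$, using that satisfiability depends only on the shared $\U$ and $\I$. Your invariant argument along the $\mathcal{F}'$-closure run is a rigorous version of the inclusion that the paper justifies only by the one-step comparison of projection closures.
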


\begin{proof}
Let $N$ be a network over $\S$. We suppose that its algebraic closure
for $\mathcal{F}$, denoted by $_{\A}\Cl N$, is not trivially inconsistent.
We will show that $N$ is satisfiable, which proves that $\S$ is
algebraically tractable for $\mathcal{F}$ (if we assume instead that
$_{\A}\Cl N$ is trivially inconsistent, then $N$ is unsatisfiable,
by Proposition~\ref{prop:trivialement-incoherent_incoherent}). Let
$_{\A'}\Cl N$ be the algebraic closure of $N$ for $\mathcal{F}'$.
Since the projections of $\mathcal{F}'$ are less restrictive than
those of $\mathcal{F}$, we have $_{\A}\Cl N\subseteq{}_{\A'}\Cl N$
(because $\convDe{}{}{\A}R\subseteq\convDe{}{}{\A'}R$). Since $_{\A}\Cl N$
is not trivially inconsistent, the algebraic closure of $N$ for $\F'$,
$_{\A'}\Cl N$, is not trivially inconsistent. Thus, $N$ is satisfiable,
since $\S$ is algebraically tractable for $\F'$.
\end{proof}
In general, considering a weakening of a multi-algebra does not allow
us to apply the theorems, since the associated algebraically closed
scenarios may no longer be satisfiable. It is therefore necessary
to consider particular weakenings. We are interested here in weakenings
that preserve the tree structure (see Section~\ref{sub:arconsistency_consistency}),
that we qualify as \emph{tree weakening}. We saw in Section~\ref{sub:arconsistency_consistency}
that the projections of tree multi-algebras satisfy particularly interesting
properties. We have seen, among other things, that the projections
of any of its plenary anti-tree structure (namely the projections
$\conv ij$ such that $i\inArbo j$) summarize the interdependencies.
A weakening of a tree multi-algebra is a tree multi-algebra if the
projections of one of its plenary anti-tree structures remain the
same. This ensures that the two multi-algebras have the same interdependencies.
\begin{defn}
\label{defn:affaiblissement_arbo} Let $\A$ be a finite tree multi-algebra
and $\A'$ be one of its weakening. The multi-algebra $\A'$ is called\emph{
tree weakening\index{tree weakening}} of $\A$ if there exists a
plenary anti-tree structure $\arbo$ of $\A$ such that for all distinct
$i,j\in\{1,\ldots,m\}$, for all $b\in\B_{i}$:
\[
i\inArbo j\sep\implies\sep\convDe ij{\A}b=\convDe ij{\A'}b\text{.}
\]

Let $\A'$ be a tree weakening of a tree multi-algebra $\A$ and let
$(\A,\U,\I)$ be a sequential formalism. We say that the formalism
$(\A',\U,\I)$ is a \emph{tree weakening }of the formalism $(\A,\U,\I)$.
\end{defn}

It is easy to see that the multi-algebra of a tree weakening is a
tree multi-algebra. We can therefore use Proposition~\ref{prop:arborescence}
to prove the satisfiability of the $\cconv$-consistent relations
of this weakened formalism. The tree weakenings indeed preserve the
satisfiability of algebraically closed scenarios. This property follows
from the following lemma in the next proposition:
\begin{lem}
\label{lem:clos_affaiblissement-clos}Let $\A'$ be a tree weakening
of a finite tree multi-algebra $\A$. 
\begin{itemize}
\item The basic relations closed under projection for $\A'$ are also closed
under projection for $\A$. 
\item Algebraically closed scenarios for $\A'$ are also algebraically closed
for $\A$.
\end{itemize}
\end{lem}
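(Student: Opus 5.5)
The first assertion will come from Lemma~\ref{lem:clos_partiel_arbo}, applied to the original multi-algebra $\A$. Fix a plenary anti-tree structure $\arbo$ of $\A$ as in Definition~\ref{defn:affaiblissement_arbo}: for every edge $k\inArbo\ell$, the projections of $\A$ and $\A'$ coincide, that is, $\convDe k{\ell}{\A}b=\convDe k{\ell}{\A'}b$ for all $b\in\Base_{k}$.

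Let $B$ be a basic relation closed under projection for $\A'$. Then $B_{\ell}\subseteq\convDe k{\ell}{\A'}B_{k}$ for all distinct $k,\ell$, and in particular for every edge $k\inArbo\ell$. On those edges the two projections agree, so $B_{\ell}\subseteq\convDe k{\ell}{\A}B_{k}$. This is exactly the hypothesis of Lemma~\ref{lem:clos_partiel_arbo} for the tree multi-algebra $\A$ with its plenary anti-tree structure $\arbo$. The lemma then gives $B_{j}\subseteq\convDe ij{\A}B_{i}$ for all distinct $i,j$, so $B$ is closed under projection for $\A$.

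The only point needing care is that Lemma~\ref{lem:clos_partiel_arbo} uses $\arbo$ being plenary for $\A$ itself, not for $\A'$. Definition~\ref{defn:affaiblissement_arbo} guarantees this, since $\arbo$ is chosen as a plenary anti-tree structure of $\A$. The inverse projections appearing in the plenary condition are those of $\A$. They are built only from edge projections, which agree in $\A$ and $\A'$, so no mismatch arises.

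For the second assertion, let $S$ be a scenario that is algebraically closed for $\A'$. Composition, intersection and converse are identical in $\A$ and $\A'$, because a weakening changes only the projections. Hence closure under composition (Definition~\ref{def:prop-networks}) transfers verbatim. Each $S^{xy}$ is a basic relation closed under projection for $\A'$, so by the first assertion it is closed under projection for $\A$, i.e. $S^{xy}=\convDe{}{}{\A}S^{xy}$. Thus $S$ is algebraically closed for $\A$. I expect no real obstacle here beyond checking that the notion of being closed under projection is the one from Definition~\ref{def:cloture-par-proj}, applied with $\A$'s operators.
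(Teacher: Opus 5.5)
Your proposal is correct and follows essentially the same route as the paper. You use that the edge projections of the fixed plenary anti-tree structure $\arbo$ coincide in $\A$ and $\A'$ to invoke Lemma~\ref{lem:clos_partiel_arbo} for $\A$, and then transfer closure under composition verbatim, since a weakening changes only the projections (your remark that $\arbo$ must be plenary for $\A$ itself, guaranteed by the definition of tree weakening, is the point the paper uses implicitly).
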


\begin{proof}
We recall that $\A'$ is also a tree multi-algebra and that $\A$
and $\A'$ has the same plenary anti-tree structure $\arbo$. In this
proof, the projections are those of $\A$. Let $B$ be a basic relation
closed under projection for $\A'$. Thus $B$ satisfies the following
property: for all distinct $i,j\in\{1,\dots,m\}$, if $i\inArbo j$
then $B_{j}\in\conv ijB_{i}$. Indeed, the projections of $\A$ and
$\A'$ corresponding to the same arc of $\arbo$ are identical (Definition~\ref{defn:affaiblissement_arbo}).
We conclude by Lemma~\ref{lem:clos_partiel_arbo} that $B$ is closed
under projection for $\A$. 

The second result follows directly from the first. More precisely,
a scenario closed under composition for $\A'$ is indeed trivially
closed under composition for $\A$ (it is the same composition). Scenarios
algebraically closed for $\A'$ are therefore also algebraically closed
for $\A$, since to be algebraically closed is to be closed under
composition and under projection.
\end{proof}
\begin{prop}
\label{prop:scenar_affaibli_coherent} Let $\mathcal{F}'$ be a tree
weakening of a sequential formalism $\mathcal{F}$. If the algebraically
closed scenarios for $\mathcal{F}$ are satisfiable, then the algebraically
closed scenarios for $\mathcal{F}'$ are also satisfiable.
\end{prop}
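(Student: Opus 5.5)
The plan is to combine Lemma~\ref{lem:clos_affaiblissement-clos} with the hypothesis on $\mathcal{F}$. Both formalisms share the same interpretation function $\I$. The only difference is the projections, and $\F'$ is a sequential formalism by Proposition~\ref{prop:affaiblissement_FLC}. So a scenario $S$ over $\A'$ is also a scenario over $\A$: the two multi-algebras have the same Cartesian product, hence the same basic relations. Moreover, $S$ has the same solutions in both formalisms, because the notion of solution (Definition~\ref{def:coherence_multi-reseau}) depends only on $\I$ and on the relations $S^{xy}$.

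The steps, in order, are as follows.
\begin{enumerate}
\item Take an algebraically closed scenario $S$ for $\mathcal{F}'$.
\item Apply the second item of Lemma~\ref{lem:clos_affaiblissement-clos}: $S$ is also algebraically closed for $\A$. Closure under composition is the same condition for both multi-algebras, since the composition operators coincide. Closure under projection transfers through the shared plenary anti-tree structure, relation by relation, via the first item of the lemma.
\item Invoke the hypothesis that the algebraically closed scenarios for $\mathcal{F}$ are satisfiable. This yields a family $\{u_{x}\}_{x\in\E}$ with $(u_{x},u_{y})\in\I(S^{xy})$ for all distinct $x,y$.
\item Conclude that this same family is a solution of $S$ for $\mathcal{F}'$, since the interpretation function is the same. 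Hence $S$ is satisfiable for $\mathcal{F}'$.
\end{enumerate}

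The only point needing care is to make the transfer in the lemma explicit, namely that the lemma applies to each basic relation $S^{xy}$. I would check two facts here. First, a closed scenario for $\A'$ has each $S^{xy}$ basic and satisfying $S^{xy}={}_{\A'}\cconv S^{xy}$, which is exactly being a basic relation closed under projection for $\A'$. Second, the lemma's proof uses only the arcs of the common plenary anti-tree $\arbo$, on which the projections of $\A$ and $\A'$ coincide by Definition~\ref{defn:affaiblissement_arbo}, together with Lemma~\ref{lem:clos_partiel_arbo} applied in $\A$.

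I expect no real obstacle beyond this bookkeeping. The argument is a direct chaining of the lemma with the hypothesis on $\mathcal{F}$, plus the observation that satisfiability is determined by $\I$ alone.
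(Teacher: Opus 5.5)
Your proposal is correct and follows the paper's own proof. Take an algebraically closed scenario for $\mathcal{F}'$, transfer its algebraic closedness to $\mathcal{F}$ via Lemma~\ref{lem:clos_affaiblissement-clos}, and conclude by the hypothesis on $\mathcal{F}$. Your remark that both formalisms share the interpretation function $\varphi$, so satisfiability means the same thing in each, is a step the paper leaves implicit.
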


\begin{proof}
Let $S$ be an algebraically closed scenario for $\mathcal{F}'$.
By Lemma~\ref{lem:clos_affaiblissement-clos}, it is also algebraically
closed for $\mathcal{F}$. Since, the algebraically closed scenarios
for $\mathcal{F}$ are satisfiable, $S$ is satisfiable.
\end{proof}
Thus, the tree weakenings preserve the essential property for applying
the slicing theorem, which is in general lost for any weakening. We
can therefore weaken a tree multi-algebra as much as we want as long
as we do not modify the projections of one of its plenary anti-tree
structure.

To summarize, it may happen that we cannot apply the tractability
theorems, in particular because we are interested in a subclass which
is not $\cconv$-closed, dissociable, or stable. A solution to work
around this problem is to look for a tree weakening for which this
subclass satisfies these properties.

\subsection{Summary of the obtained results}

In this section, in order to simplify the application of the results
of this paper, we modify the two tractability theorems, by using the
weakening technique and the stronger properties of the previous sections.

We start by applying the previous results on the slicing theorem (Theorem~\ref{thm:combination}).
\begin{cor}[Weakened slicing theorem]
\label{cor:Th=0000E9or=0000E8me-de-d=0000E9coupage-affaibli} Let
$\S$ be a subset, such that $\S=\S_{1}\times\cdots\times\S_{m}$,
of a sequential formalism $\F=(\A,\U,\I)$ whose multi-algebra $\A$
is a tree multi-algebra and whose algebraically closed scenarios are
satisfiable. Let $\A'$ be a tree weakening of $\A$. 

If the following conditions are satisfied for $\A'$: 
\begin{itemize}
\item each slice $\S_{i}$ is a basic subclass,
\item each slice $\S_{i}$ is minimal,
\item $\S$ is $\cconv$-distributive, and
\item for all distinct $i,j\in\{1,\ldots,m\}$, and for all $r\in\S_{i}$,
$\conv ijr\in\S_{j}$ is satisfied
\end{itemize}
then $\S$ is algebraically tractable (for $\F$ and for $(\A',\U,\I)$).
\end{cor}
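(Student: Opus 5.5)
The plan is to apply the slicing theorem (Theorem~\ref{thm:combination}) to the weakened formalism $\F'=(\A',\U,\I)$, and then transfer the result back to $\F$ with Proposition~\ref{prop:reduc_affaiblissement}.

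First I check the global hypotheses of the slicing theorem for $\F'$.
\begin{itemize}
\item $\F'$ is a sequential formalism by Proposition~\ref{prop:affaiblissement_FLC}.
\item Its algebraically closed scenarios are satisfiable by Proposition~\ref{prop:scenar_affaibli_coherent}, since $\A'$ is a tree weakening and the algebraically closed scenarios of $\F$ are satisfiable.
\item Since each $\S_i$ is a subclass, $\S=\S_1\times\cdots\times\S_m$ is a subclass by Proposition~\ref{prop:equiv_S_Si}, hence $\icclo$ (closed under composition and intersection).
\item Since each $\S_i$ is basic, $\S$ is a basic subset (Proposition~\ref{prop:equiv_S_Si}).
\item $\S$ is $\cconv$-closed for $\A'$ by Proposition~\ref{prop:ss-classe_clos-local-clos-global}, using the fourth hypothesis (which is stated for the projections of $\A'$).
\end{itemize}

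Atomizability comes for free from basicity: any satisfiable $R\in\S$ contains a satisfiable basic $B$ by Proposition~\ref{prop:equivalence_coherence_relation}, and $B\in\S$ because $\Base\subseteq\S$.

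Next I verify (D1)--(D3) for $\A'$.
\begin{itemize}
\item (D1) is the minimality hypothesis.
\item (D2) follows from Proposition~\ref{prop:distrib-to-simple}, since $\S$ is a $\cconv$-closed, $\cconv$-distributive subclass of $\A'$.
\item (D3) uses that $\A'$ is a tree multi-algebra: the plenary anti-tree structure of $\A$ is kept, with identical arc projections, and the remaining projections only grow, so the plenary inclusion still holds. I then apply Proposition~\ref{prop:arborescence} to $\A'$. Its hypothesis, that basic relations closed under projection for $\A'$ are satisfiable, comes from Lemma~\ref{lem:clos_affaiblissement-clos}: such a basic relation $B$ is closed under projection for $\A$.
\end{itemize}

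The main obstacle is establishing that such a $B$ (closed under projection for $\A$) is satisfiable for $\F$. The hypotheses only give satisfiability of algebraically closed scenarios. I would try a two-variable scenario $x\mathrel{B}y$. It is trivially closed under composition: with two variables there is no triple of distinct variables. It is closed under projection, since $\sinv B$ is also closed under projection (projections commute with converse). So this scenario is algebraically closed for $\F$, hence satisfiable, which gives $\I(B)\neq\vide$. (Alternatively, the same argument can be run directly in $\F'$.)

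With all conditions in place, the slicing theorem gives that $\S$ is algebraically tractable for $\F'$, and Proposition~\ref{prop:reduc_affaiblissement} gives the same for $\F$.
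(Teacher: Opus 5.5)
Your proposal is correct and follows the paper's proof. You apply the slicing theorem to $(\A',\U,\I)$, verify the same hypotheses with the same supporting results, and transfer back with Proposition~\ref{prop:reduc_affaiblissement}. Your only addition is in (D3): the two-variable scenario $x\mathrel{B}y$ (vacuously closed under composition, and closed under projection together with its converse) makes explicit the step from satisfiability of algebraically closed scenarios to satisfiability of basic relations closed under projection, which the paper leaves implicit when it invokes Proposition~\ref{prop:arborescence}, and that argument is sound.
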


\begin{proof}
To prove the weakened slicing theorem, we obviously apply the slicing
theorem, using the properties of the previous sections. By Proposition~\ref{prop:reduc_affaiblissement},
we just have to prove that $\S$ is algebraically tractable for $(\A',\U,\I)$
in order to prove that $\S$ is algebraically tractable for $\F$. 

We therefore prove that $\S$ is algebraically tractable for $(\A',\U,\I)$
using the slicing theorem (Theorem~\ref{thm:combination}). 

Its assumptions are satisfied:
\begin{itemize}
\item $\S$ is a $\icclo$ subset since $\S$ is a subclass (because each
$\S_{i}$ is a subclass (Proposition~\ref{prop:equiv_S_Si})).
\item $\S$ is an atomizable subset by Proposition~\ref{prop:equivalence_coherence_relation}
and since $\S$ is a basic subset (because each $\S_{i}$ is a basic
subset (Proposition~\ref{prop:equiv_S_Si})).
\item $(\A',\U,\I)$ is a sequential formalism (Proposition~\ref{prop:affaiblissement_FLC}).
\item Algebraically closed scenarios for $(\A',\U,\I)$ are satisfiable,
since algebraically closed scenarios for $\F$ are satisfiable, by
Proposition~\ref{prop:scenar_affaibli_coherent} (since $(\A',\U,\I)$
is a tree weakening of $\F$).
\end{itemize}
Its conditions of application are also satisfied:
\begin{itemize}
\item (D1) : Each slice  $\S_{i}$ is indeed minimal.
\item (D2) : $\S$ is dissociable, by Proposition~\ref{prop:distrib-to-simple}
(since $\S$ is $\cconv$-closed, $\cconv$-distributive, and is a
subclass).
\item (D3) : Any $\cconv$-consistent relation $R\in\S$ is satisfiable
(by Proposition~\ref{prop:arborescence}, since $\A'$ is a tree
multi-algebra and the algebraically closed scenarios for $\A'$ are
satisfiable).
\item $\S$ is $\cconv$-closed (by Proposition~\ref{prop:ss-classe_clos-local-clos-global}
since $\S$ is of the form $\S_{1}\times\cdots\times\S_{m}$ and is
a subclass).
\end{itemize}
\end{proof}
We now apply the previous results to the refinement theorem.
\begin{cor}[Weakened Refinement Theorem]
\label{cor:theoreme-d-affinement-affaibli} Let $\F=(\A,\U,\I)$
be a sequential formalism, $\S$ and $\S'$ be two subsets of $\A$,
with $\S$ such that $\S=\S_{1}\times\cdots\times\S_{m}$. Let $H=(h_{1},\ldots,h_{m})$
be a refinement from $\S$ to $\S'$ and $\A'$ be a weakening of
$\A$. If the following conditions hold for $\A'$: 
\begin{itemize}
\item each slice $\S_{i}$ is a subclass,
\item each slice $\S_{i}$ is composition stable through $h_{i}$ ,
\item for all distinct $i,j\in\{1,\ldots,m\}$:
\begin{itemize}
\item $\S_{i}\times\S_{j}$ is projection stable through $(h_{i},h_{j})$
and
\item for all $r\in\S_{i}$, we have $\conv ijr\in\S_{j}$, and
\end{itemize}
\item $\S'$ is algebraically tractable
\end{itemize}
then $\S$ is algebraically tractable (for $\F$ and for $(\A',\U,\I)$). 
\end{cor}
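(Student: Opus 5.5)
We mirror the proof of the weakened slicing theorem: reduce to the weakening, then invoke the refinement theorem (Theorem~\ref{thm:affinement}) for $(\A',\U,\I)$. By Proposition~\ref{prop:reduc_affaiblissement}, it suffices to show that $\S$ is algebraically tractable for $(\A',\U,\I)$. That triple is a sequential formalism by Proposition~\ref{prop:affaiblissement_FLC}. Throughout, all projection-dependent notions (closure under projection, $\cconv$-consistency, algebraic closure) are understood with respect to $\A'$.

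First we check the structural hypotheses needed for the second conclusion of the refinement theorem. Since each slice $\S_{i}$ is a subclass, $\S=\S_{1}\times\cdots\times\S_{m}$ is a subclass by Proposition~\ref{prop:equiv_S_Si}. Hence it is $\icclo$, because $(R\scomp R')\sinter R''$ stays in $\S$ by closure under composition and intersection. Next, by Proposition~\ref{prop:ss-classe_clos-local-clos-global}, the hypothesis that $\conv ijr\in\S_{j}$ for all $r\in\S_{i}$ makes $\S$ $\cconv$-closed for $\A'$.

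Then we verify (A1), algebraic stability through $H$, via Lemma~\ref{lem:trois_stabilites}. Composition stability of $\S$ through $H$ follows from part (1) of Proposition~\ref{prop:stable}: composition stability of each $\S_{i}$ through $h_{i}$ gives in particular the required statement for three-variable networks over $\S_{i}$. Projection stability follows from part (2) of Proposition~\ref{prop:stable}. The bi-slice $\S_{i,j}$ equals $\S_{i}\times\S_{j}$ because $\S$ is a full Cartesian product, and this product is projection stable through $(h_{i},h_{j})$ by hypothesis.

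One step needs care: Lemma~\ref{lem:trois_stabilites} is stated loosely. If $N$ is algebraically consistent over $\S$, then $N$ is $\scomp$-consistent, and each $N^{xy}$ is $\cconv$-consistent, since it is closed under projection and not trivially inconsistent. So $H(N)$ is $\scomp$-consistent, and every $H(N^{xy})$ is $\cconv$-consistent. Together these give that $H(N)$ is closed under composition, closed under projection, and not trivially inconsistent, that is, algebraically consistent.

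Finally, (A2) requires that algebraically consistent networks over $\S'$ be satisfiable. We get this from the algebraic tractability of $\S'$ (taken for $\A'$, as the hypotheses are stated for $\A'$). If $N$ is algebraically consistent, its algebraic closure is $N$ itself, which is not trivially inconsistent, so $N$ is satisfiable. The refinement theorem then yields algebraic tractability for $(\A',\U,\I)$, and Proposition~\ref{prop:reduc_affaiblissement} transfers it to $\F$.

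The main obstacle is bookkeeping rather than mathematics. We must make sure every property is consistently read for $\A'$, and that stating algebraic tractability of $\S'$ "for $\A'$" is the intended reading. No tree structure is needed here, unlike in the slicing corollary, because the refinement theorem never uses satisfiability of algebraically closed scenarios.
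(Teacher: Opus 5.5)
Your proposal is correct and follows the paper's proof essentially step by step. You reduce to the weakened formalism via Proposition~\ref{prop:reduc_affaiblissement}, obtain (A1) from composition and projection stability (Proposition~\ref{prop:stable} and Lemma~\ref{lem:trois_stabilites}), derive (A2) from the algebraic tractability of the target subset for the weakened multi-algebra, and get the closure and projection-closure properties of the product subset from the slice hypotheses. Your extra unpacking of why the two stabilities yield algebraic stability, and of why algebraic tractability gives (A2), only makes explicit what the paper leaves implicit.
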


\begin{proof}
To prove the weakened refinement theorem, we obviously apply the refinement
theorem, using the properties of the previous sections. By Proposition~\ref{prop:reduc_affaiblissement},
we just have to prove that $\S$ is algebraically tractable for $(\A',\U,\I)$
in order to prove that $\S$ is algebraically tractable for $\F$. 

We therefore prove that $\S$ is algebraically tractable for $(\A',\U,\I)$
by using the refinement theorem (Theorem~\ref{thm:affinement}).

Its assumptions are satisfied:
\begin{itemize}
\item $(\A',\U,\I)$ is a sequential formalism (Proposition~\ref{prop:affaiblissement_FLC}).
\item $H=(h_{1},\dots,h_{m})$ is a multi-refinement from $\S$ to $\S'$,
two subsets of $\A$.
\end{itemize}
Its conditions of application are also satisfied:
\begin{itemize}
\item (A1) : $\S$ is algebraically stable through $H$ (for $\A'$), by
Proposition~\ref{lem:trois_stabilites}, since:
\begin{itemize}
\item $\S$ is composition stable through $H$, by Proposition~\ref{prop:stable},
because each slice $\S_{i}$ is composition stable through $h_{i}$.
Indeed, this implies that for any slice $\S_{i}$ and all $\scomp$-consistent
$3$-variable network $N$ over $\S_{i}$, $h_{i}(N)$ is still $\scomp$-consistent.
\item $\S$ is projection stable through $H$, by Proposition~\ref{prop:stable},
since for all distinct $i,j\in\{1,\ldots,m\}$, $\S_{i}\times\S_{j}$
is projection stable through $(h_{i},h_{j})$. Therefore $\S_{i,j}$
is projection stable through $(h_{i},h_{j})$.
\end{itemize}
\item (A2) : Algebraically consistent networks over $\S'$ (for $\A'$)
are satisfiable since $\S'$ is algebraically tractable (for $\A'$).
\item $\S$ is a $\icclo$ subset since $\S$ is a subclass because each
$\S_{i}$ is a subclass (Proposition~\ref{prop:equiv_S_Si}).
\item $\S$ is $\cconv$-closed (for $\A'$), by Proposition~\ref{prop:ss-classe_clos-local-clos-global}
(since $\S$ is of the form $\S_{1}\times\cdots\times\S_{m}$, $\S$
is a subclass, and for all distinct $i,j\in\{1,\ldots,m\}$ and for
all $r\in\S_{i}$, $\conv ijr\in\S_{j}$ is satisfied).
\end{itemize}
\end{proof}

\subsection{Illustrative Application of the Theorems : Size-Topology Combination\label{sect:topo_and_size}}

We apply in this section the results of this paper to recover the
tractability results of $\STC$ (see Example~\ref{ex:QST_formalisme}).This
is only an illustration of these results of the literature; the main
interest of the formal framework is that it also applies to subclasses
of large multi-algebras, such as spatio-temporal sequences. We start
by studying the subclass $\RCAHs\times\PAs$ (see Table~\ref{tab:def_classic_subclass}
for the definitions).
\begin{cor}
\label{cor:slice} $\RCAHs\times\PAs$ is algebraically tractable
for $\STC$.
\end{cor}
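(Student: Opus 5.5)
The plan is to apply the weakened slicing theorem (Corollary~\ref{cor:Th=0000E9or=0000E8me-de-d=0000E9coupage-affaibli}) to $\S=\RCAHs\times\PAs$, with $m=2$, $\A_{1}=\RCAH$ and $\A_{2}=\PA$. The obstacle to applying the slicing theorem directly is that $\conv{\PA}{\RCAH}$ is not superdistributive over composition on $\Base_{\PA}$, as noted after Corollary~\ref{cor:sur-distrib-RCCHs}. So I will use a tree weakening $\A'$ of $\STA$. It keeps $\conv{\RCAH}{\PA}$ unchanged and replaces $\conv{\PA}{\RCAH}$ by the trivial projection, defined by $\convDe{\PA}{\RCAH}{\A'}b=\B_{\RCAH}$ for every $b\in\Base_{\PA}$ and extended by unions, so that $\svide\mapsto\svide$.

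First I check that $\A'$ is a tree weakening. Take the anti-tree $\arbo$ with the single edge $1\inA2$ and root $2$. This $\arbo$ is plenary for $\STA$, since loose integrations of two formalisms satisfy $\conv ijb=\convinv jib$. The weakening keeps the projection of this edge. It is indeed a weakening, because the new projection of every basic relation is the universal relation. By Definition~\ref{defn:affaiblissement_arbo}, $\A'$ is therefore a tree weakening. For completeness, I note that the plenary condition for $\A'$ itself, $\convDe{\PA}{\RCAH}{\A'}b\supseteq\convinv{\RCAH}{\PA}b$, holds trivially because the left-hand side is $\B_{\RCAH}$.

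Next I verify the hypotheses of the corollary for $\A'$.
\begin{itemize}
\item Both slices are basic subclasses: this holds by their definitions in Table~\ref{tab:def_classic_subclass}, and I would cite \cite{long2015distributive}.
\item Both slices are minimal: this is Proposition~\ref{prop:sous-classes-classiques-id-scenarisables}.
\item The projection-membership condition: $\conv{\RCAH}{\PA}r\in\PAs$ for $r\in\RCAHs$ holds by Lemma~\ref{lem:STC-ss-classe-conv-close}. For the weakened direction, the projection of any nonempty relation is $\B_{\RCAH}$ and that of $\svide$ is $\svide$, both of which lie in $\RCAHs$.
\item $\cconv$-distributivity: for $\conv{\RCAH}{\PA}$ this is Corollary~\ref{cor:sur-distrib-RCCHs}. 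For the weakened projection, whenever $r\scomp r'\neq\svide$ (respectively $r\sinter r'\neq\svide$), both sides of the required inclusion equal $\B_{\RCAH}$, using $\B_{\RCAH}\scomp\B_{\RCAH}=\B_{\RCAH}$ and $\B_{\RCAH}\sinter\B_{\RCAH}=\B_{\RCAH}$.
\end{itemize}
The corollary then yields algebraic tractability for $\STC$, through Proposition~\ref{prop:reduc_affaiblissement}, which is already built into the corollary.

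The remaining hypothesis, and the one I expect to be the real obstacle, is that the algebraically closed scenarios of $\STC$ itself are satisfiable. Nothing earlier in the paper establishes this, since it depends on the geometric semantics. I plan to invoke the known result of \cite{gerevini2002combining}, that bipath-consistent scenarios of the size--topology combination are satisfiable, and to identify bipath-consistency with algebraic closure in the multi-algebra $\STA$. If that citation is not accepted as is, I would need to reproduce the construction: build an $\RCCH$ realization of the topological scenario, then adjust region sizes to respect a linear extension of the size order compatible with the projections, for example by enlarging regions in order of size. That construction is where the genuine difficulty lies.
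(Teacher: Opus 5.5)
Your proposal is correct and follows the paper's proof essentially step by step. You use the same tree weakening: $\conv{\PA}{\RCAH}$ becomes the trivial projection to $\B_{\RCAH}$, while the anti-tree edge $\RCAH\inA\PA$ and its projection are kept. You then verify the same four conditions of the weakened slicing theorem, and, like the paper, you rest the satisfiability of algebraically closed scenarios of $\STC$ on the literature \cite{gerevini2002combining,cohen2017temporal}, which the paper also simply cites rather than proves.
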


\begin{proof}
To prove this result, we use a tree weakening of $\STC$. We weaken
the projection from $\PA$ to $\RCAH$ as follows: $\conv{\PA}{\RCAH}b=\B_{\RCAH}$
for all $b\in\PA$.

We directly apply the weakened slicing theorem (Corollary~\ref{cor:Th=0000E9or=0000E8me-de-d=0000E9coupage-affaibli}).

Its assumptions are satisfied:
\begin{itemize}
\item $\RCAHs\times\PAs$ is trivially of the form $\S\times\S'$.
\item $\STC$ is a loose integration and therefore a sequential formalism
(Proposition~\ref{prop:L'int=0000E9gration-l=0000E2che_l=0000E2chement-combin=0000E9}).
\item $\STC$ has a tree multi-algebra since $m=2$ (Definition~\ref{defn:arbo}
; one of its plenary anti-trees is $\RCAH\inA\PA$).
\item Algebraically closed scenarios of $\text{\ensuremath{\STC}}$ are
satisfiable \cite{gerevini2002combining,cohen2017temporal}.
\item The weakening of $\STC$ is a tree weakening of $\STC$ (the anti-tree
is $\RCAH\inA\PA$ since the projection $\conv{\RCAH}{\PA}$ has not
been modified ; see Definition~\ref{defn:affaiblissement_arbo}).
\end{itemize}
Its conditions of application are also satisfied:
\begin{itemize}
\item $\RCAHs$ and $\PAs$ are basic subclasses.
\item $\RCAHs$ and $\PAs$ are minimal (Proposition~\ref{prop:sous-classes-classiques-id-scenarisables}).
\item $\RCAHs\times\PAs$ is $\cconv$-distributive (by Corollary~\ref{cor:sur-distrib-RCCHs}
for $\conv{\RCAH}{\PA}$ and since the weakened projection $\conv{\PA}{\RCAH}$
is trivially superdistributive over composition and intersection).
\item for all $r\in\RCAHs$, we have $\conv{\RCAH}{\PA}r\in\PAs$ (Lemma~\ref{lem:STC-ss-classe-conv-close})
and for all $r'\in\PAs$, we have $\conv{\PA}{\RCAH}r'=\B_{\RCAH}\in\RCAHs$
(since $\conv{\PA}{\RCAH}$ is weakened).
\end{itemize}
\end{proof}
Now consider $\HH$, $\CH$, and $\QH$, the three basic subclasses
of $\RCCH$ maximal for tractability (see Section~\ref{subsec:RW:Qualitative}).
This time we cannot apply the slicing theorem, because simplicity
is not satisfied\footnote{More precisely, simplicity is not satisfied even by considering weakenings
preserving the satisfiability of the algebraically closed scenarios.} (closing under projection then under composition is not enough to
obtain the algebraic closure). We can, however, apply the refinement
theorem.
\begin{cor}
Let $\S$ be one of the following subclasses: $\HH$, $\CH$, or $\QH$.

The subclass $\S\times\PA$ is algebraically tractable for $\STC$.
\end{cor}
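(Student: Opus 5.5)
We plan to apply the weakened refinement theorem (Corollary~\ref{cor:theoreme-d-affinement-affaibli}) with $\A'$ equal to the multi-algebra of $\STC$ itself, which is trivially a weakening of itself. We take the multi-refinement $H=(h_{\S},\hmax)$, where $h_{\S}$ is the classical refinement of $\S$ (Definition~\ref{def:Affinement-RCC8}). As target subset we take $\S'=h_{\S}(\S)\times\hmax(\PA)$. Every relation of $\S'$ is then either basic or $\vide$ in the first component, and lies in $\{\vide,<,=,>,<\cup>\}\subseteq\PAs$ in the second. The key design choice is that $\S'\subseteq\RCAHs\times\PAs$. Algebraic consistency for $\S'$ can therefore be inherited from Corollary~\ref{cor:slice}.

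We then check the hypotheses of the corollary in turn.
\begin{enumerate}
\item $\S$ and $\PA$ are subclasses, since $\HH$, $\QH$, and $\CH$ are classical subclasses.
\item Composition stability of $\S$ through $h_{\S}$ and of $\PA$ through $\hmax$ is Proposition~\ref{prop:stab-compo-RCC}.
\item Projection stability of $\S\times\PA$ through $(h_{\S},\hmax)$ is Lemma~\ref{lem:stab-projection-STC}. The reverse bi-slice $\PA\times\S$ is the same statement with the components swapped, since closure under projection of a bi-slice is symmetric in the two components.
\item The projection-membership condition, namely $\conv{\RCAH}{\PA}r\in\PA$ and $\conv{\PA}{\RCAH}r'\in\S$, is given by Lemma~\ref{lem:STC-ss-classe-conv-close}.
\end{enumerate}

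The remaining point, and the main obstacle, is the last hypothesis: that $\S'$ is algebraically tractable. Here $\S'$ is generally not $\icclo$, so we cannot simply cite Corollary~\ref{cor:slice} as stated. However, the refinement theorem (Theorem~\ref{thm:affinement}) only needs (A2), namely that algebraically consistent networks over $\S'$ are satisfiable, and inspecting the proof of Corollary~\ref{cor:theoreme-d-affinement-affaibli} shows that only this is used. So we argue as follows. An algebraically consistent network $N$ over $\S'$ is also a network over $\RCAHs\times\PAs$. Being algebraically closed, $N$ equals its own algebraic closure, which is therefore not trivially inconsistent. Since $\RCAHs\times\PAs$ is algebraically tractable (Corollary~\ref{cor:slice}), $N$ is satisfiable. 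This establishes (A2).

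With (A1) supplied by the three stability and projection facts above, the refinement theorem yields satisfiability of algebraically consistent networks over $\S\times\PA$. Since $\S\times\PA$ is a $\cconv$-closed subclass (Lemma~\ref{lem:STC-ss-classe-conv-close}), Proposition~\ref{prop:alge_tractable} then gives algebraic tractability. If one insists on the literal corollary hypothesis, we would instead take $\S'=\RCAHs\times\PAs$ directly and note that $H$ maps into it.
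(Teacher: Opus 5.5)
Your proposal is correct and takes essentially the paper's route: the weakened refinement theorem with $\STC$ as its own trivial weakening, the multi-refinement $(h_{\S},\hmax)$, and the same lemmas for composition stability, projection stability and projection membership. Taking $\S'$ to be the image of $H$ instead of $\RCAHs\times\PAs$ is harmless but unnecessary, because algebraic tractability quantifies over all networks over a subset and so passes to any subset of $\RCAHs\times\PAs$ with no $\icclo$ requirement; the paper simply takes $\S'=\RCAHs\times\PAs$, which is your fallback.
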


\begin{proof}
We apply the weakened refinement theorem, with $\STC$ itself as weakening
(we do not need to weaken the projections). For this, the refinement
used for each subclass $\S$ is its classical refinement towards the
basic relations for $\RCCH$ (that we have denoted by $h_{\S}$ ;
Definition~\ref{def:Affinement-RCC8}) and the refinement $\hmax$
for $\PA$ (Table~\ref{tab:affinement_PA}).

Its assumptions are satisfied:
\begin{itemize}
\item $\STC$ is a loose integration and therefore a sequential formalism
(Proposition~\ref{prop:L'int=0000E9gration-l=0000E2che_l=0000E2chement-combin=0000E9}).
\item $\S\times\PA$ is trivially of the form ``$\S\times\S'$''. 
\item $H=(h_{\S},\hmax)$ is a function from $\S\times\PA$ to $\RCAHs\times\PAs$.
\item $\STC$ is trivially a weakening of $\STC$.
\end{itemize}
Its conditions of application are also satisfied:
\begin{itemize}
\item $\S$ and $\PA$ are subclasses.
\item $\S$ is composition stable through $h_{\S}$ (Proposition~\ref{prop:stab-compo-RCC})
and $\PA$ is composition stable through $\hmax$ (Proposition~\ref{prop:stab-compo-PA}).
\item $\S\times\PA$ is projection stable through $(h_{\S},\hmax)$ (Lemma~\ref{lem:stab-projection-STC}).
\item for all $r\in\S$, we have $\conv{\RCAH}{\PA}r\in\PA$ and for all
$r'\in\PA$, we have $\conv{\PA}{\RCAH}r'\in\S$ (Lemma~\ref{lem:STC-ss-classe-conv-close}).
\item $\RCAHs\times\PAs$ is algebraically tractable (Corollary~\ref{cor:slice}).
\end{itemize}
\end{proof}
Thus the tractability results  of $\STC$ has been recovered.

\section{Discussion and Conclusion\label{sec:Discussion-and-Conclusion}}

In this last section, we discuss the limitations of the multi-algebra
framework. Finally, we conclude and expose futur work.

\subsection{Limitations\label{subsec:Limitations}}

We are interested in this section in the limits of the tractability
results and more generally in the limits of the formal framework that
we propose.

Note on the one hand that, some of the hypotheses of the theorems
are not necessary, but only sufficient, for algebraic tractability.
On the other hand, the theorems only allows to demonstrate algebraic
tractability, however some subclasses are tractable without being
algebraically tractable. It is for example the case of the loose integration
of $\RCCH$ and the (minimum bounding) rectangle calculus of regions~\cite{li2012reasoning}
because some of its algebraically closed scenarios are unsatisfiable.
For these formalisms, one must either consider an analysis of greater
algorithmic complexity than algebraic closure to decide satisfiability,
such as \emph{BC-closure}, or consider a similar expressiveness formalism
whose algebraically closed scenarios are all satisfiable. For example,
one can consider the loose integration of $\RCCH'$ and the rectangle
calculus \cite{cohn2014reasoning} ($\RCCH'$ is $\RCCH$ with a slightly
different semantics: it concerns the definition of the connection
relation), instead of the loose integration of $\RCCH$ and the rectangle
calculus. One can also consider the combination of $\RCCH$ and $\mathrm{DIR}49$
\cite{li2012reasoning} ($\mathrm{DIR}49$ is less expressive than
the rectangle calculus). 

Finally, we recall that the proposed formal framework only applies
to combinations of symmetric formalisms -- although it should be
adaptable to the \textquotedblleft non-symmetric\textquotedblright{}
framework (see Section~\ref{sect:def_formalismes_qualitatifs_sym})
-- similar in structure to loose integration. Thus, tight integrations
\cite{wolfl2009combinations}, combinations of heterogeneous and/or
quantitative information \cite{Meiri1996,inants2016qualitative,jonsson2004complexity},
and spatio-temporal combinations of temporal expressiveness superior
to temporal sequences \cite{gerevini2002qualitative,bennett2002multi,hazarika2001qualitative,muller2002topological}
are outside the framework of multi-algebras.

\subsection{Conclusion and Futur Work}

\subsubsection{Conclusion}

In this paper, we have proposed a formal framework for representing
knowledge, reasoning, and identifying tractable fragments, in a unified
way, in the context of loose integrations, multi-scale formalisms,
and temporal sequences.

First of all, we have generalized the definition of qualitative formalism
of \cite{ligozat2004qualitative} to include formalisms whose basic
relations are not exhaustive or which do not have equality on $\U$,
since they are important in the context of combinations. However,
this definition, like that of \cite{ligozat2004qualitative}, is restricted
to formalisms satisfying the $\sinv{\sinv r}=r$ property, since the
framework we propose requires this assumption.

Secondly, we introduced multi-algebras, the common structure of loose
integrations, multi-scale formalisms, and temporal sequences. The
operators of a multi-algebra make it possible to reason in the context
of these combinations. The projection is added to the classical operators
and makes it possible to propagate the interdependencies between the
relations of combined formalisms. The concepts of the classical framework
apply or become generalized in a transparent way. In particular, a
description is a unique network whose relations are $m$-tuples of
classical relations. Moreover, the knowledge inference mechanism is
always the algebraic closure, which has been generalized to close
under both composition and projection. We then formalized the semantics
of multi-algebra relations, by defining the sequential formalisms.
This semantics preserves most of the fundamental properties of the
classical framework. On the one hand, any solution of a network is
solution of a single scenario, refining this network. On the other
hand, basic relations are pairwise disjointed. Moreover, all the satisfiable
scenarios are algebraically closed. And finally, if the algebraically
closed scenarios are satisfiable, then the decision problem of satisfiability
is in $\NP$, and one can decide satisfiability by backtracking algorithms
using the algebraic closure as a method of pruning and evaluation.

In addition, we have identified conditions for $\cconv$-consistent
relations to be satisfiable, namely the tree structure of interdependencies
and the satisfiability of basic relations closed under projection.

Moreover, we have demonstrated two theorems identifying conditions
for a subclass to be algebraically tractable. The first result provides
conditions for a combination of tractable subclasses to be tractable.
The second result is complementary: it provides conditions for a subclass
to be reduced to a tractable subclass, and thus be tractable. We then
looked at the conditions of the theorems. We have, on the one hand,
determined stronger assumptions, for the tractability theorems, which
have the advantage of being easier to verify, like distributivity
of the operators. We have, on the other hand, proposed the technique
of weakening which allows us, in certain cases, to circumvent the
fact that some of the assumptions of the theorems are not satisfied.
The circumvention concerns in particular the fact that a subclass
is not closed under projection or is not distributive.

We also discussed the applicability limits of this formal framework.
In particular, it does not make it possible to prove tractability
of non-algebraically tractable, tractable subclasses, although it
is possible, \emph{a priori}, to circumvent this problem by considering
a similar expressivity formalism or by considering other decision
procedures. The non-applicability of the framework would come from
the need for a satisfiability analysis having a complexity greater
than the algebraic closure.

Finally, we have illustrated the application of our theorems by recovering
the tractability results of the size-topology combination.

\subsubsection{Futur Work}

The research perspectives of the framework of multi-algebras are placed
on four axes.

On the one hand, there is the study and improvement of algorithms
for satisfiability decision. We have provided a naive procedure for
calculating the algebraic closure. This must, however, be computable
more efficiently for particular families of multi-algebras but also
in the general context. In particular, the following work should be
generalizable to the context of multi-algebras: \cite{sioutis2021dynamic,wehner2021robust,sioutis2019collective,kong2018exploring,sioutis2018leveraging,sioutis2017lazy,sioutis2017efficiently,sioutis2017studying,sioutis2016efficiently,sioutis2016efficient,long2016efficient,sioutis2015simple,sioutis2014incrementally,sioutis2014vertex,sioutis2017algorithmic}. 

Algebraic closure should also be easily parallelizable by parallelizing
the algebraic closure of each subnetwork $N_{i}$, but a generalization
of the parallelization of the classical framework is also possible~\cite{sioutis2019towardsP}.
In addition, the branching method of the classical framework for deciding
satisfiability in intractable cases must be adapted to this context~\cite{Ladkin1992,nebel1997solving}
. There are, however, several possibilities of generalization, and
it is not clear which one has the best computational performance.
Finally, a study of the parameterized complexity of the decision problem
of satisfiability according to the number of entities $n$ and the
number $m$ of combined formalisms should be conducted. This study
would be particularly interesting in the context of temporal sequences
(where $m$ corresponds to the length of the sequence) and in the
context of multi-scale reasoning (where $m$ corresponds to the number
of scales), as in practice, these numbers can be very large.

On the other hand, the framework of multi-algebras can be extended
by studying other problems than satisfiability. The minimality problem
has already been generalized to this context~\cite{cohen2020minimality},
but the study is not finished, in particular two generalizations of
the calculation of the minimal network have been proposed but it is
not clear which one is experimentally more efficient than the other.
Work on minimality in classical case could also be generalizable to
this framework~\cite{amaneddine2013efficient,sioutis2021neighbourhood,amaneddine2013minimal,amaneddine2012path,gerevini2011computing,beek1990exact}.

Other interesting problems than satisfiability and minimality, which
are generalizable in this context, are:
\begin{itemize}
\item redundancy \cite{hu2021large,lee2016redundancy,sioutis2015efficiently}, 
\item robustness \cite{sioutis2021robustness}, 
\item maximizing satisfiability \cite{condotta2015practical,condotta2016local}, 
\item backbones and backdoors \cite{sioutis2019towardsB}, 
\item networks merging \cite{condotta2008framework,condotta2009merging},
... 
\end{itemize}
These problems have not yet been studied in the context of loose integrations,
multi-scale formalisms, and temporal sequences. 

Moreover, the use of other techniques of satisfiability decision,
in particular of higher complexity like the $BC$-closure, is necessary
when the reasoning of the algebraic closure is not complete. Their
introduction within the multi-algebra framework should allow us the
identification of similar general tractability results.

In addition, the results of this paper can be applied to many other
combinations: other loose integrations, multi-scale formalisms, and
temporal sequences, thus making it possible to formalize these reasonings,
to obtain a procedure of satisfiability decision, and to identify
tractable subclasses. It has, for example, already been applied to
obtain new tractability results in the context of several temporal
sequences~\cite{cohen2020tractable,cohen2017temporal}.

Finally, the framework of multi-algebras must be generalized to cover
more combinations. For this purpose, the expressiveness of the multi-algebra
framework must be increased. In particular, it should allow reasoning
in the presence of heterogeneous entities associated with different
partition schemes, like the combination of the region connection calculus
with the interval calculus \cite{gerevini2002qualitative}. Moreover,
it is perhaps generalizable within the framework of quasi-qualitative
formalisms, that is to say within the framework of \textquotedbl qualitative
formalisms\textquotedbl{} whose number of relations is infinite \cite{inants2016qualitative}.

\appendix

\section*{\printindex}

\vskip 0.2in \bibliography{biblio}
 \bibliographystyle{theapa}
\end{document}